\DeclareSymbolFont{rsfs}{U}{rsfs}{m}{n}
\DeclareSymbolFontAlphabet{\mathscrsfs}{rsfs}
\newtheoremstyle{myremark} % name
    {\topsep}                    % Space above
    {\topsep}                    % Space below
    {\rm}                        % Body font
    {}                           % Indent amount
    {\bf}                        % Theorem head font
    {.}                          % Punctuation after theorem head
    {.5em}                       % Space after theorem head
    {}  % Theorem head spec (can be left empty, meaning normal)
\theoremstyle{myremark}
\newtheorem{remark}{Remark}[section]
\def\bbara{\bar {\boldsymbol a}} 
\def\bundera{\underline {\boldsymbol a}} 
\def\undera{{\underline a}}
\def\btilda{\Tilde {\boldsymbol a}} 
\def\bara{{\bar a}}
\def\tilda{{\tilde a}}
\def\bba{\bar {a}}
\def\ta{\Tilde {a}}
\def\ua{\underline{a}}
\def\bH{{\boldsymbol H}}
\def\bbw{{\bar \bw}}
\def\ubw{{\underline \bw}}
\def\urho{{\underline \rho}}
\def\trho{{\Tilde \rho}}
\def\urhoN{\underline{\rho}^{(N)}}
\def\trhoN{\Tilde{\rho}^{(N)}}
\def\hrho{\hat{\rho}}
\def\orho{\overline{\rho}}
\def\rad{{\bar \rho}}
\def\oW{\overline{W}}
\def\obW{\overline{\boldsymbol W}}
\def\btheta{{\boldsymbol \theta}}
\def\bbtheta{\bar {\boldsymbol \theta}}
\def\ubtheta{\underline{\boldsymbol \theta}}
\def\tbtheta{\Tilde {\boldsymbol \theta}}
\newcommand{\lp}{\left(}
\newcommand{\rp}{\right)}
\newcommand{\lb}{\left[}
\newcommand{\rb}{\right]}
\newcommand{\na}{\nabla}
\def\Ltwo{{L^2}}
\def\oR{{\overline R}}
\def\reals{{\mathbb R}}
\def\de{{\rm d}}
\def\good{{\rm good}}
\def\speed{{\alpha}}
\def\Ent{{\rm Ent}}
\def\id{{\boldsymbol I}}
\def\sBayes{\mbox{\tiny \rm Bayes}}
\def\sop{\mbox{\tiny \rm op}}
\def\prob{{\mathbb P}}
\def\normal{{\mathsf N}}
\def\rad{{\bar \rho}}
\def\supp{{\rm supp}}
\def\proj{{\boldsymbol P}}
\def\tbx{\tilde{\boldsymbol x}}
\def\cuP{\mathscrsfs{P}}
\def\cuD{\mathscrsfs{D}}
\def\cuG{\mathscrsfs{G}}
\def\bbA{{\bar {\boldsymbol A}}}
\def\hf{\hat{f}}
\def\cF{{\mathcal F}}
\def\cH{{\mathcal H}}
\def\cV{{\mathcal V}}
\def\cL{{\mathcal L}}
\def\cK{\mathcal{K}}
\def\blambda{{\boldsymbol \lambda}}
\def\bxi{\boldsymbol{\xi}}
\def\bomega{{\boldsymbol \omega}}
\def\bfzero{{\boldsymbol 0}}
\def\bzero{{\boldsymbol 0}}
\def\bsigma{{\boldsymbol \sigma}}
\def\bSigma{{\boldsymbol \Sigma}}
\def\bA{{\boldsymbol A}}
\def\ba{{\boldsymbol a}}
\def\bh{{\boldsymbol h}}
\def\bz{{\boldsymbol z}}
\def\by{{\boldsymbol y}}
\def\bU{{\boldsymbol U}}
\def\bG{{\boldsymbol G}}
\def\bF{{\boldsymbol F}}
\def\bw{{\boldsymbol w}}
\def\bW{{\boldsymbol W}}
\def\bz{{\boldsymbol z}}
\def\bZ{{\boldsymbol Z}}
\def\bw{{\boldsymbol w}}
\def\bg{{\boldsymbol g}}
\def\bI{{\boldsymbol I}}
\def\bX{{\boldsymbol X}}
\def\bx{{\boldsymbol x}}
\def\bu{{\boldsymbol u}}
\def\bh{{\boldsymbol h}}
\def\bH{{\boldsymbol H}}
\colorlet{linkequation}{blue}
\title{Mean-field theory of two-layers neural networks:\\
dimension-free bounds and kernel limit}
\author{Song Mei\thanks{Institute for Computational and Mathematical Engineering, Stanford University},\;\; Theodor Misiakiewicz\thanks{Department of Statistics, Stanford University}, \;\; Andrea Montanari\thanks{Department of Electrical Engineering and Department of Statistics, Stanford University}}
\begin{document}

\maketitle

\begin{abstract}
We consider learning two layer neural networks using stochastic gradient descent. The mean-field description of this learning dynamics 
approximates the evolution of the network weights by an evolution in the space of probability distributions in $\reals^D$ (where $D$
is the number of parameters associated to each neuron). This evolution can be defined through a partial differential equation or, equivalently, as the gradient flow in 
the Wasserstein space of probability distributions. 
Earlier work shows that (under some regularity assumptions), the mean field description is accurate
as soon as the number of hidden units is much larger than the dimension $D$. In this paper we establish stronger and more general approximation guarantees.
First of all, we show that the number of hidden units only needs to be larger than a quantity dependent on the regularity properties of the data, and independent of the 
dimensions. 
Next, we generalize this analysis to the case of unbounded activation functions, which was not covered by earlier bounds. 
We extend our results to noisy stochastic gradient descent. 

Finally, we show that kernel ridge regression can be recovered as a special limit of  the mean field analysis. 
\end{abstract}

\tableofcontents

 \section{Introduction}

Multi-layer neural networks, and in particular multi-layer perceptrons, present a number of remarkable features.
They are effectively trained using stochastic-gradient descent (SGD) \cite{lecun1998gradient}; their behavior is fairly insensitive to the number of hidden units or to 
the input dimensions \cite{srivastava2014dropout}; 
their number of parameters is often larger than the number of samples.

In this paper consider simple neural networks with one layer of $N$ hidden units:
\begin{align}
\hf_N(\bx;\btheta) = \frac{1}{N}\sum_{i=1}^N\sigma_\star(\bx;\btheta_i)\, ,\;\;\;\; \sigma_\star(\bx;\btheta_i) = a_i\sigma(\bx;\bw_i)\, ,\label{eq:NNET}
\end{align}
Here $\bx\in\reals^d$ is a feature vector, $\btheta = (\btheta_1,\dots,\btheta_N)$ comprises the network parameters, $\btheta_i = (a_i,\bw_i)\in\reals^{D}$, and
$\sigma:\reals^d\times\reals^{D-1}\to\reals$ is a bounded activation function. 
The most classical example is $\sigma(\bx;\bw) = \sigma(\<\bw,\bx\>)$, where $\sigma:\reals\to\reals$  is a scalar function (and of course $D=d+1$),
but our theory covers a broader set of examples. 
We assume to be given data $(y_i,\bx_i) \sim \prob$, with $\prob\in\cuP(\reals\times\reals^d)$ a probability distribution over $\reals\times\reals^d$,
and attempt at minimizing the square loss risk: 
\begin{align}
R_N(\btheta) = \E\big\{(y - \hf_N(\bx;\btheta))^2\}. 
\end{align}
The risk function $R_N$ can be either understood as population risk or empirical risk, depending on viewing $\prob$ as a population distribution or assuming $\prob = n^{-1} \sum_{k=1}^n \delta_{(y_k, \bx_k)}$ is supported on $n$ data points. If $R_N$ is understood as the population risk, we can rewrite 
\begin{align}
R_N(\btheta) = R_{\sBayes} + \E\big\{(f(\bx) - \hf_N(\bx;\btheta))^2\}  \,,
\end{align}
where $f(\bx) = \E\{y|\bx\}$ and $R_{\sBayes}$ is the Bayes error. 

Classical theory of universal approximation provides useful insights into the way two-layers networks capture
arbitrary input-output relations \cite{cybenko1989approximation,barron1993universal}. In particular, Barron's theorem \cite{barron1993universal} guarantees
\begin{align}
\inf_{\btheta} R_N(\btheta)  \le  R_{\sBayes} + \frac{1}{N}\left(2r\int \|\bomega\|_2|F(\bomega)| \de\bomega\right)^2\, \, ,\label{eq:Barron}
\end{align}
where $F$ is the Fourier transform of $f$, and $r$ is the supremum of $\|\bx\|_2$ in the support of $\prob$. 
This result is remarkable in that the minimum number of neurons needed to achieve a certain accuracy depends only on intrinsic
regularity properties of $f$ and not on the dimension $d$.  The proof of this and similar results shows that it is more insightful to think of 
the representation \eqref{eq:NNET} in terms of the empirical distribution of the neurons $\hrho^{(N)} \equiv N^{-1} \sum_{i\le N}\delta_{\btheta_i}$. With a slight abuse of notation,
we have $\hf_N(\bx;\btheta) = \hf(\bx;\hrho^{(N)})$, where, for a general distribution $\rho \in\cuP(\reals^D)$, we define
\begin{align}
\hf(\bx;\rho) = \int\! \sigma_\star(\bx;\btheta) \, \rho(\de\btheta)\, . \label{eq:hf-rho}
\end{align}
The universal approximation property is then related to the fact that an arbitrary distribution $\rho$  can be approximated by one supported on $N$ points\footnote{Of course,
here we are hiding some important technical issues.}.

Approximation theory provides some insight into the peculiar properties of neural networks. Small population risk is achieved by many networks,
since what matters is the distribution $\rho$, not the parameters $\btheta_1,\dots,\btheta_N$. The behavior is insensitive to the number of neurons $N$,
as long as this is large enough for $\hrho^{(N)}$ to approximate $\rho$. Finally, the bound \eqref{eq:Barron} is dimension-free. 

Of course these insights concern ideal representations, and not necessarily the networks generated by SGD.  Recently,
an analysis of SGD dynamics has been developed that connects naturally to the theory of universal approximation 
\cite{mei2018mean,sirignano2018mean,rotskoff2018neural,chizat2018global}.
 The main object of study is the empirical distribution $\hrho^{(N)}_k$ after $k$ SGD steps. For large $N$, small step size $\eps$ and setting $k=t/\eps$, 
$\hrho^{(N)}_k$  turns out to be well approximated by a probability distribution $\rho_t\in\cuP(\reals^D)$. The latter evolves according to the following partial 
differential equation 
\begin{align}
\partial_t\rho_t &= 2\xi(t)\nabla_\btheta \cdot \big(\rho_t \nabla_\btheta \Psi(\btheta;\rho_t)\big)\, ,\;\;\;\; \Psi(\btheta;\rho_t)\equiv V(\btheta)+\int U(\btheta,\tbtheta) \, \rho_t(\de\tbtheta)\, ,\label{eq:PDE} \tag{DD}\\
&V(\btheta) = -\E\{y\sigma_\star(\bx;\btheta)\}\, ,\;\;\;\;\; U(\btheta_1,\btheta_2) = \E\{\sigma_\star(\bx;\btheta_1)\sigma_\star(\bx;\btheta_2)\}\, . 
\end{align}
(Here $\xi(t)$ is a function that gauges the evolution of step size and will be defined below. In fact, there is little loss to the following discussion 
in setting $\xi(t) = 1$.)
We will refer to this as the \emph{mean field} description, or \emph{distributional dynamics}. This description has the advantage of being explicitly 
independent of the number of hidden units $N$ and hence accounts for one of the empirical findings described above
(the insensitivity to the number of neurons). Further, it allows to focus on some key elements of the dynamics
(global convergence, typical behavior) neglecting others (local minima, statistical noise).

Several papers used this approach over the last year to analyze learning in two-layers networks:
this work will be succinctly reviewed in Section \ref{sec:Related}.

Of course, a crucial question needs to be answered for this approach to be meaningful:  In what regime is the distributional dynamics
a good approximation to SGD?
Quantitative approximation guarantees were established in \cite{mei2018mean}, under certain regularity conditions on the data distribution $\prob$, and
for activation functions $\sigma_\star(\bx;\btheta)$ bounded. Under these conditions, and for time $t\in [0,T]$ bounded,  \cite{mei2018mean} proves that the
distributional dynamics solution $\rho_t$ approximates well the actual empirical distribution $\hrho^{(N)}_{k=t/\eps}$, when the number of neurons is much larger than
the problem dimensions $N\gg D$.

The results of \cite{mei2018mean} present several limitations, that we overcome in the present paper. We briefly summarize our contributions.
\begin{description}
\item[Dimension-free approximation.] As mentioned above, both classical approximation theory and the mean-field analysis of SGD 
approximate a certain target distribution $\rho$ by the empirical distributions of the network parameters $\hrho^{(N)}$. However, while 
the approximation bound (\ref{eq:Barron}) is dimension-free, the approximation guarantees of  \cite{mei2018mean} are explicitly dimension-dependent.
Even for very smooth functions $f(\bx)$, and well behaved data distributions, the results of  \cite{mei2018mean} require $N\gg D$. 

Here we prove a new bound that is dimension independent and therefore more natural. The proof follows a coupling argument which is different and more powerful
than the one of \cite{mei2018mean}. A key improvement consists in isolating different error terms, and developing a more delicate
concentration-of-measure argument which controls the dependence of the error on $N$.

Let us emphasize that capturing the correct dimension-dependence is an important test of the mean-field theory, and it
is crucial in order to compare neural networks  to other learning techniques (see Section \ref{sec:Connection}).
\item[Unbounded activations.] The approximation guarantee of \cite{mei2018mean} only applies to activation functions $\sigma_\star(\bx;\btheta_i)$
that are bounded. This excludes the important case of unbounded second-layer coefficients as in Eq.~\eqref{eq:NNET}. We extend our analysis 
to that case. This requires to develop an \emph{a priori} bound on the growth of the coefficients $a_i$. As in the previous point, our approximation guarantee
is dimension-free.
\item[Noisy SGD.] Finally, in some cases it is useful to inject noise into SGD. From a practical perspective this can help avoiding local minima.
From an analytical perspective, it corresponds to a modified PDE, which contains an additional Laplacian term 
$\Delta_{\btheta}\rho_t$. This PDE  has  smoother solutions $\rho_t$ 
that are supported everywhere and  converge globally to a unique fixed point \cite{mei2018mean}. 

In this setting, we prove a dimension-free approximation guarantee for the case of bounded activations. We also obtain a guarantee for
noisy SGD unbounded activations, but the latter is not dimension-free. 
\item[Kernel limit.] We analyze the PDE (\ref{eq:PDE}) in a specific short-time limit and show that it is well approximated
by a linearized dynamics. This dynamics can be thought as fitting a kernel ridge regression\footnote{`Kernel ridge regression' and `kernel regression' are used with somewhat different meanings in the literature. Kernel ridge regression uses global information and
can be defined as ridge regression in reproducing kernel Hilbert space (RKHS), while kernel regression uses local averages. See Remark \ref{rmk:kernel_ridge_regression} for a definition. } model with respect to  a kernel corresponding to the 
initial weight distribution $\rho_0$. We thus recover --from a different viewpoint-- a connection with kernel methods that
has been investigated in several recent papers \cite{jacot2018neural,du2018gradient,du2018gradient2,allen2018convergence}. Beyond the short time scale, the dynamics is analogous to kernel boosting dynamics with a time-varying data-dependent kernel
(a point that already appears in \cite{rotskoff2018neural}). 
\end{description}
Mean-field theory allowed to prove global convergence guarantees for SGD in two-layers neural networks \cite{mei2018mean,chizat2018global}.
Unfortunately, these results do not provide (in general) useful bounds on the network size $N$. We believe that the results in this paper
are a required step in that direction.

The rest of this paper is organized as follows. The next section overviews related work, focusing in particular on 
the distributional dynamics  (\ref{eq:PDE}), its variants and applications. In Section \ref{sec:Main} we present formal statements of our results. 
Section \ref{sec:Connection} develops the connection with kernel methods. Proofs are mostly deferred to the appendices. 

\section{Related work}
\label{sec:Related}

As mentioned above, classical approximation theory already uses (either implicitly or explicitly) the idea of lifting the class of
$N$-neurons neural networks, cf. Eq.~\eqref{eq:NNET}, to the infinite-dimensional space \eqref{eq:hf-rho} parametrized by probability distributions $\rho$,
see e.g. \cite{cybenko1989approximation,barron1993universal,bartlett1998sample,anthony2009neural}. This idea was exploited algorithmically, e.g. in 
\cite{bengio2006convex,nitanda2017stochastic}. 

Only very recently (stochastic) gradient descent  was proved to converge (for large enough number of neurons) to the infinite-dimensional 
evolution \eqref{eq:PDE} \cite{mei2018mean,rotskoff2018neural,sirignano2018mean,chizat2018global}. In particular, \cite{mei2018mean}
proves quantitative bounds to approximate SGD by the mean-field dynamics. Our work is mainly motivated by the objective to obtain a better
scaling with dimension and to allow for unbounded second-layer coefficients. 

The mean-field description was exploited in several papers to establish global convergence results. 
In \cite{mei2018mean} global convergence was proved in special examples, and in a general setting for noisy SGD.  The papers
\cite{rotskoff2018neural,chizat2018global} studied global convergence by exploiting the homogeneity properties of Eq.~\eqref{eq:NNET}.
In particular, \cite{chizat2018global} proves a general global convergence result. For initial conditions $\rho_0$ with full support, the PDE \eqref{eq:PDE} converges
to a global minimum provided activations are homogeneous in the parameters. Notice that the presence of unbounded second layer
coefficients is crucial in order to achieve homogeneity. Unfortunately, the results of \cite{chizat2018global}  do not provide quantitative approximation
bounds relating the PDE \eqref{eq:PDE} to finite-$N$ SGD. The present paper fills this gap by establishing approximation bounds that apply to 
the setting of \cite{chizat2018global}.

A different optimization algorithm was studied in \cite{wei2018margin} using the mean-field description. The 
algorithm resamples a positive fraction of the neurons uniformly at random at a constant rate. This allows the authors
to establish a global convergence result (under certain assumed smoothness properties on the PDE solution). Again, this paper does not provide
quantitative bounds on the difference between PDE and finite-$N$ SGD. While our theorems do not cover the algorithm of 
\cite{wei2018margin}, we believe that their algorithm could be analyzed
using the approach developed here. Exponentially fast convergence to a global optimum was proven in \cite{javanmard2019analysis}
for certain radial-basis-function networks, using again the mean-field approach. While the setting of  \cite{javanmard2019analysis}
is somewhat different (weights are constrained to a convex compact domain), the technique presented here could be applicable to that problem as well.

Finally, a recent stream of works \cite{jacot2018neural,geiger2019scaling,du2018gradient,du2018gradient2,allen2018convergence} argues that, as $N\to\infty$ two-layers networks are actually performing a type of
kernel ridge regression. 
As shown in  \cite{chizat2018note}, this phenomenon is not limited to neural network, but generic for a broad class of models.
As expected, the kernel regime can indeed be recovered as a special limit of the mean-field dynamics \eqref{eq:PDE}, cf. Section \ref{sec:Connection}. Let us emphasize that here we focus on the population rather than the empirical risk.

A discussion of the difference between the kernel and mean-field regimes was recently presented in \cite{dou2019training}.
However, \cite{dou2019training} argues that the difference between  kernel and mean-field behaviors is due to different initializations
of the coefficients $a_i$'s. We show instead that, for a suitable scaling of the initialization, kernel and mean field regimes
appear at different time scales. Namely, the kernel behavior 
arises at the beginning of the dynamics, and mean field 
characterizes longer time scales.
It is also worth mentioning that the connection between mean field dynamics and kernel boosting  with a time-varying data-dependent kernel was already present (somewhat implicitly) in \cite{rotskoff2018neural}.

\section{Dimension-free mean field approximation}
\label{sec:Main}

\subsection{General results}

As mentioned above, we assume to be given data $\{(y_k,\bx_k)\}_{k\ge 1}\sim_{i.i.d.}\prob\in\cuP(\reals\times\reals^d)$, and we run SGD with 
step size $s_k$:
\begin{align}\label{eq:SGD}\tag{SGD}
\btheta_i^{k+1} = \btheta^k_i + 2 s_k (y_k - \hf_N(\bx_k;\btheta^k)) \na_{\btheta } \sigma_\star ( \bx_k ; \btheta^k_i ).
\end{align}
We will work under a one-pass model, that is, each data point is visited once.

We also consider a noisy version of SGD, with a regularization term:
\begin{align}
\label{eq:NoisySGD}\tag{noisy-SGD}
\btheta_i^{k+1} = (1 - 2 \lambda s_k ) \btheta^k_i + 2 s_k (y_k - \hf_N(\bx_k;\btheta^k) ) \na_{\btheta} \sigma_\star ( \bx_k ; \btheta^k_i ) + \sqrt{2 s_k \tau/D} \; {\bg}^k_i  \, ,
\end{align}
where $\bg^k_i \sim \cN ( 0 , \id_{D} )$. The noiseless version is recovered by setting $\tau=0$ and $\lambda = 0$.
The step size is chosen according to : $s_k = \eps \xi (k\eps ) $, for a positive function $\xi : \R_{\geq 0} \to \R_{>0}$.

The infinite-dimensional evolution corresponding to noisy SGD is given by
\begin{align}
    &\partial_t \rho_t =
 2 \xi (t) \nabla_{\btheta} \cdot \big(\rho_t(\btheta) \nabla_\btheta \Psi_{\lambda} (\btheta; \rho_t)\big)+ 2 \xi (t)\tau D^{-1} \Delta_{\btheta} \rho_t \, ,\label{eq:PDEnoisy}\tag{diffusion-DD} \\
    &\Psi_\lambda (\btheta; \rho) = \Psi(\btheta; \rho)  + \frac{\lambda }{2} \norm{\btheta}_2^2\, .
\end{align}
The function $\Psi$ is defined as in \eqref{eq:PDE}. At this point it is important to note that the PDE \eqref{eq:PDE} has to be interpreted in weak sense, while,
for $\tau>0$, Eq.~\eqref{eq:PDEnoisy} has strong solutions i.e. solutions $\rho:(t,\btheta)\mapsto \rho_t(\btheta)$ that are $C^{1,2}(\reals\times\reals^D)$
(once continuous differentiable in time and twice in space, see \cite{mei2018mean} and Appendix  \ref{sec:existence_uniqueness}).

It is useful to lift the population risk in the space of distributions $\rho\in\cuP(\reals^D)$
\begin{align}
R(\rho) = \E(y^2) + 2 \int V(\btheta) \rho(\de \btheta) + \int U(\btheta, \btheta') \rho(\de \btheta) \rho(\de \btheta')\, .
\end{align}
We also note that, given the structure of the activation function in Eq.~\eqref{eq:NNET}, for $\btheta = (a,\bw)$, 
$\btheta_i = (a_i,\bw_i)$, we can write $V(\btheta) = a\, v(\bw)$, $U(\btheta_1,\btheta_2) = a_1a_2\, u(\bw_1,\bw_2)$, where $v(\bw) = - \E\{ y \sigma(\bx; \bw)\}$ and $u(\bw_1, \bw_2) = \E\{ \sigma(\bx; \bw_1) \sigma(\bx; \bw_2)\}$. 

In order to establish a non-asymptotic guarantee, we will make the following assumptions: 
\begin{itemize}
\item[A1.] $t \mapsto \xi (t)$ is bounded Lipschitz: $\norm{\xi}_\infty $, $\norm{\xi}_{\text{Lip}} \leq K_1$. 
\item[A2.] The activation function $\sigma: \R^d\times\R^{D-1} \to \R$ and the response variables are bounded: $\norm{\sigma}_\infty, |y_k|\leq K_2
  $. Furthermore, its gradient $\nabla_{\bw}\sigma(\bx;\bw)$ is $K_2$-sub-Gaussian (when  $\bx\sim\prob$).
\item[A3.] The functions $\bw \mapsto v(\bw)$ and $(\bw_1,\bw_2)\mapsto u(\bw_1,\bw_2)$ are differentiable, with bounded and Lipschitz continuous gradient:
$\| \nabla v(\bw) \|_2 \le K_3$, $\| \nabla u(\bw_1, \bw_2) \|_2 \le
K_3$, $\|\nabla v(\bw) -\nabla v(\bw')\|_2\le K_3\|\bw-\bw'\|_2$,
$\|\nabla u(\bw_1,\bw_2) -\nabla u(\bw'_1,\bw'_2)\|_2\le
K_3\|(\bw_1,\bw_2)-(\bw'_1,\bw'_2)\|_2$. 
\item[A4.] The initial condition $\rho_0 \in \cuP(\reals^D)$ is supported on $\vert a_i\vert \le K_4$ for a constant $K_4$. 
\end{itemize}

We will consider two different cases for the SGD dynamics:
\begin{description}
\item[General coefficients.] We initialize the parameters $\btheta^0_i = (a^0_i,\bw^0_i)$ as $(\btheta^0_i )_{i \leq N} \sim_{iid} \rho_0$. Both the $a_i^0$ and $\bw^0_i$ are updated during the dynamics.
\item[Fixed coefficients.] We use the same initialization as described above, but the coefficients $a_i$ are not updated by SGD.
The corresponding PDE  is given by Eq.~\eqref{eq:PDE} (or \eqref{eq:PDEnoisy}), except that the space derivatives are to be interpreted only with respect to $\bw$, i.e. replace $\na_\btheta$ by $(0,\na_\bw)$, and $\Delta_\btheta$ by $\Delta_\bw$.
\end{description}
While the second setting is less relevant in practice, it is at least as interesting  from  a theoretical point of view, and some of our guarantees are stronger in that case.

\begin{theorem}\label{thm:bound_approximating_noiseless}
Assume that conditions {\rm A1}-{\rm A4} hold, and let $T \ge 1$. Let $(\rho_t)_{t\ge 0}$ be the solution of the PDE (\ref{eq:PDE}) with initialization $\rho_0$, and let $(\btheta^k)_{k \in \N}$ to be the trajectory of SGD (\ref{eq:SGD}) with initialization $\btheta_i^0 \sim \rho_0$ independently. 
\begin{itemize}
    \item[(A)] Consider noiseless SGD with \emph{fixed coefficients}. 
Then there exists a constant $K$ (depending uniquely on the constants $K_i$ of assumptions {\rm A1}-{\rm A4}) such that
    \begin{align}\label{eq:FixedThm}
    \sup_{k \in [0, T/\eps] \cap \N} \abs{R_N(\btheta^k) - R(\rho_{k\eps})} \le K e^{KT} \frac{1}{\sqrt{N}}[ \sqrt{\log N} + z]   + K e^{KT} [ \sqrt{D + \log(N)} + z] \sqrt{\eps} 
    \end{align}
    with probability at least $1 - e^{- z^2}$.
    \item[(B)] Consider noiseless SGD with \emph{general coefficients}. Then there exists constants $K$ and $K_0$ (depending uniquely on the constants $K_i$ of assumptions {\rm A1}-{\rm A4})
 such that if $\eps \le 1/ [K_0 (D + \log N + z^2) e^{K_0 T^3}]$, we have
    \begin{align}\label{eq:GeneralThm}
    \sup_{k \in [0, T/\eps] \cap \N} \abs{R_N(\btheta^k) - R(\rho_{k\eps})} \le K e^{K T^3}\frac{1}{\sqrt{N}}[ \sqrt{\log N} + z]  + Ke^{K T^3}  [ \sqrt{D + \log N} + z] \sqrt{\eps} 
    \end{align}
    with probability at least $1 - e^{- z^2}$.
\end{itemize}
\end{theorem}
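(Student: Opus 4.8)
The plan is a coupling (propagation-of-chaos) argument comparing the SGD iterates $(\btheta^k_i)_{i\le N}$ to the mean-field PDE \eqref{eq:PDE} through an intermediate \emph{nonlinear dynamics}. Let $\bbtheta_i(\cdot)$, $i\le N$, be the characteristics of \eqref{eq:PDE}: the solutions of $\dot{\bbtheta}_i(t) = -2\xi(t)\na_\btheta\Psi(\bbtheta_i(t);\rho_t)$ with $\bbtheta_i(0)=\btheta^0_i$ (in the fixed-coefficient case one replaces $\na_\btheta$ by $(0,\na_\bw)$ and freezes the $a$-coordinate). Since the $\btheta^0_i$ are i.i.d.\ $\sim\rho_0$ and $\bbtheta_i(t)$ is a deterministic function of $\btheta^0_i$ alone, the $\bbtheta_i(t)$ are i.i.d.\ $\sim\rho_t$, so $\bar\rho^{(N)}_t := N^{-1}\sum_i\delta_{\bbtheta_i(t)}$ is an empirical average of i.i.d.\ draws from $\rho_t$ --- the quantity we will concentrate. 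Writing \eqref{eq:SGD} as $\btheta^{k+1}_i = \btheta^k_i - 2s_k\na_\btheta\Psi(\btheta^k_i;\hrho^{(N)}_k) + 2s_k\bxi^k_i$ with $\bxi^k_i := (y_k-\hf_N(\bx_k;\btheta^k))\na_\btheta\sigma_\star(\bx_k;\btheta^k_i) + \na_\btheta\Psi(\btheta^k_i;\hrho^{(N)}_k)$ a martingale difference ($\E[\bxi^k_i\mid\cF_k]=0$ by the identity $\E_{(y,\bx)}[(y-\hf_N(\bx;\btheta))\na_1\sigma_\star(\bx;\btheta_i)]=-\na_\btheta\Psi(\btheta_i;\hrho^{(N)})$), the proof splits into: (i) an a priori bound on the iterates; (ii) a one-step error recursion for $\Delta_k := \max_{i\le N}\|\btheta^k_i-\bbtheta_i(k\eps)\|_2$, closed by a discrete Gronwall; (iii) transfer of the trajectory bound to the risk.

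For (i), in case (A) the $a_i$ are frozen with $|a^0_i|\le K_4$, so by A2--A3 the drift $\na_\bw\Psi$ is uniformly bounded and Lipschitz with constants depending only on the $K_i$, the Gronwall constant is a fixed $K$, and no constraint on $\eps$ is needed. In case (B) one must first control $A_k := \max_{i\le N,\,j\le k}|a^j_i|$: from the $a$-update $|a^{k+1}_i-a^k_i|\le 2s_k\|\sigma\|_\infty|y_k-\hf_N(\bx_k;\btheta^k)|$, and since the population risk is nonincreasing along \eqref{eq:PDE} --- hence $R(\rho_t)\le R(\rho_0)=O(1)$, which bounds $\|\hf(\cdot;\rho_t)\|_{L^2(\prob)}$ and therefore the $a$-drift of the nonlinear dynamics uniformly in $t$ --- one propagates $A_k\lesssim K_4 + KT^{3/2}$ for $k\le T/\eps$, up to $O(\eps)$-per-step corrections that force the smallness $\eps\le 1/[K_0(D+\log N+z^2)e^{K_0T^3}]$. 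This polynomial growth of $A_k$, entering Lipschitz constants of $\na\Psi$ that scale polynomially in the coefficient magnitudes, is what turns the fixed-coefficient $e^{KT}$ into $e^{KT^3}$.

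For (ii), subtract the nonlinear flow from \eqref{eq:SGD} and split the drift difference into: (a) $\na_\btheta\Psi(\btheta^k_i;\hrho^{(N)}_k)-\na_\btheta\Psi(\bbtheta_i(k\eps);\hrho^{(N)}_k)$, bounded by $L\Delta_k$; (b) $\na_\btheta\Psi(\bbtheta_i(k\eps);\hrho^{(N)}_k)-\na_\btheta\Psi(\bbtheta_i(k\eps);\bar\rho^{(N)}_{k\eps})$, bounded by $L\,N^{-1}\sum_j\|\btheta^k_j-\bbtheta_j(k\eps)\|_2\le L\Delta_k$; (c) the \emph{fluctuation} $\na_\btheta\Psi(\bbtheta_i(k\eps);\bar\rho^{(N)}_{k\eps})-\na_\btheta\Psi(\bbtheta_i(k\eps);\rho_{k\eps}) = N^{-1}\sum_j\na_1 U(\bbtheta_i(k\eps),\bbtheta_j(k\eps)) - \int\na_1 U(\bbtheta_i(k\eps),\tbtheta)\,\rho_{k\eps}(\de\tbtheta)$; plus the martingale increment $2s_k\bxi^k_i$ and the Euler discretization error over $[k\eps,(k+1)\eps]$. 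The key dimension-free input is the bound on (c): conditionally on $\bbtheta_i(k\eps)$ it is an average of $N-1$ i.i.d.\ mean-zero \emph{vectors}, each of Euclidean norm bounded by a constant in case (A) (by A3) and by a polynomial in $A_k$ in case (B), plus an $O(1/N)$ self-term; the Euclidean norm of such an average concentrates at the rate $O(N^{-1/2}(1+z))$ by bounded differences (its squared mean is $O(1/N)$ regardless of $D$, and it is $O(1/N)$-Lipschitz in each summand). A union bound over the $N$ points $\bbtheta_i(k\eps)$ and a net in time --- of size polynomial in $N$ and $T$, using the $\eps$-independent Lipschitz continuity of the fluctuation in $t$ --- upgrades this to $O(N^{-1/2}(\sqrt{\log N}+z))$, with \emph{no} factor of $D$, after absorbing $\log T$ into the $e^{KT}$/$e^{KT^3}$ prefactor. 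By contrast, the martingale term involves $\na_\bw\sigma(\bx_k;\cdot)$ at a single sample, of Euclidean norm $\sim K_2\sqrt D$ by the sub-Gaussianity in A2; summing $T/\eps$ such increments of size $\sim\eps\sqrt D$ via a vector-valued Azuma inequality gives $O((\sqrt{D+\log N}+z)\sqrt\eps)$, and the discretization error is $O(\sqrt\eps)$ times polynomial factors. Feeding (a)+(b)+(c)+martingale+discretization into $\Delta_{k+1}\le(1+KL\eps)\Delta_k + \eps\cdot(\text{fluctuation}) + (\text{martingale increment}) + \eps\cdot(\text{discretization})$ and iterating yields $\Delta_k \le Ke^{KT}[N^{-1/2}(\sqrt{\log N}+z) + (\sqrt{D+\log N}+z)\sqrt\eps]$ in case (A), and the same with $e^{KT^3}$ in case (B). Finally (iii): $R_N(\btheta^k)$ equals $R(\hrho^{(N)}_k)$ up to an $O(1/N)$ diagonal term; $R(\cdot)$ is $W_1$-Lipschitz on the relevant support with constant controlled by the a priori bound, so $|R(\hrho^{(N)}_k)-R(\bar\rho^{(N)}_{k\eps})|\le KL'\Delta_k$; and $R(\bar\rho^{(N)}_{k\eps})-R(\rho_{k\eps})$ is an empirical average of i.i.d.\ bounded scalars plus a bounded-kernel degree-two $U$-statistic (up to an $O(1/N)$ diagonal) in the i.i.d.\ variables $\bbtheta_i(k\eps)$, hence $O(N^{-1/2}(\sqrt{\log N}+z))$ by bounded differences, uniformly over time after the same net argument --- again dimension-free. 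Collecting the three contributions gives \eqref{eq:FixedThm} and \eqref{eq:GeneralThm}.

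I expect the main obstacle to be the fluctuation estimate on term (c): obtaining it \emph{simultaneously} uniformly over the $N$ random, trajectory-dependent evaluation points and over all times in $[0,T]$ while keeping it free of $D$. This is the ``more delicate concentration-of-measure argument'' referred to in the introduction: one must choose the conditioning carefully --- freezing the $\bbtheta_i$, which depend only on the initialization and are thus independent of the fresh SGD samples consumed later --- so that (c) is a genuine average of independent bounded \emph{vectors}, giving the $D$-free $\ell_2$-concentration rate rather than a coordinate-wise bound costing $\sqrt D$; and the time-uniformity must be handled by a net whose resolution scales with $N$ (not $\eps$), using an $\eps$-independent Lipschitz bound. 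A secondary difficulty, specific to case (B), is closing the a priori bound on $A_k$ and the Gronwall self-consistently, since the Lipschitz constant entering the Gronwall depends on $A_k$ while $A_k$ is bounded only via near-closeness to \eqref{eq:PDE}; this is what necessitates the smallness assumption on $\eps$ and produces the $T^3$ blow-up in the exponent.
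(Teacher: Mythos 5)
Your proposal matches the paper's argument in all essential respects. The paper formalizes your one-step recursion as a four-way interpolation — nonlinear dynamics, particle dynamics, gradient descent, SGD — proving a separate proposition for each gap, but the individual error sources you identify are in one-to-one correspondence: your term (c) is exactly the fluctuation $I_i^t$ handled by the bounded-differences + union-over-particles + time-net argument that produces the dimension-free $N^{-1/2}(\sqrt{\log N}+z)$ rate; your martingale increment is exactly the GD-to-SGD term handled by vector Azuma with the $\sqrt{D+\log N}$ cost; your a~priori bound on $A_k$ with a stopping-time bootstrap and the resulting smallness constraint on $\eps$ is the mechanism in the paper's Proposition for GD-SGD in part (B); and the key conceptual point you flag — treating the fluctuation as an average of i.i.d.\ bounded \emph{vectors} so that its Euclidean norm concentrates at $N^{-1/2}$ with no $\sqrt D$ penalty — is precisely the improvement over \cite{mei2018mean} that the introduction advertises. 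One small correction for part (B): the mechanism you describe (the risk is nonincreasing along the PDE, which bounds the $a$-drift of the nonlinear dynamics uniformly in $t$) yields \emph{linear} growth $|\bara^t_i|\le K(1+t)$, as the paper proves, not $T^{3/2}$; the $e^{KT^3}$ then arises from Gronwall applied to Lipschitz constants that are quadratic in this linear bound, integrated over $[0,T]$, whereas your $T^{3/2}$ growth would only yield $e^{KT^4}$.
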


\begin{remark}
As anticipated in the introduction, provided  $T,K=O(1)$, the error terms in Eqs.~\eqref{eq:FixedThm},  \eqref{eq:GeneralThm}, are small as soon as $N\gg 1$.
In other words, the minimum number of neurons needed for the mean-field approximation to be accurate is independent of the dimension $D$, and only depends on intrinsic features
of the activation and data distribution. 

On the other hand, the dimension $D$ appears explicitly in conjunction with the step size $\eps$. We need $\eps\ll 1/D$ in order for mean field to be accurate.
This is the same trade-off between step size and dimension that was already achieved in \cite{mei2018mean}.
\end{remark}

We next consider noisy SGD, cf. Eq.~\eqref{eq:NoisySGD}, and the corresponding PDE in Eq.~\eqref{eq:PDEnoisy}. We need to make additional assumptions on the initialization
in this case.
\begin{itemize}
\item[A5.] The initial condition $\rho_0$ is such that, for $\btheta^0_i = (a^0_i,\bw^0_i)\sim \rho_0$, we have that $\bw^0_i$ is $K_5^2/D$-sub-Gaussian. 
\item[A6.] $V \in C^4 (\R^D)$, $U \in C^4 (\R^D \times \R^D )$, and $\na^k_1 u (\btheta_1 , \btheta_2 )$ is uniformly bounded for $0 \le k \le 4$.
\end{itemize}
\begin{remark}
The last condition ensures the existence of strong solutions for Eq.~\eqref{eq:PDEnoisy}. The existence and uniqueness of solution of the PDE \eqref{eq:PDE} and the PDE \eqref{eq:PDEnoisy} are discussed in Appendix \ref{sec:existence_uniqueness}.
\end{remark}
\begin{theorem}\label{thm:bound_approximating_noisy}
Assume that conditions {\rm A1} - {\rm A6} hold. Let $(\rho_t)_{t\ge 0}$ be the solution of the PDE (\ref{eq:PDEnoisy}) with initialization $\rho_0$, and let $(\btheta^k)_{k \in \N}$ to be the trajectory of noisy SGD (\ref{eq:NoisySGD}) with initialization $\btheta_i^0 \sim \rho_0$ independently. Finally assume that $\lambda \leq K_6$, $\tau \leq K_6$, $T\ge 1$.
\begin{itemize}
\item[(A)] Consider noisy SGD with \emph{fixed coefficients}.
Then there exists a constant $K$ (depending uniquely on the constants $K_i$ of assumptions {\rm A1}-{\rm A5} and $K_6$) such that
\begin{align}
    \sup_{k \in [0, T/\eps] \cap \N} \abs{R_N(\btheta^k) - R(\rho_{k\eps})} \le K e^{KT} \frac{1}{\sqrt{N}}[ \sqrt{\log N} + z]   + K e^{KT} [ \sqrt{D + \log(N / \eps)} + z] \sqrt{\eps}
\end{align}
    with probability at least $1 - e^{- z^2}$.
\item[(B)] Consider noisy SGD with \emph{general coefficients}.  Then there exists a constant $K$ (depending uniquely on the constants $K_i$ of assumptions {\rm A1}-{\rm A5} and $K_6$) such that
    \begin{align}
    \sup_{k \in [0, T/\eps] \cap \N} \abs{R_N(\btheta^k) - R(\rho_{k\eps})} \le & K e^{e^{KT} [ \sqrt{\log N } + z^2 ]} [ \sqrt{D\log N} + \log^{3/2} (NT) + z^5 ]/\sqrt{N}  \\
    & +Ke^{e^{KT}[\sqrt{\log N} +z^2]} [ \sqrt{D}\log (N (T / \eps \vee 1)) + \log^{3/2} N + z^6 ] \sqrt{\eps } \nonumber
    \end{align}
    with probability at least $1 - e^{- z^2}$.
\end{itemize}
\end{theorem}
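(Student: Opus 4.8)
The plan is to prove Theorem~\ref{thm:bound_approximating_noisy} by the same synchronous-coupling architecture used for Theorem~\ref{thm:bound_approximating_noiseless}, now carrying the extra diffusion term of \eqref{eq:PDEnoisy}. I would introduce $N$ \emph{ideal particles} $\bbtheta_i^k = (\bar a_i^k, \bbw_i^k)$ evolving by the Euler scheme
\begin{align*}
\bbtheta_i^{k+1} = (1-2\lambda s_k)\bbtheta_i^k - 2 s_k \na_\btheta \Psi(\bbtheta_i^k; \rho_{k\eps}) + \sqrt{2 s_k \tau/D}\; \bg_i^k\, ,
\end{align*}
driven by the \emph{same} Gaussian increments $\bg^k_i$ as \eqref{eq:NoisySGD} and initialized at $\bbtheta_i^0 = \btheta_i^0$; using that A6 guarantees a classical $C^{1,2}$ solution $\rho_t$ (Appendix~\ref{sec:existence_uniqueness}), each $\bbtheta_i^{\,\cdot}$ has marginal law $\rho_{k\eps}$ up to the Euler discretization error accounted for below, and the $\bbtheta_i^{\,\cdot}$ are mutually independent. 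The core estimate is then a discrete Gr\"onwall bound for $\Delta^k \equiv \max_{i\le N}\|\btheta_i^k - \bbtheta_i^k\|_2$, together with a concentration bound showing that $R$ evaluated at the empirical measure $N^{-1}\sum_{i\le N}\delta_{\bbtheta_i^k}$ is close to $R(\rho_{k\eps})$; the conclusion follows since $R_N(\btheta^k) = R(\hrhoN_k)$ for the true particle measure $\hrhoN_k = N^{-1}\sum_{i\le N}\delta_{\btheta_i^k}$, and $R(\hrhoN_k)$ differs from $R$ at the ideal measure by a quantity controlled by $\Delta^k$ via the Lipschitz/boundedness properties of $V, U$ in A2, A3, A6.

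I would bound the per-step increment of $\Delta^k$ by splitting into four contributions. First, a \emph{martingale} term arising because SGD uses a single fresh sample $(y_k,\bx_k)$ in place of the population average; this is handled by Freedman/Azuma-type concentration using boundedness of $\sigma$ and $y$ (A2) and sub-Gaussianity of $\na_\bw\sigma$ (A2). Second, a \emph{mean-field fluctuation} term $\hf(\bx;\cdot)$ evaluated at the ideal i.i.d.\ empirical measure versus $\rho_{k\eps}$, which concentrates at rate $N^{-1/2}$; the crucial point --- and the reason the $1/\sqrt N$ error is, for fixed coefficients, essentially dimension-free --- is that by a chaining/uniform-concentration argument over $\bx$ rather than a crude union bound, this costs only logarithmic factors in $D$, and the initialization fluctuation is controlled the same way using A4, A5. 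Third, a \emph{coupling} term comparing $\na\Psi$ and the empirical measures of the true vs.\ ideal particles, bounded by $\Delta^k$ times a Lipschitz constant (A3, A6). Fourth, an Euler \emph{discretization} error of order $\eps$ per unit time; for the diffusion this is harmless because the Gaussian increments are matched exactly, so only the drift is frozen over a step. Summing over $k\le T/\eps$ and applying Gr\"onwall produces the stated bounds, with the step-size terms proportional to $\sqrt{\eps}$ (and the $\sqrt D$ inside them) coming from this last piece.

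The main obstacle --- and the reason part~(B) is substantially weaker and not dimension-free --- is the \emph{a priori} control of the unbounded coefficients under the injected noise. Since $U(\btheta_1,\btheta_2) = a_1a_2\,u(\bw_1,\bw_2)$, the drift on $a_i$ is affine in the $a$'s, and the Lipschitz and fluctuation constants in the coupling and mean-field terms scale with $M_k \equiv \max_{i\le N}|a_i^k|$, so one needs a high-probability bound on $\sup_{k\le T/\eps} M_k$. I would establish it by a separate induction: the factor $(1-2\lambda s_k)$ and the boundedness of $v,u$ (which follows from A2) keep the deterministic part controlled, while the accumulated noise $\sum_{j\le k}\sqrt{2 s_j\tau/D}\,\bg_i^j$ is handled by a maximal inequality over the $N$ particles and the $T/\eps$ steps; this is where $\sqrt{\log N}$ enters, and --- because this growing bound re-enters the Gr\"onwall exponent, which itself grows with $T$ --- it is amplified into the doubly-exponential factor $e^{e^{KT}[\sqrt{\log N}+z^2]}$. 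Making this rigorous requires running the whole coupling only up to the stopping time at which $\Delta^k$ or $M_k$ first exceeds a chosen threshold, showing that on the high-probability good event this stopping time exceeds $T/\eps$, and then closing the bootstrap. For fixed coefficients (part~(A)) this difficulty disappears: $a_i$ is not updated, the relevant norms are genuinely $O(1)$, the interaction is effectively bounded, and one recovers the clean single-exponential bound as in Theorem~\ref{thm:bound_approximating_noiseless}(A), the only new ingredient relative to it being the exact matching of the diffusion increments and the resulting $\log(N/\eps)$ correction.
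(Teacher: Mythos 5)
Your coupling architecture captures the essential ingredients of the paper's argument --- synchronous coupling of the Brownian increments, a discrete Gr\"onwall loop, an \emph{a priori} high-probability bound on $\max_i |a_i|$ obtained by a stopping-time/bootstrap argument, and the doubly-exponential blow-up in (B) coming from that bound re-entering the Gr\"onwall exponent. However two points would need fixing.

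First, your four ``per-step contributions'' are misattributed across the two halves of the decomposition $|R_N(\btheta^k)-R(\rho_{k\eps})|\le|R_N(\btheta^k)-R_N(\bbtheta^k)|+|R_N(\bbtheta^k)-R(\rho_{k\eps})|$. With your ideal particles already defined as a discrete Euler scheme, both $\btheta^k$ and $\bbtheta^k$ are discrete, so the $\Delta^k$-recursion contains only the martingale, coupling, and mean-field-fluctuation pieces; the Euler discretization error does not appear there. It instead enters the second half, in comparing the law of the discrete ideal scheme to the continuous-time PDE solution $\rho_{k\eps}$, and you would need to show that the Euler marginal is $W_2$-close to $\rho_{k\eps}$ before the chaos/concentration step can be applied. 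The paper avoids this entanglement by interposing the \emph{continuous-time} nonlinear dynamics (whose marginal is exactly $\rho_t$), then the continuous-time particle dynamics, then discrete GD, then SGD, so each interpolating pair isolates exactly one error mechanism. Your compressed version can be made to work, but the four-level telescope is cleaner and makes the $\log(N/\eps)$ factor in part (A) transparent, since it arises when the uniform-in-time union bound is done over a grid with the H\"older-$1/2$ modulus of the diffusing trajectories.

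Second, the explanation you give for the dimension-free $N^{-1/2}$ rate --- a chaining/uniform-concentration argument over $\bx$ --- is not the actual mechanism. The expectation over $\bx$ is already taken inside $\nabla_1 U$, so there is nothing to chain over. The point is that A3 makes $\nabla_1 U(\btheta,\btheta')$ a vector of $\ell_2$-norm at most $K$ uniformly, and the norm of an average of $N$ bounded mean-zero vectors in $\R^D$ concentrates at rate $K/\sqrt N$ \emph{independently of $D$} (Lemma \ref{lem:bounded_difference_martingale}). The only union bounds used are over the $N$ particles and a time grid, costing $\sqrt{\log(NT)}$, not $D$. For part (B) with unbounded coefficients this bound degrades precisely because $\|\nabla_1 U\|_2$ scales with the $a_i$'s, which is why the high-probability $\sqrt{\log N}$ envelope on $\|\ba^k\|_\infty$ is needed and, feeding into the Gr\"onwall exponent, produces the doubly-exponential $e^{e^{KT}[\sqrt{\log N}+z^2]}$ you correctly anticipate.
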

\begin{remark}
Unlike the other results in this paper, part $(B)$ of Theorem \ref{thm:bound_approximating_noisy} does not establish a dimension-free bound. Further,
while previous bounds allow to control the approximation error for any $T= o(\log N)$,  Theorem \ref{thm:bound_approximating_noisy}.$(B)$ requires
$T=o(\log\log N)$ . The main difficulty in part $(B)$ is to control the growth of the coefficients $a_i$. This is more challenging than in the noiseless case, 
since we cannot give a deterministic bound on $|a_i|$.

Despite these drawbacks, Theorem \ref{thm:bound_approximating_noisy} $(B)$ is the first quantitative bound 
approximating noisy SGD by the distributional dynamics, for the case of unbounded coefficients. It implies that the mean field theory
is accurate when $N\gg D$. 
\end{remark}

\subsection{Example: Centered anisotropic Gaussians}

To illustrate an application of the theorems, we consider the problem
of classifying two Gaussians with the same mean and different
covariance. This example was studied in \cite{mei2018mean}, but we restate it here for the reader's convenience. 

Consider the joint distribution of data $(y,\bx)$ given by the following:
\begin{itemize}
\item[] With probability $1/2$: $y=+1$, $\bx\sim\normal(0,\bSigma_+)$,
\item[] With probability $1/2$: $y=-1$, $\bx\sim\normal(0,\bSigma_-)$,
\end{itemize}
where $\bSigma_{\pm} = \bU^\sT \diag((1 \pm \Delta)^2 \id_{s_0}, \id_{d - s_0}) \bU$ for $\bU$ to be an unknown orthogonal matrix. 
In other words, there exists a subspace $\cV$ of dimension $s_0$, such that the projection of $\bx$ on the subspace $\cV$ is distributed according to an isotropic Gaussian with variance $\tau_+^2 = (1 + \Delta)^2$ (if
$y=+1$) or $\tau_-^2 = (1 - \Delta^2)$ (if $y=-1$). The projection orthogonal to $\cV$ has instead the same variance in the two classes.

We  choose an activation function without offset or output weights, namely $\sigma_*(\bx;\btheta_i) = \sigma(\<\bw_i,\bx\>)$. 
While qualitatively similar results are obtained for other choices of $\sigma$, 
we will use a simple piecewise linear function (truncated ReLU) as a running example: take $t_1 < t_2$,
\[
\sigma(t) = \left\{\begin{aligned} &s_1, &~~& \text{ if } t\le t_1, \\
&s_2, &~~&  \text{ if } t\ge t_2, \\
&s_1 + (s_2 - s_1) (t - t_1) / (t_2 - t_1), &~~& \text{ if } t\in (t_1,t_2).
\end{aligned} \right.
\]
We introduce a class of good uninformative initializations $\cuP_{\good}\subseteq\cuP(\reals_{\ge 0})$ for which convergence to the optimum takes place. For $\rad\in\cuP(\reals_{\ge 0})$, we let 
\[
\oR_d(\rad) \equiv R(\rad \times {\rm Unif}(\mathbb S^{d - 1})), ~~~~~ \oR_\infty(\rad) \equiv \lim_{d \to \infty} \oR_d(\rad).
\]
We say that $\rad\in\cuP_{\good}$ if: $(i)$ $\rad$ is absolutely continuous with respect to Lebesgue measure, with bounded density; $(ii)$ $\oR_{\infty}(\rad) < 1$.

The following theorem is an improvement of \cite[Theorem 2]{mei2018mean} using Theorem \ref{thm:bound_approximating_noiseless}, whose proof is just by replacing the last step of proof of \cite[Theorem 2]{mei2018mean} using the new bounds developed in \ref{thm:bound_approximating_noiseless} (A). 
\begin{theorem}\label{thm:ConvergenceAnisotropic}
For any $\eta, \Delta, \delta > 0$, and $\rad_0 \in \cuP_{\good}$, there exists $d_0 = d_0(\eta, \rad_0, \Delta,\gamma)$, $T = T(\eta, \rad_0, \Delta,\gamma)$, and $C_0 = C_0(\eta, \rad_0, \Delta, \delta,\gamma)$, such that the following holds for the problem of 
classifying anisotropic Gaussians with $s_0=\gamma d$, $\gamma\in (0,1)$ fixed.
For any dimension parameters  $s_0 = \gamma d \ge d_0$, number of neurons $N \ge C_0$, consider SGD initialized with initialization $(\bw_i^0)_{i \le N} \sim_{iid}  \rad_0 \times {\rm Unif}(\mathbb S^{d - 1})$ and step size $\eps \le 1/(C_0 d)$. Then we have 
$R_{N}(\btheta^{k}) \le \inf_{\btheta\in\reals^{N\times d}} R_{N}(\btheta) + \eta$ for any $k \in [T/\eps, 10 T/\eps]$ with probability at least $1 - \delta$.
\end{theorem}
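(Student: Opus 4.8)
The plan is to leave the architecture of the proof of \cite[Theorem 2]{mei2018mean} intact and only upgrade its final, finite-$N$, step. That proof has two stages. The first is purely at the level of the distributional dynamics \eqref{eq:PDE}: started from $\rho_0 = \rad_0\times\unif(\mathbb S^{d-1})$ with $\rad_0\in\cuP_{\good}$, the flow converges to the global minimum of $R(\cdot)$. This uses that $\sigma_\star(\bx;\btheta)=\sigma(\<\bw,\bx\>)$ is rotation invariant and that $\bSigma_\pm$ has the block structure isotropic-on-$\cV$ / isotropic-on-$\cV^\perp$, so $\rho_t$ stays of the form (radial law) $\times$ (uniform on the relevant spheres) and the PDE collapses to a low-dimensional effective dynamics; in the limit $d\to\infty$ with $s_0=\gamma d$ this effective dynamics becomes deterministic, and using $\oR_\infty(\rad_0)<1$ one checks that $R(\rho_t)$ decreases to $\inf_{\rho}R(\rho)$. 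Concretely this yields $d_0(\eta,\rad_0,\Delta,\gamma)$ and $T(\eta,\rad_0,\Delta,\gamma)$ (enlarged if necessary so that $T\ge 1$) such that whenever $s_0=\gamma d\ge d_0$ one has $R(\rho_t)\le \inf_{\rho}R(\rho)+\eta/2$ for all $t\in[T,10T]$. I would import this stage verbatim from \cite{mei2018mean}. Note also that any $N$-atomic probability measure is an admissible $\rho$, so $\inf_{\rho}R(\rho)\le \inf_{\btheta\in\reals^{N\times d}}R_N(\btheta)$.

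For the second stage I would invoke Theorem \ref{thm:bound_approximating_noiseless}(A) in place of the original finite-$N$ estimate of \cite{mei2018mean}. This requires first certifying {\rm A1}--{\rm A4} with constants independent of $d$: take $\xi\equiv1$ (A1); $\sigma$ is the truncated ReLU, so $\norm{\sigma}_\infty\le\max(|s_1|,|s_2|)$ and $|y_k|\le1$, while $\nabla_{\bw}\sigma(\<\bw,\bx\>)=\sigma'(\<\bw,\bx\>)\,\bx$ with $\sigma'$ bounded and $\bx\sim\normal(0,\bSigma_\pm)$, $\norm{\bSigma_\pm}_{\rm op}\le(1+\Delta)^2$, hence this gradient is $O(1+\Delta)$-sub-Gaussian (A2); $v(\bw)=-\E\{y\,\sigma(\<\bw,\bx\>)\}$ and $u(\bw_1,\bw_2)=\E\{\sigma(\<\bw_1,\bx\>)\sigma(\<\bw_2,\bx\>)\}$ are smooth with bounded Lipschitz gradients, since the Gaussian average of the piecewise-linear $\sigma$ is smooth and Gaussian integration by parts writes $\nabla v,\nabla u$ in terms of the bounded $\sigma'$ and the bounded covariance (A3); and A4 holds trivially because there are no output weights (equivalently $a_i\equiv1$ fixed, $D=d$, which is the fixed-coefficients case). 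Applying Theorem \ref{thm:bound_approximating_noiseless}(A) with horizon $10T$ and $z=\sqrt{\log(1/\delta)}$ gives, with probability at least $1-\delta$,
\[
\sup_{k\in[0,\,10T/\eps]\cap\N}\big|R_N(\btheta^k)-R(\rho_{k\eps})\big|\;\le\; Ke^{10KT}\Big(\tfrac{\sqrt{\log N}+z}{\sqrt N}+(\sqrt{d+\log N}+z)\sqrt\eps\Big).
\]
Choosing $C_0=C_0(\eta,\rad_0,\Delta,\delta,\gamma)$ so large that $N\ge C_0$ forces the first term below $\eta/4$ and $\eps\le 1/(C_0 d)$ forces the second term below $\eta/4$, we obtain $|R_N(\btheta^k)-R(\rho_{k\eps})|\le\eta/2$ for all $k\in[T/\eps,10T/\eps]$. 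Combining with the first stage and $\inf_\rho R(\rho)\le\inf_{\btheta\in\reals^{N\times d}}R_N(\btheta)$, the triangle inequality gives $R_N(\btheta^k)\le R(\rho_{k\eps})+\eta/2\le\inf_\rho R(\rho)+\eta\le\inf_{\btheta}R_N(\btheta)+\eta$ on that window, as claimed.

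The one genuinely new ingredient relative to \cite{mei2018mean} is that Theorem \ref{thm:bound_approximating_noiseless}(A) is dimension-free in $N$, so the constant $C_0$ above does not depend on $d$; the surviving restriction $\eps\le1/(C_0 d)$ is exactly the step-size/dimension trade-off already present in \cite{mei2018mean}. I expect the only points needing care to be the verification of {\rm A1}--{\rm A4} with $d$-independent constants --- in particular the smoothness and Lipschitz bounds on $v,u$, which come from the Gaussian smoothing rather than from $\sigma$ itself --- and the bookkeeping of how $\eta$, $\delta$ (through $z$), $T$, and the constant $K$ of Theorem \ref{thm:bound_approximating_noiseless} are absorbed into $C_0$; the PDE-level global-convergence analysis is used as a black box and is not reproved.
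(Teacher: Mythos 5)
Your proposal is correct and takes essentially the same route as the paper. The paper's own ``proof'' is a single sentence pointing to the proof of \cite[Theorem 2]{mei2018mean} with its finite-$N$ step replaced by the new bound of Theorem \ref{thm:bound_approximating_noiseless}(A); your two-stage argument (import the PDE-level convergence to $\inf_\rho R(\rho)+\eta/2$ on $[T,10T]$ from \cite{mei2018mean}, verify {\rm A1}--{\rm A4} with $d$-independent constants in the fixed-coefficient setting, apply Theorem \ref{thm:bound_approximating_noiseless}(A) on horizon $10T$ with $z=\sqrt{\log(1/\delta)}$, and absorb constants into $C_0$) is exactly this replacement made explicit, and the auxiliary observation $\inf_\rho R(\rho)\le\inf_{\btheta}R_N(\btheta)$ that you invoke is the correct way to close the triangle inequality.
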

Comparing to \cite[Theorem 2]{mei2018mean}, here we require $N = O(1)$ neuron rather than previously $N = O(d)$ neurons. The number of data used $k = O(d)$ is still on the optimal order.

\section{Connection with kernel methods}
\label{sec:Connection}

As discussed above, mean-field theory captures the SGD dynamics of two layers neural networks when the number of hidden units $N$ is large.
Several recent papers studied a different description, that approximates the neural network as performing a form of kernel ridge regression 
\cite{jacot2018neural,du2018gradient}. This behavior also arises for large $N$: we 
will refer to this as to the `kernel regime', or `kernel limit'. As shown in \cite{chizat2018note} the existence of a kernel regime
is not specific to neural networks but it is a generic feature of overparameterized models, under certain differentiability
assumptions.

\subsection{A coupled dynamics}

 We will focus on noiseless gradient flow, and assume $y = f(\bx)$ (a general joint distribution over $(y,\bx)$ is recovered by setting
$f(\bx) = \E\{y|\bx\}$).  As in \cite{chizat2018note}, we modify the model \eqref{eq:NNET} by introducing an additional scale parameter $\alpha$:
\begin{align}
\hf_{\alpha, N}(\bx;\btheta) = \frac{\alpha}{N}\sum_{i=1}^N\bsigma_{\star}(\bx;\btheta_i)\, ,
\end{align}
In the case of general coefficients $a_i$, this amounts to rescaling the coefficients $a_i\to a_i/\alpha$. Equivalently, this corresponds  to a different initialization
for the $a_i$'s (larger by a factor $\alpha$). 

We first note that the theorems of the previous section obviously 
hold for the modified dynamics, with the PDE \eqref{eq:PDE} generalized to
\begin{align}
\partial_t \rho_t =& \alpha \nabla_\btheta \cdot \big(\rho_t\nabla_\btheta \Psi_\alpha(\btheta,\rho_t)\big) \, ,\;\; \\
\Psi_\alpha(\btheta,\rho) =& \E_\bx\big\{\sigma_{\star}(\bx;\btheta)\, ( \hf_\alpha(\bx;\rho)-f(\bx))\big\} = V(\btheta) + \alpha \int U(\btheta, \btheta') \rho(\de \btheta')\,,
\end{align}
where $\hf_\alpha(\bx;\rho)  =\alpha\int \sigma_{\star}(\bx;\btheta)\, \rho(\de\btheta)$. It is convenient to redefine time units by letting
$\rho^{\alpha}_{t}\equiv \rho_{\alpha^{-2} t}$. This satisfies the \emph{rescaled distributional dynamics}
\begin{align}\tag{Rescaled-DD}
\partial_t \rho^{\alpha}_t =  \frac{1}{\alpha}\nabla_\btheta \cdot \big(\rho^{\alpha}_t\nabla_\btheta \Psi_\alpha(\btheta,\rho^{\alpha}_t)\big)\, .  \label{eq:RescaledPDE}
\end{align}
We next consider the residuals $u_t^{\alpha}(\bx) = f(\bx)-f(\bx;\rho^{\alpha}_t)$ which we view as an element of $\Ltwo = \Ltwo(\reals^d;\prob)$. As 
first shown in \cite{rotskoff2018neural}, this satisfies the following \emph{mean field residual dynamics} (for further background, we refer to Appendix  \ref{sec:mean_field_kernel_appendix}):
\begin{align}
\partial_t u_t^{\alpha}(\bx) &= -\int \cH_{\rho^{\alpha}_t}(\bx,\tbx) u_t^\alpha(\tbx) \, \prob(\de\tbx) \equiv -(\cH_{\rho_t^\alpha}u^{\alpha}_t)(\bx)\, ,\label{eq:Residual}\tag{RD}\\
\cH_{\rho}(\bx,\tbx) & \equiv \int \<\nabla_{\btheta}\sigma_{\star}(\bx;\btheta),\nabla_{\btheta}\sigma_{\star}(\tbx;\btheta)\>\, \rho(\de\btheta)\, .
\end{align}
Coupling the dynamics (\ref{eq:RescaledPDE}) and (\ref{eq:Residual}) suggests the following point of description. Gradient flow dynamics of two-layers neural network is a kernel boosting dynamics with a time-varying kernel. The scaling parameter $\alpha$ controls the speed that the kernel evolves. 

The mean field residual dynamics (\ref{eq:Residual}) implies that
\[
\partial_t R_\alpha(\rho_t^\alpha) = \partial_t (\| u_t^\alpha \|_{\Ltwo}^2) = - 2 \< u_t^\alpha, \cH_{\rho_t^\alpha} u_t^\alpha\>_{\Ltwo},
\]
so that the risk will be non-increasing along the gradient flow dynamics. However, since the kernel $\cH_{\rho_t^\alpha}$ is not fixed, it is hard to analyze when the risk converges to $0$
(see \cite[Theorem 4]{mei2018mean}, \cite[Theorem 3.3 and 3.5]{chizat2018global} for general convergence results).

\subsection{Kernel limit of residual dynamics}

The kernel regime corresponds to large $\alpha$ and allows for a simpler treatment of the dynamics. Heuristically,
the reason for such a simplification is that the time derivative of $\rho_t^\alpha$ is of order $1/\alpha$, cf. \eqref{eq:RescaledPDE}. We are therefore tempted to replace $\cH_{\rho^{\alpha}_t}$ in Eq.~\eqref{eq:Residual} by $\cH_{\rho_0}$. Formally, we define the following \emph{linearized residual dynamics} 
\begin{align}
\partial_t u^*_t = -\cH_{\rho_0}u^*_t\, .\label{eq:Linearized}
\end{align}
We can also define the corresponding predictors by $f^*_t = f - u_t^*$.
The operator $\cH_{\rho_0}$ is bounded and standard semigroup theory \cite{evans2009partial} implies the following.
\begin{lemma}\label{lem:convergence_linear_residual_dynamics}
We have $\lim_{t\to\infty}u^*_t= u^{*}_{\infty} = \proj_{\rho_0}u^*_{0}$, where $\proj_{\rho_0}$ is the orthogonal projector onto the null space of
$\cH_{\rho_0}$. In particular, if the null space of $\cH_{\rho_0}$ is empty, then $\lim_{t\to\infty}\| u^{*}_t\|_{\Ltwo}\to 0$. Correspondingly $f^*_{\infty} = \proj^{\perp}_{\rho_0}f+  \proj_{\rho_0}f_0^*$ (where $\proj^{\perp}_{\rho_0}=\id-\proj_{\rho_0}$). 
\end{lemma}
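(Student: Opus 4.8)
The statement concerns the linearized residual dynamics $\partial_t u^*_t = -\cH_{\rho_0} u^*_t$, an autonomous linear ODE in the Hilbert space $L^2(\reals^d;\prob)$ driven by the bounded, self-adjoint, positive semidefinite operator $\cH_{\rho_0}$ (self-adjoint because the kernel $\cH_\rho(\bx,\tbx)$ is symmetric, positive semidefinite because $\cH_\rho(\bx,\tbx)=\int\langle\nabla_\btheta\sigma_\star(\bx;\btheta),\nabla_\btheta\sigma_\star(\tbx;\btheta)\rangle\rho(\de\btheta)$ is a Gram kernel). The conclusion is that $u^*_t\to\proj_{\rho_0}u^*_0$, the projection onto $\ker\cH_{\rho_0}$. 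Let me sketch how I would prove this.

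\medskip

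\noindent\textbf{Proof proposal.}

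\medskip

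The plan is to use the spectral theorem for the bounded self-adjoint positive operator $\cH_{\rho_0}$. First I would check that $\cH_{\rho_0}$ is a well-defined bounded operator on $\Ltwo=\Ltwo(\reals^d;\prob)$: under the standing assumptions (A2, A3) the kernel $\cH_{\rho_0}(\bx,\tbx)$ is uniformly bounded, so $\cH_{\rho_0}$ maps $\Ltwo$ into $\Ltwo$ with $\|\cH_{\rho_0}\|_{\sop}<\infty$; symmetry of the kernel gives self-adjointness, and the Gram structure gives $\langle g,\cH_{\rho_0}g\rangle_{\Ltwo}=\int\|\int\nabla_\btheta\sigma_\star(\bx;\btheta)g(\bx)\prob(\de\bx)\|_2^2\,\rho_0(\de\btheta)\ge 0$, so $\cH_{\rho_0}\succeq 0$. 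Hence its spectrum lies in $[0,\|\cH_{\rho_0}\|_{\sop}]$, and the semigroup $e^{-t\cH_{\rho_0}}$ is the unique (strongly continuous, in fact norm-continuous and contractive since $0\le e^{-t\cH_{\rho_0}}\preceq\id$) solution operator, so $u^*_t=e^{-t\cH_{\rho_0}}u^*_0$ by standard semigroup theory \cite{evans2009partial}.

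\medskip

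Next I would invoke the projection-valued spectral measure $E(\de\mu)$ of $\cH_{\rho_0}$ supported on $[0,\|\cH_{\rho_0}\|_{\sop}]$, writing $u^*_t=\int_{[0,\infty)}e^{-t\mu}\,E(\de\mu)u^*_0$ and $\|u^*_t-E(\{0\})u^*_0\|_{\Ltwo}^2=\int_{(0,\infty)}e^{-2t\mu}\,\langle E(\de\mu)u^*_0,u^*_0\rangle$. The pointwise bound $e^{-2t\mu}\le 1$ together with $e^{-2t\mu}\to 0$ for every $\mu>0$ and dominated convergence (against the finite measure $\mu\mapsto\langle E(\de\mu)u^*_0,u^*_0\rangle$) gives $\|u^*_t-E(\{0\})u^*_0\|_{\Ltwo}\to 0$. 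Finally I would identify $E(\{0\})=\proj_{\rho_0}$, the orthogonal projector onto $\ker\cH_{\rho_0}$: this is the standard fact that the spectral projection of a self-adjoint operator at the eigenvalue $0$ is exactly projection onto its kernel. This establishes $u^*_\infty=\proj_{\rho_0}u^*_0$; if $\ker\cH_{\rho_0}=\{0\}$ then $\proj_{\rho_0}=0$ and $\|u^*_t\|_{\Ltwo}\to 0$. The statement about predictors then follows by linearity: $f^*_t=f-u^*_t$ with $u^*_0=f-f^*_0$, so $u^*_\infty=\proj_{\rho_0}(f-f^*_0)=\proj_{\rho_0}f-\proj_{\rho_0}f^*_0$, hence $f^*_\infty=f-u^*_\infty=\proj^\perp_{\rho_0}f+\proj_{\rho_0}f^*_0$.

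\medskip

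I do not expect a serious obstacle here — the result is essentially a textbook consequence of the spectral theorem. The one point requiring minor care is that $\cH_{\rho_0}$ need not be compact (the feature map $\btheta\mapsto\nabla_\btheta\sigma_\star(\cdot\,;\btheta)$ lives over a noncompact parameter space and $\rho_0$ need not have compact support), so I should avoid any argument that presumes a discrete spectrum or an orthonormal eigenbasis; the continuous spectral-measure / dominated-convergence route above sidesteps this cleanly. Everything else is routine verification that the Gram kernel defines a bounded positive self-adjoint operator, which follows directly from assumptions A2–A3.
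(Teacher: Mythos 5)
Your proof is correct and fills in exactly what the paper leaves implicit: the paper simply asserts the lemma as a consequence of ``standard semigroup theory'' (citing Evans) without giving details, and your spectral-theorem argument with the projection-valued measure, dominated convergence on $(0,\infty)$, and the identification $E(\{0\})=\proj_{\rho_0}$ is the canonical way to make that assertion rigorous. You are also right to flag that $\cH_{\rho_0}$ need not be compact, so relying on the continuous spectral measure rather than an eigenbasis is the appropriate level of care.
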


The next theorem shows that the above intuition is correct. For $\alpha \ge t^2 D^{3/2}$, the linearized dynamics is a good approximation to
the mean field dynamics. 
Below, we denote the population risk by $R_\alpha(\rho)$: $R_\alpha(\rho) \equiv \E_{\bx}[(f(\bx) - \hf_\alpha(\bx; \rho))^2]$.
\begin{theorem}\label{thm:Kernel}
Let $u^{\alpha}_t$ and $u^*_t$ be the residues in the mean-field dynamics (\ref{eq:Residual}) and linearized dynamics (\ref{eq:Linearized}), respectively. Let assumptions {\rm A1}, {\rm A3}, {\rm A4} hold, and additionally  assume the following
\begin{itemize}
    \item $|y_i|$, $\|\sigma\|_{\infty}\le K_2$, and $\btheta\mapsto \sigma_{\star}(\bx;\btheta)$ is differentiable.
    \item $\| \nabla^3 u(\bw, \bw') \|_{\op}, \| \nabla^4 u(\bw, \bw') \|_{\op} \le \kappa$.
    \item $R_\alpha(\rho_0) \le B$.
\end{itemize}
Then there exists a constant $K$ depending on $\{ K_i \}_{i=1}^4$, such that
\begin{itemize}
\item[(A)] For SGD with fixed coefficients, we have 
\begin{align}
\|u^{\alpha}_t-u^{*}_t\|_{L_2} \le& K \kappa^{1/2}  B \frac{D^{3/2}t^2}{\alpha} \, ,\label{eq:residual_difference_fixed}\\
R_\alpha(\rho^{\alpha}_t) \le& \Big(\|u^{*}_t\|_{\Ltwo}+ K \kappa^{1/2}  B \frac{D^{3/2}t^2}{\alpha}\Big)^{2}\,. \label{eq:risk_bound_fixed}
\end{align}
\item[(B)] For SGD with general coefficients, we have 
\begin{align}
\|u^{\alpha}_t-u^{*}_t\|_{L_2} \le&K \kappa^{1/2} (1+ B^{1/2} t / \alpha)^3    B \frac{D^{3/2}t^2}{\alpha}\, , \label{eq:residual_difference_general}\\
R_\alpha(\rho^{\alpha}_t) \le& \Big(\|u^{*}_t\|_{\Ltwo} + K \kappa^{1/2} (1+ B^{1/2} t / \alpha)^3    B \frac{D^{3/2}t^2}{\alpha}\Big)^2\,.\label{eq:risk_bound_general}
\end{align}
\item[(C)] In particular, if under the law $(a, \bw) \sim \rho_0$, $a$ is independent of $\bw$ and $\vert \E(a) \vert \le K_5/\alpha$. Then
 $B\le K$ is independent of $\alpha$. If the null space of $\cH_{\rho_0}$ is empty, then under both settings (fixed and variable coefficients)
\begin{align}\label{eqn:risk_goes_to_0}
\lim_{\alpha \to \infty} \sup_{t \in [0, T]} \| u^\alpha_t - u^*_t \|_{L_2} =& 0,\\
\lim_{t \to \infty}\lim_{\alpha \to \infty} R_\alpha(\rho^\alpha_t) =& 0\, . 
\end{align}
\end{itemize}
\end{theorem}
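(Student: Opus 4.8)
The plan is to control the discrepancy $\delta_t := u^\alpha_t - u^*_t$ through its own evolution equation and then read off the risk bound. Subtracting \eqref{eq:Linearized} from \eqref{eq:Residual} gives $\partial_t\delta_t = -\cH_{\rho_0}\delta_t - (\cH_{\rho^\alpha_t}-\cH_{\rho_0})\,u^\alpha_t$. Since $\cH_{\rho_0}$ is a bounded self-adjoint nonnegative operator on $\Ltwo$, the semigroup $e^{-s\cH_{\rho_0}}$ is a contraction, so Duhamel's formula yields $\delta_t = -\int_0^t e^{-(t-s)\cH_{\rho_0}}(\cH_{\rho^\alpha_s}-\cH_{\rho_0})u^\alpha_s\,ds$, hence $\|\delta_t\|_{\Ltwo}\le\int_0^t\|\cH_{\rho^\alpha_s}-\cH_{\rho_0}\|_{\op}\,\|u^\alpha_s\|_{\Ltwo}\,ds$. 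Because the risk is non-increasing along the gradient flow (the identity $\partial_t R_\alpha(\rho^\alpha_t)=-2\langle u^\alpha_t,\cH_{\rho^\alpha_t}u^\alpha_t\rangle_{\Ltwo}$ recorded after \eqref{eq:Residual}), $\|u^\alpha_s\|_{\Ltwo}\le\|u^\alpha_0\|_{\Ltwo}=R_\alpha(\rho_0)^{1/2}\le B^{1/2}$ for all $s$. Thus everything reduces to showing that $\cH_{\rho^\alpha_s}$ stays close to $\cH_{\rho_0}$ on the time scale considered — the rigorous form of the heuristic that $\rho^\alpha_t$ drifts at speed $O(1/\alpha)$.

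For the kernel comparison I would write $\rho^\alpha_s=(\phi^\alpha_s)_\#\rho_0$ with $\phi^\alpha_s$ the flow of the characteristic ODE of the continuity equation \eqref{eq:RescaledPDE}, whose velocity field is $-\tfrac1\alpha\nabla_\btheta\Psi_\alpha(\btheta,\rho^\alpha_s)=\tfrac1\alpha\E_\bx[\nabla_\btheta\sigma_\star(\bx;\btheta)u^\alpha_s(\bx)]$; bounding its norm by $\tfrac1\alpha(\E_\bx\|\nabla_\btheta\sigma_\star(\bx;\btheta)\|^2)^{1/2}\|u^\alpha_s\|_{\Ltwo}$, using the sub-Gaussian moment bounds on $\nabla_\bw\sigma$ (so $\E_\bx\|\nabla_\bw\sigma\|^2=O(D)$) and $|a|\le K_4$ on $\supp\rho_0$, one gets $\sup_{\btheta\in\supp\rho_0}\|\phi^\alpha_s(\btheta)-\btheta\|=O(D^{1/2}B^{1/2}s/\alpha)$ in the fixed-coefficient case. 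Then exploit the factorization $\cH_\rho=S_\rho^\ast S_\rho$: with $S_0,\widetilde S_s:\Ltwo\to L^2(\rho_0;\reals^D)$ given by $(S_0g)(\btheta)=\E_\bx[\nabla_\btheta\sigma_\star(\bx;\btheta)g(\bx)]$ and $(\widetilde S_sg)(\btheta)=\E_\bx[\nabla_\btheta\sigma_\star(\bx;\phi^\alpha_s(\btheta))g(\bx)]$, one has $\cH_{\rho_0}=S_0^\ast S_0$, $\cH_{\rho^\alpha_s}=\widetilde S_s^\ast\widetilde S_s$ and $\|\cH_{\rho^\alpha_s}-\cH_{\rho_0}\|_{\op}\le\|\widetilde S_s-S_0\|_{\op}(\|\widetilde S_s\|_{\op}+\|S_0\|_{\op})$. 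A Cauchy--Schwarz step bounds $\|\widetilde S_s-S_0\|_{\op}^2$ by $\int\E_\bx\|\nabla_\btheta\sigma_\star(\bx;\phi^\alpha_s(\btheta))-\nabla_\btheta\sigma_\star(\bx;\btheta)\|^2\rho_0(d\btheta)$, and since $\E_\bx\langle\nabla_\bw\sigma(\bx;\bw),\nabla_\bw\sigma(\bx;\bw')\rangle=\mathrm{tr}\,\nabla_\bw\nabla_{\bw'}u(\bw,\bw')$, this is a mixed second difference controlled by $\|\bw_s-\bw_0\|^2$ times a contraction of $\nabla^4u$, i.e. by $\kappa D\,\|\phi^\alpha_s(\btheta)-\btheta\|^2$ — this is where the $\kappa^{1/2}$ and one extra $D^{1/2}$ appear — while $\|\widetilde S_s\|_{\op},\|S_0\|_{\op}=O(D^{1/2})$. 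Multiplying the three factors, inserting into the integral, using $\|u^\alpha_s\|_{\Ltwo}\le B^{1/2}$ and integrating in $s$ gives $\|\delta_t\|_{\Ltwo}=O(\kappa^{1/2}B\,D^{3/2}t^2/\alpha)$, i.e. \eqref{eq:residual_difference_fixed}; \eqref{eq:risk_bound_fixed} follows from $R_\alpha(\rho^\alpha_t)^{1/2}=\|u^\alpha_t\|_{\Ltwo}\le\|u^*_t\|_{\Ltwo}+\|\delta_t\|_{\Ltwo}$.

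For part (B) the new ingredient is an \emph{a priori} bound on the second-layer coefficients: along the characteristics $\dot a_t=\tfrac1\alpha\E_\bx[\sigma(\bx;\bw_t)u^\alpha_t(\bx)]$, so $\|\sigma\|_\infty\le K_2$ and $\|u^\alpha_t\|_{\Ltwo}\le B^{1/2}$ give $|a_t|\le K_4+K_2B^{1/2}t/\alpha=O(1+B^{1/2}t/\alpha)$. Re-running the argument while carrying this factor — it enters the $\bw$-velocity, $\|\widetilde S_s\|_{\op}$, and the Lipschitz comparison of $\nabla_\btheta\sigma_\star$ (also through its extra $a$-coordinate) — produces the multiplicative $(1+B^{1/2}t/\alpha)^3$ in \eqref{eq:residual_difference_general}, and \eqref{eq:risk_bound_general} follows as before. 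Part (C): when $a\perp\bw$ under $\rho_0$ and $|\E a|\le K_5/\alpha$, independence gives $\hf_\alpha(\bx;\rho_0)=\alpha\,\E[a]\,\E_\bw[\sigma(\bx;\bw)]$, which is $O(K_2K_5)$ uniformly in $\alpha$, hence $R_\alpha(\rho_0)\le B$ with $B=O(1)$ independent of $\alpha$; sending $\alpha\to\infty$ in \eqref{eq:residual_difference_fixed}/\eqref{eq:residual_difference_general} (the prefactors stay bounded) gives $\sup_{t\le T}\|u^\alpha_t-u^*_t\|_{\Ltwo}\to0$ and $R_\alpha(\rho^\alpha_t)=\|u^\alpha_t\|_{\Ltwo}^2\to\|u^*_t\|_{\Ltwo}^2$, and finally $\|u^*_t\|_{\Ltwo}\to0$ as $t\to\infty$ by Lemma \ref{lem:convergence_linear_residual_dynamics} when $\cH_{\rho_0}$ has trivial null space, which is \eqref{eqn:risk_goes_to_0}.

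I expect the main obstacle to be making the transport representation $\rho^\alpha_s=(\phi^\alpha_s)_\#\rho_0$ rigorous with a displacement bound uniform over $\supp\rho_0$: this requires the velocity field $\btheta\mapsto\nabla_\btheta\Psi_\alpha(\btheta,\rho^\alpha_s)$ to be globally Lipschitz with controlled constants (the role of the boundedness, Lipschitz and higher-smoothness hypotheses and of the existence/uniqueness theory), and in the general-coefficient case it must be propagated \emph{jointly} with the coefficient-growth bound, since the two feed into each other through $\Psi_\alpha$. A secondary delicate point is the bookkeeping of dimension factors so that exactly $D^{3/2}$, and no higher power, emerges — which hinges on extracting $\kappa^{1/2}$ (not $\kappa$) from the mixed-second-difference estimate via the operator-norm-squared bound.
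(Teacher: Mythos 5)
Your proposal is correct and follows essentially the same route as the paper: a characteristic-ODE/push-forward bound on the drift of the weights (the paper's Steps~1--2, which give $|a^{t,\alpha}|\le K(1+B^{1/2}t/\alpha)$ and $W_2(\rho_0,\rho^\alpha_t)\lesssim D^{1/2}B^{1/2}t\,M_{t,\alpha}/\alpha$), then an operator-norm comparison of $\cH_{\rho^\alpha_s}$ and $\cH_{\rho_0}$ driven by the $\nabla^3u,\nabla^4 u$ hypothesis, and finally an integration in $s$ using the monotone decay of the (linearized or mean-field) risk. Your two organizational variants---Duhamel with the contraction semigroup $e^{-s\cH_{\rho_0}}$ in place of the paper's differential inequality $\frac{\de}{\de t}\|v_t\|_{\Ltwo}^2\le 2\langle v_t,(\cH_{\rho_0}-\cH_{\rho^\alpha_t})u^*_t\rangle_{\Ltwo}$ (which uses $\cH_{\rho^\alpha_t}\succeq 0$), and the factorization $\cH_\rho=S_\rho^\ast S_\rho$ with $\|\cH_1-\cH_2\|_{\op}\le\|S_1-S_2\|_{\op}(\|S_1\|_{\op}+\|S_2\|_{\op})$ in place of the paper's Cauchy--Schwarz split of the quadratic form $\langle v,(\cH_{\rho_1}-\cH_{\rho_2})v\rangle_{\Ltwo}$ into the factors $A_\pm$---are algebraically equivalent reformulations of the same estimate and produce the same $D^{3/2}$, $\kappa^{1/2}$ and $(1+B^{1/2}t/\alpha)^3$ scalings.
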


\begin{remark}
Unlike in similar results in the literature, we focus here on the 
population risk rather than the empirical risk.
The recent paper \cite{chizat2018note} addresses both the 
overparametrized and the underparametrized regime.
The latter result (namely \cite[Theorem 3.4]{chizat2018note}) is of course relevant for the population risk. However, while \cite{chizat2018note}
proves convergence to a local minimum, here we show
that the population risk becomes close to $0$. 
\end{remark}

\begin{remark}
As stated above, the linearized residual dynamics can be interpreted as performing kernel ridge regression with respect to the kernel $\cH_{\rho_0}$, see e.g. \cite{jacot2018neural}. A way to clarify the connection is to consider the case in which $\prob = n^{-1}\sum_{i\le n}\delta_{\bx_i}$ is the empirical data distribution. In this case the linearized dynamics converges to
\[
\lim_{t \to \infty} f^*_t(\bz) = f^*_{\infty}(\bz) = \bh(\bz)^\sT \bH^{-1}\by
\]
where 
\[
\begin{aligned}
\bh(\bz) =& [\cH_{\rho_0}(\bz, \bx_1), \ldots, \cH_{\rho_0}(\bz, \bx_n)]^\sT,\\
\bH =& (\cH_{\rho_0}(\bx_i, \bx_j))_{i j = 1}^n,\\
\by =& [f(\bx_1), \ldots, f(\bx_n)]^\sT. \\
\end{aligned}
\]
For the sake of completeness, we review the connection in Appendix \ref{sec:KernelRidgApp}. 
\end{remark}

\bibliographystyle{amsalpha}\bibliography{main.bbl}
\addcontentsline{toc}{section}{References}

\clearpage

\appendix

\section{Notations}

\begin{itemize}
\item For future reference, we copy the key definitions from the main text: 
\[
\begin{aligned}
R_N  (\btheta) & =  \E \{ y^2 \} + \frac{2}{N} \sum_{i = 1}^N V ( \btheta_i ) + \frac{1}{N^2} \sum_{i,j=1}^N U ( \btheta_i , \btheta_j ), \\
R  (\rho) & =  \E \{ y^2 \} + 2 \int V ( \btheta ) \rho (\de \btheta) + \int U ( \btheta_1 , \btheta_2 ) \rho (\de \btheta_1 ) \rho (\de \btheta_2 ), \\
V ( \btheta ) & =  - \E \{ y \sigma_\star ( \bx ; \btheta ) \}, \qquad U ( \btheta_1 , \btheta_2 ) = \E \{ \sigma_\star ( \bx ; \btheta_1 ) \sigma_\star ( \bx ; \btheta_2 ) \},\\
\Psi ( \btheta ; \rho ) & = V (\btheta ) + \int U ( \btheta , \btheta ') \rho ( \de \btheta ' ), \\
\Psi_\lambda ( \btheta ; \rho ) & = \Psi ( \btheta ; \rho ) + \frac{\lambda}{2} \| \btheta \|_2^2,
\end{aligned}
\]
where $\btheta = ( \btheta_i )_{i \le N} \in \R^{D \times N}$ or $\btheta \in \R^D$ depending on the context. Further, we will denote for $\btheta = (a , \bw)$ and $\btheta ' = (a' , \bw')$:
\[
V ( \btheta ) = a v(\bw ), \qquad U ( \btheta , \btheta' ) = aa' u (\bw , \bw '). 
\]
In particular,
\[
\nabla_{\btheta} V ( \btheta ) = ( v(\bw ), a \na_\bw v (\bw) ) , \qquad \na_{\btheta} U ( \btheta , \btheta' ) = (a' u (\bw , \bw '), aa' \na_\bw u (\bw , \bw' ) ). 
\]
In the case of fixed coefficients, without loss of generality, we will fix in the proof $a_i = 1$ for notational simplicity and freely denote $(\btheta_i)_{i=1}^N = (\bw_i)_{i = 1}^N$,
\[
\begin{aligned}
V ( \btheta ) & = v(\bw ), \qquad& U ( \btheta , \btheta' ) & = u (\bw , \bw '), \\
\nabla_{\btheta} V ( \btheta ) & = \na_\bw v (\bw)  , \qquad & \na_{\btheta} U ( \btheta , \btheta' ) & = \na_\bw u (\bw , \bw' ) .
\end{aligned}
\]

\item $W_2 ( \cdot , \cdot )$ is the Wasserstein distance between probability measures
\[
W_2 (\mu , \nu ) = \Big( \inf \Big\{ \int_{\R^D \times \R^D} \| \btheta_1 - \btheta_2 \|_2^2 \gamma (\de \btheta_1 , \de \btheta_2 ): \text{ $\gamma$ is a coupling of $\mu,\nu$} \Big\} \Big)^{1/2}.
\]

\item For $N \in \N$, we will denote $[N] = \{ 1,2,\ldots , N \}$. With a little abuse of notation, for $s \in \R$, we will denote $[s] = \eps \lfloor s/ \eps \rfloor$, with $\eps$ the time discretization parameter.

\item K will denote a generic constant depending  on $K_i$ for $i= 1,2,3,4,5,6$, where the $K_i$'s are constants that will be specified from the context.

\item In the proof and the statements of the theorems, we will only consider the leading order in $T$. In particular, we freely use that $K T^k \log^l T e^{KT} \le K' e^{K' T}$ for a constant $K' \ge K$.

\item For readers convenience, we copy here the two simplified versions of Gronwall's lemma that will be used extensively in the proof. 
\begin{enumerate}[label=(\roman*)]
    \item Consider an interval $I = [0,t]$ and $\phi$ a real-valued function defined on $I$, assume there exists positive constants $\alpha,\beta$ such that $\phi$ satisfies the integral inequality
\[
\phi (t) \le \alpha + \beta \int_0^t \phi (s) \de s, \qquad \forall t\in I,
\]
then $\phi (t) \le \alpha e^{\beta t}$ for all $t\in I$.
\item Consider a non-negative sequence $\{\phi_k \}_{k = 0}^n$ and assume there exists positive constants $\alpha,\beta$ such that $\{\phi_k \}_{k = 0}^n$ satisfies the summation inequality
\[
\phi_k \le \alpha + \beta \sum_{0 \le l < k} \phi_l , \qquad \forall k \in \{ 0,1 , \ldots, n \},
\]
then $\phi_k  \le \alpha+ \alpha \beta k e^{\beta k}$ for all $ k \in \{ 0,1 , \ldots, n \}$.
\end{enumerate}

\end{itemize}

\clearpage

\section{Proof of Theorem \ref{thm:bound_approximating_noiseless} part (A)}\label{sec:proof_1A}

Throughout this section, the assumptions of Theorem \ref{thm:bound_approximating_noiseless} (A) are understood to hold. These are assumptions {\rm A1}-{\rm A4} in Section \ref{sec:Main}. %together with the following one:
%\begin{itemize}
%\item[A4.] The support $\supp(\rho_0)\subseteq [-K_4, K_4] \times \R^{D - 1}$. 
%\end{itemize}
%
In writing the proofs, for notational simplicity, we consider the following special setting: 
\begin{itemize}
\item[R1.] The coefficients $a_i \equiv 1$. 
\item[R2.] The step size function $\xi(t) \equiv 1/2$. 
\end{itemize}
The proof can be easily generalized to the case of general bounded coefficient $\vert a_i \vert \le K$, and non-constant function $\xi(t)$.

In the proof of this theorem, we have $(\btheta_i)_{i=1}^N = (\bw_i)_{i = 1}^N$, and 
\[
\begin{aligned}
V(\btheta_i) =& a_i v(\bw_i) = v(\bw_i),\\
U(\btheta_i, \btheta_j) =& a_i a_j u(\bw_i, \bw_j) = u(\bw_i, \bw_j).
\end{aligned}
\]
We will consider four dynamics (note we choose $\xi(t) = 1/2$ in these equations):
\begin{itemize}
\item The \textit{nonlinear dynamics (ND)}: we introduce $( \bbtheta^t_i )_{i \in [N], t \ge 0}$ with initialization $\bbtheta^0_i \sim \rho_0$ i.i.d.: 
\[
\begin{aligned}
\frac{\de}{\de t} \bbtheta_i^t =& - 2 \xi(t) \Big[ \nabla V ( \bbtheta_i^t) + \int \nabla_1 U( \bbtheta_i^t, \btheta) \rho_t(\de \btheta) \Big]. 
\end{aligned}
\]
Equivalently, we have the integral equation
\begin{equation}
    \bbtheta^t_i = \bbtheta^0_i + 2 \int_0^t \xi(s) \bG ( \bbtheta^s_i ; \rho_s ) \de s, \label{eq:traj_ND_noiseless_A}
    \end{equation}
    where we denoted $\bG (\btheta ; \rho ) = - \na \Psi (\btheta ; \rho ) = - \na V ( \btheta ) - \int \na_1 U (\btheta , \btheta') \rho (\de \btheta')$. Note that $\bbtheta^t_i$ is random because of its random initialization, and its law is $\rho_t$.

\item The \textit{particle dynamics (PD)}: we introduce $( \ubtheta^t_i )_{i \in [N], t \ge 0}$ with initialization $\ubtheta_i^0 = \bbtheta_i^0$:
    \[
    \begin{aligned}
    \frac{\de}{\de t} \ubtheta_i^t =& - 2 \xi(t) \Big[ \nabla V ( \ubtheta_i^t) + \frac{1}{N} \sum_{j=1}^N \nabla_1 U( \ubtheta_i^t, \ubtheta_j^t) \Big]. 
    \end{aligned}
    \]
We introduce the particle distribution $\urho^{(N)}_t =(1/N) \sum_{i = 1}^N \delta_{\ubtheta_i^t}$. In integration form, we get:
\begin{equation}
\ubtheta^t_i = \ubtheta^0_i + 2 \int_0^t \xi(s) \bG ( \ubtheta^s_i ; \urho^{(N)}_s ) \de s. 
\label{eq:traj_PD_noiseless_A}
\end{equation}
\item The \textit{gradient descent (GD)}: we introduce $(\tbtheta^k_i)_{i \in [N], k \in \N}$ with initialization $\tbtheta^0_i = \bbtheta_i^0$:
\[
\begin{aligned}
\tbtheta^{k+1}_i =& \tbtheta^{k}_i - 2 s_{k} \Big[ \nabla V ( \tbtheta_i^{k}) +  \frac{1}{N} \sum_{j=1}^N \nabla_1 U( \tbtheta_i^{k}, \tbtheta_j^{k})  \Big],
\end{aligned}
\]
where $s_k = \eps \xi(k \eps)$. We introduce the particle distribution $\trho^{(N)}_k = (1/N)\sum_{i=1}^N \delta_{\tbtheta^k_i}$. In summation form, we get: 
\begin{equation}\label{eq:traj_GD_noiseless_A}
\tbtheta^{k}_i = \tbtheta^{0}_i + 2\eps \sum_{l = 0}^{k-1} \xi (l \eps ) \bG ( \tbtheta^l_i ; {\trho}^{(N)}_l ). 
\end{equation}
The GD dynamic corresponds to the discretized particle dynamic (\ref{eq:traj_PD_noiseless_A}).    
\item The \textit{stochastic gradient descent (SGD)}: we introduce $(\btheta_i^k)_{i \in [N], k \in \N}$ with initialization $\btheta^0_i = \bbtheta_i^0$: 
\[
\begin{aligned}
\btheta^{k + 1}_i =& \btheta^{k}_i - 2 s_{k}  \bF_i (\btheta^{k}; \bz_{k+1}),
\end{aligned}
\]
where $\bF_i (\btheta^k; \bz_{k+1}) =  (y_{k+1} - \hat{y}_{k+1} ) \na_{\btheta} \sigma_\star ( \bx_{k+1}; \btheta^k_i ) $, with $\bz_k \equiv ( \bx_k , y_k)$ and $\hat{y}_{k+1} = (1/N) \sum_{j=1}^N \sigma_\star(\bx_{k+1}; \btheta^k_j)$. In summation form, we have
\begin{equation}\label{eq:traj_SGD_noiseless_A}
\btheta^k_i = \btheta^0_i + 2 \eps \sum_{l = 0}^{k-1}\xi (l \eps ) \bF_i ( \btheta^l ; \bz_{l+1} ). 
\end{equation}
\end{itemize}

Denote $\btheta^t = (\btheta_1^t, \ldots, \btheta_N^t)$, $\bbtheta^t = (\bbtheta_1^t, \ldots, \bbtheta_N^t)$, $\tbtheta^t = (\tbtheta_1^t, \ldots, \tbtheta_N^t)$, and $\ubtheta^t = (\ubtheta_1^t, \ldots, \ubtheta_N^t)$. For $t \in \R_{\ge 0}$, define $[t] = \eps \lfloor t/\eps\rfloor $. We will use the nonlinear dynamics, particle dynamics, gradient descent dynamics as interpolation dynamics
\[
\begin{aligned}
& \abs{R(\rho_{k\eps}) - R_N(\btheta^k) } \\
\leq & \underbrace{\abs{R(\rho_{k\eps}) - R_N(\bbtheta^{k\eps})}}_{\rm PDE - ND}  + \underbrace{\abs{R_N(\bbtheta^{k\eps}) - R_N(\ubtheta^{k\eps})}}_{\rm ND - PD} + \underbrace{\abs{R_N(\btheta^{k\eps}) - R_N(\tbtheta^k) } }_{\rm PD - GD} + \underbrace{\abs{R_N(\tbtheta^{k}) - R_N(\btheta^k) } }_{\rm GD - SGD}. 
\end{aligned}
\]
By Proposition \ref{prop:PDE_ND_A}, \ref{prop:ND_PD_A}, \ref{prop:PD_GD_A}, \ref{prop:GD_SGD_A} proved below, we have with probability at least $1 - e^{-z^2}$, 
\[
\begin{aligned}
\sup_{t \in [0, T]} \vert R_N(\bbtheta^t) - R(\rho_t) \vert \le& K \frac{1}{\sqrt{N}}[ \sqrt{\log(NT)} + z], \\
\sup_{t \in [0, T]} \vert R_N(\ubtheta^t) - R_N(\bbtheta^t) \vert \le& K e^{KT} \frac{1}{\sqrt N} [\sqrt{\log(NT)} + z], \\
\sup_{k \in [0, T/\eps] \cap \N} \vert R_N(\tbtheta^k) - R_N(\ubtheta^{k\eps}) \vert \le& K e^{KT} \eps, \\
\sup_{k \in [0, T/\eps] \cap \N} \vert R_N(\btheta^k) - R_N(\tbtheta^k) \vert \le& K e^{KT} \sqrt{T\eps } [ \sqrt{D + \log N} + z ]. \\
\end{aligned}
\]
Combining these inequalities gives the conclusion of Theorem \ref{thm:bound_approximating_noiseless} (A). In the following subsections, we prove all the above interpolation bounds, under the setting of Theorem \ref{thm:bound_approximating_noiseless} (A).

\subsection{Technical lemmas}

Assumptions A1 - A3 immediately implies that
\begin{lemma}\label{lem:Lip_U_V_A}
There exists a constant $K$ depending on $K_1, K_2, K_3$, such that
\[
\vert V \vert, \vert U \vert, \| \nabla V \|_2, \| \nabla U \|_2, \| \nabla^2 V \|_{\op}, \| \nabla^2 U \|_{\op} \le K.
\]
For any $\btheta = (\btheta_i)_{i = 1}^N$ and $\btheta' = (\btheta_i')_{i=1}^N$, we have
\begin{equation}\label{eqn:risk_difference_bound_by_parameter_difference_A}
\vert R(\btheta) - R(\btheta') \vert \le K \max_{i \le N} \| \btheta_i - \btheta_i' \|_2. 
\end{equation}
\end{lemma}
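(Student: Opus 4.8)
I would verify each bound directly from assumptions {\rm A2}--{\rm A3}; assumption {\rm A1} plays no role here. Throughout, recall that in the setting {\rm R1} one has $\btheta_i = \bw_i$, $V(\btheta) = v(\bw)$, $U(\btheta,\btheta') = u(\bw,\bw')$.

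First I would dispatch the pointwise bounds: {\rm A2} gives $|y_k|\le K_2$ and $\|\sigma\|_\infty\le K_2$, so $|V(\btheta)| = |\E\{y\,\sigma(\bx;\bw)\}|\le K_2^2$ and $|U(\btheta_1,\btheta_2)| = |\E\{\sigma(\bx;\bw_1)\sigma(\bx;\bw_2)\}|\le K_2^2$. The gradient bounds $\|\na V\|_2\le K_3$ and $\|\na U\|_2\le K_3$ are literally part of {\rm A3}. For the second-order bounds I would note that {\rm A3} states $\na v$ and $\na u$ are $K_3$-Lipschitz, and interpret $\|\na^2 V\|_{\op}$, $\|\na^2 U\|_{\op}$ as the Lipschitz constants of these gradient maps --- the only reading available since {\rm A3} does not assume $C^2$ regularity --- so both are $\le K_3$.

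For the Lipschitz estimate on $R$, I would expand
\[
R(\btheta) - R(\btheta') = \frac{2}{N}\sum_{i=1}^N\big(V(\btheta_i)-V(\btheta_i')\big) + \frac{1}{N^2}\sum_{i,j=1}^N\big(U(\btheta_i,\btheta_j)-U(\btheta_i',\btheta_j')\big),
\]
so that the $\E\{y^2\}$ terms cancel. The mean value theorem with the gradient bound gives $|V(\btheta_i)-V(\btheta_i')|\le K_3\|\btheta_i-\btheta_i'\|_2$, and for the bilinear term I would vary one argument at a time, $|U(\btheta_i,\btheta_j)-U(\btheta_i',\btheta_j')|\le |U(\btheta_i,\btheta_j)-U(\btheta_i',\btheta_j)| + |U(\btheta_i',\btheta_j)-U(\btheta_i',\btheta_j')|\le K_3(\|\btheta_i-\btheta_i'\|_2+\|\btheta_j-\btheta_j'\|_2)\le 2K_3\max_{i\le N}\|\btheta_i-\btheta_i'\|_2$. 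Summing yields $|R(\btheta)-R(\btheta')|\le 4K_3\max_{i\le N}\|\btheta_i-\btheta_i'\|_2$, which is \eqref{eqn:risk_difference_bound_by_parameter_difference_A} with the generic constant $K$ depending only on $K_3$ (a fortiori on $K_1,K_2,K_3$).

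I do not anticipate any genuine obstacle: the statement is immediate from the defining assumptions and elementary inequalities. The only points needing mild care are the one-coordinate-at-a-time telescoping in the bilinear $U$-term and the bookkeeping convention that a ``Hessian bound'' here is to be understood as a Lipschitz bound on the corresponding gradient.
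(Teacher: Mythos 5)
Your proof is correct and follows essentially the same route as the paper's: bound $|V|$, $|U|$ from the boundedness of $\sigma$ and $y$ in A2 (the paper inadvertently cites A1 here), take the gradient and Lipschitz-gradient bounds directly from A3, and obtain the Lipschitz estimate on $R_N$ by expanding into the $V$ and $U$ differences and telescoping the bilinear term. Your remark that the "Hessian" bounds $\|\nabla^2 V\|_{\op}$, $\|\nabla^2 U\|_{\op}$ must be read as Lipschitz constants of the gradient maps (since A3 does not posit $C^2$ regularity) is a fair and useful clarification that the paper leaves implicit.
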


\begin{proof}[Proof of Lemma \ref{lem:Lip_U_V_A}]
Note we have 
\[
\begin{aligned}
V(\btheta) =& - \E_{y, \bx}[y \sigma(\bx; \btheta)], \\
U(\btheta_1, \btheta_2) =& \E_{\bx}[\sigma(\bx; \btheta_1) \sigma(\bx; \btheta_2)]. 
\end{aligned}
\]
The boundedness of $V$ and $U$ are implied by the boundedness of $\| \sigma \|_\infty$ and $\vert y \vert$ in Assumption A1. The boundedness of $ \| \nabla V \|_2, \| \nabla U \|_2, \| \nabla^2 V \|_{\op}, \| \nabla^2 U \|_{\op}$ are implied by Assumption A3. 

Finally, Eq. (\ref{eqn:risk_difference_bound_by_parameter_difference_A}) holds by noting that 
\[
\vert R_N(\btheta) - R_N(\btheta') \vert \le \frac{1}{N}\sum_{i=1}^N \vert V(\btheta_i) - V(\btheta_i') \vert + \frac{1}{N^2} \sum_{i, j = 1}^N \vert U(\btheta_i, \btheta_j) - U(\btheta_i', \btheta_j')\vert, 
\]
and by the Lipschitz property of $V$ and $U$. 
\end{proof}

Using Eq. (\ref{eq:traj_ND_noiseless_A}) and (\ref{eq:traj_PD_noiseless_A}), we immediately have 
\begin{lemma}\label{lem:Lip_rho_A}
There exists a constant $K$ such that for any time $s, t$
\[
\begin{aligned}
\| \ubtheta_i^t - \ubtheta_i^s \|_2 \le & K \vert t - s \vert, \\
\| \bbtheta_i^t - \bbtheta_i^s \|_2 \le & K \vert t - s \vert, \\
W_2(\rho_t, \rho_s) \le & K \vert t - s \vert. \\
\end{aligned}
\]
\end{lemma}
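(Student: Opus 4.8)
The plan is to read off all three estimates directly from the integral representations \eqref{eq:traj_ND_noiseless_A} and \eqref{eq:traj_PD_noiseless_A}, using only the uniform drift bound furnished by Lemma~\ref{lem:Lip_U_V_A} together with the boundedness of $\xi$ from assumption~A1. First I would handle the nonlinear dynamics: subtracting \eqref{eq:traj_ND_noiseless_A} at times $t$ and $s$ gives $\bbtheta_i^t - \bbtheta_i^s = 2\int_s^t \xi(u)\,\bG(\bbtheta_i^u;\rho_u)\,\de u$, so it suffices to bound $\|\bG(\btheta;\rho)\|_2$ uniformly. Since $\bG(\btheta;\rho) = -\na V(\btheta) - \int \na_1 U(\btheta,\btheta')\,\rho(\de\btheta')$, and both $\|\na V\|_2$ and $\|\na_1 U\|_2$ are bounded by $K$ by Lemma~\ref{lem:Lip_U_V_A} — the bound on $\na_1 U$ being uniform in its second argument and hence surviving integration against the probability measure $\rho_u$ — we get $\|\bG\|_2\le 2K$; combined with $\|\xi\|_\infty\le K_1$ this yields $\|\bbtheta_i^t-\bbtheta_i^s\|_2\le K|t-s|$ after renaming constants. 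The particle dynamics is then treated identically: from \eqref{eq:traj_PD_noiseless_A}, $\ubtheta_i^t-\ubtheta_i^s = 2\int_s^t \xi(u)\,\bG(\ubtheta_i^u;\urho^{(N)}_u)\,\de u$, and the very same uniform bound on $\bG$ applies because $\urho^{(N)}_u$ is again a probability measure.

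For the Wasserstein bound I would exploit the fact, recorded just below \eqref{eq:traj_ND_noiseless_A}, that $\rho_t$ is the law of $\bbtheta_i^t$. Running the nonlinear flow from a single random initialization $\bbtheta_i^0\sim\rho_0$ produces a joint law of $(\bbtheta_i^t,\bbtheta_i^s)$ that is a coupling of $\rho_t$ and $\rho_s$, so that $W_2(\rho_t,\rho_s)^2 \le \E\big[\|\bbtheta_i^t-\bbtheta_i^s\|_2^2\big] \le K^2|t-s|^2$ by the pointwise estimate just established; taking square roots finishes the proof.

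I do not expect any real obstacle here: the only point deserving a word of care is that $\int \na_1 U(\btheta,\btheta')\,\rho(\de\btheta')$ is bounded uniformly over \emph{all} probability measures $\rho$ — in particular the empirical measures $\urho^{(N)}_u$ appearing in the particle flow — which is immediate from $\|\na_1 U\|_2\le K$. All constants depend only on $K_1,K_2,K_3$.
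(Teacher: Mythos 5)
Your proposal is correct and follows essentially the same route as the paper: bound $\bG$ uniformly via Lemma~\ref{lem:Lip_U_V_A}, integrate over $[s,t]$ for the first two estimates, and use the nonlinear flow as a coupling to get $W_2(\rho_t,\rho_s)\le(\E\|\bbtheta_i^t-\bbtheta_i^s\|_2^2)^{1/2}$. You simply spell out the uniform-in-$\rho$ bound on the drift term that the paper leaves implicit.
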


\begin{proof}[Proof of Lemma \ref{lem:Lip_rho_A}] 
The first two inequalities are simply implied by the boundedness of $\nabla V$ and $\nabla_1 U$, and Eq. (\ref{eq:traj_ND_noiseless_A}) and (\ref{eq:traj_PD_noiseless_A}). The third inequality is simply implied by
\[
W_2 ( \rho_t , \rho_s ) \leq ( \E[ \| \bbtheta_i^t- \bbtheta_i^s\|_2^2])^{1/2}.
\]
\end{proof}

\subsection{Bound between PDE and nonlinear dynamics \label{sec:PDE_ND_A}}

\begin{proposition}[PDE-ND]\label{prop:PDE_ND_A}
There exists a constant $K$ depending only on the $K_i$, $i=1,2,3$, such that with probability at least $1 - e^{-z^2}$, we have 
\[
\sup_{t \in [0, T]} \vert R_N(\bbtheta^t) - R(\rho_t) \vert \le K\frac{1}{\sqrt{N}}[ \sqrt{\log(NT)} + z].
\]
\end{proposition}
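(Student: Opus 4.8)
The plan is to bound $|R_N(\bbtheta^t) - R(\rho_t)|$ by exploiting that $\bbtheta_1^t,\dots,\bbtheta_N^t$ are i.i.d. with common law $\rho_t$ (this is exactly the content of the nonlinear dynamics \eqref{eq:traj_ND_noiseless_A}: each $\bbtheta_i^t$ evolves autonomously driven by the deterministic curve $\rho_s$). Writing out
\[
R_N(\bbtheta^t) - R(\rho_t) = \frac{2}{N}\sum_{i=1}^N\Big(V(\bbtheta_i^t) - \E V(\bbtheta_i^t)\Big) + \frac{1}{N^2}\sum_{i,j=1}^N U(\bbtheta_i^t,\bbtheta_j^t) - \E_{\btheta,\btheta'\sim\rho_t}U(\btheta,\btheta'),
\]
the first term is a clean average of i.i.d. bounded ($|V|\le K$ by Lemma \ref{lem:Lip_U_V_A}) random variables, and the second is a $U$-statistic with bounded, Lipschitz kernel $U$. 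So for each \emph{fixed} $t$, a bounded-differences / Hoeffding-type inequality gives $|R_N(\bbtheta^t)-R(\rho_t)|\le K(\sqrt{\log(1/\delta)})/\sqrt N$ with probability $1-\delta$; the diagonal terms $i=j$ in the $U$-statistic contribute only $O(1/N)$ and are harmless.

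Next I would upgrade this pointwise-in-$t$ bound to a uniform-in-$t$ bound on $[0,T]$ by a chaining/net argument. Lemma \ref{lem:Lip_rho_A} gives $\|\bbtheta_i^t - \bbtheta_i^s\|_2 \le K|t-s|$ deterministically, and since $V,U$ are $K$-Lipschitz (Lemma \ref{lem:Lip_U_V_A}, and the bound \eqref{eqn:risk_difference_bound_by_parameter_difference_A}), the map $t\mapsto R_N(\bbtheta^t)$ is $K$-Lipschitz; likewise $t\mapsto R(\rho_t)$ is $K$-Lipschitz via $W_2(\rho_t,\rho_s)\le K|t-s|$. Hence I place a grid of $\lceil NT\rceil$ points on $[0,T]$ with spacing $\sim 1/N$, apply the pointwise concentration bound at each grid point with $\delta = e^{-z^2}/\lceil NT\rceil$, union-bound over the grid, and control the interpolation error between grid points by the deterministic Lipschitz bound, which is $O(1/N)\le O(1/\sqrt N)$. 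The union bound replaces $\sqrt{\log(1/\delta)}$ by $\sqrt{\log(NT)+z^2}\lesssim \sqrt{\log(NT)}+z$, yielding exactly $\sup_{t\in[0,T]}|R_N(\bbtheta^t)-R(\rho_t)|\le K(\sqrt{\log(NT)}+z)/\sqrt N$ with probability at least $1-e^{-z^2}$.

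The main obstacle is the concentration of the $U$-statistic term: a naive application of bounded differences treating all $N$ coordinates jointly would give the right $1/\sqrt N$ rate provided one checks that changing a single $\bbtheta_i^t$ moves $\frac{1}{N^2}\sum_{i,j}U(\bbtheta_i^t,\bbtheta_j^t)$ by at most $O(1/N)$, which holds since $U$ is bounded; so McDiarmid's inequality applies directly and the $U$-statistic structure does not actually require Hoeffding's decomposition. The only real care needed is bookkeeping the diagonal terms and making sure the Lipschitz-in-$t$ constants (which depend on $\|\nabla V\|_\infty$, $\|\nabla_1 U\|_\infty$ through Lemma \ref{lem:Lip_rho_A}) are absorbed into the generic $K$. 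I expect no term here to carry a factor $e^{KT}$ — indeed this is the one interpolation step in the proof that is $T$-polynomial rather than $T$-exponential — so the statement as written (no $e^{KT}$ prefactor) is consistent with this route.
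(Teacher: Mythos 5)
Your proposal is correct and follows essentially the same route as the paper: the paper also decomposes the error into a bias term $|\E R_N(\bbtheta^t)-R(\rho_t)| = O(1/N)$ (your diagonal-terms observation) and a fluctuation term $|R_N(\bbtheta^t)-\E R_N(\bbtheta^t)|$ controlled by McDiarmid's inequality on the full configuration (exactly your observation that a global bounded-differences bound of order $K/N$ per coordinate suffices, bypassing any Hoeffding decomposition of the $U$-statistic), followed by the same Lipschitz-in-$t$ plus grid union-bound argument. The only cosmetic difference is grid spacing ($1/N$ vs.\ the paper's $1/\sqrt{N}$), which is immaterial after the $\sqrt{\log(\cdot)}$.
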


\begin{proof}[Proof of Proposition \ref{prop:PDE_ND_A}]
We decompose the difference into the following two terms 
\[
\begin{aligned}
\vert R_N(\bbtheta^t) - R(\rho_t) \vert \le  \underbrace{\vert R_N(\bbtheta^t) - \E R_N( \bbtheta^t) \vert}_{\rm I} + \underbrace{\vert \E R_N(\bbtheta^t) - R(\rho_t) \vert}_{\rm II}. 
\end{aligned}
\]
where the expectation is taken with respect to $\bbtheta_i^0 \sim \rho_0$. The result holds simply by combining Lemma \ref{lem:error_II_bound_A} and Lemma \ref{lem:error_I_bound_A}.
\end{proof}

\begin{lemma}[Term ${\rm II}$ bound]\label{lem:error_II_bound_A}
We have 
\[
\vert \E R_N( \bbtheta^t) - R(\rho_t) \vert \le K/N.
\]
\end{lemma}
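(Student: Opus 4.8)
The plan is to show that the expectation of the finite-$N$ risk along the nonlinear dynamics differs from the mean-field risk $R(\rho_t)$ by $O(1/N)$, the discrepancy coming entirely from the ``diagonal'' terms in the double sum defining $R_N$. Recall that along the nonlinear dynamics the particles $\bbtheta_i^t$ are i.i.d.\ with common law $\rho_t$ (this is the defining property of (ND), established via \eqref{eq:traj_ND_noiseless_A}). Hence, writing out
\[
\E R_N(\bbtheta^t) = \E\{y^2\} + \frac{2}{N}\sum_{i=1}^N \E[V(\bbtheta_i^t)] + \frac{1}{N^2}\sum_{i,j=1}^N \E[U(\bbtheta_i^t,\bbtheta_j^t)],
\]
each $\E[V(\bbtheta_i^t)] = \int V(\btheta)\rho_t(\de\btheta)$, and for $i\ne j$, independence gives $\E[U(\bbtheta_i^t,\bbtheta_j^t)] = \int U(\btheta_1,\btheta_2)\rho_t(\de\btheta_1)\rho_t(\de\btheta_2)$. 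The only mismatch with $R(\rho_t)$ is that $R_N$ includes the $N$ diagonal terms $i=j$, each equal to $\E[U(\bbtheta_i^t,\bbtheta_i^t)]$, whereas $R(\rho_t)$ effectively has the full product measure.

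First I would write
\[
\E R_N(\bbtheta^t) - R(\rho_t) = \frac{1}{N^2}\sum_{i=1}^N\Big(\E[U(\bbtheta_i^t,\bbtheta_i^t)] - \int U(\btheta_1,\btheta_2)\rho_t(\de\btheta_1)\rho_t(\de\btheta_2)\Big),
\]
since all off-diagonal and linear terms cancel exactly. Then I would bound the parenthesis by $2\|U\|_\infty \le 2K$ using the boundedness of $U$ from Lemma \ref{lem:Lip_U_V_A} (which follows from assumptions A1--A3). Summing over the $N$ diagonal indices and dividing by $N^2$ yields $|\E R_N(\bbtheta^t) - R(\rho_t)| \le 2K/N$, which is the claimed bound (absorbing the factor $2$ into $K$).

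There is essentially no obstacle here: the argument is a one-line second-moment computation exploiting the exact i.i.d.\ structure of the nonlinear dynamics. The only thing to be careful about is that the bound is uniform in $t$, which is immediate since $\|U\|_\infty$ is a time-independent constant; no Gronwall or concentration input is needed for this lemma (those enter in Lemma \ref{lem:error_I_bound_A} for the fluctuation term I and in the later propositions). One could alternatively phrase the estimate through $\|U\|_\infty$ directly without separating diagonal from off-diagonal, but isolating the diagonal contribution makes transparent why the error is exactly of order $1/N$ rather than $1/\sqrt N$.
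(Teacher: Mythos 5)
Your proof is correct and follows essentially the same route as the paper: isolate the diagonal $i=j$ contribution to the double sum, note that all other terms cancel exactly by the i.i.d.\ structure of (ND), and bound the $O(1/N)$ remainder by $2\|U\|_\infty/N$. The paper states the identity directly as $\frac{1}{N}\big|\int U(\btheta,\btheta)\rho_t(\de\btheta)-\int U\,\rho_t\otimes\rho_t\big|\le K/N$, which is precisely your expression after collapsing the sum over $i$.
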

\begin{proof}[Proof of Lemma \ref{lem:error_II_bound_A}]
The bound holds simply by observing that
\[
\vert \E R_N( \bbtheta^t) - R(\rho_t) \vert = \frac{1}{N} \Big \vert \int U(\btheta, \btheta) \rho_t (\de \btheta) - \int U(\btheta_1, \btheta_2) \rho_t(\de \btheta_1) \rho_t(\de \btheta_2) \Big\vert  \le K/N.
\]
\end{proof}

\begin{lemma}[Term ${\rm I}$ bound]\label{lem:error_I_bound_A}
There exists a constant $K$, such that 
\[
\P \Big( \sup_{t \in [0, T]} \vert R_N( \bbtheta^t ) - \E R_N( \bbtheta^t ) \vert \le K [\sqrt{\log(NT) } + z] / \sqrt{N} \Big) \ge 1 - e^{- z^2}. 
\]
\end{lemma}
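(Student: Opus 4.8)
The plan is to prove this as a uniform (over $t \in [0,T]$) concentration bound for the empirical risk around its mean. Fix $t$ first. Write
\[
R_N(\bbtheta^t) - \E R_N(\bbtheta^t) = \frac{2}{N}\sum_{i=1}^N\big(V(\bbtheta_i^t) - \E V(\bbtheta_i^t)\big) + \frac{1}{N^2}\sum_{i,j=1}^N\big(U(\bbtheta_i^t,\bbtheta_j^t) - \E U(\bbtheta_i^t,\bbtheta_j^t)\big),
\]
and recall from Lemma \ref{lem:Lip_U_V_A} that $|V|,|U| \le K$ and both are $K$-Lipschitz. The first term is an average of $N$ i.i.d.\ bounded random variables; by Hoeffding it is $O(\sqrt{\log(1/\delta)/N})$ with probability $1-\delta$. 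The second (U-statistic-type) term is more delicate because of the diagonal and the correlations, but it too concentrates: one can either invoke a bounded-differences / McDiarmid inequality in the variables $\bbtheta_1^0,\dots,\bbtheta_N^0$ — changing one $\bbtheta_i^0$ changes $R_N(\bbtheta^t)$ by at most $K/N$ since the flow map \eqref{eq:traj_ND_noiseless_A} is Lipschitz in initial conditions on $[0,T]$ — or directly decompose the U-statistic into its Hoeffding projection plus a degenerate remainder. Either route gives, for fixed $t$,
\[
\prob\big(|R_N(\bbtheta^t) - \E R_N(\bbtheta^t)| \ge K[\sqrt{\log(1/\delta)} + 1]/\sqrt{N}\big) \le \delta.
\]

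To upgrade this to a supremum over $t \in [0,T]$, I would use a chaining/net argument in time. By Lemma \ref{lem:Lip_rho_A} the map $t \mapsto \bbtheta_i^t$ is $K$-Lipschitz, hence $t \mapsto R_N(\bbtheta^t)$ and $t \mapsto \E R_N(\bbtheta^t)$ are $K$-Lipschitz (using again $|\cdot|,\|\nabla\cdot\|\le K$ from Lemma \ref{lem:Lip_U_V_A}). Take a grid $\{t_\ell\}$ of $[0,T]$ with spacing $1/N$, so of cardinality $O(NT)$; apply the pointwise bound at each $t_\ell$ with $\delta = e^{-z^2}/(NT)$ and union bound, contributing the $\sqrt{\log(NT)}$ term; then for general $t$ interpolate to the nearest grid point at a cost of $K/N$, which is absorbed. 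This yields
\[
\prob\Big(\sup_{t\in[0,T]}|R_N(\bbtheta^t) - \E R_N(\bbtheta^t)| \le K[\sqrt{\log(NT)} + z]/\sqrt{N}\Big) \ge 1 - e^{-z^2}.
\]

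The main obstacle I anticipate is the concentration of the quadratic (U-statistic) term $N^{-2}\sum_{i,j}U(\bbtheta_i^t,\bbtheta_j^t)$ with the right — i.e.\ dimension-free — rate: a naive bound treating it as a function of $Nd$ Gaussian-ish coordinates would reintroduce dimension dependence, which is precisely what the paper wants to avoid. The clean way around this is to view $R_N(\bbtheta^t)$ as a function of the $N$ \emph{initial particles} $\bbtheta_1^0,\dots,\bbtheta_N^0$ only (the dynamics being deterministic given the initialization) and to establish a $K/N$ bounded-difference constant for perturbing one particle — this requires a Gronwall estimate showing the $N$-particle nonlinear flow depends Lipschitz-continuously on each individual initial condition uniformly on $[0,T]$, with a constant $Ke^{KT}$ that does not depend on $D$. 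Once that Lipschitz-in-initialization estimate is in hand, McDiarmid's inequality delivers the pointwise bound with no dimension dependence, and the time-discretization argument above finishes the proof. (The constant $e^{KT}$ produced here is harmless: it gets absorbed into the overall $Ke^{KT}$ factor of Theorem \ref{thm:bound_approximating_noiseless}, and in fact for the ND term one expects the cleaner rate without the exponential, but even with it the claimed bound holds.)
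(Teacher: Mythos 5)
Your overall strategy---McDiarmid at fixed $t$, then a union bound over a net in time, then interpolate---is exactly the paper's, and the final bound you arrive at is correct. But there is a conceptual error in how you justify the bounded-difference constant, and it is worth flagging because it would matter if you tried to carry the same argument over to the particle dynamics.

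You write that perturbing one $\bbtheta_i^0$ requires ``a Gronwall estimate showing the $N$-particle nonlinear flow depends Lipschitz-continuously on each individual initial condition,'' with constant $Ke^{KT}$, and you anticipate that an exponential factor will appear and have to be absorbed. This is based on a misreading of the nonlinear dynamics (\ref{eq:traj_ND_noiseless_A}): each $\bbtheta_i^t$ evolves under the drift $\bG(\,\cdot\,;\rho_s)$ where $\rho_s$ is the \emph{fixed} PDE solution, not the empirical measure of the other particles. Consequently the ND trajectories $\bbtheta_1^t,\ldots,\bbtheta_N^t$ are i.i.d.\ with law $\rho_t$, and perturbing $\bbtheta_i^0$ changes only $\bbtheta_i^t$ and leaves every $\bbtheta_j^t$, $j\ne i$, untouched. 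There is no cross-particle Gronwall step and no $e^{KT}$; that is precisely why Proposition~\ref{prop:PDE_ND_A} is $T$-free while the ND--PD comparison (Proposition~\ref{prop:ND_PD_A}) is not. Moreover, the $K/N$ bounded-difference constant does not come from Lipschitz-dependence on the initial condition at all (the $\bw$-coordinate of $\bbtheta_i^0$ is unbounded, so that alone would not give a bounded difference); it comes from the \emph{boundedness} $|V|,|U|\le K$, via the elementary estimate (\ref{eqn:RN_bounded_Lip_difference_A}): replacing $\btheta_i$ by any $\btheta_i'$ moves $R_N$ by at most $\tfrac{2}{N}\,2K+\tfrac{1}{N^2}\,2K+\tfrac{2(N-1)}{N^2}\,2K\le K'/N$. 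Once you see this, McDiarmid applies immediately, and the time-chaining you describe (net of spacing $1/\sqrt N$ or $1/N$, Lipschitz-in-$t$ from Lemmas~\ref{lem:Lip_rho_A} and \ref{lem:Lip_U_V_A}) finishes the proof exactly as you propose.
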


\begin{proof}[Proof of Lemma \ref{lem:error_I_bound_A}]
Let $\btheta = (\btheta_1, \ldots, \btheta_i, \ldots, \btheta_N)$ and $\btheta' = (\btheta_1, \ldots, \btheta_i', \ldots \btheta_N)$ be two configurations that differ only in the $i$'th variable. Then 
\begin{equation}\label{eqn:RN_bounded_Lip_difference_A}
\begin{aligned}
&\vert R_N(\btheta) - R_N(\btheta') \vert\\
\le& \frac{2}{N} \vert V(\btheta_i) - V(\btheta_i') \vert  + \frac{1}{N^2} \vert U(\btheta_i, \btheta_i) - U(\btheta_i', \btheta_i') \vert + \frac{2}{N^2} \sum_{j \in [N], j \neq i}\vert U(\btheta_i, \btheta_j) - U(\btheta_i', \btheta_j) \vert \\
\le& \frac{K}{N}. 
\end{aligned}
\end{equation}
Applying McDiarmid's inequality, we have 
\[
\P\Big( \vert R_N( \bbtheta^t) - \E R_N( \bbtheta^t ) \vert \ge \delta \Big) \le \exp\{ - N \delta^2 / K \}. 
\]
By Lemma \ref{lem:Lip_rho_A} and \ref{lem:Lip_U_V_A}, we have 
\[
\Big \vert \vert R_N( \bbtheta^t ) - \E R_N( \bbtheta^t ) \vert - \vert R_N( \bbtheta^s ) - \E R_N( \bbtheta^s ) \vert \Big \vert \le K \vert s - t \vert. 
\]
Hence taking the union bound over $s \in \eta \{0, 1, \ldots, \lfloor T / \eta \rfloor\}$ and bounding the difference between time in the interval and grid, we have
\[
\P\Big( \sup_{t \in [0, T]} \vert R_N( \bbtheta^t) - \E R_N( \bbtheta^t) \vert \ge  \delta + K\eta \Big) \le (T / \eta) \exp\{ - N \delta^2 / K \}. 
\]
Now taking $\eta = 1/\sqrt N$ and $\delta = K [\sqrt{\log(NT) } + z]/\sqrt{N}$, we get the desired result. 
\end{proof}

\subsection{Bound between nonlinear dynamics and particle dynamics}

\begin{proposition}[ND-PD]\label{prop:ND_PD_A}
There exists a constant $K$, such that with probability at least $1 - e^{-z^2}$, we have 
\begin{align}
 \sup_{t \in [0, T]} \max_{i \in [N]} \| \ubtheta_i^t - \bbtheta_i^t \|_2 \le& K e^{KT} \frac{1}{\sqrt N} [\sqrt{\log(NT)} + z], \label{eqn:particle_population_perturbation_bound_A}\\
\sup_{t \in [0, T]} \vert R_N(\btheta^t) - R_N( \bbtheta^t ) \vert \le& K e^{KT} \frac{1}{\sqrt N}  [ \sqrt{\log(NT)} + z].\label{eqn:risk_particle_population_perturbation_bound_A}
\end{align}
\end{proposition}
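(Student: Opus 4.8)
The plan is to control the maximal particle deviation $\Delta_t \equiv \max_{i\in[N]}\|\ubtheta_i^t-\bbtheta_i^t\|_2$ by a Gr\"onwall argument applied to the difference of the integral equations \eqref{eq:traj_ND_noiseless_A} and \eqref{eq:traj_PD_noiseless_A}. Subtracting them, for each $i$,
\[
\ubtheta_i^t-\bbtheta_i^t = 2\int_0^t\xi(s)\Big[\bG(\ubtheta_i^s;\urho^{(N)}_s)-\bG(\bbtheta_i^s;\rho_s)\Big]\de s,
\]
and I would split the bracket into two pieces: $\bG(\ubtheta_i^s;\urho^{(N)}_s)-\bG(\bbtheta_i^s;\urho^{(N)}_s)$, which is controlled by the Lipschitz bounds on $\nabla V$ and $\nabla_1 U$ from Lemma \ref{lem:Lip_U_V_A} and hence is at most $K\Delta_s$, plus $\bG(\bbtheta_i^s;\urho^{(N)}_s)-\bG(\bbtheta_i^s;\rho_s)$, which only involves the difference of the empirical measure $\urho^{(N)}_s$ of the particles $(\ubtheta_j^s)_j$ and the law $\rho_s$ of $(\bbtheta_j^s)_j$ evaluated against the fixed-first-argument kernel $\nabla_1 U(\bbtheta_i^s,\cdot)$. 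The latter I would further split by triangle inequality into (a) the difference between $\frac1N\sum_j\nabla_1 U(\bbtheta_i^s,\ubtheta_j^s)$ and $\frac1N\sum_j\nabla_1 U(\bbtheta_i^s,\bbtheta_j^s)$, again $\le K\Delta_s$ by Lipschitzness, and (b) the genuine fluctuation term $\frac1N\sum_j\nabla_1 U(\bbtheta_i^s,\bbtheta_j^s)-\int\nabla_1 U(\bbtheta_i^s,\btheta)\rho_s(\de\btheta)$.

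The main obstacle is term (b): it is an empirical-process fluctuation that must be bounded uniformly over $i\in[N]$ and over $t\in[0,T]$, with the right $1/\sqrt N$ rate and only logarithmic dependence on $N$ and $T$ (crucially \emph{not} polynomial in $D$). Since the $\bbtheta_j^s$ are i.i.d.\ with law $\rho_s$ and $\nabla_1 U$ is bounded, for fixed $i$ and fixed $s$ this is a bounded-difference / sub-Gaussian average, so a single coordinate concentrates at scale $(\sqrt{\log}+z)/\sqrt N$ by Azuma--Hoeffding (or McDiarmid as in Lemma \ref{lem:error_I_bound_A}). I would then union-bound over $i\in[N]$ and over a time grid of spacing $\eta\sim 1/\sqrt N$, using Lemma \ref{lem:Lip_rho_A} (the $K|t-s|$ Lipschitz-in-time bound on $\bbtheta^t$ and $\rho_t$) to pass from the grid to all $t\in[0,T]$; the number of grid points $T/\eta$ only costs an extra $\log(NT)$ inside the square root. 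This yields, on an event of probability $\ge 1-e^{-z^2}$,
\[
\sup_{s\in[0,T]}\max_i\Big\|\tfrac1N\sum_j\nabla_1 U(\bbtheta_i^s,\bbtheta_j^s)-\int\nabla_1 U(\bbtheta_i^s,\btheta)\rho_s(\de\btheta)\Big\|_2 \le \frac{K}{\sqrt N}[\sqrt{\log(NT)}+z].
\]
One subtlety is that $\bbtheta_i^s$ appears both as the ``test point'' and (for $j=i$) inside the average, but the single $j=i$ term contributes only $O(1/N)$ and is absorbed; alternatively one conditions on $\bbtheta_i^0$ and uses the independence of the remaining particles.

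Putting the three pieces together gives the scalar integral inequality $\Delta_t \le K\int_0^t\Delta_s\,\de s + \frac{K}{\sqrt N}[\sqrt{\log(NT)}+z]$ on the good event, and Gr\"onwall's lemma (version (i) in the Notations appendix) yields \eqref{eqn:particle_population_perturbation_bound_A}: $\Delta_t\le Ke^{KT}\frac{1}{\sqrt N}[\sqrt{\log(NT)}+z]$. Finally, \eqref{eqn:risk_particle_population_perturbation_bound_A} follows immediately by feeding this bound into the risk-Lipschitz estimate \eqref{eqn:risk_difference_bound_by_parameter_difference_A} of Lemma \ref{lem:Lip_U_V_A}, namely $|R_N(\ubtheta^t)-R_N(\bbtheta^t)|\le K\max_i\|\ubtheta_i^t-\bbtheta_i^t\|_2 = K\Delta_t$, on the same event. (I note the statement writes $R_N(\btheta^t)$ but in this subsection $\btheta^t$ should be read as $\ubtheta^t$, the particle dynamics.)
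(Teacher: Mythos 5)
Your proof is correct and follows essentially the same route as the paper: the same Lipschitz-plus-fluctuation decomposition of $\bG(\ubtheta_i^s;\urho^{(N)}_s)-\bG(\bbtheta_i^s;\rho_s)$, the same concentration of the i.i.d.\ empirical average $\frac1N\sum_{j\neq i}\nabla_1 U(\bbtheta_i^s,\bbtheta_j^s)-\int\nabla_1 U(\bbtheta_i^s,\btheta)\rho_s(\de\btheta)$ (conditioning on $\bbtheta_i$, then union bound over $i$ and over a time grid of spacing $\eta\sim1/\sqrt N$ using the $K|t-s|$ Lipschitz bound), and the same Gr\"onwall conclusion feeding into the risk-Lipschitz estimate. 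You also correctly identified the typo $R_N(\btheta^t)\to R_N(\ubtheta^t)$ in the displayed bound.
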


\begin{proof}[Proof of Proposition \ref{prop:ND_PD_A}]
Note we have
\begin{equation}\label{eqn:difference_dynamics_A}
\begin{aligned}
\frac{1}{2}\frac{\de}{\de t} \| \ubtheta_i^t - \bbtheta_i^t \|_2^2 =& \<\ubtheta_i^t - \bbtheta_i^t, \nabla V (\bbtheta_i^t) - \nabla V(\ubtheta_i^t) \> + \Big\<\ubtheta_i^t - \bbtheta_i^t,  \frac{1}{N}\sum_{j=1}^N \nabla_1 U( \bbtheta_i^t, \bbtheta_j^t) - \nabla_1 U(\ubtheta_i^t, \ubtheta_j^t) \Big\>\\ 
& - \frac{1}{N}\<\ubtheta_i^t - \bbtheta_i^t, \nabla_1 U( \bbtheta_i^t, \bbtheta_i^t) - \int \nabla_1 U(\bbtheta_i^t, \btheta) \rho_t(\de \btheta) \>\\
&- \Big \<\ubtheta_i^t - \bbtheta_i^t, \frac{1}{N}\sum_{j \neq i}  \nabla_1 U( \bbtheta_i^t, \bbtheta_j^t) - \int \nabla_1 U(\bbtheta_i^t, \btheta) \rho_t(\de \btheta)\Big\>\\
\le& K \| \ubtheta_i^t - \bbtheta_i^t\|_2\cdot \max_{j \in [N]} \| \ubtheta_j^t - \bbtheta_j^t\|_2 + \| \ubtheta_i^t - \bbtheta_i^t\|_2 (K / N + I_i^t),
\end{aligned}
\end{equation}
where
\[
I_i^t \equiv \Big\| \frac{1}{N}\sum_{j \neq i} \Big[ \nabla_1 U( \bbtheta_i^t, \bbtheta_j^t) - \int \nabla_1 U(\bbtheta_i^t, \btheta) \rho_t(\de \btheta) \Big] \Big\|_2. 
\]
We would like to prove a uniform bound for $I_i^t$ for $i \in [N]$ and $t \in [0, T]$. 
\begin{lemma}\label{lem:concentration_of_I_A}
There exists a constant $K$, such that
\[
\P\Big( \sup_{t \in [0, T]} \max_{i \in[N]} I_i^t  \le K [\sqrt{\log (N T )} + z] / \sqrt{N} \Big)  \ge 1 - e^{-z^2}. 
\]
\end{lemma}

\begin{proof}[Proof of Lemma \ref{lem:concentration_of_I_A}]
Define $\bX_i^t = \nabla_1 U( \bbtheta_i^t, \bbtheta_j^t) - \int \nabla_1 U(\bbtheta_i^t, \btheta) \rho_t(\de \btheta)$.  Note we have $\E[\bX_i^t \vert \bbtheta_i^t] = 0$ (where expectation is taken with respect to $\bbtheta_j^0 \sim \rho_0$ for $j \neq i$), and $\| \bX_i^t \|_2 \le 2 K$ (by assumption that $\| \nabla U \|_2 \le K$). By Lemma \ref{lem:bounded_difference_martingale}, we have for any fixed $i \in [N]$ and $t \in [0, T]$, 
\[
\P\Big( I_i^t \ge K (\sqrt {1/N} + \delta) \Big) = \E\Big[ \P\Big( I_i^t \ge K (\sqrt {1/N} + \delta) \vert \bbtheta_i^t \Big)\Big] \le \exp\{ - N \delta^2  \}. 
\]
By Lemma \ref{lem:Lip_rho_A}, there exists $K$ such that, for any $0\le t, s \le T$ and $i \in [N]$, we have  
\[
\begin{aligned}
\vert I_i^t - I_i^s \vert \le K \vert t - s \vert. 
\end{aligned}
\]
Taking the union bound over $i \in [N]$ and $s \in \eta \{ 0, 1, \ldots, \lfloor T / \eta \rfloor \}$ and bounding time in the interval and the grid, we have
\[
\P\Big( \sup_{t \in [0, T]} \max_{i \in[N]} I_i^t \ge K (\sqrt {1/N} + \delta) + K \eta \Big)  \le (N T / \eta) \exp\{ - N \delta^2 \}. 
\]
Taking $\eta = \sqrt{1 / N}$, and $\delta = K [\sqrt{\log(NT) } + z]/ \sqrt{N}$, we get the desired result. 
\end{proof}

Let $\delta(N, T, z) = K [\sqrt{\log(NT) } + z]/\sqrt{N}$, and define
\[
\Delta(t) = \sup_{s \in [t]} \max_{i \in [N]} \| \ubtheta_i^s - \bbtheta_i^s \|_2. 
\]
We condition on the good event in Lemma \ref{lem:concentration_of_I_A} to happen. By Eq. (\ref{eqn:difference_dynamics_A}), we have 
\[
\begin{aligned}
\frac{\de\Delta}{\de t}(t) \le K \cdot \Delta(t)+ \delta(N, T, z),
\end{aligned}
\]
and by Gronwall's inequality, we obtain
\[
\Delta(T) \le K e^{KT} \delta(N, T, z). 
\]
By Eq. (\ref{eqn:risk_difference_bound_by_parameter_difference_A}), this proves Eq. (\ref{eqn:particle_population_perturbation_bound_A}) and (\ref{eqn:risk_particle_population_perturbation_bound_A}) hold with probability at least $1 - e^{- z^2}$. 
\end{proof}

\subsection{Bound between particle dynamics and GD \label{sec:PD_GD_A}}

\begin{proposition}[PD-GD]\label{prop:PD_GD_A}
There exists a constant K such that:
\[
\begin{aligned}
\sup_{k \in [0,t/\eps] \cap \N } \max_{i\leq N} \| \ubtheta^{k\eps}_i - \tbtheta^{k}_i \|_2 \le& K e^{KT} \eps,\\
\sup_{k \in [0,T/\eps] \cap \N } \vert R_N (\ubtheta^{k\eps}) - R_N ( \tbtheta^k ) \vert \leq& K e^{KT} \eps. 
\end{aligned}
\]
\end{proposition}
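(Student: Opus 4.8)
The plan is to compare the continuous particle dynamics (PD) in \eqref{eq:traj_PD_noiseless_A} with its Euler discretization (GD) in \eqref{eq:traj_GD_noiseless_A} by a standard one-step error plus Gronwall argument, now executed on the trajectory of each particle uniformly in $i$. Write $\Delta_k \equiv \max_{i\le N}\|\ubtheta_i^{k\eps}-\tbtheta_i^k\|_2$. Subtracting \eqref{eq:traj_PD_noiseless_A} at time $k\eps$ from \eqref{eq:traj_GD_noiseless_A} I get
\[
\ubtheta_i^{k\eps}-\tbtheta_i^k = 2\int_0^{k\eps}\xi(s)\,\bG(\ubtheta_i^s;\urho_s^{(N)})\,\de s - 2\eps\sum_{l=0}^{k-1}\xi(l\eps)\,\bG(\tbtheta_i^l;\trho_l^{(N)})\, .
\]
I would split the right-hand side into (i) the quadrature error $2\sum_{l=0}^{k-1}\int_{l\eps}^{(l+1)\eps}[\xi(s)\bG(\ubtheta_i^s;\urho_s^{(N)})-\xi(l\eps)\bG(\ubtheta_i^{l\eps};\urho_{l\eps}^{(N)})]\,\de s$, and (ii) the propagated discrete error $2\eps\sum_{l=0}^{k-1}\xi(l\eps)[\bG(\ubtheta_i^{l\eps};\urho_{l\eps}^{(N)})-\bG(\tbtheta_i^l;\trho_l^{(N)})]$.

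For term (i), I use that $\xi$ is bounded Lipschitz (A1), that $\bG$ is bounded with bounded gradient (Lemma~\ref{lem:Lip_U_V_A}: $\|\nabla V\|,\|\nabla U\|,\|\nabla^2 V\|,\|\nabla^2 U\|\le K$), and that $s\mapsto\ubtheta_i^s$ and $s\mapsto\urho_s^{(N)}$ (in $W_2$) are $K$-Lipschitz in time — the particle version of Lemma~\ref{lem:Lip_rho_A}, which follows from boundedness of $\nabla V,\nabla_1 U$ exactly as there. Hence each of the $k$ integrands contributes $O(\eps)$ in sup norm over the interval of length $\eps$, giving a total quadrature error $O(k\eps^2)=O(T\eps)$. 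For term (ii), I bound $\|\bG(\ubtheta_i^{l\eps};\urho_{l\eps}^{(N)})-\bG(\tbtheta_i^l;\trho_l^{(N)})\|_2\le K\|\ubtheta_i^{l\eps}-\tbtheta_i^l\|_2 + K\cdot\frac1N\sum_j\|\ubtheta_j^{l\eps}-\tbtheta_j^l\|_2 \le 2K\Delta_l$, using Lipschitzness of $\nabla V$ in its argument and of $\nabla_1 U$ in both arguments together with the fact that $\urho^{(N)}$ and $\trho^{(N)}$ are empirical measures on the respective particles. Collecting, $\Delta_k \le K T\eps + 2K\eps\sum_{l=0}^{k-1}\Delta_l$, and the discrete Gronwall lemma (item (ii) in the Notations) yields $\Delta_k \le K T\eps \cdot(1 + 2K\eps k\,e^{2K\eps k}) \le K e^{KT}\eps$ for $k\le T/\eps$, absorbing polynomial factors of $T$ into the exponential as permitted by the conventions. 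The risk bound then follows from \eqref{eqn:risk_difference_bound_by_parameter_difference_A}, $|R_N(\ubtheta^{k\eps})-R_N(\tbtheta^k)|\le K\max_i\|\ubtheta_i^{k\eps}-\tbtheta_i^k\|_2 = K\Delta_k \le K e^{KT}\eps$.

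This argument is entirely deterministic: GD is the exact Euler scheme for PD with no stochastic fluctuations entering, so there is no concentration step and no probabilistic statement, which is why the proposition has no "with probability" qualifier. The only mild care needed is to make sure the time-regularity estimates used in term (i) are stated for the particle dynamics and its empirical flow rather than for the nonlinear flow $\rho_t$; since both are driven by the same bounded fields, Lemma~\ref{lem:Lip_rho_A}'s proof transfers verbatim. I do not expect a genuine obstacle here — the one thing to be careful about is bookkeeping the accumulation of the $O(\eps)$ per-step quadrature errors over $T/\eps$ steps so that the final bound is $O(T\eps)$ and not $O(\eps)$ or $O(T)$; everything else is routine Lipschitz-and-Gronwall.
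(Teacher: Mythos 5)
Your proposal matches the paper's proof in its essential structure: both decompose the one-step error into a quadrature (time-discretization) term and a propagated-error term, bound the former by $O(T\eps)$ using the Lipschitz-in-time regularity of the particle trajectories and their empirical flow, bound the latter via the spatial Lipschitzness of $\bG$, and then close with Gronwall plus the risk Lipschitz bound \eqref{eqn:risk_difference_bound_by_parameter_difference_A}. The only cosmetic differences are that you invoke the discrete Gronwall inequality (item (ii) in the Notations) rather than the continuous one, and you track the Lipschitz constant of $\xi$ explicitly, whereas the paper works under its stated restriction $\xi \equiv 1/2$ and notes afterwards that the general case follows by the same argument.
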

\begin{proof}[Proof of Proposition \ref{prop:PD_GD_A}]
By Lemma \ref{lem:Lip_rho_A}, we have
\[
\begin{aligned}
\| \ubtheta_i^t - \ubtheta_i^s \|_2 \le& K \vert t - s \vert,\\
W_2(\urho^{(N)}_{t}, \urho^{(N)}_{s}) \le& K \vert t - s \vert.
\end{aligned}
\]
For $k \in \N$ and $t = k \eps$, we have
\[
\begin{aligned}
\| \ubtheta_i^{t} - \tbtheta_i^k \|_2 \leq &  \int_0^t  \| \bG ( \ubtheta^s_i ; \urho^{(N)}_s )  - \bG ( \tbtheta^{[s]/\eps}_i ; \Tilde \rho^{(N)}_{[s]/\eps} ) \|_2 \de s \\
 \leq &  \int_0^t \| \bG ( \ubtheta^s_i ; \urho^{(N)}_s )  - \bG ( \ubtheta^{[s]/\eps}_i ; \urho^{(N)}_{[s]/\eps} ) \|_2 \de s +  \int_0^t \| \bG ( \ubtheta^{[s]}_i ; \urho^{(N)}_{[s]} )  - \bG ( \tbtheta^{[s]/\eps}_i ; \trho^{(N)}_{[s]/\eps} ) \|_2 \de s \\
 \leq & K t \eps  +  K \int_0^t \max_{i \in [N]} \| \ubtheta^{[s]}_i - \tbtheta^{[s]/\eps}_i \|_2 \de s. 
\end{aligned}
\]
Denoting $\Delta (t ) \equiv \sup_{k \in [0,t/\eps] \cap \N } \max_{i\leq N} \| \ubtheta^{k\eps}_i - \tbtheta^{k}_i \|_2$. We get the equation
\[
\Delta (t ) \leq K \int_0^t \Delta (s) \de s + K t \eps = K \int_0^t [\Delta(s) + \eps] \de s. 
\]
Applying Gronwall's lemma, we get:
\[
 \Delta (T) \leq K e^{KT} \eps.
\]
Using Eq. (\ref{eqn:risk_difference_bound_by_parameter_difference_A}) concludes the proof.
\end{proof}

\subsection{Bound between GD and SGD \label{sec:GD_SGD_A}}

\begin{proposition}[GD-SGD]\label{prop:GD_SGD_A}
There exists a constant $K$, such that with probability at least $1 - e^{-z^2}$, we have 
\begin{align}
\sup_{k \in [0, T/\eps] \cap \N} \max_{i \in [N]} \| \tbtheta_i^k - \btheta_i^k \|_2 \le& K e^{KT} \sqrt{T\eps } [\sqrt{D + \log N} + z ],\\
\sup_{k \in [0, T/\eps] \cap \N}\vert R_N(\tbtheta^k) - R_N(\btheta^k) \vert \le& K e^{KT} \sqrt{T\eps } [\sqrt{D + \log N} + z ].
\end{align}
\end{proposition}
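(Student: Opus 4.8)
The plan is to compare the GD trajectory $\tbtheta^k_i$ and the SGD trajectory $\btheta^k_i$ through the summation forms \eqref{eq:traj_GD_noiseless_A} and \eqref{eq:traj_SGD_noiseless_A}, and to control the discrepancy via a martingale concentration argument combined with a discrete Gronwall inequality. Write $\Delta_k = \max_{i\le N}\|\tbtheta_i^k - \btheta_i^k\|_2$. Subtracting the two summation forms gives
\[
\tbtheta_i^k - \btheta_i^k = 2\eps\sum_{l=0}^{k-1}\xi(l\eps)\big[\bG(\tbtheta_i^l;\trho^{(N)}_l) - \bF_i(\btheta^l;\bz_{l+1})\big].
\]
The key is to split $\bG(\tbtheta_i^l;\trho^{(N)}_l) - \bF_i(\btheta^l;\bz_{l+1})$ into (i) a ``drift'' term $\bG(\tbtheta_i^l;\trho^{(N)}_l) - \bG(\btheta_i^l;\btheta^l)$ which is Lipschitz in the parameters (using Lemma \ref{lem:Lip_U_V_A}, boundedness and Lipschitzness of $\nabla V,\nabla U$) and hence bounded by $K\Delta_l$, and (ii) a ``fluctuation'' term $\bG(\btheta_i^l;\btheta^l) - \bF_i(\btheta^l;\bz_{l+1})$, which by definition of $\bG$, $\bF_i$, $V$, $U$ has conditional expectation zero given $\btheta^l$ (the filtration $\cF_l$ generated by $\bz_1,\dots,\bz_l$). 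Indeed $\E[\bF_i(\btheta^l;\bz_{l+1})\mid\cF_l] = -\nabla V(\btheta_i^l) - \frac1N\sum_j\nabla_1 U(\btheta_i^l,\btheta_j^l) = \bG(\btheta_i^l;\btheta^l)$ since $\bF_i(\btheta;\bz) = (y-\hat y)\nabla_\btheta\sigma_\star(\bx;\btheta_i)$ and $y$ is bounded, $\sigma_\star$ is bounded, and by A2 $\nabla_\bw\sigma$ is sub-Gaussian.

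First I would establish that the fluctuation increments $\bD^l_i \equiv 2\eps\,\xi(l\eps)[\bF_i(\btheta^l;\bz_{l+1}) - \bG(\btheta_i^l;\btheta^l)]$ form, for fixed $i$, a martingale difference sequence with respect to $\cF_l$, with each increment norm-bounded (or sub-Gaussian) of order $\eps$: using A1 ($\|\xi\|_\infty\le K_1$), A2 ($|y|,\|\sigma\|_\infty\le K_2$ and $\nabla_\bw\sigma$ is $K_2$-sub-Gaussian), the norm of a single increment is $K\eps$ up to a sub-Gaussian tail. Then $M^k_i = \sum_{l=0}^{k-1}\bD^l_i$ is a vector-valued martingale, and by the vector Azuma/Hoeffding-type bound (the ``bounded difference martingale'' lemma referenced as Lemma \ref{lem:bounded_difference_martingale} in the paper), for each fixed $k\le T/\eps$ and $i\le N$,
\[
\prob\Big(\|M^k_i\|_2 \ge K\sqrt{k}\,\eps\,[\sqrt{D} + \delta]\Big) \le e^{-\delta^2}.
\]
Since $k\eps\le T$, this is $K\sqrt{T\eps}[\sqrt D + \delta]$. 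Taking a union bound over $i\in[N]$ and over the (at most $T/\eps$) values of $k$, and setting $\delta \asymp \sqrt{\log(NT/\eps)} + z \lesssim \sqrt{D+\log N} + z$ (absorbing $\log(1/\eps)$ and $\log T$ into the leading constants as the paper's notational conventions permit), we get that on an event of probability at least $1-e^{-z^2}$,
\[
\sup_{k\le T/\eps}\max_{i\le N}\|M^k_i\|_2 \le K\sqrt{T\eps}\,[\sqrt{D+\log N} + z] \equiv E(N,D,T,\eps,z).
\]

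Next, conditioning on this good event, I would close the recursion. From the decomposition,
\[
\Delta_k \le 2\eps\sum_{l=0}^{k-1}\xi(l\eps)\,K\,\Delta_l \ +\ \max_{i\le N}\|M^k_i\|_2 \ \le\ K\eps\sum_{l=0}^{k-1}\Delta_l + E(N,D,T,\eps,z).
\]
Applying the discrete Gronwall lemma (version (ii) in the Notations appendix) with $\alpha = E$, $\beta = K\eps$, and noting $\beta k = K\eps\cdot(T/\eps) = KT$, yields $\Delta_k \le E(1 + K\eps k\,e^{K\eps k}) \le Ke^{KT}\,E$, i.e.
\[
\sup_{k\le T/\eps}\max_{i\le N}\|\tbtheta_i^k - \btheta_i^k\|_2 \le Ke^{KT}\sqrt{T\eps}\,[\sqrt{D+\log N} + z].
\]
Finally the risk bound follows immediately from Eq.~\eqref{eqn:risk_difference_bound_by_parameter_difference_A}, which converts a uniform parameter bound into a risk bound with the same rate. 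The main obstacle I expect is the fluctuation/martingale step: one must verify carefully that $\bF_i - \bG$ is genuinely a martingale difference (which requires $\hat y_{k+1}$ to be computed from the current, $\cF_k$-measurable parameters, so that the only randomness in $\bz_{k+1}$ averages out correctly), and one must handle the unboundedness of $\nabla_\bw\sigma$ by using its sub-Gaussianity rather than a crude sup bound — this is precisely where the $\sqrt D$ (from concentration of a $D$-dimensional sub-Gaussian sum) and the $\sqrt{\log N}$ (from the union bound over neurons) enter. Everything else — the Lipschitz drift bound and the Gronwall closure — is routine given Lemma \ref{lem:Lip_U_V_A} and the appendix's Gronwall statements.
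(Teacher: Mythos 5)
Your overall structure — split the discrepancy into a Lipschitz ``drift'' piece controlled by $\Delta_l$ and a martingale ``fluctuation'' piece controlled by concentration, then close with a discrete Gronwall inequality — is exactly the paper's argument, and your verification that $\E[\bF_i(\btheta^l;\bz_{l+1})\mid\cF_l] = \bG(\btheta_i^l;\rho^{(N)}_l)$ and that the drift term is $\le K\Delta_l$ via Lemma~\ref{lem:Lip_U_V_A} is correct.

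There are, however, two concrete issues at the concentration step. First, the lemma you invoke, Lemma~\ref{lem:bounded_difference_martingale}, is a McDiarmid-style bound for a \emph{single} average of i.i.d.\ vectors with \emph{bounded} norm; it is not applicable here because the increments $\eps\bZ_i^l$ involve $\nabla_\bw\sigma$, which is only sub-Gaussian, and because it gives no control over the running maximum in $k$. The correct tool is Lemma~\ref{lem:Azuma}, which is stated for vector martingales with conditionally sub-Gaussian increments. Second, and more substantively, you take a union bound over the $\approx T/\eps$ values of $k$, which produces a factor $\sqrt{\log(T/\eps)}$ inside the brackets. The $\sqrt{\log T}$ part can be absorbed into $e^{KT}$ by the paper's convention, but $\sqrt{\log(1/\eps)}$ cannot be, and does not appear in the stated bound $Ke^{KT}\sqrt{T\eps}\,[\sqrt{D+\log N}+z]$. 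This union bound is unnecessary precisely because Lemma~\ref{lem:Azuma} already gives $\P\big(\max_{k\le n}\|\bX_k\|_2 \ge 2L\sqrt{n}(\sqrt D+\delta)\big)\le e^{-\delta^2}$, i.e.\ a Doob-type maximal inequality over all $k$ simultaneously. Once you replace the McDiarmid lemma and the $k$-union bound with a single application of Lemma~\ref{lem:Azuma} per neuron $i$ (followed by only the union bound over $i\in[N]$), the rest of your proof — the Gronwall closure with $\beta k=K\eps\cdot T/\eps=KT$ and the conversion to the risk bound via Eq.~\eqref{eqn:risk_difference_bound_by_parameter_difference_A} — goes through exactly as you wrote it.
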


\begin{proof}[Proof of Proposition \ref{prop:GD_SGD_A}]
Denoting $\cF_k = \sigma ( ( \btheta^0_i )_{i \in [N]}, \bz_1 , \ldots , \bz_{k} )$ the $\sigma$-algebra generated by observations $\bz_\ell = (y_\ell,\bx_\ell)$ 
up to step $k$, we get:
\[
\E [ \bF_i ( \btheta^k ; \bz_{k+1} ) \vert \cF_k ] = - \na V (\btheta^k_i ) - \frac{1}{N} \sum_{j = 1}^N \na_1 U ( \btheta_i^k , \btheta_j^k ) = \bG ( \btheta^k_i , \rho^{(N)}_k ),
\]
where $\rho^{(N)}_k \equiv (1/N) \sum_{i \in [N]} \delta_{\btheta^k_i}$ is the empirical distribution of the SGD iterates. Hence we get:
\[
\begin{aligned}
\| \btheta^k_i - \tbtheta^k_i \|_2 =& \Big\|  \eps \sum_{l = 0}^{k-1} \bF_i ( \btheta^l ; \bz_{l+1} ) - \eps \sum_{l = 0}^{k-1} \bG ( \tbtheta^l_i ; {\Tilde \rho}^{(N)}_l ) \Big \|_2 \\
\leq & \Big \| \eps \sum_{l = 0}^{k-1} \bZ_i^l \Big\|_2 +   \eps \sum_{l = 0}^{k-1} \Big\| \bG ( \btheta^l_i; \rho^{(N)}_l ) -\bG ( \tbtheta^l_i; {\Tilde \rho}^{(N)}_l ) \Big\|_2 \\
\equiv & A_i^k + B_i^k, 
\end{aligned}
\]
where we denoted $\bZ_i^l \equiv \bF_i ( \btheta^l ; \bz_{l+1} ) - \E [ \bF_i ( \btheta^l ; \bz_{l+1}) \vert \cF_{l}]$ and $A_i^k = \| \eps \sum_{l = 0}^{k-1} \bZ_i^l\|_2$. 

Note $\bF_i ( \btheta^l ; \bz_{l+1}) = (y_{l + 1} - \hat y_{l+1} ) \na_{\bw}  \sigma (\bx_{l+1}; \bw_i^l)$ for $\bz_{l+1} = (y_{l+1}, \bx_{l+1})$. Since we assumed in A2 that $\nabla_\bw \sigma(\bx; \bw)$ is $K$-sub-Gaussian, and since $y_{l+1}$ and $\hat y_{l+1}$ are  $K$ bounded, we have that $\bZ_i^l $ is $K$-sub-Gaussian (the product of a bounded random variable and a sub-Gaussian random variable is sub-Gaussian). We can therefore apply Azuma-Hoeffding inequality (Lemma \ref{lem:Azuma}) and get:
\[
\P \Big( \max_{k \in [0,T/\eps ] \cap \N} A_i^k \geq K \sqrt{T\eps } ( \sqrt{D} + z )  \Big) \leq e^{- z^2}. 
\]
Taking the union bound over $i \in [N]$, we get:
\begin{equation}\label{eqn:GD_SGD_bad_event_A}
\P \Big( \max_{i \in [N]} \max_{k \in [0,T/\eps ] \cap \N} A_i^k \geq K \sqrt{T\eps } ( \sqrt{D + \log N} + z )  \Big) \leq e^{- z^2}. 
\end{equation}
Introducing $\Delta (t ) \equiv \sup_{k \in [0,t/\eps] \cap \N } \max_{i \in [N]} \| \btheta^{k}_i - \tbtheta^{k}_i \|_2$, the $B_i^k$ terms can be bounded by:
\[
\begin{aligned}
B^k_i \le K \int_0^{k \eps} \| \bG ( \btheta^{[s] / \eps}_i; \rho^{(N)}_{[s] / \eps} ) -\bG ( \tbtheta^{[s] / \eps}_i ; {\Tilde \rho}^{(N)}_{[s] / \eps} ) \|_2 \de s \leq  K \int_0^{k \eps} \Delta (s) \de s. 
\end{aligned}
\]
Assuming the bad events in Eq. (\ref{eqn:GD_SGD_bad_event_A}) does not happen, we have 
\[
\Delta(t) \le K \int_0^t \Delta (s) \de s + K \sqrt{T\eps } ( \sqrt{D + \log N} + z ). 
\]
Applying Gronwall's inequality and applying Eq. (\ref{eqn:risk_difference_bound_by_parameter_difference_A}) concludes the proof.
\end{proof}

\section{Proof of Theorem \ref{thm:bound_approximating_noiseless} part (B)}\label{sec:proof_1B}

The difference in the proof of part (B) with the proof of part (A) comes from the fact that the functions $V$ and $U$ are not bounded and Lipschitz anymore, and that $\hf ( \bx ; \btheta )$ is not bounded by a constant. However, we show that when starting from an initial distribution $\rho_0$ with compact support in the variable $a$, the support of $\rho_t$ in the variable $a$ remains bounded uniformly on the interval $[0,T]$ by a constant that only depends on the $K_i$, $i=1, 2, 3, 4$, and $T$. 

For $\btheta = (a, \bw)$ and $\btheta' = (a', \bw')$, remember we have
\[
\begin{aligned}
\sigma_\star(\bx; \btheta) =& a \sigma(\bx; \bw),\\
v(\bw) =& - \E_{y, \bx}[y \sigma (\bx; \bw)],\\
u(\bw, \bw') =& \E_{\bx} [\sigma(\bx; \bw) \sigma(\bx; \bw')],\\
V(\btheta) =& a \cdot v (\bw),\\
U (\btheta , \btheta ' ) =& a a' \cdot u ( \bw , \bw' ),
\end{aligned}
\]
hence we have 
\[
\begin{aligned}
\na_\btheta V(\btheta) =& (v(\bw), a \na_\bw v(\bw)), \\
\na_\btheta U(\btheta, \btheta') =& (a' \cdot u(\bw, \bw'), a a' \cdot \na_\bw u(\bw, \bw')).
\end{aligned}
\]
Throughout this section, the assumptions A1 - A4 are understood to hold. For the sake of simplicity we will write the proof under the following
restriction: 
\begin{itemize}
\item[R1.] The step size function $\xi(t) \equiv 1/2$. 
\end{itemize}
The proof for a general function $\xi(t)$ is obtained by a straightforward adaptation.

We define the four dynamics with the same definitions as at the beginning of Section \ref{sec:proof_1A}. We copy them here for reader's convenience. 
\begin{itemize}
\item The \textit{nonlinear dynamics (ND)}: $( \bbtheta^t_i )_{i \in [N], t \ge 0}$ with initialization $\bbtheta^0_i \sim \rho_0$ i.i.d.: 
\begin{equation}
    \bbtheta^t_i = \bbtheta^0_i + 2 \int_0^t \xi(s) \bG ( \bbtheta^s_i ; \rho_s ) \de s, \label{eq:traj_ND_noiseless_B}
    \end{equation}
    where we denoted $\bG (\btheta ; \rho ) = - \na \Psi (\btheta ; \rho ) = - \na V ( \btheta ) - \int \na_1 U (\btheta , \btheta') \rho (\de \btheta')$. 

\item The \textit{particle dynamics (PD)}: $( \ubtheta^t_i )_{i \in [N], t \ge 0}$ with initialization $\ubtheta_i^0 = \bbtheta_i^0$:
\begin{equation}
\ubtheta^t_i = \ubtheta^0_i + 2 \int_0^t \xi(s) \bG ( \ubtheta^s_i ; \urho^{(N)}_s ) \de s, 
\label{eq:traj_PD_noiseless_B}
\end{equation}
where  $\urho^{(N)}_t =(1/N) \sum_{i = 1}^N \delta_{\ubtheta_i^t}$. 
\item The \textit{gradient descent (GD)}: $(\tbtheta^k_i)_{i \in [N], k \in \N}$ with initialization $\tbtheta^0_i = \bbtheta_i^0$:
\begin{equation}\label{eq:traj_GD_noiseless_B}
\tbtheta^{k}_i = \tbtheta^{0}_i + 2\eps \sum_{l = 0}^{k-1} \xi (l \eps ) \bG ( \tbtheta^l_i ; {\trho}^{(N)}_l ). 
\end{equation}
where $s_k = \eps \xi(k \eps)$ and $\trho^{(N)}_k = (1/N)\sum_{i=1}^N \delta_{\tbtheta^k_i}$. 
\item The \textit{stochastic gradient descent (SGD)}: $(\btheta_i^k)_{i \in [N], k \in \N}$ with initialization $\btheta^0_i = \bbtheta_i^0$: 
\begin{equation}\label{eq:traj_SGD_noiseless_B}
\btheta^k_i = \btheta^0_i + 2 \eps \sum_{l = 0}^{k-1}\xi (l \eps ) \bF_i ( \btheta^l ; \bz_{l+1} ),
\end{equation}
where $\bF_i (\btheta^k; \bz_{k+1}) =  (y_{k+1} - \hat{y}_{k+1} ) \na_{\btheta} \sigma_\star ( \bx_{k+1}; \btheta^k_i ) $, with $\bz_k \equiv ( \bx_k , y_k)$ and $\hat{y}_{k+1} = (1/N) \sum_{j=1}^N a_j^k \sigma(\bx_{k+1}; \bw^k_j)$. 
\end{itemize}

We have the decomposition 
\[
\begin{aligned}
& \abs{R(\rho_{k\eps}) - R_N(\btheta^k) } \\
\leq & \underbrace{\abs{R(\rho_{k\eps}) - R_N(\bbtheta^{k\eps})}}_{\rm PDE - ND}  + \underbrace{\abs{R_N(\bbtheta^{k\eps}) - R_N(\ubtheta^{k\eps})}}_{\rm ND - PD} + \underbrace{\abs{R_N(\btheta^{k\eps}) - R_N(\tbtheta^k) } }_{\rm PD - GD} + \underbrace{\abs{R_N(\tbtheta^{k}) - R_N(\btheta^k) } }_{\rm GD - SGD}. 
\end{aligned}
\]
By Proposition \ref{prop:PDE_ND_B}, \ref{prop:ND_PD_B}, \ref{prop:PD_GD_B}, \ref{prop:GD_SGD_B}, there exists constants $K$ and $K_0$, such that if we take $\eps \le 1/ [K_0 (D + \log N + z^2) e^{K_0 (1 + T)^3}]$, with probability at least $1 - e^{-z^2}$, we have
\[
\begin{aligned}
\sup_{t \in [0, T]} \vert R_N(\bbtheta^t) - R(\rho_t) \vert \le&  K(1 + T)^4 \frac{1}{\sqrt{N}}[ \sqrt{\log(N T)} + z], \\
\sup_{t \in [0, T]} \vert R_N(\ubtheta^t) - R_N(\bbtheta^t) \vert \le& K e^{K(1 + T)^3} \frac{1}{\sqrt N} [\sqrt{\log(NT)} + z], \\
\sup_{k \in [0, T/\eps] \cap \N} \vert R_N(\tbtheta^k) - R_N(\ubtheta^{k\eps}) \vert \le& K e^{K (1 + T)^3} \eps, \\
\sup_{k \in [0, T/\eps] \cap \N} \vert R_N(\btheta^k) - R_N(\tbtheta^k) \vert \le& K e^{K (1 + T)^3} \sqrt{\eps } [\sqrt{D + \log N} + z ]. \\
\end{aligned}
\]
Combining these inequalities, and noting that $K e^{K(1 + T)^3} \le K' e^{K'T^3}$ for some $K' \ge K$, give the conclusion of Theorem \ref{thm:bound_approximating_noiseless} (B). In the following subsections, we prove all the above interpolation bounds, under the setting of Theorem \ref{thm:bound_approximating_noiseless} (B).

\subsection{Technical lemmas}

\begin{lemma}\label{lem:bounded_support_of_rhot_B}
There exists a constant K depending only on the $K_i$, $i=1,2,3,4$, such that
\[
\begin{aligned}
\supp(\rho_t) \subseteq& [-K(1 + t), K(1 + t)] \times \R^{D - 1},\\
\vert \bara_i^t\vert \le& K(1 + t), \\
\vert \undera_i^t\vert \le& K(1 + t). 
\end{aligned}
\]
\end{lemma}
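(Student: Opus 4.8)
The plan is to exploit the fact that, although the full drift $\nabla_\btheta\Psi$ grows linearly in $|a|$, its \emph{$a$-component} stays bounded along the flow. For $\btheta=(a,\bw)$,
\[
\partial_a\Psi(\btheta;\rho)=v(\bw)+\int \ta\,u(\bw,\tbw)\,\rho(\de\tbtheta)=v(\bw)+\E_\bx\big[\sigma(\bx;\bw)\,\hf(\bx;\rho)\big],
\]
so Cauchy--Schwarz together with {\rm A2} ($\|\sigma\|_\infty,|v|\le K$) gives $|\partial_a\Psi(\btheta;\rho)|\le K+K\big(\E_\bx[\hf(\bx;\rho)^2]\big)^{1/2}$. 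Hence it suffices to show that the predictor stays bounded in $\Ltwo(\prob)$ uniformly in $t\in[0,T]$ --- namely $\E_\bx[\hf(\bx;\rho_t)^2]\le K$ along the nonlinear dynamics and $\E_\bx[\hf_N(\bx;\ubtheta^t)^2]\le K$ along the particle dynamics --- and then to integrate the $a$-component of \eqref{eq:traj_ND_noiseless_B}--\eqref{eq:traj_PD_noiseless_B}.

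To get the $\Ltwo$ bound I would use monotonicity of the risk. For \eqref{eq:traj_ND_noiseless_B}, writing $R(\rho_t)=\E\{y^2\}+2\,\E[V(\bbtheta_1^t)]+\E[U(\bbtheta_1^t,\bbtheta_2^t)]$ (the nonlinear-dynamics particles are i.i.d.\ with law $\rho_t$) and using $\frac{\de}{\de t}\bbtheta_i^t=-2\xi(t)\nabla_\btheta\Psi(\bbtheta_i^t;\rho_t)$ together with exchangeability of the particles, one finds $\frac{\de}{\de t}R(\rho_t)=-4\xi(t)\,\E[\|\nabla_\btheta\Psi(\bbtheta_1^t;\rho_t)\|_2^2]\le 0$. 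For \eqref{eq:traj_PD_noiseless_B}, $\frac{\de}{\de t}\ubtheta_i^t=-2\xi(t)\nabla_1\Psi(\ubtheta_i^t;\urho^{(N)}_t)=-N\xi(t)\nabla_{\btheta_i}R_N(\ubtheta^t)$, so $\frac{\de}{\de t}R_N(\ubtheta^t)=-N\xi(t)\sum_i\|\nabla_{\btheta_i}R_N(\ubtheta^t)\|_2^2\le 0$. By {\rm A4} both $\rho_0$ and $\urho^{(N)}_0$ are supported on $\{|a|\le K_4\}$, and there $|V|,|U|\le K$ and $\E\{y^2\}\le K$ by {\rm A2}; hence $R(\rho_0)\le K$ and $R_N(\ubtheta^0)\le K$ deterministically. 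Monotonicity, combined with $\E_\bx[\hf^2]\le 2\E\{y^2\}+2\,\E[(y-\hf)^2]=2\E\{y^2\}+2R$, then gives $\E_\bx[\hf(\bx;\rho_t)^2]\le K$ and $\E_\bx[\hf_N(\bx;\ubtheta^t)^2]\le K$ for all $t\in[0,T]$.

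Plugging back, $|\partial_a\Psi(\bbtheta_i^t;\rho_t)|\le K$ and $|\partial_a\Psi(\ubtheta_i^t;\urho^{(N)}_t)|\le K$ uniformly in $i$ and $t$. Since the $a$-component of \eqref{eq:traj_ND_noiseless_B} reads $\bara_i^t=\bara_i^0-2\int_0^t\xi(s)\,\partial_a\Psi(\bbtheta_i^s;\rho_s)\,\de s$ and $\|\xi\|_\infty\le K_1$, we get $|\bara_i^t|\le|\bara_i^0|+Kt\le K_4+Kt\le K(1+t)$, and the identical estimate applied to \eqref{eq:traj_PD_noiseless_B} gives $|\undera_i^t|\le K(1+t)$. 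Since $\rho_t$ is the law of $\bbtheta_i^t$ and $|\bara_i^0|\le K_4$ almost surely, this also yields $\supp(\rho_t)\subseteq[-K(1+t),K(1+t)]\times\R^{D-1}$.

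The main obstacle is a circularity hidden in the monotonicity step: differentiating $R(\rho_t)$ legitimately requires $\E_{\rho_t}[\|\nabla_\btheta\Psi(\cdot;\rho_t)\|_2^2]<\infty$, hence \emph{a priori} finiteness of $\E_{\rho_t}[a^2]$, which is essentially the conclusion. I would break it with a preliminary bootstrap: straight from the $a$-dynamics, $\frac{\de}{\de t}\E_{\rho_t}[a^2]=-4\xi(t)\big(\E_\bx[\hf(\bx;\rho_t)^2]-\E[y\,\hf(\bx;\rho_t)]\big)$, and completing the square in $\E_\bx[\hf^2]$ bounds the right-hand side by $\|\xi\|_\infty\E\{y^2\}\le K$; a crude Gr\"onwall step first shows $\E_{\rho_t}[a^2]<\infty$ on $[0,T]$ (indeed $\le K(1+t)$), which licenses the risk computation, and the uniform bound on $\partial_a\Psi$ then upgrades this to the sharp linear \emph{sup} bound above. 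For the particle dynamics the analogous crude Gr\"onwall bound on $\max_i|\undera_i^t|$ rules out finite-time blow-up and makes that case self-contained; the regularity of $\rho_t$ required throughout is the content of Appendix~\ref{sec:existence_uniqueness}.
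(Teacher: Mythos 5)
Your proof follows the paper's argument essentially exactly: both use monotonicity of the lifted risk along the flow to get $R(\rho_t)\le R(\rho_0)\le K$ (and $R_N(\ubtheta^t)\le R_N(\ubtheta^0)\le K$), then bound the $a$-component of the drift by Cauchy--Schwarz and integrate to obtain the linear-in-$t$ bound on $|\bara_i^t|$, $|\undera_i^t|$. Your preliminary bootstrap on $\E_{\rho_t}[a^2]$ to license the $\frac{\de}{\de t}R(\rho_t)$ computation is a careful refinement the paper leaves implicit (deferring regularity to the existence/uniqueness appendix), but the mechanism of the proof is the same.
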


\begin{proof}[Proof of Lemma \ref{lem:bounded_support_of_rhot_B}]~

\noindent {\bf Step 1. }
Let $\bbtheta_i^t = (\bara_i^t, \bbw_i^t)$, and $\hat y(\bx; \rho_t) = \int a \sigma(\bx; \bw) \rho_t( \de \btheta)$. Note that along the PDE, we have 
\[
\frac{\de}{\de t} R(\rho_t) = - \int \| \nabla \Psi(\btheta; \rho_t) \|_2^2 \rho_t(\de \btheta) \le 0. 
\]
Hence we have (note $\vert y\vert \le K$, $\vert \sigma \vert \le K$, and $\supp(\rho_0) \subseteq [-K, K] \times \R^{D - 1}$)
\[
R(\rho_t) = \E_{y, \bx}[(y - \hat y(\bx; \rho_t))^2] \le R(\rho_0) = \E_{y, \bx}\Big[ \Big(y - \int a \sigma(\bx; \bw) \rho_0(\de \btheta)\Big)^2\Big] \le K. 
\]

The nonlinear dynamics for $\bara_i^t$ gives 
\[
\frac{\de }{\de t}\bara_i^t =  \E_{y, \bx}[(y - \hat y(\bx; \rho_t)) \sigma(\bx; \bbw_i^t)],
\]
which gives 
\[
\begin{aligned}
\Big\vert \frac{\de }{\de t}  \bara_i^t \Big\vert \le  \{ \E_{y, \bx}[(y - \hat y(\bx; \rho_t))^2] \E_{y, \bx}[\sigma(\bx; \bbw_i^t)^2] \}^{1/2} \le K. 
\end{aligned}
\]
Hence, we have 
\[
\vert \bara_i^t\vert \le \vert \bara_i^0 \vert + K t \le K(1 + t). 
\]
Note $(\bara_i^t, \bbw_i^t) \sim \rho_t$, hence we have $\supp(\rho_t) \subseteq [-K(1 + t), K(1 + t)] \times \R^{D - 1}$. 

\noindent
{\bf Step 2. } Denote $\ubtheta_i^t = (\undera_i^t, \ubw_i^t)$, $\urho_t^{(N)} = (1/N) \sum_{i=1}^N \delta_{\ubtheta_i^t}$, and denote $\underline y(\bx; \ubtheta^t) = (1/N)\sum_{i \in [N]} \undera_i^t \sigma(\bx; \ubw_i^t)$. Note along the PDE, we have 
\[
\frac{\de}{\de t} R_N(\ubtheta^t) = - \int \| \nabla \Psi(\btheta; \urho_t^{(N)}) \|_2^2 \urho_t^{(N)}(\de \btheta) \le 0. 
\]
Hence we have (note $\vert y\vert \le K$, $\vert \sigma \vert \le K$, and $\vert \undera_i^0 \vert \le K$)
\[
R_N(\ubtheta^t) = \E_{y, \bx}[(y - \underline y(\bx; \urho_t^{(N)}))^2] \le R_N(\ubtheta^0) = \E_{y, \bx}\Big[ \Big(y - \int a \sigma(\bx; \bw) \urho_0^{(N)}(\de \btheta)\Big)^2\Big] \le K. 
\]

The nonlinear dynamics for $\undera_i^t$ gives 
\[
\frac{\de }{\de t}\undera_i^t =  \E_{y, \bx}[(y - \underline y(\bx; \ubtheta^t)) \sigma(\bx; \ubw_i^t)],
\]
which gives 
\[
\begin{aligned}
\Big\vert \frac{\de }{\de t} \undera_i^t \Big\vert \le  \{ \E_{y, \bx}[(y - \underline y(\bx; \ubtheta^t))^2] \E_{y, \bx}[\sigma(\bx; \ubw_i^t)^2] \}^{1/2} \le K. 
\end{aligned}
\]
Hence, we have 
\[
\vert \undera_i^t\vert \le \vert \undera_i^0 \vert + K t \le K(1 + t). 
\]
This proves the lemma. 
\end{proof}

\begin{lemma}[Boundness and Lipschitzness]\label{lem:bound_lip_traject_B}
Denoting $\btheta = (a, \bw)$, $\btheta_1 = (a_1, \bw_1)$ and $\btheta_2 = (a_2, \bw_2)$. We have
\[
\begin{aligned}
\vert V(\btheta) \vert, \| \na V (\btheta ) \|_2 \leq& K( 1 + \vert a \vert),\\
\vert V(\btheta_1) - V(\btheta_2) \vert, \| \na V (\btheta_1) - \na V (\btheta_2) \|_2 \leq& K \cdot [ 1 + \vert a_1\vert \wedge \vert a_2\vert ] \cdot  \| \btheta_1 - \btheta_2 \|_2, \\
\vert U(\btheta, \btheta') \vert, \| \na_1 U(\btheta, \btheta' ) \|_2 \le& K  (1 + \vert a \vert) (1 + \vert a' \vert), \\
\vert U(\btheta_1, \btheta) - U(\btheta_2, \btheta) \vert, \| \na_{(1, 2)} U(\btheta_1, \btheta ) - \na_{(1, 2)} U(\btheta_2, \btheta ) \|_2 \leq& K  (1 + \vert a\vert ) \cdot [ 1 + \vert a_1\vert \wedge \vert a_2\vert ]\cdot  \| \btheta_1 - \btheta_2 \|_2, \\
\vert R_N(\btheta) - R_N(\btheta') \vert \le& K \max_{i \in [N]} (1 + \vert a_i\vert \vee \vert a_i' \vert )^2 \cdot \max_{j \in [N]} \| \btheta_j - \btheta_j' \|_2.
\end{aligned}
\]
\end{lemma}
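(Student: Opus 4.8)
The plan is to deduce all six displayed inequalities from Assumptions A2 and A3 together with the explicit product structure of the activation. Since $\sigma_\star(\bx;\btheta) = a\,\sigma(\bx;\bw)$ with $\|\sigma\|_\infty \le K_2$ and $|y| \le K_2$, we have $|v(\bw)| = |\E\{y\sigma(\bx;\bw)\}| \le K_2^2$ and $|u(\bw,\bw')| = |\E\{\sigma(\bx;\bw)\sigma(\bx;\bw')\}| \le K_2^2$; Assumption A3 gives in addition $\|\nabla v(\bw)\|_2, \|\nabla u(\bw_1,\bw_2)\|_2 \le K_3$, that $v$ and $u$ are $K_3$-Lipschitz, and that $\nabla v$ and $\nabla u$ are $K_3$-Lipschitz. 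From here every claim follows by inserting the identities $V(\btheta) = a v(\bw)$, $U(\btheta,\btheta') = aa' u(\bw,\bw')$, $\nabla_\btheta V(\btheta) = (v(\bw), a\nabla_\bw v(\bw))$ and $\nabla_1 U(\btheta,\btheta') = (a' u(\bw,\bw'), aa'\nabla_\bw u(\bw,\bw'))$ into the triangle inequality.

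First I would dispatch the boundedness claims: $|V(\btheta)| \le K_2^2|a|$ and $\|\nabla_\btheta V(\btheta)\|_2 \le |v(\bw)| + |a|\,\|\nabla_\bw v(\bw)\|_2 \le K_2^2 + K_3|a|$, both at most $K(1+|a|)$; likewise $|U(\btheta,\btheta')| \le K_2^2|a||a'|$ and $\|\nabla_1 U(\btheta,\btheta')\|_2 \le K_2^2|a'| + K_3|a||a'| \le K(1+|a|)(1+|a'|)$. For the Lipschitz-type bounds I would use the add-and-subtract trick, e.g.
\[
|V(\btheta_1) - V(\btheta_2)| \le |a_1|\,|v(\bw_1)-v(\bw_2)| + |v(\bw_2)|\,|a_1-a_2| \le K_3|a_1|\,\|\bw_1-\bw_2\|_2 + K_2^2|a_1-a_2|,
\]
which is at most $K(1+|a_1|)\|\btheta_1-\btheta_2\|_2$; since the right-hand side is symmetric under $\btheta_1 \leftrightarrow \btheta_2$, the same bound holds with $|a_1|$ replaced by $|a_2|$, hence with $|a_1|\wedge|a_2|$. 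The same computation handles $\|\nabla_\btheta V(\btheta_1) - \nabla_\btheta V(\btheta_2)\|_2$ (using Lipschitzness of $v$ on the scalar component and of $\nabla v$ on the vector component) and, with one extra carried factor $|a|$ coming from the frozen argument, the bounds on $U(\cdot,\btheta)$ and $\nabla_1 U(\cdot,\btheta)$; that extra factor is absorbed into $K(1+|a|)$.

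Finally, for the risk bound I would expand $R_N(\btheta) - R_N(\btheta') = \frac2N\sum_i(V(\btheta_i)-V(\btheta_i')) + \frac1{N^2}\sum_{i,j}(U(\btheta_i,\btheta_j)-U(\btheta_i',\btheta_j'))$, bound the $V$-differences by the inequality just proved, and split each $U$-difference as $U(\btheta_i,\btheta_j)-U(\btheta_i',\btheta_j) + U(\btheta_i',\btheta_j)-U(\btheta_i',\btheta_j')$, applying the $U(\cdot,\btheta)$-Lipschitz bound (and symmetry of $U$ in its two arguments) to each piece. Using $1+|a_i|\wedge|a_i'| \le (1+|a_i|\vee|a_i'|)^2$ and $(1+|a_j|)(1+|a_i|\vee|a_i'|) \le \max_k(1+|a_k|\vee|a_k'|)^2$, every resulting term is at most $K\max_k(1+|a_k|\vee|a_k'|)^2 \max_k\|\btheta_k-\btheta_k'\|_2$, and averaging over $i,j$ preserves this. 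There is no genuine obstacle here; the only thing requiring attention is the bookkeeping of which coefficient multiplies which increment in the $U$-terms and the correct placement of the $\wedge$ versus $\vee$ so that the symmetrization step is legitimate.
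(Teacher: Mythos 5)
Your proposal is correct and follows essentially the same route as the paper: compute each bound directly from the product identities $V(\btheta)=a\,v(\bw)$, $U(\btheta,\btheta')=aa'\,u(\bw,\bw')$, and their gradients, using the triangle inequality and the add-and-subtract trick, then propagate these bounds into the $R_N$ difference. The only cosmetic differences are that you obtain the $|a_1|\wedge|a_2|$ factor by symmetrizing the estimate rather than by the paper's WLOG $|a_1|\ge|a_2|$ assumption, and you fold the $\nabla_1$ and $\nabla_2$ gradient cases into a single remark where the paper writes both out explicitly; neither represents a gap.
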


\begin{proof}[Proof of Lemma \ref{lem:bound_lip_traject_B}]
We have 
\[
\begin{aligned}
\vert V(\btheta) \vert =& \vert a v(\bw) \vert \le K \vert a \vert,\\
\| \na V (\btheta ) \|_2 =& \|(v(\bw), a \nabla_\bw v(\bw)) \|_2 \le K(1 + \vert a \vert), \\
\end{aligned}
\]
and (assuming $\vert a_1 \vert \ge \vert a_2 \vert$)
\[
\begin{aligned}
\vert V(\btheta_1) - V(\btheta_2) \vert = \vert a_1 v(\bw_1) - a_2 v (\bw_2) \vert \le K [\vert a_1 - a_2 \vert + \vert a_2 \vert \| \bw_1 - \bw_2 \|_2] \le K [1 + \vert a_2 \vert ] \| \btheta_1 - \btheta_2 \|_2, 
\end{aligned}
\]
and
\[
\begin{aligned}
\| \na V (\btheta_1) - \na V (\btheta_2) \|_2=& \|(v(\bw_1) - v(\bw_2), a_1 \nabla v(\bw_1) - a_2 \nabla v(\bw_2)) \|_2\\
\le &K \| \bw_1 - \bw_2 \|_2 +K \| a_1 \nabla v(\bw_1) - a_2 \nabla v(\bw_1) \|_2 + \| a_2 [\nabla v(\bw_1) - \nabla v(\bw_2)] \|_2\\
\le& K [\| \bw_1 - \bw_2 \|_2 + \vert a_1 - a_2\vert] + K \vert a_2 \vert \| \bw_1 - \bw_2 \|_2 \\
\le& K (1 + \vert a_2 \vert) \| \btheta_1 - \btheta_2 \|_2,
\end{aligned}
\]
and
\[
\begin{aligned}
\vert U(\btheta, \btheta') \vert =& \vert a a' u(\bw, \bw') \vert \le K \vert a \vert \vert a'\vert,
\end{aligned}
\]
and
\[
\begin{aligned}
 \| \na U(\btheta, \btheta' ) \|_2 =& \| (a' u(\bw, \bw'), aa' \cdot \nabla_1 u(\bw, \bw')) \|_2 \le K \vert a' \vert (1 + \vert a \vert),
\end{aligned}
\]
and (assuming $\vert a_1 \vert \ge \vert a_2 \vert$)
\[
\begin{aligned}
\vert U(\btheta_1, \btheta) - U(\btheta_2, \btheta) \vert =& \vert a_1 a u(\bw_1, \bw) - a_2 a u(\bw_2, \bw) \vert \\
\le& K [\vert a_1 - a_2 \vert \vert a \vert + \vert a_2\vert \vert a \vert \| \bw_1 - \bw_2 \|_2 ] \\
\le& K (1 + \vert a_2 \vert ) \vert a \vert \| \btheta_1 - \btheta_2 \|_2,
\end{aligned}
\]
and
\[
\begin{aligned}
\| \na_1 U(\btheta_1, \btheta ) - \na_1 U(\btheta_2, \btheta ) \|_2 =& \| (a u(\bw_1, \bw) - a u(\bw_2, \bw), a_1 a \na_1 u(\bw_1, \bw) - a_2 a \na_1 u(\bw_2, \bw)) \|_2\\
\le& \vert a \vert \| \bw_1 - \bw_2 \|_2 + K \vert a \vert \vert a_1 - a_2 \vert + K \vert a \vert \vert a_2 \vert \| \bw_1 - \bw_2 \|_2 \\
\le& K \vert a \vert (1 + \vert a_2 \vert) \| \btheta_1 - \btheta_2 \|_2, 
\end{aligned}
\]
and 
\[
\begin{aligned}
\| \na_2 U(\btheta_1, \btheta ) - \na_2 U(\btheta_2, \btheta ) \|_2 =& \| (a_1 u(\bw_1, \bw) - a_2 u(\bw_2, \bw), a_1 a \na_2 u(\bw_1, \bw) - a_2 a \na_2 u(\bw_2, \bw)) \|_2\\
\le& K \vert a_1 - a_2 \vert + K  \vert a_2 \vert \| \bw_1 - \bw_2 \|_2 + K \vert a \vert \vert a_1 - a_2\vert + K \vert a \vert \vert a_2 \vert \| \bw_1 - \bw_2 \|_2\\
\le& K (1 + \vert a \vert) (1 + \vert a_2 \vert) \| \btheta_1 - \btheta_2 \|_2. 
\end{aligned}
\]
Finally, we have
\[
\begin{aligned}
&\vert R(\btheta) - R(\btheta') \vert \\
\le& 2 \max_{i \in [N]} \vert V(\btheta_i)  - V(\btheta_i')\vert + \max_{i, j \in [N]} \vert U(\btheta_i, \btheta_j) - U(\btheta_i', \btheta_j') \vert\\
\le& K \Big[ \max_{i \in [N]}(1 +  \vert a_i \vert \wedge \vert a_i' \vert) \| \btheta_i - \btheta_i' \|_2 +  \max_{i,j \in [N]} (1 + \vert a_i \vert \wedge \vert a_i' \vert)( \vert a_j \vert \vee \vert a_j ' \vert ) \| \btheta_i - \btheta_i' \|_2 \Big] \\
\le& K \max_{i \in [N]} (1 + \vert a_i\vert \vee \vert a_i' \vert )^2 \cdot \max_{j \in [N]} \| \btheta_j - \btheta_j' \|_2.  
\end{aligned}
\]
This concludes the proof.
\end{proof}

\begin{lemma}\label{lem:Lip_rho_B}
There exists a constant $K$ such that for any time $0 \le s < t$
\[
\begin{aligned}
\| \ubtheta_i^t - \ubtheta_i^s \|_2 \le & K (1 + s)^2 \vert t - s \vert, \\
\| \bbtheta_i^t - \bbtheta_i^s \|_2 \le & K (1 + s)^2 \vert t - s \vert, \\
W_2(\rho_t, \rho_s) \le & K (1 + s)^2 \vert t - s \vert. \\
\end{aligned}
\]
\end{lemma}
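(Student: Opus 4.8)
The plan is to read off the displacement of each trajectory by integrating its velocity field, and to control the size of that velocity using the a priori coefficient bounds of Lemma~\ref{lem:bounded_support_of_rhot_B} together with the pointwise estimates of Lemma~\ref{lem:bound_lip_traject_B}.

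First I would use the integral representations \eqref{eq:traj_ND_noiseless_B} and \eqref{eq:traj_PD_noiseless_B} and $\|\xi\|_\infty\le K_1$ to write $\|\bbtheta_i^t-\bbtheta_i^s\|_2\le 2K_1\int_s^t\|\bG(\bbtheta_i^r;\rho_r)\|_2\,\de r$, and the analogous inequality with $(\bbtheta,\rho)$ replaced by $(\ubtheta,\urho^{(N)})$. Next, for $\btheta=(a,\bw)$ and a probability measure $\rho$, set $\bar a(\rho)=\int(1+|a'|)\,\rho(\de\btheta')$; Lemma~\ref{lem:bound_lip_traject_B} gives $\|\nabla V(\btheta)\|_2\le K(1+|a|)$ and $\|\nabla_1 U(\btheta,\btheta')\|_2\le K(1+|a|)(1+|a'|)$, hence $\|\bG(\btheta;\rho)\|_2\le\|\nabla V(\btheta)\|_2+\int\|\nabla_1 U(\btheta,\btheta')\|_2\,\rho(\de\btheta')\le K(1+|a|)\,\bar a(\rho)$.

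Then I would insert the coefficient bounds along the flow. By Lemma~\ref{lem:bounded_support_of_rhot_B} and the time-Lipschitz estimates $|\bara_i^r-\bara_i^s|\le K|r-s|$, $|\undera_i^r-\undera_i^s|\le K|r-s|$ established in its proof, one has for $r\in[s,t]$ that $1+|\bara_i^r|\le K(1+s)+K(r-s)$ and, since $\bbtheta_i^r\sim\rho_r$, also $\bar a(\rho_r)=\E[\,1+|\bara_i^r|\,]\le K(1+s)+K(r-s)$; the same holds for $\undera_i^r$ and $\bar a(\urho^{(N)}_r)=N^{-1}\sum_{j}(1+|\undera_j^r|)$. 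Plugging these in gives $\|\bG(\bbtheta_i^r;\rho_r)\|_2\le\big(K(1+s)+K(r-s)\big)^2$, and integrating over $r\in[s,t]$ yields $\|\bbtheta_i^t-\bbtheta_i^s\|_2\le K(1+s)^2(t-s)+K(t-s)^3$, which is the asserted bound in the range $|t-s|\le 1+s$ that is all the lemma is ever applied to (crudely one may instead bound the integrand by $K(1+r)^2\le K(1+t)^2$ to get $K(1+t)^2|t-s|$). The identical computation gives the bound for $\ubtheta_i^t$, and the Wasserstein estimate follows from $W_2(\rho_t,\rho_s)^2\le\E\|\bbtheta_i^t-\bbtheta_i^s\|_2^2$, since the pointwise bound just derived holds for every realization of the initialization (using that $|\bara_i^s|\le K(1+s)$ surely).

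No step presents a real obstacle: the only thing requiring care is the bookkeeping of the growing coefficients $|\bara_i^r|,|\undera_i^r|$ and the averaged quantities $\bar a(\rho_r),\bar a(\urho^{(N)}_r)$; once these are tracked the statement is an immediate corollary of Lemmas~\ref{lem:bounded_support_of_rhot_B} and \ref{lem:bound_lip_traject_B}.
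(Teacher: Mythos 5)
Your proof follows exactly the paper's (one-line) argument: integrate the ODEs \eqref{eq:traj_ND_noiseless_B}, \eqref{eq:traj_PD_noiseless_B}, bound the drift $\bG$ via the gradient estimates in Lemma~\ref{lem:bound_lip_traject_B}, control the growing coefficients via Lemma~\ref{lem:bounded_support_of_rhot_B}, and deduce the Wasserstein bound from $W_2(\rho_t,\rho_s)\le(\E\|\bbtheta_i^t-\bbtheta_i^s\|_2^2)^{1/2}$. Your additional remark is correct and worth noting: as stated, the $K(1+s)^2|t-s|$ bound only follows cleanly when $|t-s|\lesssim 1+s$; the argument directly yields $K(1+t)^2|t-s|$ in general, which is equivalent for the purposes of the paper (where $t\le T$ throughout), and is in fact the form the paper actually uses in Lemma~\ref{lem:concentration_of_I_B}.
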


\begin{proof}[Proof of Lemma \ref{lem:Lip_rho_B}] This lemma holds by the bounds of $\nabla V$ and $\nabla_1 U$ in Lemma \ref{lem:bound_lip_traject_B} and the bounds for $\vert \bara_i^t\vert, \vert \undera_i^t \vert$ in Lemma \ref{lem:bounded_support_of_rhot_B}, and by the inequality 
\[
W_2 ( \rho_t , \rho_s ) \leq ( \E[ \| \bbtheta_i^t- \bbtheta_i^s\|_2^2])^{1/2}.
\]
\end{proof}

\subsection{Bound between PDE and nonlinear dynamics \label{sec:PDE_ND_B}}

\begin{proposition}[PDE-ND]\label{prop:PDE_ND_B}
There exists a constant $K$, such that with probability at least $1 - e^{-z^2}$, we have 
\[
\sup_{ t \in [0, T] \cap \N} \vert R_N(\bbtheta^{t}) - R(\rho_{t}) \vert \le K(1 + T)^4 \frac{1}{\sqrt{N}}[ \sqrt{\log(N T)} + z]
\]
\end{proposition}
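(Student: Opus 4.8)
The plan is to reproduce the two--term decomposition used in the proof of Proposition~\ref{prop:PDE_ND_A}, the only genuinely new ingredient being the \emph{a priori} control on the $a$--coordinate along the nonlinear dynamics~\eqref{eq:traj_ND_noiseless_B} furnished by Lemma~\ref{lem:bounded_support_of_rhot_B}. I write
\[
\vert R_N(\bbtheta^t) - R(\rho_t)\vert \le \underbrace{\vert R_N(\bbtheta^t) - \E R_N(\bbtheta^t)\vert}_{\rm I} + \underbrace{\vert \E R_N(\bbtheta^t) - R(\rho_t)\vert}_{\rm II},
\]
the expectation being over $\bbtheta_i^0 \sim \rho_0$ i.i.d. By Lemma~\ref{lem:bounded_support_of_rhot_B}, $\supp(\rho_t) \subseteq [-K(1+t), K(1+t)] \times \R^{D-1}$; moreover the argument of Step~1 of that lemma shows that the nonlinear dynamics started from \emph{any} point of $\supp(\rho_0)$ keeps its first coordinate in $[-K_4 - Kt, K_4 + Kt] \subseteq [-K(1+t), K(1+t)]$, because $\frac{\de}{\de t}\bara_i^t = \E[(y - \hf(\bx;\rho_t))\sigma(\bx;\bbw_i^t)]$ has modulus at most $\{\E[(y - \hf(\bx;\rho_t))^2]\,\E[\sigma(\bx;\bbw_i^t)^2]\}^{1/2} \le K$, using $R(\rho_t) = \E[(y-\hf(\bx;\rho_t))^2] \le R(\rho_0) \le K$. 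Consequently, along every such trajectory $\vert V(\bbtheta)\vert = \vert a\,v(\bw)\vert \le K(1+t)$ and $\vert U(\bbtheta,\bbtheta')\vert = \vert a a'\,u(\bw,\bw')\vert \le K(1+t)^2$.

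For Term~${\rm II}$ the computation is identical to Lemma~\ref{lem:error_II_bound_A}: the quantity equals $\frac1N\bigl\vert \int U(\btheta,\btheta)\rho_t(\de\btheta) - \int U(\btheta_1,\btheta_2)\rho_t(\de\btheta_1)\rho_t(\de\btheta_2)\bigr\vert$, which by the bound on $\vert U\vert$ over $\supp(\rho_t)^2$ is at most $K(1+t)^2/N$.

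For Term~${\rm I}$ I use that in the nonlinear dynamics $\rho_t$ is the particle--independent PDE solution, so each $\bbtheta_i^t$ is a deterministic function of $\bbtheta_i^0$ alone and $R_N(\bbtheta^t)$ is a function of the $N$ independent inputs $\bbtheta_1^0,\dots,\bbtheta_N^0$. Replacing one $\bbtheta_i^0$ by another point of $\supp(\rho_0)$ changes only the $i$-th trajectory, hence changes $R_N(\bbtheta^t)$ by at most
\[
\frac{2}{N}\cdot 2K(1+t) + \frac{1}{N^2}\cdot 2K(1+t)^2 + \frac{2(N-1)}{N^2}\cdot 2K(1+t)^2 \le \frac{K(1+t)^2}{N},
\]
where each difference of values of $V$ (resp.\ $U$) is bounded using \emph{only} the pointwise bounds $\vert V\vert \le K(1+t)$, $\vert U\vert \le K(1+t)^2$ just established; here one must not appeal to the Lipschitz estimates of Lemma~\ref{lem:bound_lip_traject_B}, since $\rho_0$ is assumed compactly supported only in the $a$--variable, so the two nonlinear trajectories being compared may be arbitrarily far apart in $\bw$. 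McDiarmid's inequality then gives, for each fixed $t$, $\P(\vert R_N(\bbtheta^t) - \E R_N(\bbtheta^t)\vert \ge \delta) \le 2\exp\{ -N\delta^2/(K(1+t)^4)\}$. To upgrade this to a supremum over $t \in [0,T]$, I combine the Lipschitz bound for $R_N$ with exponent $(1+\vert a_i\vert\vee\vert a_i'\vert)^2$ from Lemma~\ref{lem:bound_lip_traject_B} with the trajectory regularity $\|\bbtheta_i^t - \bbtheta_i^s\|_2 \le K(1+s)^2\vert t - s\vert$ of Lemma~\ref{lem:Lip_rho_B}, which yields $\vert R_N(\bbtheta^t) - R_N(\bbtheta^s)\vert \le K(1+T)^4\vert t - s\vert$ on $[0,T]$, and likewise for $t \mapsto \E R_N(\bbtheta^t)$; a union bound over a grid of spacing $\eta = 1/\sqrt N$, with $\delta = K(1+T)^2[\sqrt{\log(NT)} + z]/\sqrt N$, then produces $\sup_{t\in[0,T]}\vert R_N(\bbtheta^t) - \E R_N(\bbtheta^t)\vert \le K(1+T)^4[\sqrt{\log(NT)} + z]/\sqrt N$ with probability at least $1 - e^{-z^2}$. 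Combining with the (lower order) Term~${\rm II}$ bound gives the claim.

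The main obstacle is precisely this bounded--differences step: one needs a perturbation constant uniform over \emph{all} of $\supp(\rho_0)$, not merely over typical samples, and under assumption A4 alone the distance between two nonlinear trajectories with different initializations is uncontrolled. The resolution is to drop the Lipschitz bounds of Lemma~\ref{lem:bound_lip_traject_B} in this step and use instead the crude pointwise bounds $\vert V\vert, \vert U\vert$ of polynomial order in $(1+t)$, valid along every nonlinear trajectory by Lemma~\ref{lem:bounded_support_of_rhot_B}. Everything else is bookkeeping of the powers of $(1+T)$ through McDiarmid and the grid union bound, producing the stated rate $K(1+T)^4 N^{-1/2}[\sqrt{\log(NT)} + z]$, which is polynomial --- not exponential --- in $T$.
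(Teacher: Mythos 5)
Your proof is correct and follows essentially the same path as the paper's: the two-term decomposition, Term~II exactly as in Lemma~\ref{lem:error_II_bound_B}, Term~I via McDiarmid with the bounded-differences constant $K(1+t)^2/N$ obtained from the pointwise bounds on $V,U$ along the nonlinear flow furnished by Lemma~\ref{lem:bounded_support_of_rhot_B}, and finally a grid union bound using the Lipschitz-in-time estimates of Lemmas~\ref{lem:Lip_rho_B} and~\ref{lem:bound_lip_traject_B}. Your remark that the bounded-differences step must rest on pointwise rather than Lipschitz bounds --- since a perturbed initialization $\btheta_i^{\prime 0} \in \supp(\rho_0)$ can be arbitrarily far from $\bbtheta_i^0$ in the $\bw$-coordinate --- correctly articulates a subtlety that the paper handles implicitly in Eq.~\eqref{eqn:RN_bounded_Lip_difference_B}.
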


\begin{proof}[Proof of Proposition \ref{prop:PDE_ND_B}]
We decompose the difference into the following two terms 
\[
\begin{aligned}
\vert R_N(\bbtheta^t) - R(\rho_t) \vert \le  \underbrace{\vert R_N(\bbtheta^t) - \E R_N( \bbtheta^t) \vert}_{\rm I} + \underbrace{\vert \E R_N(\bbtheta^t) - R(\rho_t) \vert}_{\rm II}. 
\end{aligned}
\]
where the expectation is taken with respect to $\bbtheta_i^0 \sim \rho_0$. The result holds simply by combining Lemma \ref{lem:error_II_bound_B} and Lemma \ref{lem:error_I_bound_B}.
\end{proof}

\begin{lemma}[Term ${\rm II}$ bound]\label{lem:error_II_bound_B}
We have 
\[
\vert \E R_N( \bbtheta^t) - R(\rho_t) \vert \le K (1 + t)^2/N.
\]
\end{lemma}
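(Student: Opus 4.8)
The plan is to compute the difference $\E R_N(\bbtheta^t) - R(\rho_t)$ explicitly and observe that it collapses to a single ``diagonal'' term of size $O(1/N)$, exactly as in the bounded case (Lemma \ref{lem:error_II_bound_A}); the only new ingredient is that the quadratic term $U$ is now unbounded, so I must use the moment bound $\vert \bara_i^t\vert \le K(1+t)$ from Lemma \ref{lem:bounded_support_of_rhot_B} to control it.

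First, recall that the law of each $\bbtheta_i^t$ in the nonlinear dynamics is exactly $\rho_t$, and that the $\bbtheta_i^t$ are i.i.d.\ across $i$ (since the $\bbtheta_i^0$ are i.i.d.\ $\sim\rho_0$ and each evolves by the same autonomous-in-$\rho_t$ ODE \eqref{eq:traj_ND_noiseless_B}). Therefore, writing $R_N(\bbtheta^t) = \E\{y^2\} + \frac{2}{N}\sum_i V(\bbtheta_i^t) + \frac{1}{N^2}\sum_{i,j} U(\bbtheta_i^t,\bbtheta_j^t)$ and taking expectation, the linear-in-$V$ term gives exactly $2\int V\,\rho_t$, and the double sum splits into the $N^2-N$ off-diagonal pairs, each contributing $\int U(\btheta_1,\btheta_2)\rho_t(\de\btheta_1)\rho_t(\de\btheta_2)$ by independence, plus the $N$ diagonal terms each contributing $\int U(\btheta,\btheta)\rho_t(\de\btheta)$. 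Hence
\[
\E R_N(\bbtheta^t) - R(\rho_t) = \frac{1}{N}\Big[\int U(\btheta,\btheta)\,\rho_t(\de\btheta) - \int U(\btheta_1,\btheta_2)\,\rho_t(\de\btheta_1)\rho_t(\de\btheta_2)\Big].
\]
Second, I bound the bracket. By Lemma \ref{lem:bound_lip_traject_B}, $\vert U(\btheta,\btheta')\vert \le K(1+\vert a\vert)(1+\vert a'\vert)$, and by Lemma \ref{lem:bounded_support_of_rhot_B} the measure $\rho_t$ is supported on $\vert a\vert \le K(1+t)$, so $\vert U\vert \le K(1+t)^2$ on $\supp(\rho_t)\times\supp(\rho_t)$. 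Both integrals in the bracket are then bounded in absolute value by $K(1+t)^2$, giving $\vert\E R_N(\bbtheta^t) - R(\rho_t)\vert \le K(1+t)^2/N$ after absorbing the factor of $2$ into $K$.

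There is no real obstacle here: the statement is a one-line computation once one knows that the $\bbtheta_i^t$ are i.i.d.\ with law $\rho_t$, which is immediate from the construction of the nonlinear dynamics. The only thing to be slightly careful about is replacing the uniform boundedness of $U$ used in the proof of Lemma \ref{lem:error_II_bound_A} with the $(1+t)^2$-dependent bound coming from the a priori control of $\supp(\rho_t)$ in the $a$-coordinate; this is exactly why the factor $(1+t)^2$ (rather than a constant) appears in the statement.
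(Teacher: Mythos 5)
Your proof is correct and takes essentially the same approach as the paper: compute the difference, which collapses to a $1/N$ times the gap between the diagonal integral $\int U(\btheta,\btheta)\rho_t$ and the off-diagonal one, then bound both using the fact (from Lemma \ref{lem:bounded_support_of_rhot_B}) that $\rho_t$ is supported on $\vert a\vert\le K(1+t)$. The paper phrases the final bound via $\int a^2 \rho_t(\de\btheta)\le K(1+t)^2$ rather than a pointwise $\vert U\vert\le K(1+t)^2$ on the support, but since the moment bound is itself derived from the same support control, the two arguments are equivalent.
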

\begin{proof}[Proof of Lemma \ref{lem:error_II_bound_B}]
The bound hold simply by observing that
\[
\begin{aligned}
\vert \E R_N( \bbtheta^t) - R(\rho_t) \vert =& \frac{1}{N} \Big \vert \int a^2 u(\bw, \bw) \rho_t (\de \btheta) - \int a_1 a_2 u(\bw_1, \bw_2) \rho_t(\de \btheta_1) \rho_t(\de \btheta_2) \Big\vert  \\
\le& (K/N) \int a^2 \rho_t(\de \btheta) \le K(1 + t)^2 / N.
\end{aligned}
\]
\end{proof}

\begin{lemma}[Term ${\rm I}$ bound]\label{lem:error_I_bound_B}
There exists a constant $K$, such that 
\[
\P \Big( \sup_{t \in [0, T] } \vert R_N( \bbtheta^{t} ) - \E R_N( \bbtheta^{t} ) \vert \le K (1 + T)^4 [\sqrt{\log(NT) } + z] / \sqrt{N} \Big) \ge 1 - e^{- z^2}. 
\]
\end{lemma}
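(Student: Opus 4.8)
The plan is to follow the template of the proof of Lemma~\ref{lem:error_I_bound_A}: a bounded-differences (McDiarmid) estimate for $R_N(\bbtheta^t)$ at each fixed $t$, followed by a union bound over a fine time grid using Lipschitz continuity in $t$. The one new ingredient relative to part~(A) is that $V$ and $U$ are no longer uniformly bounded, so every constant must be tracked as a polynomial in $T$; the fact that makes this possible is the deterministic a priori bound $|\bara^t_i|\le K(1+t)$ on $[0,T]$ from Lemma~\ref{lem:bounded_support_of_rhot_B}.

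First I would record that, since $\bbtheta^t_i$ solves the nonlinear dynamics \eqref{eq:traj_ND_noiseless_B} driven by the \emph{fixed} measure curve $(\rho_s)_{s\le t}$, the map $\bbtheta^0_i\mapsto\bbtheta^t_i$ is the deterministic flow map of an ODE; hence $(\bbtheta^t_i)_{i\le N}$ are i.i.d.\ with law $\rho_t$, and $\bbtheta^0_i\in\supp(\rho_0)$ forces $\bbtheta^t_i\in\supp(\rho_t)\subseteq[-K(1+t),K(1+t)]\times\reals^{D-1}$. Then, perturbing a single $\bbtheta^0_i$ within $\supp(\rho_0)$ changes $R_N(\bbtheta^t)$ only through $\bbtheta^t_i$, and by the decomposition in \eqref{eqn:RN_bounded_Lip_difference_A} the change is controlled by terms of the form $|V(\bbtheta^t_i)|/N$ and $|U(\bbtheta^t_i,\cdot)|/N$; using $|V(\btheta)|\le K|a|$ and $|U(\btheta,\btheta')|\le K|a||a'|$ (Lemma~\ref{lem:bound_lip_traject_B}) together with $|\bara^t_i|\le K(1+t)$, each such term is at most $K(1+T)^2/N$, so McDiarmid's inequality gives, for each fixed $t\in[0,T]$,
\[
\P\big(|R_N(\bbtheta^t)-\E R_N(\bbtheta^t)|\ge\delta\big)\le \exp\{-N\delta^2/(K(1+T)^4)\}.
\]

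Next I would discretize in $t$. By Lemma~\ref{lem:Lip_rho_B} we have $\|\bbtheta^t_i-\bbtheta^s_i\|_2\le K(1+T)^2|t-s|$ on $[0,T]$; combining this with the parameter-Lipschitz bound of Lemma~\ref{lem:bound_lip_traject_B} and $|\bara^t_i|\le K(1+T)$ yields $|R_N(\bbtheta^t)-R_N(\bbtheta^s)|\le K(1+T)^4|t-s|$, and the same for $t\mapsto\E R_N(\bbtheta^t)$. A union bound over the grid $s\in\eta\{0,1,\dots,\lfloor T/\eta\rfloor\}$, with this continuity filling the gaps, gives
\[
\P\Big(\sup_{t\in[0,T]}|R_N(\bbtheta^t)-\E R_N(\bbtheta^t)|\ge\delta+K(1+T)^4\eta\Big)\le (T/\eta)\exp\{-N\delta^2/(K(1+T)^4)\}.
\]
Taking $\eta=1/\sqrt N$ and $\delta=K(1+T)^2[\sqrt{\log(NT)}+z]/\sqrt N$ makes the right-hand side at most $e^{-z^2}$ (for $K$ large, since $[\sqrt{\log(NT)}+z]^2\ge\log(NT)+z^2$) and the threshold at most $K(1+T)^4[\sqrt{\log(NT)}+z]/\sqrt N$, as $\sqrt{\log(NT)}\ge1$; this is the claim.

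The only genuinely delicate point is the first step: unlike in part~(A) one cannot appeal to boundedness of $V$ and $U$, because the $a$-coordinate is a priori unbounded. The resolution is entirely deterministic and is exactly Lemma~\ref{lem:bounded_support_of_rhot_B} — a consequence of A4 and of $t\mapsto R(\rho_t)$ being non-increasing along the PDE — which pins $|\bara^t_i|$ into $[0,K(1+t)]$ along the whole nonlinear trajectory, so that no concentration argument is needed for the coefficients themselves and the remainder is bookkeeping of powers of $(1+T)$.
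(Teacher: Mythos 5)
Your argument is correct and follows the paper's proof essentially step for step: the bounded-differences constant $K(1+t)^2/N$ from Lemma~\ref{lem:bounded_support_of_rhot_B} and Eq.~\eqref{eqn:RN_bounded_Lip_difference_B}, McDiarmid at fixed $t$, the Lipschitz-in-$t$ bound $|R_N(\bbtheta^t)-R_N(\bbtheta^s)|\le K(1+T)^4|t-s|$ from Lemmas~\ref{lem:Lip_rho_B} and~\ref{lem:bound_lip_traject_B}, and a union bound over the grid $\eta\{0,\dots,\lfloor T/\eta\rfloor\}$ with $\eta=1/\sqrt N$. Your choice $\delta = K(1+T)^2[\sqrt{\log(NT)}+z]/\sqrt{N}$ is slightly sharper than the paper's $K(1+T)^4[\cdots]$, but since the $K(1+T)^4\eta$ term already contributes $K(1+T)^4/\sqrt N$ and $\sqrt{\log(NT)}\ge 1$, both choices yield the identical stated bound, so this is purely cosmetic; the observation that $\bbtheta^0_i\mapsto\bbtheta^t_i$ is a deterministic flow (so that McDiarmid applies to the composed function and perturbing $\bbtheta^0_i$ only moves $\bbtheta^t_i$) is the right justification, which the paper leaves implicit.
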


\begin{proof}[Proof of Lemma \ref{lem:error_I_bound_B}]
Let $\btheta = (\btheta_1, \ldots, \btheta_i, \ldots, \btheta_N)$ and $\btheta' = (\btheta_1, \ldots, \btheta_i', \ldots \btheta_N)$ be two configurations that differ only in the $i$'th variable. Assuming $a, a' \in [- K(1 + t), K(1 + t)]$, then 
\begin{equation}\label{eqn:RN_bounded_Lip_difference_B}
\begin{aligned}
&\vert R_N(\btheta) - R_N(\btheta') \vert\\
\le& \frac{2}{N} \vert V(\btheta_i) - V(\btheta_i') \vert  + \frac{1}{N^2} \vert U(\btheta_i, \btheta_i) - U(\btheta_i', \btheta_i') \vert + \frac{2}{N^2} \sum_{j \in [N], j \neq i}\vert U(\btheta_i, \btheta_j) - U(\btheta_i', \btheta_j) \vert \\
\le& \frac{K}{N}(1 + t)^2. 
\end{aligned}
\end{equation}
Note we have $\bara_i^t \in [- K(1 + t), K(1 + t)]$, applying McDiarmid's inequality, we have 
\[
\P\Big( \vert R_N( \bbtheta^t) - \E R_N( \bbtheta^t ) \vert \ge \delta \Big) \le \exp\{ - N \delta^2 / [K(1+t)^4] \}. 
\]
By Lemma \ref{lem:Lip_rho_B}, \ref{lem:bound_lip_traject_B} and \ref{lem:bounded_support_of_rhot_B}, for $0 \le s < t$, we have 
\[
\begin{aligned}
&\vert R_N( \bbtheta^t ) - R_N(\bbtheta^s) \vert \\
\le& K \max_{i \in [N]} (1 + \vert \bara_i^s \vert \vee \vert \bara_i^t \vert)^2 \cdot \max_{j \in [N]} \| \bbtheta_j^t - \bbtheta_j^s \|_2 \le K (1 + t)^4 \vert t - s\vert,
\end{aligned}
\]
which gives
\[
\Big \vert \vert R_N( \bbtheta^t ) - \E R_N( \bbtheta^t ) \vert - \vert R_N( \bbtheta^s ) - \E R_N( \bbtheta^s ) \vert \Big \vert \le K (1 + t)^4 \vert t - s \vert. 
\]
Hence taking union bound over $s \in \eta \{0, 1, \ldots, \lfloor T / \eta \rfloor\}$ and bounding difference between time in the interval and grid, we have
\[
\P\Big( \sup_{t \in [0, T]} \vert R_N( \bbtheta^t) - \E R_N( \bbtheta^t) \vert \ge  \delta + K (1 + T)^4 \eta \Big) \le (T / \eta) \exp\{ - N \delta^2 / [K (1 + T)^4] \}. 
\]
Now taking $\eta = 1/\sqrt N$ and $\delta = K (1 + T)^4 [\sqrt{\log(NT) } + z]/\sqrt{N}$, we get the desired inequality. 
\end{proof}

\subsection{Bound between nonlinear dynamics and particle dynamics}

\begin{proposition}[ND-PD]\label{prop:ND_PD_B}
There exists a constant $K$, such that with probability at least $1 - e^{-z^2}$, we have 
\begin{align}
\sup_{t \in [0, T]} \max_{i \in [N]} \| \ubtheta_i^t - \bbtheta_i^t \|_2 \le& K e^{K(1 + T)^3} \frac{1}{\sqrt N} [\sqrt{\log(NT)} + z], \label{eqn:particle_population_perturbation_bound_B}\\
\sup_{t \in [0, T]} \vert R_N(\ubtheta^t) - R_N(\bbtheta^t)\vert \le& K e^{K(1 + T)^3} \frac{1}{\sqrt N} [\sqrt{\log(NT)} + z].
 \label{eqn:risk_population_perturbation_bound_B}
\end{align}
\end{proposition}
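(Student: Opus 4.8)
The plan is to follow the proof of Proposition \ref{prop:ND_PD_A} almost verbatim, the one new ingredient being that we must track the dependence on the now-unbounded coefficients $a_i$. The crucial point is the deterministic a priori bound of Lemma \ref{lem:bounded_support_of_rhot_B}: on $[0,T]$, both the nonlinear-dynamics coefficients and the particle-dynamics coefficients are controlled, $|\bara_i^t|, |\undera_i^t| \le K(1+T)$. Feeding this into Lemma \ref{lem:bound_lip_traject_B}, the gradients $\na V$ and $\na_1 U$ are, along both trajectories, bounded by $K(1+T)^2$ and Lipschitz in the parameters with constant $K(1+T)^2$. In other words, up to replacing the universal constant of part (A) by $K(1+T)^2$, we are in the situation already treated in the proof of Proposition \ref{prop:ND_PD_A}.

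Concretely, I would first reproduce the computation \eqref{eqn:difference_dynamics_A} for $\frac{1}{2}\frac{\de}{\de t}\|\ubtheta_i^t - \bbtheta_i^t\|_2^2$, decomposing the right-hand side into: (i) a coupling term bounded by $K(1+T)^2 \|\ubtheta_i^t - \bbtheta_i^t\|_2 \cdot \max_j \|\ubtheta_j^t - \bbtheta_j^t\|_2$, via the local Lipschitz estimates of Lemma \ref{lem:bound_lip_traject_B} with coefficients at most $K(1+T)$; (ii) a lower-order diagonal term of size $K(1+T)^2/N$; and (iii) the fluctuation term $I_i^t \equiv \|\frac{1}{N}\sum_{j\ne i}[\na_1 U(\bbtheta_i^t,\bbtheta_j^t) - \int \na_1 U(\bbtheta_i^t,\btheta)\rho_t(\de\btheta)]\|_2$.

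Next I would establish the analog of Lemma \ref{lem:concentration_of_I_A}, namely $\sup_{t\in[0,T]}\max_{i\in[N]} I_i^t \le K(1+T)^2 [\sqrt{\log(NT)} + z]/\sqrt{N}$ with probability at least $1 - e^{-z^2}$. The argument is unchanged: conditionally on $\bbtheta_i^t$, the summands are i.i.d. and mean zero (the trajectories $\bbtheta_j^t$ are mutually independent with law $\rho_t$) and, by the coefficient bound, now bounded by $K(1+T)^2$, so the bounded-difference martingale inequality (Lemma \ref{lem:bounded_difference_martingale}) applies; one then unions over $i\in[N]$ and over a time grid of spacing $\eta = N^{-1/2}$, using the time-regularity estimate of Lemma \ref{lem:Lip_rho_B} to bound $|I_i^t - I_i^s|$. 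Conditioning on this good event, $\Delta(t) \equiv \sup_{s\le t}\max_i \|\ubtheta_i^s - \bbtheta_i^s\|_2$ obeys $\frac{\de\Delta}{\de t}(t) \le K(1+T)^2 \Delta(t) + K(1+T)^2 [\sqrt{\log(NT)} + z]/\sqrt{N}$, so Gronwall's lemma gives $\Delta(T) \le K(1+T)^2 e^{K(1+T)^2 T}[\sqrt{\log(NT)}+z]/\sqrt{N}$. Since $(1+T)^2 T \le (1+T)^3$ and the polynomial prefactor is absorbed into the exponential, this is \eqref{eqn:particle_population_perturbation_bound_B}; and \eqref{eqn:risk_population_perturbation_bound_B} follows from the last display of Lemma \ref{lem:bound_lip_traject_B} (with $|a_i| \le K(1+T)$), which turns the parameter bound into a risk bound at the price of one more $(1+T)^2$ factor, again absorbed into $e^{K(1+T)^3}$.

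I expect the only real work to be bookkeeping rather than new ideas: one must check that the a priori coefficient bound of Lemma \ref{lem:bounded_support_of_rhot_B} holds simultaneously for the nonlinear and the particle dynamics (it does, since $R(\rho_t)$ is nonincreasing along \eqref{eq:traj_ND_noiseless_B} and $R_N(\ubtheta^t)$ is nonincreasing along \eqref{eq:traj_PD_noiseless_B}), and that every place where part (A) used a universal constant picks up at most a $\mathrm{poly}(1+T)$ factor here, so the cumulative rate is exactly $e^{K(1+T)^3}$. There is no new probabilistic difficulty: unlike the noisy case, the coefficients are bounded deterministically, so the martingale increments in the concentration step are bounded deterministically as well.
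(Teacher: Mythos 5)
Your proposal follows the paper's proof essentially step for step: the deterministic $|a|\le K(1+T)$ bound from Lemma~\ref{lem:bounded_support_of_rhot_B} (established via monotonicity of $R$ and $R_N$ along the two flows), the local Lipschitz estimates from Lemma~\ref{lem:bound_lip_traject_B} which replace the universal constants of part (A) by $K(1+T)^2$, the same decomposition into a coupling term, an $O(1/N)$ diagonal term, and the fluctuation term $I_i^t$, the same union bound over $i$ and over a time grid of spacing $\eta=N^{-1/2}$ using Lemma~\ref{lem:bounded_difference_martingale} and the time-regularity of Lemma~\ref{lem:Lip_rho_B}, and finally Gronwall. The only cosmetic difference is that the paper further splits the $U$-difference term into two sub-differences (varying first and second arguments separately) to exploit the asymmetric Lipschitz bounds of Lemma~\ref{lem:bound_lip_traject_B}, but your estimate lands on the same $K(1+T)^2$ constant, so the resulting Gronwall bound and the absorption of polynomial factors into $e^{K(1+T)^3}$ are identical.
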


\begin{proof}[Proof of Proposition \ref{prop:ND_PD_B}]

Note we have
\begin{equation}\label{eqn:difference_dynamics_B}
\begin{aligned}
\frac{1}{2}\frac{\de}{\de t} \| \ubtheta_i^t - \bbtheta_i^t \|_2^2 =& \<\ubtheta_i^t - \bbtheta_i^t, \nabla V (\bbtheta_i^t) - \nabla V(\ubtheta_i^t) \> + \Big\<\ubtheta_i^t - \bbtheta_i^t,  \frac{1}{N}\sum_{j=1}^N \nabla_1 U( \bbtheta_i^t, \bbtheta_j^t) - \nabla_1 U(\bbtheta_i^t, \ubtheta_j^t) \Big\>\\
& +  \Big\<\ubtheta_i^t - \bbtheta_i^t,  \frac{1}{N}\sum_{j=1}^N \nabla_1 U( \bbtheta_i^t, \ubtheta_j^t) - \nabla_1 U(\ubtheta_i^t, \ubtheta_j^t) \Big\> \\
& - \frac{1}{N}\<\ubtheta_i^t - \bbtheta_i^t, \nabla_1 U( \bbtheta_i^t, \bbtheta_i^t) - \int \nabla_1 U(\bbtheta_i^t, \btheta) \rho_t(\de \btheta) \>\\
&- \Big \<\ubtheta_i^t - \bbtheta_i^t, \frac{1}{N}\sum_{j \neq i}  \nabla_1 U( \bbtheta_i^t, \bbtheta_j^t) - \int \nabla_1 U(\bbtheta_i^t, \btheta) \rho_t(\de \btheta)\Big\>\\
\le& K (1 + t)^2 \| \ubtheta_i^t - \bbtheta_i^t\|_2\cdot \max_{j \in [N]} \| \ubtheta_j^t - \bbtheta_j^t\|_2 + \| \ubtheta_i^t - \bbtheta_i^t\|_2 (K (1 + t)^2 / N + I_i^t), 
\end{aligned}
\end{equation}
where
\[
I_i^t =\Big\| \frac{1}{N}\sum_{j \neq i} \Big[ \nabla_1 U( \bbtheta_i^t, \bbtheta_j^t) - \int \nabla_1 U(\bbtheta_i^t, \btheta) \rho_t(\de \btheta) \Big] \Big\|_2. 
\]
The last inequality follows by Lemma \ref{lem:bound_lip_traject_B} and \ref{lem:bounded_support_of_rhot_B}. Now we would like to prove a uniform bound for $I_i^t$ for $i \in [N]$ and $t \in [0, T]$. 

\begin{lemma}\label{lem:concentration_of_I_B}
There exists a constant $K$, such that
\[
\P\Big( \sup_{t \in [0, T]} \max_{i \in[N]} I_i^t  \le K (1 + T)^2 [\sqrt{\log (N T )} + z] / \sqrt{N} \Big)  \ge 1 - e^{-z^2}. 
\]
\end{lemma}

\begin{proof}[Proof of Lemma \ref{lem:concentration_of_I_B}]
Denote $\bX_i^t = \nabla_1 U( \bbtheta_i^t, \bbtheta_j^t) - \int \nabla_1 U(\bbtheta_i^t, \btheta) \rho_t(\de \btheta)$.  Note we have $\E[\bX_i^t \vert \bbtheta_i^t] = 0$ (where expectation is taken with respect to $\bbtheta_j^0 \sim \rho_0$ for $j \neq i$), and $\| \bX_i^t \|_2 \le 2 (1 + t)^2 K$ (by Lemma \ref{lem:bound_lip_traject_B} and \ref{lem:bounded_support_of_rhot_B}). By Lemma \ref{lem:bounded_difference_martingale}, we have for any fixed $i \in [N]$ and $t \in [0, T]$, 
\[
\P\Big( I_i^t \ge K (1 + t)^2 (\sqrt {1/N} + \delta) \Big) = \E\Big[ \P\Big( I_i^t \ge K (1 + t)^2 (\sqrt {1/N} + \delta) \vert \bbtheta_i^t \Big)\Big] \le \exp\{ - N \delta^2  \}. 
\]
By Lemma \ref{lem:Lip_rho_B}, there exists $K$ such that, for any $0\le s < t \le T$ and $i \in [N]$, we have  
\[
\begin{aligned}
\vert I_i^t - I_i^s \vert \le K (1 + t)^2 \vert t - s \vert. 
\end{aligned}
\]
Taking the union bound over $i \in [N]$ and $s \in \eta [T / \eta]$ and bounding time in the interval and the grid, we have
\[
\P\Big( \sup_{t \in [0, T]} \max_{i \in[N]} I_i^t \ge K (1 + T)^2 (\sqrt {1/N} + \delta) + K (1 + T)^2 \eta \Big)  \le (N T / \eta) \exp\{ - N \delta^2 \}. 
\]
Taking $\eta = \sqrt{1 / N}$, and $\delta = K [\sqrt{\log(NT) } + z]/ \sqrt{N}$, we get the desired result. 
\end{proof}

Denote $\delta(N, T, z) = K (1 + T)^2 [\sqrt{\log(NT) } + z]/\sqrt{N}$ and
\[
\Delta(t) = \sup_{s \in [t]} \max_{i \in [N]} \| \ubtheta_i^s - \bbtheta_i^s \|_2. 
\]
We condition on the good event in Lemma \ref{lem:concentration_of_I_B} to happen. By Eq. (\ref{eqn:difference_dynamics_B}), we have 
\[
\begin{aligned}
\Delta'(t) \le K (1 + T)^2 \cdot \Delta(t)+ \delta(N, T, z),
\end{aligned}
\]
By Gronwall's inequality, we have
\[
\Delta(T) \le K e^{K(1 + T)^3} \delta(N, T, z). 
\]
This happens with probability $1 - e^{- z^2}$. This proves Eq. (\ref{eqn:particle_population_perturbation_bound_B}). Finally, Eq. (\ref{eqn:risk_population_perturbation_bound_B}) holds by Lemma \ref{lem:bound_lip_traject_B}. 
\end{proof}

\subsection{Bound between particle dynamics and GD \label{sec:PD_GD_B}}

\begin{proposition}[PD-GD]\label{prop:PD_GD_B}
There exists constants $K$ and $K_0$ such that, letting $\eps \le 1/(K_0 e^{K_0 (1 + T)^3})$, we have for any $t \le T$, 
\[
\begin{aligned}
\sup_{k \in [0, t/\eps] \cap \N} \vert \tilda_i^k \vert \le& K (1 + t),\\
\sup_{k \in [0, t/\eps] \cap \N} \max_{i \in [N]} \| \ubtheta_i^{k \eps} - \tbtheta_i^k \|_2 \le& K e^{K (1 + T)^2 t} \eps,\\
\sup_{k \in [0, t/\eps] \cap \N} \vert R_N(\ubtheta^{k \eps}) - R_N(\tbtheta^k)\vert \le& K e^{K (1 + T)^2 t} \eps.\\
\end{aligned}
\]
\end{proposition}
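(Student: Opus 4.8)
We follow the scheme of Proposition~\ref{prop:PD_GD_A}, but two features of the general-coefficient setting must be handled: the coefficients $\tilda_i^k$ of the GD iterates are a priori unbounded, and (consequently) the Lipschitz and boundedness constants for $V$, $U$ and $\bG(\cdot;\cdot)$ now grow like $(1+|a|)^2$ (Lemma~\ref{lem:bound_lip_traject_B}). The circularity is that the PD--GD coupling error is governed by exactly these constants, so one cannot estimate it before knowing the growth of $\tilda_i^k$; we resolve this by a bootstrap (induction on $k$) that establishes the a priori bound $|\tilda_i^k|\le K(1+t)$ and the coupling estimate simultaneously. The companion bound $|\undera_i^t|\le K(1+t)$ along the particle dynamics is already available from Lemma~\ref{lem:bounded_support_of_rhot_B}.

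\textbf{The bootstrap for $|\tilda_i^k|$.} Fix a constant $C$ depending only on $K_1,\dots,K_4$, to be chosen below, and let $P(k)$ be the statement that $|\tilda_i^l|\le C(1+l\eps)$ for every $i\in[N]$ and every $l\le k$; $P(0)$ holds once $C\ge K_4$. Assume $P(k)$. Then along both trajectories up to time $(k+1)\eps$ every coefficient is at most $C(1+T)$, so every Lipschitz/boundedness constant for $\bG$ furnished by Lemma~\ref{lem:bound_lip_traject_B} is $\le K(1+T)^2$ (with $K=K(C,K_1,K_2,K_3)$), and the time-regularity estimates of Lemma~\ref{lem:Lip_rho_B} apply with the same constant. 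Carrying out the Gronwall computation of the next paragraph on $[0,(k+1)\eps]$ then gives $\max_i\|\ubtheta_i^{(k+1)\eps}-\tbtheta_i^{k+1}\|_2\le K e^{K(1+T)^3}\eps$; choosing $K_0$ large enough that this is $\le 1$ whenever $\eps\le 1/(K_0 e^{K_0(1+T)^3})$, we conclude $|\tilda_i^{k+1}|\le|\undera_i^{(k+1)\eps}|+1\le K(1+(k+1)\eps)+1$, and taking $C$ larger than this intrinsic $K$ (plus one) yields $P(k+1)$. Hence $P(T/\eps)$ holds, which is the first assertion (with $K=C$). Alternatively, a direct argument works: GD is gradient descent on $R_N$ with effective step $N s_k$, and on $\{|a_i|\le C(1+T)\}$ the functional $R_N$ has $K(1+T)^2/N$-Lipschitz gradient on $\reals^{N\times D}$, so the descent lemma makes $R_N(\tbtheta^{k+1})\le R_N(\tbtheta^k)$ once $\eps\lesssim 1/(1+T)^2$; since the $a$-component of the update is $\tilda_i^{k+1}=\tilda_i^k+2s_k\,\E_\bx\big[(f(\bx)-\frac1N\sum_j\tilda_j^k\sigma(\bx;\tbw_j^k))\,\sigma(\bx;\tbw_i^k)\big]$, this gives $|\tilda_i^{k+1}-\tilda_i^k|\le Ks_k\sqrt{R_N(\tbtheta^0)}\le Ks_k$ and hence $|\tilda_i^k|\le K(1+t)$, again closing a short bootstrap on $|a_i|\le C(1+T)$.

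\textbf{The Gronwall estimate.} With the a priori bounds in force, all relevant constants along both trajectories on $[0,T]$ are $\le K(1+T)^2$, and we repeat Proposition~\ref{prop:PD_GD_A}. Set $\Delta(t)\equiv\sup_{k\in[0,t/\eps]\cap\N}\max_{i}\|\ubtheta_i^{k\eps}-\tbtheta_i^k\|_2$. Using the integral representations \eqref{eq:traj_PD_noiseless_B}, \eqref{eq:traj_GD_noiseless_B} and inserting $\bG(\ubtheta_i^{[s]};\urho^{(N)}_{[s]})$, for $t=k\eps$ we bound $\|\ubtheta_i^t-\tbtheta_i^k\|_2$ by
\[
\int_0^t\|\bG(\ubtheta_i^s;\urho^{(N)}_s)-\bG(\ubtheta_i^{[s]};\urho^{(N)}_{[s]})\|_2\,\de s+\int_0^t\|\bG(\ubtheta_i^{[s]};\urho^{(N)}_{[s]})-\bG(\tbtheta_i^{[s]/\eps};\trho^{(N)}_{[s]/\eps})\|_2\,\de s .
\]
In the first integral the integrand is a pure discretization error: Lipschitzness of $\bG$ in $\btheta$ and in the Wasserstein distance (both $\le K(1+T)^2$, via Lemma~\ref{lem:bound_lip_traject_B}), together with $\|\ubtheta_i^s-\ubtheta_i^{[s]}\|_2\le K(1+T)^2\eps$ and $W_2(\urho^{(N)}_s,\urho^{(N)}_{[s]})\le K(1+T)^2\eps$ from Lemma~\ref{lem:Lip_rho_B}, make it $\le K(1+T)^4\eps$. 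In the second integral the integrand is $\le K(1+T)^2\big(\max_j\|\ubtheta_j^{[s]}-\tbtheta_j^{[s]/\eps}\|_2\big)\le K(1+T)^2\Delta(s)$, again by Lemma~\ref{lem:bound_lip_traject_B} together with the common-index coupling of the two empirical measures. Thus $\Delta(t)\le K(1+T)^4 t\,\eps + K(1+T)^2\int_0^t\Delta(s)\,\de s$, and Gronwall's lemma gives $\Delta(t)\le K e^{K(1+T)^2 t}\eps$ for $t\le T$ (absorbing the polynomial prefactor into the exponent), which is the second assertion; the third then follows from the last line of Lemma~\ref{lem:bound_lip_traject_B}, since $|R_N(\ubtheta^{k\eps})-R_N(\tbtheta^k)|\le K\max_i(1+|\undera_i^{k\eps}|\vee|\tilda_i^k|)^2\,\Delta(k\eps)\le K e^{K(1+T)^2 k\eps}\eps$.

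\textbf{Main obstacle.} The heart of the matter is the bootstrap: the Gronwall estimate needs the growth rate of $|\tilda_i^k|$ as an input, yet that growth rate is controlled only through the Gronwall estimate. The hypothesis $\eps\le 1/(K_0 e^{K_0(1+T)^3})$ is precisely what closes this loop --- it forces the accumulated PD--GD coupling error to stay $O(1)$ over $[0,T]$, so the GD coefficients cannot stray from the (linearly growing) PD coefficients by more than a constant. Once that is secured, the remaining work is bookkeeping of the powers of $(1+T)$ introduced by the $(1+|a|)^2$ factors in Lemma~\ref{lem:bound_lip_traject_B}.
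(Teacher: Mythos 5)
Your proof is correct and takes essentially the same approach as the paper. Both proofs resolve the circularity between the growth of the GD coefficients $\tilda_i^k$ and the PD--GD coupling error $\Delta$ by a self-consistent argument, closed with the hypothesis $\eps\le 1/(K_0 e^{K_0(1+T)^3})$: the paper introduces the continuous stopping time $T_\Delta=\inf\{t:\Delta(t)\ge 1\}$, bounds $|\tilda_i^k|\le |\undera_i^{k\eps}|+\Delta$ inside the integrand (which produces a $\max\{\Delta(s),\Delta(s)^2\}$ term that the stopping time linearizes), runs Gronwall on $[0,T_\Delta]$, and shows $T_\Delta\ge T$; your discrete induction $P(k)$ is the time-discretized version of that same stopping-time device, with the difference that by assuming $|\tilda_i^l|\le C(1+l\eps)$ for $l\le k$ up front you pre-empt the quadratic $\Delta^2$ term entirely rather than carrying it and killing it with $\Delta\le 1$. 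The conclusion and constants match. Your alternative ``direct argument'' via the descent lemma is a genuine variation not in the paper; it mirrors the paper's Lemma~\ref{lem:bounded_support_of_rhot_B} (monotone risk along the continuous flows) in the discrete setting, but as you note it still needs a bootstrap to justify the gradient-Lipschitz constant on $\{|a_i|\le C(1+T)\}$, so it is not actually shorter.
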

\begin{proof}[Proof of Proposition \ref{prop:PD_GD_B}]
Let $\urho^{(N)}_s = (1/N) \sum_{i=1}^N \delta_{\ubtheta_i^s}$, and $\trho^{(N)}_k = (1/N) \sum_{i=1}^N \delta_{\tbtheta_i^k}$. For $k \in \N$ and $t = k \eps$, we have
\[
\begin{aligned}
\| \ubtheta_i^{t} - \tbtheta_i^k \|_2 \leq &2  \int_0^t  \| \bG ( \ubtheta^s_i ; \urho^{(N)}_s )  - \bG ( \tbtheta^{[s]/\eps}_i ; \Tilde \rho^{(N)}_{[s]/\eps} ) \|_2 \de s \\
 \leq & 2 \int_0^t \| \bG ( \ubtheta^s_i ; \urho^{(N)}_s )  - \bG ( \ubtheta^{[s]}_i ; \urho^{(N)}_{[s]} ) \|_2 \de s + 2 \int_0^t \| \bG ( \ubtheta^{[s]}_i ; \urho^{(N)}_{[s]} )  - \bG ( \tbtheta^{[s]/\eps}_i ; \trho^{(N)}_{[s]/\eps} ) \|_2 \de s. \\
\end{aligned}
\]
By Lemma \ref{lem:Lip_rho_B} and \ref{lem:bound_lip_traject_B}, for $0 \le s \le t$, we have 
\[
\begin{aligned}
&  \| \bG ( \ubtheta^s_i ; \urho^{(N)}_s )  - \bG ( \ubtheta^{[s]}_i ; \urho^{(N)}_{[s]} ) \|_2 \\
\le& \| \nabla V(\ubtheta^s_i) - \nabla V(\ubtheta^{[s]}_i) \|_2 + \sup_{j \in [N]} \| \nabla_1 U(\ubtheta_i^s, \ubtheta_j^s) - \nabla_1 U(\ubtheta_i^s, \ubtheta_j^{[s]}) \|_2 \\
&+ \sup_{j \in [N]} \| \nabla_1 U(\ubtheta_i^s, \ubtheta_j^{[s]}) - \nabla_1 U(\ubtheta_i^{[s]}, \ubtheta_j^{[s]}) \|_2\\
\le& K [1 + \vert \undera_i^s \vert ] \| \ubtheta_i^s - \ubtheta_i^{[s]} \|_2 + \sup_{j \in [N]} K (1 + \vert \undera_i^s \vert) (1 + \vert \undera_j^{[s]} \vert )[  \| \ubtheta_i^s - \ubtheta_i^{[s]} \|_2 +  \| \ubtheta_j^s - \ubtheta_j^{[s]} \|_2]\\
\le& K (1 + t)^4 (s - [s]) \le K (1 + t)^4 \eps,
\end{aligned}
\]
and for $ u = k \eps \le t$,
\[
\begin{aligned}
&  \| \bG ( \ubtheta^u_i ; \urho^{(N)}_u )  - \bG ( \tbtheta^k_i ; \trho^{(N)}_k ) \|_2 \\
\le& \| \nabla V(\ubtheta^u_i) - \nabla V(\tbtheta^k_i) \|_2 + \sup_{j \in [N]} \| \nabla_1 U(\ubtheta_i^u, \ubtheta_j^u) - \nabla_1 U(\tbtheta_i^k, \ubtheta_j^u) \|_2 \\
&+ \sup_{j \in [N]} \| \nabla_1 U(\tbtheta_i^k, \ubtheta_j^u) - \nabla_1 U(\tbtheta_i^k, \tbtheta_j^k) \|_2\\
\le& K (1 + \vert \undera_i^u \vert) \| \ubtheta_i^u - \tbtheta_i^k \|_2 + \sup_{j \in [N]} K (1 + \vert \undera_i^u\vert ) (1 + \vert \undera_j^u \vert) \| \ubtheta_i^u - \tbtheta_i^k\|_2\\
&+ \sup_{j \in [N]}K (1 + \vert \tilda_i^k \vert ) (1 + \vert \undera_j^u \vert) \| \ubtheta_j^u - \tbtheta_j^k \|_2 \\
\le& \max_{j \in [N]} K (1 + t + \vert \tilda_j^k - \undera_j^u \vert) (1 + t)  \| \ubtheta_i^u - \tbtheta_j^k \|_2 \le K (1 + t)^2 \cdot \max_{j \in [N]} \{ \| \ubtheta_j^u - \tbtheta_j^k \|_2, \| \ubtheta_j^u - \tbtheta_j^k \|_2^2 \}. 
\end{aligned}
\]

Denoting $\Delta (t) \equiv \sup_{k \in [0,t/\eps] \cap \N } \max_{i\leq N} \| \ubtheta^{k\eps}_i - \tbtheta^{k}_i \|_2$, we get the equation
\[
\begin{aligned}
\Delta (t) \le& K (1 + t)^2 \int_0^t \max\{ \Delta (s), \Delta(s)^2\} \de s + K (1 + t)^4 t \eps \\
\le&K (1 + T)^2 \int_0^t [\max\{ \Delta(s), \Delta(s)^2\} + (1 + T)^2 \eps] \de s. 
\end{aligned}
\]
Let $T_\Delta = \inf\{t: \Delta(t) \ge 1\}$. For $t \le T_\Delta$, we have $\Delta(s)^2 \le \Delta(s)$. Applying Gronwall's lemma, we get for any $t \le T_\Delta$, 
\[
 \Delta (t) \leq K e^{K(1 + T)^2 t} \eps. 
\]
Note we assumed $\eps \le 1/(K_0 e^{K_0 (1 + T)^3})$, which gives $K e^{K (1 + T)^2 T} \eps \le 1/2$. This shows that $T_\Delta \ge T$. Hence we get 
\[
\Delta(T) \le K e^{K (1 + T)^2 T} \eps. 
\]
Moreover, we immediately have, 
\[
\max_{i \in [N]}\sup_{k \in [0, T/\eps] \cap \N} \vert \tilda_i^k \vert \le \max_{i \in [N]}\sup_{t \in [0, T]} \vert \undera_i^t \vert + \max_{i \in [N]}\sup_{k \in [0, T/\eps] \cap \N} \| \tbtheta_i^t - \ubtheta_i^t \|_2 \le K(1 + t) + K e^{K (1 + T)^2 T} \eps \le 2 K(1 + t). 
\]
Finally, applying the last inequality in Lemma \ref{lem:bound_lip_traject_B} concludes the proof.
\end{proof}

\subsection{Bound between GD and SGD \label{sec:GD_SGD_B}}

\begin{proposition}[GD-SGD]\label{prop:GD_SGD_B}
There exists constants $K$ and $K_0$, such that if we take $\eps \le 1/ [K_0 (D + \log N + z^2) e^{K_0 (1 + T)^3}]$, the following holds 
 with probability at least $1 - e^{- z^2}$: for any $t \le T$, we have
\[
\begin{aligned}
\sup_{k \in [0, t/\eps] \cap \N} \max_{i \in [N]}\vert a_i^k \vert \le & K(1 + t), \\
\sup_{k \in [0, t/\eps] \cap \N} \max_{i \in [N]} \| \tbtheta_i^k - \btheta_i^k \|_2 \le & K e^{K (1 + T)^2 t} \sqrt{\eps } [\sqrt{D + \log N} + z ],\\
\sup_{k \in [0, t/\eps] \cap \N} \vert R_N(\tbtheta^k) - R_N(\btheta^k)\vert \le & K e^{K (1 + T)^2 t} \sqrt{\eps } [\sqrt{D + \log N} + z ].
\end{aligned}
\]
\end{proposition}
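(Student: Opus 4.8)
The plan is to follow the proof of Proposition \ref{prop:GD_SGD_A}, replacing the uniform bounds used there by the $a$-dependent estimates of Lemma \ref{lem:bound_lip_traject_B}, and closing a \emph{bootstrap} on $\max_i|a^k_i|$ so that the martingale concentration is legitimate. Writing $\rho^{(N)}_k = N^{-1}\sum_{i\in[N]}\delta_{\btheta^k_i}$ and using $\E[\bF_i(\btheta^l;\bz_{l+1})\mid\cF_l] = \bG(\btheta^l_i;\rho^{(N)}_l)$, I would decompose
\[
\btheta^k_i - \tbtheta^k_i \;=\; \underbrace{2\eps\sum_{l=0}^{k-1}\xi(l\eps)\,\bZ^l_i}_{A^k_i} \;+\; \underbrace{2\eps\sum_{l=0}^{k-1}\xi(l\eps)\big[\bG(\btheta^l_i;\rho^{(N)}_l) - \bG(\tbtheta^l_i;\trho^{(N)}_l)\big]}_{B^k_i}\, ,
\]
where $\bZ^l_i = \bF_i(\btheta^l;\bz_{l+1}) - \bG(\btheta^l_i;\rho^{(N)}_l)$ is a martingale-difference sequence in $\R^D$ with respect to $(\cF_l)$. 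From Proposition \ref{prop:PD_GD_B} we already know $|\tilda^k_i|\le K(1+t)$ deterministically, so it suffices to control $\Delta(t)\equiv\sup_{k\le t/\eps}\max_i\|\btheta^k_i-\tbtheta^k_i\|_2$; the coefficient bound then follows from $|a^k_i|\le|\tilda^k_i|+\Delta(k\eps)$.

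The circular obstruction is that the sub-Gaussian parameter of $\bZ^l_i$ — and the Lipschitz constants controlling $B^k_i$ — depend on $\max_j|a^l_j|$, which is a priori unbounded. To break this I would introduce the stopping index $k_\star = \inf\{k:\ \Delta(k\eps)>1\}$. For $l<k_\star$ one has $|a^l_j|\le K(1+T)+1$, hence $|y_{l+1}-\hat y_{l+1}|\le K(1+T)$; since $\sigma$ is bounded and $\na_\bw\sigma(\bx;\bw)$ is $K_2$-sub-Gaussian by A2, the vector $\bZ^l_i = (y_{l+1}-\hat y_{l+1})\big(\sigma(\bx_{l+1};\bw^l_i),\,a^l_i\na_\bw\sigma(\bx_{l+1};\bw^l_i)\big) - \bG(\btheta^l_i;\rho^{(N)}_l)$ is sub-Gaussian with a variance proxy polynomial in $(1+T)$. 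Applying the Azuma--Hoeffding inequality (Lemma \ref{lem:Azuma}) to the stopped martingale $(A^{k\wedge k_\star}_i)_k$ — which has $T/\eps$ increments, each of size $O(\eps)$ times a power of $(1+T)$, valued in $\R^D$ — and a union bound over $i\in[N]$ gives, with probability at least $1-e^{-z^2}$,
\[
\max_{i\in[N]}\ \max_{k\le T/\eps}\ \|A^{k\wedge k_\star}_i\|_2 \;\le\; K(1+T)^4\,\sqrt{T\eps}\,\big[\sqrt{D+\log N}+z\big]\, .
\]
For the drift, the Lipschitz estimates of Lemma \ref{lem:bound_lip_traject_B} — applied to the shift in the first argument of $\na V$ and $\na_1 U$, and, by the same computation, to the shift in the second argument of $\na_1 U$ — together with $|a^l_j|,|\tilda^l_j|\le K(1+T)$, yield $\|\bG(\btheta^l_i;\rho^{(N)}_l)-\bG(\tbtheta^l_i;\trho^{(N)}_l)\|_2\le K(1+T)^2\max_j\|\btheta^l_j-\tbtheta^l_j\|_2$ for $l<k_\star$, so $\|B^k_i\|_2\le K(1+T)^2\int_0^{k\eps}\Delta(s)\,\de s$.

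Combining the two bounds, on the good event and for $t\le\eps k_\star\wedge T$ one gets $\Delta(t)\le K(1+T)^4\sqrt{T\eps}[\sqrt{D+\log N}+z]+K(1+T)^2\int_0^t\Delta(s)\,\de s$, and Gronwall's lemma gives $\Delta(t)\le Ke^{K(1+T)^2 t}\sqrt{\eps}[\sqrt{D+\log N}+z]$ after absorbing the polynomial-in-$T$ prefactors into the exponential (using $KT^k e^{KT^3}\le K'e^{K'T^3}$). The stated restriction $\eps\le 1/[K_0(D+\log N+z^2)e^{K_0(1+T)^3}]$ is precisely what makes the right-hand side at $t=T$ smaller than, say, $1/2$, which forces $k_\star>T/\eps$; hence the bootstrap closes, $\Delta$ obeys the claimed bound on all of $[0,T]$, and $|a^k_i|\le|\tilda^k_i|+\Delta(k\eps)\le K(1+t)$. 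The risk bound then follows from the last inequality of Lemma \ref{lem:bound_lip_traject_B}, which costs an extra factor $\max_i(1+|a^k_i|\vee|\tilda^k_i|)^2\le K(1+T)^2$, again absorbed into the exponential. I expect the main obstacle to be exactly this self-consistent control of $\max_i|a^k_i|$ — that is, verifying $k_\star>T/\eps$, which is where the dimension- and $z$-dependent smallness of $\eps$ is forced; the remaining steps are the same bookkeeping of polynomial-in-$(1+T)$ prefactors already carried out in Propositions \ref{prop:ND_PD_B} and \ref{prop:PD_GD_B}.
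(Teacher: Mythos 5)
Your proposal matches the paper's proof in all essentials: the decomposition into a martingale term $A_i^k$ and a drift term $B_i^k$, the observation that the sub-Gaussian parameter of $\bZ^l_i$ depends on $\max_j|a_j^l|$, the use of a stopping time to legitimize the Azuma--Hoeffding bound on the stopped martingale, the Gronwall argument with the Lipschitz estimates of Lemma~\ref{lem:bound_lip_traject_B}, and the bootstrap forced by the smallness of $\eps$. The only cosmetic difference is that the paper introduces two stopping times $T_a$ (on $\max_i|a_i^l|$) and $T_\Delta$ (on $\Delta$) and shows both exceed $T$, whereas you use a single stopping index $k_\star$ on $\Delta$ that controls $\max_i|a_i^l|$ implicitly via $|a^l_i|\le|\tilda^l_i|+\Delta(l\eps)$; these are interchangeable.
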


\begin{proof}[Proof of Proposition \ref{prop:GD_SGD_B}]
Denoting $\cF_k = \sigma ( ( \btheta^0_i )_{i \in [N]}, \bz_1 , \ldots , \bz_k )$ the $\sigma$-algebra generated by the data sample $\bz_{\ell}=(y_{\ell},\bx_{\ell})$
for $\ell\le k$, we get:
\[
\E [ \bF_i ( \btheta^k ; \bz_{k+1} ) \vert \cF_k ] = - \na V (\btheta^k_i ) - \frac{1}{N} \sum_{j = 1}^N \na_1 U ( \btheta_i^k , \btheta_j^k ) = \bG ( \btheta^k_i , \rho^{(N)}_k ),
\]
where $\rho^{(N)}_k \equiv (1/N) \sum_{i \in [N]} \delta_{\btheta^k_i}$ denotes the empirical distribution of the iterates of SGD. Hence we get:
\[
\begin{aligned}
\| \btheta^k_i - \tbtheta^k_i \|_2 =& \Big\|  \eps \sum_{l = 0}^{k-1} \bF_i ( \btheta_i^l ; \bz_{l+1} ) - \eps \sum_{l = 0}^{k-1} \bG ( \tbtheta^l_i ; {\Tilde \rho}^{(N)}_l ) \Big \|_2 \\
\leq & \Big \| \eps \sum_{l = 0}^{k-1} \bZ_i^l \Big\|_2 +   \eps \sum_{l = 0}^{k-1} \Big\| \bG ( \btheta^l_i; \rho^{(N)}_l ) -\bG ( \tbtheta^l_i; {\Tilde \rho}^{(N)}_l ) \Big\|_2,
\end{aligned}
\]
where $\bZ_i^l \equiv \bF_i ( \btheta^l ; \bz_{l+1} ) - \E [ \bF_i ( \btheta^l ; \bz_{l+1}) \vert \cF_l]$.

Denote $\bA_i^k = \sum_{l=0}^{k-1} \eps \bZ_i^l$. Hence $\{ \bA_i^k \}_{k\in \N}$ is a martingale adapted to $\{ \cF_k \}_{k \in \N}$. Note we have 
\[
\begin{aligned}
\bF_i ( \btheta^k ; \bz_{k+1}) =& ( (y_{k+1} - \hat y(\bx_{k+1}, \btheta^k)) \sigma(\bx_{k+1}; \bw_i^k), (y_{k+1} - \hat y(\bx_{k+1}, \btheta^k)) a_i^k \nabla_\bw \sigma(\bx_{k+1}; \bw_i^k) ),
\end{aligned}
\]
where $\hat y(\bx_{k+1}, \btheta^k) = (1/N) \sum_{j=1}^n a_j^k \sigma(\bx_{k+1}; \bw_j^k)$. 

The following discussion is under the conditional law $\cL(\,\cdot\, \vert \cF_k)$. Note that $\vert \sigma(\bx_{k+1}; \bw_i^k)\vert \le K$, and $\vert y_{k+1} -  \hat y_{k+1}(\btheta^k) \vert \le K (1 + \max_j\vert a_j^k \vert)$, hence $(y_{k+1} - \hat y(\bx_{k+1}, \btheta^k)) \sigma(\bx_{k+1}; \bw_i^k) $ is $K (1 + \max_i\vert a_i^k \vert)$-sub-Gaussian. Furthermore, $\nabla_\bw \sigma(\bx_{k+1}; \bw_i^k)$ is a $K$-sub-Gaussian random vector, and $\vert(y_{k+1} - \hat y(\bx_{k+1}, \btheta^k)) a_i^k \vert \le K (1 + \max_i \vert a_i^k \vert)^2$, hence $(y_{k+1} - \hat y(\bx_{k+1}, \btheta^k)) a_i^k \nabla_\bw \sigma(\bx_{k+1}; \bw_i^k)$ is a $K (1 + \max_j \vert a_j^k \vert)^2$-sub-Gaussian random vector. As a result, we have $\bF_i ( \btheta^k ; \bz_{k+1})$ under the conditional law $\cL(\,\cdot\, \vert \cF_k)$ is a $K (1 + \max_j \vert a_j^k \vert)^2$-sub-Gaussian random vector (concatenation of two possibly dependent sub-Gaussian random vectors is sub-Gaussian). 

Let $T_a = \min\{l: \max_{i \in [N]} \vert a_i^l \vert \ge M_T \}$ where $M_T \equiv 2 K (1 + T)$. Then we have 
\[
\E[e^{\< \blambda, \eps \bZ_i^k \>} \vert \cF_k] \ones\{ \max_{i \in [N]} \vert a_i^k \vert  \le M_T \} \le e^{\eps^2 K^2 M_T^4 \| \blambda \|_2^2 / 2}. 
\]
Now let $\bbA_i^k = \bA_i^{k \wedge T_a}$. Then $\bbA_i^k$ is also a martingale. Furthermore, we have 
\[
\begin{aligned}
&\E[e^{\<\blambda, \bbA_i^k - \bbA_i^{k - 1}\>} \vert \cF_{k-1}] \\
=& \E[e^{\<\blambda, \bA_i^k - \bA_i^{k - 1}\>} \ones\{ T_a \ge k \}\vert \cF_{k-1}] + \E[e^{\<\blambda, \bA_i^{T_a} - \bA_i^{T_a} \>} \ones\{ T_a \le k - 1 \}\vert \cF_{k-1}]\\
=& \E[e^{\<\blambda, \eps \bZ_i^{k-1}\>} \vert \cF_{k-1}] \ones\{ T_a \ge k \} + \ones\{ T_a \le k - 1 \}\\
= & \E[e^{\< \blambda, \eps \bZ_i^{k-1} \>} \vert \cF_{k-1}] \ones\{ \max_{i \in [N]} \vert a_i^{k-1} \vert  < M_T \}  + \ones\{ T_a \le k - 1 \} \\
\le& e^{\eps^2 K^2 M_T^4 \| \blambda \|_2^2 / 2}.\\
\end{aligned}
\]
Hence we can apply Azuma-Hoeffding's concentration bound (Lemma \ref{lem:Azuma}) to $\| \bbA_i^l \|_2$, 
\[
\P \Big( \max_{k \in [0,T/\eps ] \cap \N} \| \bbA_i^{k} \|_2 \geq K M_T^2 \sqrt{T\eps } ( \sqrt{D} + z )  \Big) \leq e^{- z^2},
\]
and taking the union bound over $i \in [N]$, we get:
\begin{equation}
\P \Big( \max_{i \in [N]} \max_{k \in [0,T/\eps ] \cap \N} \| \bA_i^{k \wedge T_a} \|_2 \le K M_T^2 \sqrt{T\eps } ( \sqrt{D + \log N} + z )  \Big) \ge 1- e^{- z^2}. 
\end{equation}

Denote the above event to be a good event $E_{\good}$, 
\[
E_{\good} = \Big\{ \max_{i \in [N]} \max_{k \in [0,T/\eps ] \cap \N} \| \bA_i^{k \wedge T_a} \|_2 \le K M_T^2 \sqrt{T\eps } ( \sqrt{D + \log N} + z ) \Big\}. 
\]
We consider the case in which $E_{\good}$ happens. We have (note we assumed $\eps \le 1 / (K_0 e^{K_0 (1 + T)^3})$, by Proposition \ref{prop:PD_GD_B}, we have $\sup_{k \in [0, t/\eps] \cap \N}\max_{i \in [N]}\vert \tilda_i^k \vert\le K(1 + t)$)
\[
\begin{aligned}
&  \| \bG ( \btheta^k_i ; \rho^{(N)}_k )  - \bG ( \tbtheta^k_i ; \trho^{(N)}_k ) \|_2 \\
\le& \| \nabla V(\btheta^k_i) - \nabla V(\tbtheta^k_i) \|_2 +  \sup_{j \in [N]} \| \nabla_1 U(\tbtheta_i^k, \btheta_j^k) - \nabla_1 U(\tbtheta_i^k, \tbtheta_j^k) \|_2 \\
&+ \sup_{j \in [N]} \| \nabla_1 U(\btheta_i^k, \btheta_j^k) - \nabla_1 U(\tbtheta_i^k, \btheta_j^k) \|_2\\
\le& K (1 + \vert \tilda_i^k \vert) \| \btheta_i^k - \tbtheta_i^k \|_2 + \sup_{j \in [N]} K (1 + \vert \tilda_i^k \vert ) (1 + \vert \tilda_j^k \vert) \| \btheta_j^k - \tbtheta_j^k\|_2\\
&+ \sup_{j \in [N]}K (1 + \vert \tilda_i^k \vert ) (1 + \vert a_j^k \vert) \| \btheta_i^k - \tbtheta_i^k \|_2 \\
\le& K (1 + T + \vert \tilda_i^k - a_i^k \vert) (1 + T) \max_{j \in [N]} \| \btheta_j^k - \tbtheta_j^k \|_2\\
 \le& K (1 + T)^2 \cdot \max_{j \in [N]} \{ \| \btheta_j^k - \tbtheta_j^k \|_2, \| \btheta_j^k - \tbtheta_j^k \|_2^2 \}. 
\end{aligned}
\]
Denoting $\Delta (t) \equiv \sup_{k \in [0,t/\eps] \cap \N } \max_{i\in [N]} \| \btheta^{k}_i - \tbtheta^{k}_i \|_2$. Denote $T_\Delta = \inf\{u: \Delta(u) \ge 1\}$. For $t \le T_a \wedge T_\Delta \wedge T$, we get the equation
\[
\begin{aligned}
\Delta (t) \le& K M_T^2 \int_0^t \Delta (s) \de s + K M_T^2 \sqrt{\eps T} ( \sqrt{D + \log N} + z ), \\
\end{aligned}
\]
which gives 
\[
\Delta(t) \le K M_T^2 \sqrt{\eps T} ( \sqrt{D  + \log N} + z ) e^{K M_T^2 t}. 
\]
Since we choose $\eps \le 1/ [K_0(D + \log N+ z^2) e^{ K_0 (1 + T)^3}]$, we have
\[
\Delta(T_a \wedge T_\Delta \wedge T) \le M_T^2 \sqrt{T\eps } ( \sqrt{D + \log N} + z ) e^{K M_T^2 T} \le 1/2.
\]
Moreover, for $t \le T_a \wedge T_\Delta \wedge T$, we have 
\[
\begin{aligned}
\sup_{k \in [0, t / \eps] \cap \N}\max_{i \in [N]} \vert a_i^k \vert   \le \sup_{k \in [0, t/\eps] \cap \N} \max_{i \in [N]} \vert \tilda_i^k \vert + \Delta(t) \le K(1 + T) + 1/2 < 2 K (1 + T). 
\end{aligned}
\]
This means that the stopping times $T_a, T_\Delta \ge T$. Hence, for any $t \le T$, we have 
\[
\begin{aligned}
\Delta(t) \le& M_T^2 \sqrt{\eps T} ( \sqrt{D  + \log N} + z ) e^{K M_T^2 t},\\
\sup_{k \in [0, t / \eps] \cap \N}\max_{i \in [N]} \vert a_i^k \vert   \le& 2 K (1 + t). 
\end{aligned}
\]
Note all these happens when event $E_{\good}$ happens. Hence, the probability such that the events above happens is at least $1 - e^{-z^2}$. Finally, by Lemma \ref{lem:bound_lip_traject_B}, we have the desired bound on $R_N$. This concludes the proof. 
\end{proof}

\section{Proof of Theorem \ref{thm:bound_approximating_noisy} part (A)}\label{sec:proof_2A}

The proof follows the same scheme as for Theorem \ref{thm:bound_approximating_noiseless} (A) and we will limit ourselves to describing the differences. 

Throughout this section, the assumptions {\rm A1}-{\rm A6} of Theorem \ref{thm:bound_approximating_noisy} are understood to hold. 
% plus the following one
% \begin{itemize}
% \item[A5.] Let $(a, \bw) \sim \rho_0$, then $\bw$ is $K_0^2/D$-sub-Gaussian. 
% \end{itemize}
For the sake of simplicity we will write the proof under the following
restriction: 
\begin{itemize}
\item[R1.] The coefficients $a_i \equiv 1$. 
\item[R2.] The step size function $\xi(t) \equiv 1/2$. 
\end{itemize}
The proof for a general function $\xi(t)$ is obtained by a straightforward adaptation.

For the reader's convenience, we copy here the limiting PDE:
\[
\begin{aligned}
\partial_t \rho_t =& 2 \xi (t) \nabla \cdot [\rho(\btheta) \nabla \Psi_{\lambda} (\btheta; \rho_t)] + 2 \xi(t) \tau D^{-1}   \Delta_{\btheta} \rho_t, \\
\Psi_\lambda (\btheta; \rho) =& V (\btheta ) + \int U (\btheta , \btheta' ) \rho (\de \btheta' )  + \frac{\lambda }{2} \norm{\btheta}_2^2.
\end{aligned}
\]
We will consider four different coupled dynamics with same initialization $( \bbtheta^0_i )_{i \leq N}  \sim_{iid} \rho_0$ and stochastic term. We will denote $\{ \bW_i (s) \}_{ s\ge 0}$ for $i \in [N]$ independent $D$-dimensional Brownian motions. The integral equations and summation forms of the four dynamics are as follows:
\begin{itemize}
    \item The \textit{nonlinear dynamics (ND)}: 
    \begin{equation}
    \bbtheta^t_i = \bbtheta^0_i + 2 \int_0^t \xi(s) \bG ( \bbtheta^s_i ; \rho_s ) \de s + \int_0^t \sqrt{ 2 \xi (s) \tau D^{-1} } \de \bW_i (s), \label{eq:traj_PDE_C}
    \end{equation}
    where we denoted $\bG (\btheta ; \rho ) = - \na \Psi_{\lambda}  (\btheta; \rho ) = - \lambda \btheta -  \na V ( \btheta ) - \int \na_{\btheta} U (\btheta , \btheta ' ) \rho (\de \btheta')$, and $\bbtheta \sim \rho_0$ i.i.d.
    
    \item The \textit{particle dynamics (PD)}: 
    \begin{equation}
    \ubtheta^t_i = \ubtheta^0_i + 2 \int_0^t \xi(s) \bG ( \ubtheta^s_i ; \hat \rho^{(N)}_s ) \de s + \int_0^t \sqrt{2 \xi (s) \tau D^{-1} } \de \bW_i (s) ,
    \label{eq:traj_PD_C}
    \end{equation}
    where $\ubtheta^0_i = \bbtheta^0_i$.
    
    \item The \textit{gradient descent (GD)}:
    \[
    \begin{aligned}
    \tbtheta^{k}_i = \tbtheta^{0 }_i + 2\eps \sum_{l = 0}^{k-1} \xi (l \eps ) \bG ( \tbtheta^l_i ; {\Tilde \rho}^{(N)}_l ) + \int_0^{k\eps } \sqrt{2 \xi ([s]) \tau D^{-1}} \de \bW_i (s) ,
    \end{aligned}
    \]
    where $\tbtheta^0_i = \bbtheta^0_i$.
    
    \item The \textit{stochastic gradient descent (SGD)}:
    \[
    \btheta^k_i = \btheta^0_i + 2 \eps \sum_{l = 0}^{k-1}\xi (l \eps ) \bF_i ( \btheta^l ; \bz_{l+1} ) + \int_0^{k\eps } \sqrt{ 2 \xi ([s]) \tau D^{-1}} \de \bW_i (s) ,
    \]
    where we denoted $\bF_i ( \btheta^k ; \bz_{k+1} ) =  - \lambda \btheta_i^k + (y_{k+1} - \hat{y}_{k+1} ) \na_{\btheta_i } \sigma_\star ( \bx_{k+1} ; \btheta^k_i ) $, and $\btheta^0_i = \bbtheta^0_i$.
\end{itemize}

By Proposition \ref{prop:PDE_ND_C}, \ref{prop:ND_PD_C}, \ref{prop:PD_GD_C}, \ref{prop:GD_SGD_C}, there exists constants $K$ and $K_0$, such that with probability at least $1 - e^{-z^2}$, we have
\[
\begin{aligned}
\sup_{t \in [0, T]} \vert R_N(\bbtheta^t) - R(\rho_t) \vert \le&  K e^{KT} \frac{1}{\sqrt{N}}[ \sqrt{\log(NT)} + z], \\
\sup_{t \in [0, T]} \vert R_N(\ubtheta^t) - R_N(\bbtheta^t)\vert \le& K e^{KT} \frac{1}{\sqrt N} [\sqrt{\log(NT)} + z], \\
\sup_{k \in [0, T/\eps] \cap \N} \vert R_N(\ubtheta^{k\eps}) - R_N(\tbtheta^k) \vert \le& K e^{KT} [ \sqrt{\log(N (T / \eps \vee 1))} + z] \sqrt{\eps}, \\
\sup_{k \in [0, T/\eps] \cap \N} \vert R_N(\tbtheta^{k}) - R_N(\btheta^k) \vert \le& K e^{KT} \sqrt{T\eps } [\sqrt{D + \log N} + z ]. \\
\end{aligned}
\]
Combining these inequalities gives the conclusion of Theorem \ref{thm:bound_approximating_noisy} (A). In the following subsections, we prove all the above interpolation bounds, under the setting of Theorem \ref{thm:bound_approximating_noisy} (A).

\subsection{Technical lemmas \label{sec:technical_theorem_2_A}}

Define the maximum and the average of the norm of the initialization:
\[
\begin{aligned}
&\Theta_{\infty} \equiv \max_{i \leq N} \| \btheta^0_i \|_2, \qquad \Theta_1 \equiv \frac{1}{N} \sum_{i=1}^N  \| \btheta^0_i \|_2. 
\end{aligned}
\]
Similarly define the following bounds on the Brownian noise:
\begin{align*}
 \obW_i (t) & \equiv \sqrt{\frac{\tau}{D}}\bW_i (t)  =\int_0^t \sqrt{\frac{\tau}{D}} \de \bW_i (s),  & W_{\infty} & \equiv \max_{i \leq N} \sup_{t \leq T} \| \obW_{i} (t) \|_2,  & W_1 & \equiv \sup_{t \leq T} \frac{1}{N} \sum_{i=1}^N   \| \obW_{i} (t) \|_2.
 \end{align*}
\begin{lemma}\label{lem:bound_noise_initialization_D}
There exists a constant $K$ such that:
\[
\begin{aligned}
& \P \Big( \max ( \Theta_{\infty} , W_{\infty}) \leq K(1 + T) \big[ \sqrt{\log N} + z \big]  \Big) \geq 1 - e^{-z^2}, \\
& \P \Big( \max ( \Theta_1 , W_1) \leq K(1 + T) \big[ 1 + z \big] \Big) \geq 1 - e^{-z^2}.
\end{aligned}
\]
\end{lemma}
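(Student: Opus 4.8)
The plan is to split the four quantities into the two maxima $\Theta_\infty$, $W_\infty$ and the two averages $\Theta_1$, $W_1$, to prove in each case a bound for a single index $i$, and then to take a union bound over $i\in[N]$, using that the $\btheta^0_i$ are i.i.d.\ (assumptions A4--A5) and the Brownian motions $\bW_i$ are i.i.d.\ and independent of the initialization; the two displays then follow by a final union bound over the $\Theta$-event and the $W$-event, the constant $K$ absorbing the split of the failure probability. Concretely, for a single index I aim to establish
\[
\P\Big(\|\btheta^0_i\|_2 \ge K(1+z)\Big)\le e^{-z^2},
\qquad
\P\Big(\sup_{t\le T}\|\obW_i(t)\|_2 \ge K\sqrt{\tau T}\,(1+z)\Big)\le e^{-z^2},
\]
after which the maxima are obtained by replacing $z$ with $\sqrt{\log N}+z$.

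For the first single-index bound, A4 gives $\|\btheta^0_i\|_2\le K_4+\|\bw^0_i\|_2$, and A5 states that $\bw^0_i\in\reals^{D-1}$ is $K_5^2/D$-sub-Gaussian; the standard $\epsilon$-net estimate for the Euclidean norm of a sub-Gaussian vector (cover $\mathbb S^{D-2}$ by a $1/2$-net of size $\le 5^{D-1}$, apply a scalar sub-Gaussian tail to each net point, union bound) then yields exactly $\P(\|\bw^0_i\|_2\ge K(1+z))\le e^{-z^2}$, because the $\sqrt{D-1}$ produced by the net is cancelled by the $1/\sqrt D$ in the variance proxy. For the second, write $\obW_i(t)=\sqrt{\tau/D}\,\bW_i(t)$ with $\bW_i$ a standard $D$-dimensional Brownian motion; for a fixed unit vector $u$ the process $\langle u,\obW_i(t)\rangle$ is a time-changed one-dimensional Brownian motion, so by the reflection principle $\sup_{t\le T}\langle u,\obW_i(t)\rangle$ is distributed as $\sqrt{\tau T/D}\,|Z|$ with $Z\sim\normal(0,1)$; covering $\mathbb S^{D-1}$ by a $1/2$-net and taking a union bound over the net again produces the claimed bound, with the $\sqrt D$ from the net absorbed by the $1/\sqrt D$ in the coefficient. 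Taking union bounds over $i\in[N]$ (with $z\mapsto\sqrt{\log N}+z$) and using $\tau\le K_6$, $\sqrt{\tau T}\le\sqrt{K_6}\,(1+T)$, gives the bounds on $\Theta_\infty$ and $W_\infty$ and hence the first display; the factor $1+T$ is mere slack for the $\Theta$ term.

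For the averages it suffices to bound the means and then invoke the i.i.d.\ structure. From A4--A5, $\E\|\btheta^0_i\|_2\le K_4+(\E\|\bw^0_i\|_2^2)^{1/2}\le K$ since $\E\|\bw^0_i\|_2^2=O(K_5^2)$; and Doob's $L^2$ maximal inequality, applied to the non-negative submartingale $t\mapsto\|\obW_i(t)\|_2$, gives
\[
\E\sup_{t\le T}\|\obW_i(t)\|_2\le 2\big(\E\|\obW_i(T)\|_2^2\big)^{1/2}=2\sqrt{\tau T}\le K(1+T).
\]
Since $\Theta_1=N^{-1}\sum_{i\le N}\|\btheta^0_i\|_2$ and $W_1\le N^{-1}\sum_{i\le N}\sup_{t\le T}\|\obW_i(t)\|_2$ are averages of $N$ i.i.d.\ terms that, by the single-index tails above, are sub-Gaussian with parameters $O(1)$ and $O(\tau T)$ respectively, the fluctuations $\Theta_1-\E\Theta_1$ and $W_1-\E W_1$ are sub-Gaussian with parameters $O(1/N)$ times those, so $\P(\Theta_1\ge K(1+z))\le e^{-z^2}$ and $\P(W_1\ge K(1+T)(1+z))\le e^{-z^2}$; a final union bound proves the second display.

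The only genuinely delicate point is the single-index estimate for $W_\infty$: one needs a tail bound for the supremum over the entire interval $[0,T]$ of the Euclidean norm of a $D$-dimensional Brownian path that is dimension-free. What makes this possible is that the $\sqrt D$ coming from the geometry of $\reals^D$ (the metric entropy of the sphere, equivalently the typical size of $\|\bW_i(t)\|_2$) is exactly compensated by the $D^{-1}$ in the diffusion coefficient $\tau D^{-1}$ of the noisy dynamics, so that only the $\sqrt{\log N}$ from the union over the $N$ neurons remains; keeping track of constants so that this cancellation is explicit is the crux. The remaining steps --- sub-Gaussian concentration of vector norms, Doob's maximal inequality, and union bounds --- are routine.
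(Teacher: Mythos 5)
Your proposal is correct; it establishes the same single-index tail estimates and then passes to maxima via a union bound over $i\in[N]$ (replacing $z$ by $\sqrt{\log N}+z$), which is the same organizing scheme as the paper. What differs is the technical machinery inside the single-index bounds and in the treatment of the averages. For $\Theta_\infty$ the paper proves the sub-Gaussian norm tail by computing the moment generating function of $\|\bw^0_i\|_2^2$ directly via the Gaussian integration trick $\E_{\bX}[\exp\{\mu\|\bX\|_2^2/2\}]=\E_{\bX,\bG}[\exp\{\sqrt{\mu}\langle\bG,\bX\rangle\}]\le(1-\mu K^2/2)^{-D/2}$, whereas you use the $\tfrac12$-net on the sphere; for $W_\infty$ the paper applies Doob's maximal inequality to the exponential sub-martingale $\exp\{\mu\|\bW_i(t)\|_2^2\}$, while you project onto net directions and invoke the reflection principle for each one-dimensional Brownian motion before taking a union bound over the net. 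Both routes produce the same $\sqrt{\tau T/D}\cdot(\sqrt{D}+\sqrt{\log N}+z)$ threshold since the $\sqrt{D}$ coming either from the net cardinality or from the $D/2$ exponent in the MGF is offset by the $D^{-1/2}$ in the noise scale, as you correctly emphasize. For the averages the paper is more economical: it uses $\frac{1}{N}\sum_i x_i\le(\frac1N\sum_i x_i^2)^{1/2}$ to reduce $\Theta_1$ to the sum of the squared norms (handled by a single MGF) and $W_1$ to $\|\bW(t)\|_2/\sqrt N$ with $\bW$ an $ND$-dimensional Brownian motion (handled by a single application of Doob), avoiding your two-step ``bound the mean via Doob's $L^2$ inequality, then prove concentration of the i.i.d.\ average'' argument. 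Your version works and is arguably more transparent, but you should be slightly careful in justifying that the one-sided sub-Gaussian tails you derive (which are centered at $O(\sqrt{\tau T})$ rather than at zero) yield a sub-Gaussian MGF for the centered variables, as you implicitly invoke this equivalence when concentrating the average; the paper's Cauchy--Schwarz reduction sidesteps this entirely.
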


\begin{proof}[Proof of Lemma \ref{lem:bound_noise_initialization_D}]
 Let us first consider a generic $D$-dimensional $K^2$-sub-Gaussian random vector $\bX$, we have:
\[
\E_{\bX} [ \exp \{ \mu  \norm{\bX}^2_2 /2 \} ] = \E_{\bX,\bG} [ \exp \{ \sqrt{\mu} \< \bG , \bX \> \} ] \leq \E_{\bG} [ \exp \{ \mu K^2 \norm{\bG}_2^2/2 \} ] =  (1 - \mu K^2/2)^{-D/2},
\]
where $\bG \sim N (0, \bI_D)$. Recall that $\{(a_i^0, \bw_i^0)\}_{i \in [N]} \sim_{iid} \rho_0$ with $\bw_i^0$ being a $K^2/D$-sub-Gaussian vector in $\R^{D-1}$ independent of $a_i^0$. Using the above inequality, we get
\[
\P \Big( \|\bw^0_i \|_2 \ge u \Big) \le \E[ \exp \{ \mu  \|\bw^0_i \|_2^2 /2 \} ] / \exp \{ \mu u^2/2 \} \leq (1 - \mu K^2/D )^{-(D-1)/2} \exp \{ - \mu u^2/ 2 \}.
\]
Taking the union bound over $i \in [N]$, and noting that $\vert a_i^0 \vert \le K$, we get:
\[
\P \Big( \max_{i \in [N]} \|\btheta^0_i \|_2 \ge u + K \Big) \leq (1 - \mu K^2/D )^{-(D-1)/2} \exp \{ - \mu u^2/ 2+ \log N \}.
\]
Taking $ \mu = D/(2K^2)$ and $u = 2 K [ \sqrt{D + \log N}  + z ]/\sqrt{D}$, we get:
\[
\P \Big( \Theta_{\infty} \ge 2 K \lb \sqrt{D + \log N}  + z \rb/\sqrt{D} \Big) \leq e^{-z^2}.
\]
Let us now consider the average over $i\in [N]$ of the $\|\bw^0_i \|_2 $, which are independent, we get:
\[
\P \Big( N^{-1} \sum_{i=1}^N \| \bw^0_i \|_2 \geq u \Big) \leq \P \Big(  \sum_{i=1}^N \| \bw^0_i \|_2^2 \geq N u^2 \Big) \leq (1 - \mu K^2 / D )^{-N(D-1)/2} \exp \{ - \mu N u^2 /2 \}.
\]
Taking $\mu = D/(2K^2)$ and $u = 2K \lb 1 + z \rb $, noting $(1/N) \sum_{i=1}^N \vert a_i^0 \vert \le K$, we get:
\[
\P ( \Theta_1 \geq 2K \lb 1 + z \rb ) \leq e^{-z^2}.
\]
Similarly, we consider $\obW_i (t) \equiv  \sqrt{ \tau /D}  \bW_i (t)$ which is a $D$-dimensional Gaussian random variable with variance $\Var ( \oW^j_i (t) ) = \int_0^t (\tau/D) \de s =\tau t/ D$. We note that $\exp \{ \mu \|  \bW_i (t) \|_2^2  \}$ is a sub-martingale and by Doob's martingale inequality, we have:
\[
\P \Big( \sup_{t \in [0,T]}  \|  \bW_i (t) \|_2 \ge u \Big) \le \E[ \exp \{ \mu  \|  \bW_i (T) \|_2^2 /2 \} ] / \exp \{ \mu u^2/2 \} \leq (1 - 2\mu  (\tau T/D))^{-D/2} \exp \{ - \mu u^2/ 2 \}.
\]
Taking the union bound over $i \in [N]$ gives:
\[
\P \Big( \max_{i \in [N] } \sup_{t \in [ 0,T]}   \|  \bW_i (t) \|_2 \ge u \Big) \le (1 - 2\mu \tau T/D )^{-D/2} \exp \{ - \mu u^2/ 2 + \log N \}.
\]
Taking $ \mu =  D/(4 \tau T)$ and $u = 4 \sqrt{ T \tau} [ \sqrt{D + \log N}  + z ] /\sqrt{D}$, we get:
\[
\P \Big( W_{\infty} \ge 4 \sqrt{T \tau} [ \sqrt{D + \log N}  + z ] /\sqrt{D} \Big) \leq e^{-z^2}.
\]
We can consider the average over $i \in [N]$ of the preceding bound, by noticing that:
\[
\frac{1}{N } \sum_{i = 1}^N \|  \bW_i (t) \|_2 \leq \Big( \frac{1}{N } \sum_{i = 1}^N \|  \bW_i (t) \|_2^2 \Big)^{1/2} \equiv  \| \bW (t) \|_2/ \sqrt{N},
\]
where $ \bW(t) $ is a $ND$-dimensional Brownian motion. We can therefore apply Doob's martingale inequality to the sub-martingale $\exp \{ \mu \|   \bW (t) \|^2_2  \}$. We have
\[
\begin{aligned}
\P \Big( \sup_{t \in [ 0,T]}  \|   \bW (t) \|_2  \ge \sqrt{N} u \Big)  & \le \E [ \exp \{ \mu \|   \bW (t) \|_2 ^2 /2 \} ] / \exp \{  N \mu u^2/2 \} \\
& \leq (1 - 2\mu T \tau/ D )^{-ND/2} \exp \{ - N \mu u^2/ 2 \}.
\end{aligned}
\]
Taking $ \mu = D/(4 \tau T)$ and $u = 4 \sqrt{T \tau} [ 1  + z ]$, we get:
\[
\P \Big( W_1 \ge 4 \sqrt{T \tau} [ 1  + z ] \Big) \leq e^{-z^2}.
\]
This proves the lemma. 
\end{proof}

The two following lemmas are modified from \cite[Section 7.2, Lemma 7.5]{mei2018mean}.
\begin{lemma}\label{lem:continuity_noisy_PDE_C}
There exists a constant $K$, such that 
\[
\begin{aligned}
& \P \Big( \sup_{i \leq N} \sup_{k\in [0,T/\eta] \cap \N} \sup_{u \in [0,\eta]} \| \bbtheta^{k\eta +u}_i - \bbtheta^{k\eta}_i  \|_2 \leq Ke^{KT} \Big[ \sqrt{\log{ (N (T/\eta \vee 1)}) } + z \Big] \sqrt{\eta} \Big) \leq 1 - e^{-z^2},
\end{aligned}
\]
and for any $t,h \geq 0 , t+h \leq T$,
\[
\begin{aligned}
 W_2 (\rho_t , \rho_{t+h} ) \leq ( \E [ \| \bbtheta_i^{t} - \bbtheta_i^{t+h} \|_2^2 ] )^{1/2} \leq K e^{KT} \sqrt{h}.
\end{aligned}
\]
\end{lemma}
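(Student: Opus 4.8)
\textbf{Proof proposal for Lemma \ref{lem:continuity_noisy_PDE_C}.}
The plan is to read off the increment from the trajectory equation \eqref{eq:traj_PDE_C} and split it into a drift part and a Brownian part, after first establishing an a priori bound on $\sup_{i\le N}\sup_{t\le T}\|\bbtheta^t_i\|_2$. Under the restrictions R1, R2 (so $\xi\equiv 1/2$) the equation reads $\bbtheta^t_i=\bbtheta^0_i+\int_0^t\bG(\bbtheta^s_i;\rho_s)\,\de s+\obW_i(t)$, and since $\bG(\btheta;\rho)=-\lambda\btheta-\nabla V(\btheta)-\int\nabla_\btheta U(\btheta,\btheta')\rho(\de\btheta')$ with $\nabla V,\nabla_\btheta U$ bounded (Assumption A3, as in Lemma \ref{lem:Lip_U_V_A}) one has $\|\bG(\btheta;\rho)\|_2\le K+\lambda\|\btheta\|_2$. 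Hence
\[
\|\bbtheta^t_i\|_2\le \Theta_\infty+W_\infty+Kt+\lambda\int_0^t\|\bbtheta^s_i\|_2\,\de s,
\]
and Gronwall's lemma gives $\sup_{t\le T}\|\bbtheta^t_i\|_2\le Ke^{KT}(1+\Theta_\infty+W_\infty)$. Combining with the first bound of Lemma \ref{lem:bound_noise_initialization_D}, on an event of probability at least $1-e^{-z^2}$ we obtain $\sup_{i\le N}\sup_{t\le T}\|\bbtheta^t_i\|_2\le Ke^{KT}[\sqrt{\log N}+z]$.

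On that event the drift contribution to the increment over $[k\eta,k\eta+u]$, $u\le\eta$, is bounded by $(K+\lambda\sup\|\bbtheta\|_2)\eta\le Ke^{KT}[\sqrt{\log N}+z]\eta\le Ke^{KT}[\sqrt{\log N}+z]\sqrt{\eta}$ (using $\eta\le 1$). It remains to control $\sup_{i\le N}\sup_{k}\sup_{u\le\eta}\|\obW_i(k\eta+u)-\obW_i(k\eta)\|_2$. For fixed $i$ and $k$ this is the running maximum over a time window of length $\eta$ of a $D$-dimensional Brownian motion with per-coordinate variance rate $\tau/D$; applying Doob's maximal inequality to the submartingale $\exp\{\mu\|\bW_i(k\eta+\cdot)-\bW_i(k\eta)\|_2^2\}$ exactly as in the proof of Lemma \ref{lem:bound_noise_initialization_D}, and then taking a union bound over the $\le\lceil T/\eta\rceil$ windows and over $i\le N$, yields with probability at least $1-e^{-z^2}$
\[
\sup_{i\le N}\sup_{k}\sup_{u\le\eta}\|\obW_i(k\eta+u)-\obW_i(k\eta)\|_2\le K\sqrt{\eta}\,\big[\sqrt{\log(N(T/\eta\vee 1))}+z\big].
\]
Adding the two contributions and relabelling $z$ gives the first claim.

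For the Wasserstein bound, note that $(\bbtheta^t_i,\bbtheta^{t+h}_i)$ is a coupling of $(\rho_t,\rho_{t+h})$, so $W_2(\rho_t,\rho_{t+h})\le(\E\|\bbtheta^{t+h}_i-\bbtheta^t_i\|_2^2)^{1/2}$. From \eqref{eq:traj_PDE_C} we have $\bbtheta^{t+h}_i-\bbtheta^t_i=\int_t^{t+h}\bG(\bbtheta^s_i;\rho_s)\,\de s+(\obW_i(t+h)-\obW_i(t))$; squaring, using Cauchy--Schwarz on the drift integral together with $\|\bG\|_2\le K+\lambda\|\btheta\|_2$, the independence of the Brownian increment from the past, $\E\|\obW_i(t+h)-\obW_i(t)\|_2^2=\tau h$, and the second-moment bound $\sup_{s\le T}\E\|\bbtheta^s_i\|_2^2\le Ke^{KT}$ (which follows from a Gronwall argument applied to $s\mapsto\E\|\bbtheta^s_i\|_2^2$, using that $\bbtheta^0_i$ has bounded second moment by A4--A5), gives $\E\|\bbtheta^{t+h}_i-\bbtheta^t_i\|_2^2\le Ke^{KT}(h^2+h)\le Ke^{KT}h$ for $h\le T$, hence $W_2(\rho_t,\rho_{t+h})\le Ke^{KT}\sqrt{h}$.

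The main obstacle is bookkeeping rather than a genuine difficulty: one must make sure the a priori bound on $\|\bbtheta^t_i\|_2$ holds on a single good event, propagate it correctly into the drift estimate, and get the logarithmic factor $\log(N(T/\eta\vee 1))$ right when taking the union bound over the time windows. The Brownian maximal inequality is a direct reuse of the argument already written out for Lemma \ref{lem:bound_noise_initialization_D}.
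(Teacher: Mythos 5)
Your proposal for the first claim follows the same plan as the paper's proof almost verbatim: a Gronwall a priori bound on $\sup_{t\le T}\|\bbtheta^t_i\|_2$ via Lemma \ref{lem:bound_noise_initialization_D}, then a split of the increment into a drift part (controlled on the good event) and a Brownian part (controlled by the Doob maximal inequality reused from Lemma \ref{lem:bound_noise_initialization_D}), then a union bound over the $\lceil T/\eta\rceil$ windows and over $i\le N$.

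For the Wasserstein bound the route is slightly different: the paper obtains $\E\|\bbtheta^{t+h}_i-\bbtheta^t_i\|_2^2$ by integrating the tail of the high-probability increment bound it has just established (i.e.\ $\int_0^\infty\P(\|\bbtheta^{t+h}_i-\bbtheta^t_i\|_2^2\ge u)\,\de u$), whereas you compute the second moment directly via Cauchy--Schwarz on the drift, the exact variance $\E\|\obW_i(t+h)-\obW_i(t)\|_2^2=\tau h$ for the Brownian part, and a separate $L^2$-Gronwall estimate $\sup_{s\le T}\E\|\bbtheta^s_i\|_2^2\le Ke^{KT}$. Both are correct and give the same $Ke^{KT}\sqrt h$; your version is a bit more self-contained since it does not route through the first part of the lemma, at the cost of one extra Gronwall argument. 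One small imprecision: you invoke independence of the Brownian increment ``from the past'', but the drift integral on $[t,t+h]$ does depend on $\bW_i$ over $[t,t+h]$, so the cross-term need not vanish; this is immaterial because the bound $\E\|A+B\|_2^2\le 2\E\|A\|_2^2+2\E\|B\|_2^2$ you are implicitly using never requires independence.
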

\begin{proof}[Proof of Lemma \ref{lem:continuity_noisy_PDE_C}] 
Define $\Delta_i (t) \equiv \sup_{s \leq t}  \| \bbtheta^t_i \|_2$. From Eq.~\eqref{eq:traj_PDE_C},
\begin{align*}
\| \bbtheta^t_i \|_2 &  \leq K   \int_0^t  \| \bbtheta^s_i \|_2  \de s +  \Theta_{\infty} +  W_{\infty},
\end{align*}
which gives, after applying Gronwall's inequality with the bounds of Lemma \ref{lem:bound_noise_initialization_D}:
\begin{align}
\P \Big( \Delta_i (t)  \leq K e^{KT}   \big[ \sqrt{\log N} + z   \big] \Big) \ge 1 - e^{-z^2}. \label{eq:bound_delta_i}
\end{align}
Consider $\Delta_i (h;k,\eps ) = \sup_{0 \leq u \leq \eps} \| \bbtheta_i^{k\eps + u} - \bbtheta_i^{k\eps} \|_2$. We have
\begin{align*}
\| \bbtheta_i^{k\eps + u} - \bbtheta_i^{k\eps} \|_2 & \leq \Big\| \int_{k\eps}^{k\eps +u} \xi (s) \bG(\bbtheta^s_i ; \rho_s)  \de s  \Big\|_2 + \| \obW_{i,k} (u) \|_2 \\
& \leq Kh \sup_{s \leq T} \left[ \lambda \| \bbtheta^s_i \|_2 + 1 \right]  + \| \obW_{i,k} (u) \|_2 ,
\end{align*}
where we defined $\obW_{i,k} (u)  \equiv \int_{k\eps}^{k\eps +u} \sqrt{ \tau /D } \de \bW_i (s)$. By a similar computation as in Lemma \ref{lem:bound_noise_initialization_D}, we have
\[
\P \Big( \max_{i\leq N} \sup_{k \in [0, T/\eps] \cap \N} \sup_{0 \leq u \leq \eps} \| \obW_{i,k} (u) \|_2 \geq 4\sqrt{K \eps} \lb \sqrt{\log ( N (T/\eps \vee 1))} + z \rb \Big) \leq e^{-z^2} .
\]
Combining this bound and Eq.~\eqref{eq:bound_delta_i} yields:
\begin{align}
\P \Big( \max_{i\leq N} \sup_{k \in [0, T/\eps] \cap \N}  \Delta_i (h;k,\eps )  \leq Ke^{KT} \big[ \sqrt{\log ( N (T/\eps \vee 1))} + z \big] \sqrt{\eps} \Big) \geq 1 - e^{-z^2}. \label{eq:bound_var_eps}
\end{align}

We now bound $W_2 (\rho_t , \rho_{t+h} )$:
\[
W_2 (\rho_t , \rho_{t+h} )^2 \leq \E [ \| \bbtheta^t - \bbtheta^{t+h} \|_2^2 ] = \int_0^\infty  \P ( \| \bbtheta^t - \bbtheta^{t+h} \|_2^2 \geq u ) \de u .
\]
Using Eq.~\eqref{eq:bound_var_eps}, we have (where we removed the union bound on $i \in [N]$ and $k \in [0,T/\eps]\cap \N$)
\[
\P \Big( \|\bbtheta_i^{t + h} - \bbtheta_i^{t} \|_2 \geq  Ke^{KT} [ 1 + z ] \sqrt{h} \Big) \leq e^{-z^2}.
\]
Integrating this upper bound on the probability yields the desired inequality.
\end{proof}

The exact same proof shows a similar lemma for the particle dynamics.

\begin{lemma}\label{lem:continuity_ODE_C}
There exists a constant $K$, such that 
\[
\begin{aligned}
& \P \Big( \max_{i \le N} \sup_{k\in [0,T/\eps] \cap \N} \sup_{u \in [0,\eps]}  \| \ubtheta^{k\eps +u}_i - \ubtheta^{k\eps}_i  \|_2 \leq Ke^{KT} \Big[ \sqrt{\log{ (N (T/\eps \vee 1))}}+z \Big] \sqrt{\eps} \Big) \leq 1 - e^{-z^2}.
\end{aligned}
\]
\end{lemma}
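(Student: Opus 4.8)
The plan is to repeat, essentially word for word, the argument of Lemma~\ref{lem:continuity_noisy_PDE_C}, substituting the particle trajectory $\ubtheta^t_i$ of \eqref{eq:traj_PD_C} for the nonlinear trajectory $\bbtheta^t_i$ of \eqref{eq:traj_PDE_C}, and the random empirical measure $\urhoN_s\equiv(1/N)\sum_{j=1}^N\delta_{\ubtheta^s_j}$ for the deterministic law $\rho_s$ inside the drift $\bG$. The only structural property of the drift that is used is the linear growth bound $\|\bG(\btheta;\rho)\|_2\le K(1+\|\btheta\|_2)$, holding uniformly over $\rho\in\cuP(\reals^D)$; under the assumptions of Theorem~\ref{thm:bound_approximating_noisy}(A) this is immediate, since $\bG(\btheta;\rho)=-\lambda\btheta-\na V(\btheta)-\int\na_1 U(\btheta,\btheta')\,\rho(\de\btheta')$ and, with $a_i\equiv1$ and $\lambda\le K_6$, the gradients $\na V$ and $\na_1 U$ are bounded by Lemma~\ref{lem:Lip_U_V_A}.

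First I would record the a~priori bound on the particle norms. From \eqref{eq:traj_PD_C} and $\xi\equiv1/2$ one gets
\[
\|\ubtheta^t_i\|_2 \;\le\; \Theta_\infty + W_\infty + K\int_0^t\big(1+\|\ubtheta^s_i\|_2\big)\,\de s ,
\]
so Gronwall's lemma, combined with the bounds of Lemma~\ref{lem:bound_noise_initialization_D} on $\Theta_\infty$ and $W_\infty$, gives
\[
\P\Big(\sup_{i\le N}\sup_{s\le T}\|\ubtheta^s_i\|_2 \le Ke^{KT}\big[\sqrt{\log N}+z\big]\Big) \;\ge\; 1-e^{-z^2}.
\]
Next I would decompose a short-time increment: for $k\in[0,T/\eps]\cap\N$ and $0\le u\le\eps$,
\[
\|\ubtheta^{k\eps+u}_i-\ubtheta^{k\eps}_i\|_2 \;\le\; \Big\|\int_{k\eps}^{k\eps+u}\xi(s)\bG(\ubtheta^s_i;\urhoN_s)\,\de s\Big\|_2 + \|\obW_{i,k}(u)\|_2 \;\le\; K\eps\sup_{s\le T}\big(1+\|\ubtheta^s_i\|_2\big) + \|\obW_{i,k}(u)\|_2 ,
\]
where $\obW_{i,k}(u)\equiv\int_{k\eps}^{k\eps+u}\sqrt{\tau/D}\,\de\bW_i(s)$. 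On the a~priori event the drift term is $\le Ke^{KT}[\sqrt{\log N}+z]\,\eps$. For the noise term I would re-run the sub-Gaussian/Doob argument from Lemma~\ref{lem:continuity_noisy_PDE_C}: $u\mapsto\exp\{\mu\|\obW_{i,k}(u)\|_2^2\}$ is a submartingale on $[0,\eps]$, so Doob's maximal inequality followed by a union bound over $i\in[N]$ and $k\in[0,T/\eps]\cap\N$ yields
\[
\P\Big(\max_{i\le N}\max_{k}\sup_{0\le u\le\eps}\|\obW_{i,k}(u)\|_2 \le K\sqrt{\eps}\,\big[\sqrt{\log(N(T/\eps\vee1))}+z\big]\Big) \;\ge\; 1-e^{-z^2}.
\]

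Intersecting the two good events (rescaling $z$ by a constant) and absorbing the $O(\eps)$ drift term into the $O(\sqrt{\eps})$ noise term gives the stated uniform continuity estimate for $\ubtheta$. I do not expect any genuine obstacle here: the sole difference from Lemma~\ref{lem:continuity_noisy_PDE_C} is that the particle drift depends on the random measure $\urhoN_s$ rather than on a deterministic $\rho_s$, and since the linear-growth constant $K$ does not depend on the measure argument, this has no effect on any step. The only point requiring mild care is bookkeeping in the union bounds, so that the $\log(N(T/\eps\vee1))$ factor arising from the discretization of $k$ over $[0,T/\eps]$ is tracked correctly — exactly as in the proof of Lemma~\ref{lem:continuity_noisy_PDE_C}.
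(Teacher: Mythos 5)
Your proposal is correct and follows exactly the route the paper takes: the paper proves Lemma~\ref{lem:continuity_ODE_C} by simply remarking that the proof of Lemma~\ref{lem:continuity_noisy_PDE_C} carries over verbatim, and you have spelled that out, correctly identifying that the only structural fact used about the drift $\bG$ is the measure-uniform linear growth bound $\|\bG(\btheta;\rho)\|_2\le K(1+\|\btheta\|_2)$, so replacing the deterministic law $\rho_s$ by the random empirical measure $\urhoN_s$ changes nothing. The a~priori Gronwall bound, the drift/noise split of the increment, the Doob maximal inequality, and the union bound over $i\in[N]$ and $k\in[0,T/\eps]\cap\N$ are all exactly as in the paper.
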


\subsection{Bound between PDE and nonlinear dynamics \label{sec:PDE_ND_C}}
\begin{proposition}[PDE-ND]\label{prop:PDE_ND_C}
There exists a constant $K$ such that with probability at least $1 - e^{-z^2}$, we have 
\[
\sup_{t \in [0, T]} \vert R_N(\bbtheta^t) - R(\rho_t) \vert \le K e^{KT} \frac{1}{\sqrt{N}}[ \sqrt{\log(NT)} + z].
\]
\end{proposition}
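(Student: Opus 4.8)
The plan is to reproduce the bias/concentration split used in Proposition~\ref{prop:PDE_ND_A}, the only new ingredient being the temporal modulus of continuity for the diffusive nonlinear dynamics. Write
\[
\vert R_N(\bbtheta^t) - R(\rho_t)\vert \le \underbrace{\vert R_N(\bbtheta^t) - \E R_N(\bbtheta^t)\vert}_{\rm I} + \underbrace{\vert \E R_N(\bbtheta^t) - R(\rho_t)\vert}_{\rm II},
\]
where the expectation is taken over the i.i.d.\ initializations $\bbtheta_i^0 \sim \rho_0$ and the independent Brownian motions $\bW_i$ driving \eqref{eq:traj_PDE_C}. Since each $\bbtheta_i^t$ has marginal law $\rho_t$ and the only $N$-dependence in $\E R_N$ enters through the diagonal, Term~${\rm II}$ equals $\tfrac1N\bigl\vert\int U(\btheta,\btheta)\rho_t(\de\btheta) - \int U(\btheta_1,\btheta_2)\rho_t(\de\btheta_1)\rho_t(\de\btheta_2)\bigr\vert \le K/N$ by boundedness of $U$ (valid under restriction~R1, $a_i\equiv1$), exactly as in Lemma~\ref{lem:error_II_bound_A}.

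For Term~${\rm I}$, observe that $R_N(\bbtheta^t)$ is a function of the $N$ i.i.d.\ pairs $\{(\bbtheta_i^0,\bW_i)\}_{i\le N}$: altering one pair changes only the $i$-th particle, and by boundedness of $V,U$ this perturbs $R_N$ by at most $K/N$ (the bounded-difference estimate~\eqref{eqn:RN_bounded_Lip_difference_A}). McDiarmid's inequality then gives, for each fixed $t$, $\P\bigl(\vert R_N(\bbtheta^t)-\E R_N(\bbtheta^t)\vert \ge \delta\bigr) \le \exp\{-N\delta^2/K\}$. To make this uniform over $t\in[0,T]$ I union-bound over the grid $\{j\eta : 0\le j\le \lfloor T/\eta\rfloor\}$ and control the oscillation between consecutive grid points. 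For the random part, Lemma~\ref{lem:continuity_noisy_PDE_C} gives, with probability $\ge 1-e^{-z^2}$, $\sup_i\sup_k\sup_{u\le\eta}\|\bbtheta_i^{k\eta+u}-\bbtheta_i^{k\eta}\|_2 \le Ke^{KT}[\sqrt{\log(N(T/\eta\vee1))}+z]\sqrt{\eta}$, so by the Lipschitz bound $\vert R_N(\btheta)-R_N(\btheta')\vert \le K\max_i\|\btheta_i-\btheta_i'\|_2$ of \eqref{eqn:risk_difference_bound_by_parameter_difference_A}, $\vert R_N(\bbtheta^t)-R_N(\bbtheta^{\eta\lfloor t/\eta\rfloor})\vert$ is bounded by $Ke^{KT}[\sqrt{\log(N(T/\eta\vee1))}+z]\sqrt{\eta}$ on that event; for the deterministic part, $\vert \E R_N(\bbtheta^t)-\E R_N(\bbtheta^{\eta\lfloor t/\eta\rfloor})\vert \le \vert R(\rho_t)-R(\rho_{\eta\lfloor t/\eta\rfloor})\vert + 2K/N \le KW_2(\rho_t,\rho_{\eta\lfloor t/\eta\rfloor}) + 2K/N \le Ke^{KT}\sqrt{\eta} + 2K/N$, using the $W_2$-continuity from Lemma~\ref{lem:continuity_noisy_PDE_C} and the fact that $\rho\mapsto R(\rho)$ is $W_1$-Lipschitz (boundedness of $\nabla V,\nabla_1 U$).

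Finally, choose $\eta = 1/N$, so that the grid-oscillation error is $Ke^{KT}[\sqrt{\log(NT)}+z]/\sqrt N$; taking $\delta = K[\sqrt{\log(NT)}+z]/\sqrt N$ in the McDiarmid union bound over the $\lfloor NT\rfloor+1$ grid points keeps its failure probability $O(e^{-z^2})$, and intersecting with the good event of Lemma~\ref{lem:continuity_noisy_PDE_C} and reparametrizing $z$ absorbs the constants and yields $\sup_{t\in[0,T]}\vert R_N(\bbtheta^t)-R(\rho_t)\vert \le Ke^{KT}\tfrac1{\sqrt N}[\sqrt{\log(NT)}+z]$ with probability at least $1-e^{-z^2}$. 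The only genuine difficulty relative to the noiseless proof is the loss of Lipschitz-in-time regularity caused by the Brownian term, which forces a finer ($\eta\sim1/N$) grid and introduces an extra $\sqrt{\log N}$ factor in the modulus of continuity; this is precisely what Lemma~\ref{lem:continuity_noisy_PDE_C} is designed to supply, so once it is in hand the argument is essentially the same bias-plus-concentration estimate as in Proposition~\ref{prop:PDE_ND_A}.
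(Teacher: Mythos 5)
Your proof is correct and follows essentially the same approach as the paper: the same bias/concentration decomposition into Terms I and II, the same McDiarmid-plus-union-bound over an $\eta=1/N$ grid for Term I, and the same use of Lemma~\ref{lem:continuity_noisy_PDE_C} to control the temporal oscillation in place of the deterministic Lipschitz-in-time bound from the noiseless case. The only cosmetic difference is that you route the deterministic oscillation through $R(\rho_t)$ and $W_2$-continuity (picking up a harmless $O(1/N)$ term), whereas the paper bounds $\vert\E R_N(\bbtheta^{t+h})-\E R_N(\bbtheta^t)\vert$ directly via $\E\|\bbtheta_j^{t+h}-\bbtheta_j^t\|_2$; both give the same rate.
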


We will follow the same decomposition as in the proof of Proposition \ref{prop:PDE_ND_A}. The proof of term {\rm II} only depend on the upper bound on the potential $U$ and still apply. The term I bound follow from a similar proof as lemma \ref{lem:error_I_bound_A}.

\begin{lemma}[Term ${\rm I}$ bound]\label{lem:error_I_bound_C}
There exists $K$, such that 
\[
\P \Big( \sup_{t \in [0, T]} \vert R_N( \bbtheta^t ) - \E R_N( \bbtheta^t ) \vert \le K e^{KT} [\sqrt{\log(NT)} + z] /\sqrt{N}\Big) \ge 1 - e^{- z^2}. 
\]
\end{lemma}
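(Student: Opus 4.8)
The plan is to follow the proof of Lemma \ref{lem:error_I_bound_A} almost verbatim, replacing the deterministic time-continuity of the noiseless nonlinear dynamics by the high-probability continuity estimate of Lemma \ref{lem:continuity_noisy_PDE_C}; this substitution is precisely what produces the extra factor $e^{KT}$ that is absent in the bounded noiseless case. First I would fix $t\in[0,T]$ and view $R_N(\bbtheta^t)$ as a function of the $N$ independent blocks $(\bbtheta^0_i,\bW_i)_{i\le N}$: by \eqref{eq:traj_PDE_C} the trajectory $\bbtheta^t_i$ depends only on $\bbtheta^0_i$, on the Brownian motion $\bW_i$, and on the deterministic curve $(\rho_s)_{s\le t}$, so the blocks are independent and resampling block $i$ leaves $\bbtheta^t_j$ ($j\ne i$) unchanged. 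Since $a_i\equiv1$ under R1, assumption A2 gives $|V|,|U|\le K$, so resampling block $i$ changes $\tfrac2N\sum_jV(\bbtheta^t_j)$ by at most $4K/N$ and $\tfrac1{N^2}\sum_{j,l}U(\bbtheta^t_j,\bbtheta^t_l)$ by at most $(2K+4KN)/N^2\le K'/N$ — crucially, regardless of how far the new $\bbtheta^t_i$ lands. Thus $R_N(\bbtheta^t)$ has bounded differences $K/N$ and McDiarmid's inequality yields, for every fixed $t$,
\[
\P\big(|R_N(\bbtheta^t)-\E R_N(\bbtheta^t)|\ge\delta\big)\le 2\exp\{-N\delta^2/K\},
\]
where $\E$ is over all the randomness defining $\bbtheta^t$ (note the law of each $\bbtheta^t_i$ is $\rho_t$).

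Then I would upgrade this to a uniform-in-$t$ bound by a union bound over a grid $\eta\{0,1,\dots,\lfloor T/\eta\rfloor\}$, paying a factor $T/\eta$. Between grid points I control the oscillation of $t\mapsto|R_N(\bbtheta^t)-\E R_N(\bbtheta^t)|$: by A2--A3, $R_N$ is $K$-Lipschitz in the parameters, so $|R_N(\bbtheta^t)-R_N(\bbtheta^s)|\le K\max_i\|\bbtheta^t_i-\bbtheta^s_i\|_2$, which by Lemma \ref{lem:continuity_noisy_PDE_C} is at most $Ke^{KT}[\sqrt{\log(N(T/\eta\vee1))}+z]\sqrt\eta$ on an event of probability $\ge1-e^{-z^2}$; separately $|\E R_N(\bbtheta^t)-\E R_N(\bbtheta^s)|\le K\,W_2(\rho_t,\rho_s)+K/N\le Ke^{KT}\sqrt\eta+K/N$, using the $W_2$-continuity of Lemma \ref{lem:continuity_noisy_PDE_C} together with the $O(1/N)$ gap between $\E R_N(\bbtheta^t)$ and $R(\rho_t)$ (cf. Lemma \ref{lem:error_II_bound_A}). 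Combining,
\[
\P\Big(\sup_{t\in[0,T]}|R_N(\bbtheta^t)-\E R_N(\bbtheta^t)|\ge \delta+Ke^{KT}[\sqrt{\log(N(T/\eta\vee1))}+z]\sqrt\eta\Big)\le \tfrac{T}{\eta}\,2e^{-N\delta^2/K}+e^{-z^2}.
\]
Finally I would take $\eta=1/N$ (so $\sqrt{\log(N(T/\eta\vee1))}=\sqrt{\log(N^2T)}\le\sqrt2\,\sqrt{\log(NT)}$ for $T\ge1$) and $\delta=Ke^{KT}[\sqrt{\log(NT)}+z]/\sqrt N$ with $K$ large enough that the prefactor $T/\eta=NT$ is absorbed, i.e. $e^{-N\delta^2/K}\le e^{-\log(NT)-z^2}$; after adjusting constants this gives the claimed bound with probability at least $1-e^{-z^2}$.

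The main difficulty is essentially bookkeeping, but there are two points that must not be glossed over. First, the bounded-difference step must genuinely survive the unbounded Brownian displacement; it does, because $R_N$ is a function only of the uniformly bounded potentials $V,U$ (this is exactly the step that fails once the coefficients $a_i$ are free, which is why part (B) of Theorem \ref{thm:bound_approximating_noisy} is considerably harder). Second, the time-discretization error must be controlled uniformly over $i\in[N]$ and over the whole grid of length $T/\eta\asymp NT$, which is the content of Lemma \ref{lem:continuity_noisy_PDE_C}; this is the sole place where the factor $e^{KT}$ (and the logarithmic-in-$N$ corrections) enter, and is the only genuine difference from the noiseless bounded argument of Lemma \ref{lem:error_I_bound_A}.
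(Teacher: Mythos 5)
Your proposal is correct and follows essentially the same route as the paper: McDiarmid's inequality for pointwise concentration of $R_N(\bbtheta^t)$, then a union bound over a grid of mesh $\eta=1/N$ with the between-grid oscillation controlled by the high-probability increment bound of Lemma \ref{lem:continuity_noisy_PDE_C}, which is exactly where the $e^{KT}$ factor enters. The only cosmetic differences are that you spell out the independent-blocks structure $(\bbtheta^0_i,\bW_i)_{i\le N}$ behind the bounded-difference step more explicitly than the paper does, and you detour through $R(\rho_t)$ (picking up a harmless extra $O(1/N)$) to bound $|\E R_N(\bbtheta^t)-\E R_N(\bbtheta^s)|$, whereas the paper bounds it directly by $K\,\E\|\bbtheta^{t+h}_j-\bbtheta^t_j\|_2$ using the $W_2$-continuity statement of the same lemma.
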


\begin{proof}[Proof of Lemma \ref{lem:error_I_bound_C}]
Applying McDiarmid's inequality, we have 
\[
\P\Big( \vert R_N( \bar \btheta^t) - \E R_N( \bar \btheta^t ) \vert \ge \delta \Big) \le \exp\{ - N \delta^2 / K \}. 
\]
Furthermore we have the following increment bound for $t,h \geq 0$:
\[
\begin{aligned}
  &\Big \vert \vert R_N( \bar \btheta^{t +h} ) - \E R_N( \bar \btheta^{t +h} ) \vert - \vert R_N( \bar \btheta^{t} ) - \E R_N( \bar \btheta^{t} ) \vert \Big\vert \\
  \leq& \Big \vert R_N( \bar \btheta^{t+h} ) - R_N( \bar \btheta^{t} )  \Big \vert  + \Big \vert \E R_N( \bar \btheta^{t+h} ) - \E R_N( \bar \btheta^{t} ) \Big\vert \\
 \leq & K \Big[ \sup_{i \in [N]} \| \bbtheta^{t+h}_i - \bbtheta^{t}_i  \|_2 +  \E [  \| \bbtheta_j^{t+h} - \bbtheta_j^{t} \|_2 ] \Big].  
\end{aligned}
\]
Using Lemma \ref{lem:continuity_noisy_PDE_C}, we get 
\[
\sup_{k\in [0,T/\eta] \cap \N} \sup_{u \in [0,\eta]} \Big \vert \vert R_N( \bar \btheta^{k\eta +u} ) - \E R_N( \bar \btheta^{k\eta +u} ) \vert - \vert R_N( \bar \btheta^{k\eta} ) - \E R_N( \bar \btheta^{k\eta} ) \vert \Big \vert \le K e^{KT} \Big[ \sqrt{\log{ N (T/\eta \vee 1)} } +z\Big] \sqrt{\eta},
\]
with probability at least $1 - e^{-z^2}$. Hence taking an union bound over $s \in \eta \{0, 1, \ldots, \lfloor T / \eta\rfloor \}$ and bounding the variation inside the grid intervals, we have
\[
\P\Big( \sup_{t \in [0, T]} \vert R_N( \bar \btheta^t) - \E R_N( \bar \btheta^t) \vert \ge  \delta + K e^{KT} \lb \sqrt{\log{ N (T/\eta \vee 1)} } + z \rb \sqrt{\eta} \Big) \le (T / \eta) \exp\{ - N \delta^2 / K \} + e^{-z^2}.
\]
Taking $\eta = 1/N$ and $\delta = K [\sqrt{\log(NT)} + z]/\sqrt{N}$ concludes the proof. 
\end{proof}

\subsection{Bound between nonlinear dynamics and particle dynamics \label{sec:ND_PDE_C}}

\begin{proposition}[ND-PD]\label{prop:ND_PD_C}
There exists a constant $K$, such that with probability at least $1 - e^{-z^2}$, we have 
\begin{align}
\sup_{t \in [0, T]} \max_{i \in [N]} \| \ubtheta_i^t - \bbtheta_i^t \|_2 \le K e^{KT} \frac{1}{\sqrt N} [\sqrt{\log(NT)} + z],  \label{eqn:particle_population_perturbation_bound_C}\\
\sup_{t \in [0, T]} \vert R_N(\ubtheta^t) - R_N(\bbtheta^t)\vert \le K e^{KT} \frac{1}{\sqrt N} [\sqrt{\log(NT)} + z].
\end{align}
\end{proposition}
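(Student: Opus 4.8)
The plan is to reproduce the structure of the noiseless analogue, Proposition~\ref{prop:ND_PD_A}, the only genuine new difficulty being that in the diffusion case the sample paths $t\mapsto\bbtheta_i^t$ are only $1/2$-H\"older continuous, so the time-discretization inside the concentration argument must be run with the sharper modulus of continuity supplied by Lemma~\ref{lem:continuity_noisy_PDE_C}. Throughout I work in the fixed-coefficient setting (restriction R1, $a_i\equiv1$), so $V,U,\na V,\na_1U$ are bounded and Lipschitz by Lemma~\ref{lem:Lip_U_V_A}, while the only unbounded piece of the drift $\bG(\btheta;\rho)=-\lambda\btheta-\na V(\btheta)-\int\na_1U(\btheta,\btheta')\rho(\de\btheta')$ is the regularization term $-\lambda\btheta$, which is linear.

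First I would subtract the integral equations \eqref{eq:traj_PDE_C} and \eqref{eq:traj_PD_C}. Since the two dynamics are driven by the \emph{same} Brownian motions $\bW_i$ and start from the same point $\bbtheta_i^0=\ubtheta_i^0$, the stochastic integrals cancel exactly, leaving the ODE
\[
\ubtheta_i^t-\bbtheta_i^t = 2\int_0^t \xi(s)\big[\bG(\ubtheta_i^s;\urho_s^{(N)})-\bG(\bbtheta_i^s;\rho_s)\big]\de s .
\]
Differentiating $\tfrac12\tfrac{\de}{\de t}\|\ubtheta_i^t-\bbtheta_i^t\|_2^2$ and splitting the drift difference exactly as in \eqref{eqn:difference_dynamics_A}, I get five contributions: (i) the regularization term $-\lambda(\ubtheta_i^t-\bbtheta_i^t)$, which is $\lambda$-Lipschitz; (ii) $\na V(\bbtheta_i^t)-\na V(\ubtheta_i^t)$, bounded by $K\|\ubtheta_i^t-\bbtheta_i^t\|_2^2$; (iii) $\tfrac1N\sum_j[\na_1U(\bbtheta_i^t,\bbtheta_j^t)-\na_1U(\ubtheta_i^t,\ubtheta_j^t)]$, bounded by $K\|\ubtheta_i^t-\bbtheta_i^t\|_2\max_j\|\ubtheta_j^t-\bbtheta_j^t\|_2$; (iv) the diagonal $j=i$ term, of size $K/N$; and (v) the fluctuation term $I_i^t\equiv\|\tfrac1N\sum_{j\neq i}[\na_1U(\bbtheta_i^t,\bbtheta_j^t)-\int\na_1U(\bbtheta_i^t,\btheta)\rho_t(\de\btheta)]\|_2$, all exactly as in the proof of Proposition~\ref{prop:ND_PD_A}.

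The heart of the argument is a uniform-in-time concentration bound for $I_i^t$: with probability at least $1-e^{-z^2}$,
\[
\sup_{t\in[0,T]}\max_{i\in[N]} I_i^t \le K e^{KT}\big[\sqrt{\log(NT)}+z\big]/\sqrt N .
\]
For fixed $i$ and $t$, conditioning on $\bbtheta_i^t$ makes the vectors $\bX_j^t=\na_1U(\bbtheta_i^t,\bbtheta_j^t)-\int\na_1U(\bbtheta_i^t,\btheta)\rho_t(\de\btheta)$, $j\neq i$, i.i.d.\ mean zero and bounded by $2K$, so the bounded-differences inequality in $\reals^D$ (Lemma~\ref{lem:bounded_difference_martingale}) gives $\P(I_i^t\ge K(\sqrt{1/N}+\delta))\le e^{-N\delta^2}$. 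To pass to a supremum over $t$, instead of the Lipschitz-in-time estimate available in the noiseless proof I would invoke Lemma~\ref{lem:continuity_noisy_PDE_C}: on a grid of spacing $\eta$ both $\max_i\sup_u\|\bbtheta_i^{k\eta+u}-\bbtheta_i^{k\eta}\|_2$ and $W_2(\rho_{k\eta},\rho_{k\eta+u})$ are $\le K e^{KT}[\sqrt{\log(N(T/\eta\vee1))}+z]\sqrt\eta$, whence $|I_i^t-I_i^{[t]}|$ is of the same order. Taking $\eta=1/N$, a union bound over the $\sim NT$ grid points and over $i\in[N]$ balances the exponential tail $e^{-N\delta^2}$ against the grid factor $NT$, and choosing $\delta=K[\sqrt{\log(NT)}+z]/\sqrt N$ yields the claim.

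Finally, conditioning on this event together with the event of Lemma~\ref{lem:continuity_noisy_PDE_C} (a harmless intersection of finitely many events, absorbed by rescaling $z$), and setting $\Delta(t)=\sup_{s\le t}\max_i\|\ubtheta_i^s-\bbtheta_i^s\|_2$, the drift decomposition gives $\tfrac{\de}{\de t}\Delta(t)\le K\Delta(t)+\delta(N,T,z)$ with $\delta(N,T,z)=K e^{KT}[\sqrt{\log(NT)}+z]/\sqrt N$; Gronwall's lemma then produces $\Delta(T)\le K e^{KT}\delta(N,T,z)$, which is \eqref{eqn:particle_population_perturbation_bound_C}, and the risk bound follows from the Lipschitz estimate \eqref{eqn:risk_difference_bound_by_parameter_difference_A}. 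The main obstacle — the sole real departure from Proposition~\ref{prop:ND_PD_A} — is precisely the time-regularity issue: because the paths are only H\"older-$1/2$, one must check that discretizing finely enough to control the oscillation of $I_i^t$ (and, in the companion PDE--ND step, of $R_N(\bbtheta^t)$) does not inflate the union-bound factor, which is exactly what the $\log(N(T/\eta\vee1))$-type modulus of continuity in Lemma~\ref{lem:continuity_noisy_PDE_C} is tailored to guarantee.
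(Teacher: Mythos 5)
Your proposal is correct and follows essentially the same route as the paper's proof: couple the two dynamics through the shared Brownian motions so the noise cancels, reproduce the five-term drift decomposition of the noiseless case, establish the uniform-in-time concentration bound for $I_i^t$ via Lemma~\ref{lem:bounded_difference_martingale} plus the H\"older-$1/2$ modulus of continuity from Lemma~\ref{lem:continuity_noisy_PDE_C} (with grid spacing $\eta=1/N$), then close with Gronwall and the risk Lipschitz estimate. You correctly identified both the cancellation of the stochastic integrals and the need to replace the Lipschitz-in-time argument of Proposition~\ref{prop:ND_PD_A} with the $\sqrt{\eta}$-modulus of continuity, which is precisely what the paper does.
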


\begin{proof}[Proof of Proposition \ref{prop:ND_PD_C}]
The nonlinear dynamics and the particle dynamics are coupled by using the same Brownian motion, and the noise term cancel out. By the same calculation as in Proposition \ref{prop:ND_PD_A}, we get 
\begin{equation}\label{eqn:difference_dynamics_C}
\begin{aligned}
\frac{\de}{\de t} \| \ubtheta_i^t - \bbtheta_i^t \|_2 \le& K \cdot \max_{j \in [N]} \| \ubtheta_j^t - \bbtheta_j^t\|_2 + K / N + I_i^t,
\end{aligned}
\end{equation}
where
\[
I_i^t =\Big\| \frac{1}{N}\sum_{j \neq i} \Big[ \nabla_1 U( \bbtheta_i^t, \bbtheta_j^t) - \int \nabla_1 U(\bbtheta_i^t, \btheta) \rho_t(\de \btheta) \Big] \Big\|_2. 
\]
Now we would like to prove a uniform bound for $I^t_i$ for $i \in [N]$ and $t \in [0,T]$.
\begin{lemma}\label{lem:bound_I_C}
There exists a constant $K$, such that
\[
\P\Big( \sup_{t \in [0, T]} \max_{i \in[N]} I_i^t  \le K e^{KT} [\sqrt{\log(NT)} + z]/ \sqrt{N} \Big)  \ge 1 - e^{- z^2}. 
\]
\end{lemma}
\begin{proof}[Proof of Lemma \ref{lem:bound_I_C}]
Denoting $\bX_i^t = \nabla_1 U( \bbtheta_i^t, \bbtheta_j^t) - \int \nabla_1 U(\bbtheta_i^t, \btheta) \rho_t(\de \btheta)$, we have $\E[\bX_i^t \vert \bbtheta_i^t] = 0$, (where the expectation is taken with respect to $\bbtheta_j^0 \sim \rho_0$ and $\{ \bW_j (s) \}_{s \ge 0}$ for $j \neq i$), and $\| \bX_i^t \|_2 \le 2 K$ (by assumption that $\| \nabla U \|_2 \le K$). By Lemma \ref{lem:bounded_difference_martingale}, we have for any fixed $i \in [N]$ and $t \in [0, T]$, 
\[
\P\Big( I_i^t \ge K (\sqrt {1/N} + \delta) \Big) = \E\Big[ \P\Big( I_i^t \ge K (\sqrt {1/N} + \delta) \vert \bar \btheta_i^t \Big)\Big] \le \exp\{ - N \delta^2  \}. 
\]
We then bound the variation of $I_i^s$ over an interval $[t, t+h]$, with $t,h \geq 0$:
\[
\begin{aligned}
\vert I_i^{t+h} - I_i^{t} \vert   \leq & \frac{1}{N} \sum_{j \leq i} \Big\| \na_1 U ( \bbtheta^{t+h}_i , \bbtheta^{t+h}_j) - \na_1 U ( \bbtheta^{t} _i , \bbtheta^{t}_j) \Big\|_2 \\
& +  \Big\| \int \na_1 U ( \bbtheta^{t+h}_i , \btheta) \rho_{t+h} (\de \btheta) - \int \na_1 U ( \bbtheta^{t} _i , \btheta) \rho_{t}  (\de \btheta) \Big\|_2  \\
 \leq & K \Big[ \sup_{i \leq N} \| \bbtheta^{t+h}_i - \bbtheta^{t}_i \|_2 +  \E [ \| \bbtheta_j^{t+h} - \bbtheta_j^{t} \|_2]\Big].
\end{aligned}
\]
By Lemma \ref{lem:continuity_noisy_PDE_C}, there exists $K$ such that, we have  
\[
\begin{aligned}
\P \Big( \sup_{k\in [0,T/\eta] \cap \N} \sup_{u \in [0,\eta]}  \vert I_i^{k\eta +u} - I_i^{k\eta} \vert \le K e^{KT} \Big[ \sqrt{\log{ (N (T/\eta \vee 1))} } +z \Big] \sqrt{\eta} \Big) \geq 1 - e^{-z^2}.
\end{aligned}
\]
Taking an union bound for $i \in [N]$ and $s \in \eta \{ 0, 1, \ldots, \lfloor T / \eta \rfloor\}$ and bounding the variation inside the grid intervals, we have
\[
\P\Big( \sup_{t \in [0, T]} \max_{i \in[N]} I_i^t \ge K (\sqrt {1/N} + \delta) + K e^{KT} \lb \sqrt{\log{ N (T/\eta \vee 1)} } \rb \sqrt{\eta} \Big)  \le (N T / \eta) \exp\{ - N \delta^2 \} + e^{-z^2}. 
\]
Taking $\eta = 1/N $, and $\delta = K [\sqrt{\log(NT)} + z] / \sqrt N$, we get the desired result.
\end{proof}

Denote $\delta_N (T,z) = K e^{KT} [\sqrt{\log (N T )} + z]/ \sqrt{N} $ and
\[
\Delta(t) = \sup_{s \in [t]} \max_{i \in [N]} \| \ubtheta_i^s - \bbtheta_i^s \|_2. 
\]
With probability at least $1 - e^{-z^2}$, we have
\[
\begin{aligned}
\Delta'(t) \le K \cdot \Delta(t)+ \delta_N(T, z),
\end{aligned}
\]
which, after applying Gronwall's inequality, concludes the proof.
\end{proof}

\subsection{Bound between particle dynamic and GD \label{sec:ND_GD_C}}

\begin{proposition}[PD-GD]\label{prop:PD_GD_C}
There exists a constant $K$ such that with probability at least $1 - e^{-z^2}$, we have 
\[
\begin{aligned}
 \sup_{k \in [0,T/\eps] \cap \N} \max_{i \leq N} \| \ubtheta_i^{k\eps} - \tbtheta_i^{k} \|_2 \leq& Ke^{KT} \Big[ \sqrt{\log{ (N (T/\eps \vee 1))} } + z \Big] \sqrt{\eps},\\
\sup_{k \in [0, T/\eps] \cap \N} \vert R_N(\ubtheta^{k\eps}) - R_N(\tbtheta^k) \vert \le& K e^{KT} [ \sqrt{\log(N (T / \eps \vee 1))} + z] \sqrt{\eps}.
\end{aligned}
\]
\end{proposition}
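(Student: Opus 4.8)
The plan is to follow the scheme of Proposition \ref{prop:PD_GD_A}, using the integral representation \eqref{eq:traj_PD_C} of the particle dynamics together with the analogous summation representation of the gradient descent dynamics; the one new ingredient is the time--modulus--of--continuity estimate of Lemma \ref{lem:continuity_ODE_C}, which accounts for the Brownian noise. First I would observe that, under the restriction $\xi\equiv 1/2$, the stochastic integrals in the PD and GD recursions are \emph{identical} (both equal $\int_0^{k\eps}\sqrt{\tau/D}\,\de\bW_i(s)$, driven by the same Brownian motion), so they cancel in the difference and, for $t=k\eps$,
\begin{align*}
\ubtheta_i^{t}-\tbtheta_i^{k}=\int_0^{t}\bG(\ubtheta_i^s;\urho^{(N)}_s)\,\de s-\eps\sum_{l=0}^{k-1}\bG(\tbtheta_i^l;\trho^{(N)}_l).
\end{align*}
I would then split this into a \emph{discretization} part and a \emph{propagation} part, writing $[s]=\eps\lfloor s/\eps\rfloor$,
\begin{align*}
\|\ubtheta_i^{t}-\tbtheta_i^{k}\|_2\le\int_0^{t}\big\|\bG(\ubtheta_i^s;\urho^{(N)}_s)-\bG(\ubtheta_i^{[s]};\urho^{(N)}_{[s]})\big\|_2\,\de s+\int_0^{t}\big\|\bG(\ubtheta_i^{[s]};\urho^{(N)}_{[s]})-\bG(\tbtheta_i^{[s]/\eps};\trho^{(N)}_{[s]/\eps})\big\|_2\,\de s.
\end{align*}

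Both pieces are handled using that $\btheta\mapsto\bG(\btheta;\rho)$ is Lipschitz (constant $\le\lambda+K\le K$ by A3 and $\lambda\le K_6$; the unbounded term $-\lambda\btheta$ enters only through this Lipschitz bound, its effect on the path growth being already absorbed into Lemma \ref{lem:continuity_ODE_C}) and that $\rho\mapsto\bG(\btheta;\rho)$ is $W_2$--Lipschitz (again by A3, noting $W_2(\urho^{(N)}_s,\urho^{(N)}_{[s]})\le\max_j\|\ubtheta_j^s-\ubtheta_j^{[s]}\|_2$ via the particle--to--particle coupling). For the discretization part the integrand is then $\le K\max_j\sup_{u\le\eps}\|\ubtheta_j^{[s]+u}-\ubtheta_j^{[s]}\|_2$, which Lemma \ref{lem:continuity_ODE_C} bounds by $Ke^{KT}[\sqrt{\log(N(T/\eps\vee1))}+z]\sqrt\eps$ on an event of probability $\ge 1-e^{-z^2}$; integrating over $[0,t]\subseteq[0,T]$ and absorbing the factor $T$ into $e^{KT}$ gives a contribution of that same order. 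For the propagation part the same Lipschitz bounds give, with $\Delta(t)\equiv\sup_{k\in[0,t/\eps]\cap\N}\max_i\|\ubtheta_i^{k\eps}-\tbtheta_i^k\|_2$, an upper bound $K\int_0^t\Delta(s)\,\de s$.

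Adding the two contributions yields $\Delta(t)\le K\int_0^t\Delta(s)\,\de s+Ke^{KT}[\sqrt{\log(N(T/\eps\vee1))}+z]\sqrt\eps$, and Gronwall's lemma (in either the continuous or the discrete form recalled in the Notations) produces $\Delta(T)\le Ke^{KT}[\sqrt{\log(N(T/\eps\vee1))}+z]\sqrt\eps$, the first claimed bound; the risk bound $|R_N(\ubtheta^{k\eps})-R_N(\tbtheta^k)|\le K\Delta(k\eps)$ then follows from the Lipschitz estimate \eqref{eqn:risk_difference_bound_by_parameter_difference_A} for $R_N$ in its parameters, valid here because $a_i\equiv1$ makes $V,U$ bounded and Lipschitz by A1--A3. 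The main obstacle is conceptually the drop from the $\eps$ rate of Proposition \ref{prop:PD_GD_A} to a $\sqrt\eps$ rate, caused by the Brownian part of the path being only H\"older--$1/2$ in time; but that difficulty is entirely encapsulated in Lemma \ref{lem:continuity_ODE_C}, so what remains is the careful bookkeeping that tracks the $\log(N(T/\eps\vee1))$ factor through the union bounds over the time grid.
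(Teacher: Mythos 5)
Your proposal matches the paper's proof essentially step for step: the same decomposition into a discretization term controlled by Lemma~\ref{lem:continuity_ODE_C} and a propagation term controlled by the Lipschitz property of $\bG$, followed by Gronwall and the $R_N$-Lipschitz estimate. The observations that the Brownian integrals cancel exactly between PD and GD, that the $-\lambda\btheta$ term only affects Lipschitz constants, and that the $\sqrt\eps$ rate originates from the H\"older-$1/2$ continuity encapsulated in Lemma~\ref{lem:continuity_ODE_C} are all precisely the points the paper relies on, so the argument is correct and not a genuinely different route.
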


\begin{proof}[Proof of Proposition \ref{prop:PD_GD_C}]
For $k \in \N$ and $t=k\eps$,
\[
\begin{aligned}
&\| \ubtheta_i^{t} - \tbtheta_i^{k} \|_2 \leq & \int_0^t \| \bG ( \ubtheta^s_i ; \urhoN_s )  - \bG ( \ubtheta^{[s]}_i ; \urhoN_{[s]} ) \|_2 \de s + \int_0^t \| \bG ( \ubtheta^{[s]}_i ; \urhoN_{[s]} )  - \bG ( \tbtheta^{[s]/\eps}_i ; \trhoN_{[s]/\eps} ) \|_2 \de s.
\end{aligned}
\]
We have by Lemma \ref{lem:continuity_ODE_C}
\[
\begin{aligned}
\int_0^t  \| \bG ( \ubtheta^s_i ; \urhoN_s )  - \bG ( \ubtheta^{[s]}_i ; \urhoN_{[s]} ) \|_2 \de s & \le K T \sup_{s \in [0,T]} \max_{i \in [N]} \|  \ubtheta^{s}_i - \ubtheta^{[s]}_i \|_2 \\
& \le T K e^{KT} \lb \sqrt{\log{ (N (T/\eps \vee 1))} } + z\rb \sqrt{\eps},
\end{aligned}
\]
with probability at least $1 - e^{-z^2}$. Denote $\delta (N,T,z) = T K e^{KT} \lb \sqrt{\log{ (N (T/\eps \vee 1))} } + z\rb \sqrt{\eps}$  and 
\[
\Delta (t ) \equiv \sup_{k \in [0,t/\eps] \cap \N } \max_{i\leq N} \| \ubtheta^{k\eps}_i - \tbtheta^{k}_i \|_2.
\] 
With probability at least $1-e^{-z^2}$, we get
\[
\begin{aligned}
\Delta (t) \le K \int_0^t \Delta (s) \de s + \delta (N,T,z).
\end{aligned}
\]
Applying Gronwall's inequality concludes the proof.
\end{proof}

\subsection{Bound between GD and SGD \label{sec:GD_SGD_C}}
\begin{proposition}[GD-SGD]\label{prop:GD_SGD_C}
There exists a constant $K$ such that, with probability at least $1 - e^{-z^2}$, we have 
\[
\begin{aligned}
\sup_{k \in [0, T/\eps] \cap \N} \max_{i \in [N]} \| \tbtheta_i^k - \btheta_i^k \|_2 \le& K e^{KT} \sqrt{T\eps } [\sqrt{D + \log N} + z ],\\
\sup_{k \in [0, T/\eps] \cap \N} \vert R_N(\tbtheta^{k}) - R_N(\btheta^k) \vert \le& K e^{KT} \sqrt{T\eps } [\sqrt{D + \log N} + z ].
\end{aligned}
\]
\end{proposition}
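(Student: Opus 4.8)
The plan is to mirror the proof of Proposition \ref{prop:GD_SGD_A} (the noiseless, fixed-coefficient case), taking advantage of the fact that the GD and SGD dynamics are driven by the \emph{same} Brownian motions $\bW_i$, so the stochastic-integral terms cancel exactly upon subtraction. Since in addition $\tbtheta^0_i = \btheta^0_i$, we get
\[
\btheta^k_i - \tbtheta^k_i = 2\eps\sum_{l=0}^{k-1}\xi(l\eps)\big[\bF_i(\btheta^l;\bz_{l+1}) - \bG(\tbtheta^l_i;\trho^{(N)}_l)\big].
\]
Writing $\cF_k = \sigma((\btheta^0_i)_{i\le N},\bz_1,\dots,\bz_k)$ and using $a_i\equiv 1$, one checks $\E[\bF_i(\btheta^k;\bz_{k+1})\mid\cF_k] = -\lambda\btheta^k_i - \na V(\btheta^k_i) - N^{-1}\sum_j\na_1 U(\btheta^k_i,\btheta^k_j) = \bG(\btheta^k_i;\rho^{(N)}_k)$, with $\rho^{(N)}_k = N^{-1}\sum_i\delta_{\btheta^k_i}$. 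This yields $\|\btheta^k_i - \tbtheta^k_i\|_2 \le A^k_i + B^k_i$, where $A^k_i = \|2\eps\sum_{l<k}\xi(l\eps)\bZ^l_i\|_2$, $\bZ^l_i = \bF_i(\btheta^l;\bz_{l+1}) - \E[\bF_i(\btheta^l;\bz_{l+1})\mid\cF_l]$, and $B^k_i = 2\eps\sum_{l<k}\xi(l\eps)\|\bG(\btheta^l_i;\rho^{(N)}_l) - \bG(\tbtheta^l_i;\trho^{(N)}_l)\|_2$.

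For the martingale term, note $\bF_i(\btheta^l;\bz_{l+1}) = -\lambda\btheta^l_i + (y_{l+1}-\hat y_{l+1})\na_\bw\sigma(\bx_{l+1};\bw^l_i)$; since $-\lambda\btheta^l_i$ is $\cF_l$-measurable it drops out of the centering, leaving a bounded factor $(y_{l+1}-\hat y_{l+1})$ (bounded by $K$ using $|y|\le K_2$, $\|\sigma\|_\infty\le K_2$ and $a_i\equiv 1$) times the conditionally $K_2$-sub-Gaussian vector $\na_\bw\sigma(\bx_{l+1};\bw^l_i)$. Hence $\bZ^l_i$ is a $K$-sub-Gaussian martingale difference with $K$ independent of $T$. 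Azuma--Hoeffding (Lemma \ref{lem:Azuma}) then gives $\P(\max_{k\in[0,T/\eps]\cap\N} A^k_i \ge K\sqrt{T\eps}(\sqrt D + z))\le e^{-z^2}$, and a union bound over $i\in[N]$ upgrades $\sqrt D + z$ to $\sqrt{D+\log N} + z$; call the resulting event $E_{\good}$, with probability at least $1 - e^{-z^2}$.

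For the drift term, $\bG(\cdot;\rho)$ is $K$-Lipschitz in its first argument (the $-\lambda\btheta$ contribution has Lipschitz constant $\lambda\le K_6$, and the $\na V$, $\na_1 U$ contributions are bounded and $K$-Lipschitz by Lemma \ref{lem:Lip_U_V_A} with $a_i\equiv 1$) and $K$-Lipschitz in $\rho$ with respect to the coupling distance $\max_j\|\btheta_j-\btheta'_j\|_2$. Setting $\Delta(t) = \sup_{k\in[0,t/\eps]\cap\N}\max_{i\in[N]}\|\btheta^k_i - \tbtheta^k_i\|_2$, the piecewise-constant sum bounds $B^k_i \le K\int_0^{k\eps}\Delta(s)\,\de s$, so on $E_{\good}$ we get $\Delta(t)\le K\int_0^t\Delta(s)\,\de s + K\sqrt{T\eps}(\sqrt{D+\log N} + z)$. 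Gronwall's lemma gives $\Delta(T)\le Ke^{KT}\sqrt{T\eps}(\sqrt{D+\log N} + z)$, and the claimed bound on $|R_N(\tbtheta^k) - R_N(\btheta^k)|$ follows from Eq.~\eqref{eqn:risk_difference_bound_by_parameter_difference_A}. The only point demanding care is verifying that $\bZ^l_i$ is sub-Gaussian with a $T$-independent constant; this is exactly where restriction R1 (fixed coefficients $a_i\equiv 1$) is used, and is precisely what fails in the unbounded-coefficient setting of Theorem \ref{thm:bound_approximating_noisy}(B), where $|a_i^k|$ cannot be controlled deterministically.
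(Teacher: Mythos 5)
Your proof follows the paper's argument exactly: couple the Brownian motions so the noise cancels, observe that the $-\lambda\btheta$ term is $\cF_l$-measurable and drops out of the martingale difference, and then re-run the Azuma--Hoeffding plus Gronwall argument of Proposition \ref{prop:GD_SGD_A}. The only slip is in your definition of the filtration: you wrote $\cF_k = \sigma((\btheta^0_i)_{i\le N},\bz_1,\dots,\bz_k)$, but the noisy SGD iterate $\btheta^k_i$ also depends on the Brownian path $(\bW_i(s))_{s\le k\eps}$, so that path must be included in $\cF_k$ (as the paper does) for $\btheta^k$ to be $\cF_k$-measurable and for the identity $\E[\bF_i(\btheta^k;\bz_{k+1})\mid\cF_k]=\bG(\btheta^k_i;\rho^{(N)}_k)$ to hold; with that correction the argument is sound.
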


\begin{proof}[Proof of Proposition \ref{prop:GD_SGD_C}]
We coupled the noise between the GD and SGD such that the noise cancels out. Noticing furthermore that the regularization term does not depend on $\bz_k$ and vanishes in the martingale difference $\bZ_i^l \equiv \bF_i ( \btheta^l ; \bz_{l+1} ) - \E [ \bF_i ( \btheta^l ; \bz_{l+1}) \vert \cF_l]$, where $\cF_k = \sigma ( ( \btheta^0_i )_{i \in [N]}, (\bz_l)_{l=0}^k, (\bW_i(s))_{s \le k \eps} )$ . Therefore the same proof as Proposition \ref{prop:GD_SGD_A} applies here.
\end{proof}

\section{Proof of Theorem \ref{thm:bound_approximating_noisy} part (B)}

We remind the notations used in the proof of Theorem \ref{thm:bound_approximating_noiseless} (B): 
for $\btheta = (a, \bw)$ and $\btheta' = (a', \bw')$,
\[
\begin{aligned}
v(\bw) =& - \E_{y, \bx}[y \sigma(\bx; \bw)],\\
u(\bw, \bw') =& \E_{\bx} [\sigma(\bx; \bw) \sigma(\bx; \bw')],\\
V(\btheta) =& a \cdot v (\bw),\\
U (\btheta , \btheta ' ) =& a a' \cdot u ( \bw , \bw' ) \\
\na_\btheta V(\btheta) =& (v(\bw), a \na_\bw v(\bw)), \\
\na_\btheta U(\btheta, \btheta') =& (a' \cdot u(\bw, \bw'), a a' \cdot \na_\bw u(\bw, \bw')).
\end{aligned}
\]
%
%The assumptions A.1 - A.3 and of Theorem \ref{thm:bound_approximating_noisy} can be restated as follow:
%\begin{enumerate}
%\item $\vert y\vert, \vert \sigma_\star \vert \le K$, hence $\vert v(\bw) \vert, \vert u(\bw, \bw')\vert \le K$. 
%\item $\| \nabla v \|_2, \| \nabla u \|_2, \| \nabla^2 v \|_{\op}, \| \nabla^2 u \|_{\op} \le K$.
%\item $\rho_0$ is $K^2/D$-sub-Gaussian. 
%\end{enumerate}

For convenience, we copy here the properties of the potentials $V(\btheta)$ and $U(\btheta , \btheta')$ listed in Lemma \ref{lem:bound_lip_traject_B}. Denoting $\btheta = (a, \bw)$, $\btheta_1 = (a_1, \bw_1)$ and $\btheta_2 = (a_2, \bw_2)$. We have
\[
\begin{aligned}
\vert V(\btheta) \vert, \| \na V (\btheta ) \|_2 \leq& K( 1 + \vert a \vert),\\
\| \na V (\btheta_1) - \na V (\btheta_2) \|_2 \leq& K \cdot [ 1 + \min \{ \vert a_1\vert, \vert a_2\vert \}] \cdot  \| \btheta_1 - \btheta_2 \|_2, \\
\vert U(\btheta, \btheta') \vert, \| \na_1 U(\btheta, \btheta' ) \|_2 \le& K  (1 + \vert a \vert) (1 + \vert a' \vert), \\
\| \na_{(1, 2)} U(\btheta_1, \btheta ) - \na_{(1, 2)} U(\btheta_2, \btheta ) \|_2 \leq& K  (1 + \vert a\vert ) \cdot [ 1 +  \min \{ \vert a_1\vert,\vert a_2\vert \} ]\cdot  \| \btheta_1 - \btheta_2 \|_2. 
\end{aligned}
\]
%
%
%The proof follows the same scheme as for Theorem \ref{thm:bound_approximating_noiseless} and we will limit ourselves to describing the differences. 

Throughout this section, the assumptions A1 - A6 are understood to hold.
%plus the following one
%\begin{itemize}
%\item[A5.] Let $(a, \bw) \sim \rho_0$, then $\bw$ is %$K_0^2/D$-sub-Gaussian. 
%\end{itemize}
For the sake of simplicity we will write the proof under the following
restriction: 
\begin{itemize}
\item[R1.] The step size function $\xi(t) \equiv 1/2$. 
\end{itemize}
The proof for a general function $\xi(t)$ is obtained by a straightforward adaptation.

We recall the form of the limiting PDE:
\[
\begin{aligned}
\partial_t \rho_t =& 2 \xi (t) \nabla \cdot [\rho(\btheta) \nabla \Psi_{\lambda} (\btheta; \rho_t)] + 2 \xi(t) \tau D^{-1}   \Delta_{\btheta} \rho_t, \\
\Psi_\lambda (\btheta; \rho) =& V (\btheta ) + \int U (\btheta , \btheta' ) \rho (\de \btheta' )  + \frac{\lambda }{2} \norm{\btheta}_2^2.
\end{aligned}
\]
We will consider four different coupled dynamics with same initialization $( \bbtheta^0_i )_{i \leq N}  \sim_{iid} \rho_0$. The integral equations and summation form are as follows:
\begin{itemize}
    \item The \textit{nonlinear dynamics (ND)}: 
    \begin{equation}
    \bbtheta^t_i = \bbtheta^0_i + 2 \int_0^t \xi(s) \bG ( \bbtheta^s_i ; \rho_s ) \de s + \int_0^t \sqrt{ 2 \xi (s) \tau D^{-1} } \de \bW_i (s), \label{eq:traj_PDE_D}
    \end{equation}
    where we denoted $\bG (\btheta ; \rho ) = - \na \Psi_{\lambda}  (\btheta; \rho ) = - \lambda \btheta -  \na V ( \btheta ) - \int \na_{\btheta} U (\btheta , \btheta ' ) \rho (\de \btheta')$, and $\bbtheta \sim \rho_0$ iid.
    
    \item The \textit{particle dynamics (PD)}: 
    \begin{equation}
    \ubtheta^t_i = \ubtheta^0_i + 2 \int_0^t \xi(s) \bG ( \ubtheta^s_i ; \hat \rho^{(N)}_s ) \de s + \int_0^t \sqrt{2 \xi (s) \tau D^{-1} } \de \bW_i (s) ,
    \label{eq:traj_PD_C}
    \end{equation}
    where $\ubtheta^0_i = \bbtheta^0_i$.
    
    \item The \textit{gradient descent (GD)}:
    \[
    \begin{aligned}
    \tbtheta^{k}_i = \tbtheta^{0 }_i + 2\eps \sum_{l = 0}^{k-1} \xi (l \eps ) \bG ( \tbtheta^l_i ; {\Tilde \rho}^{(N)}_l ) + \int_0^{k\eps } \sqrt{2 \xi ([s]) \tau D^{-1}} \de \bW_i (s) ,
    \end{aligned}
    \]
    where $\tbtheta^0_i = \bbtheta^0_i$.
    
    \item The \textit{stochastic gradient descent (SGD)}:
    \[
    \btheta^k_i = \btheta^0_i + 2 \eps \sum_{l = 0}^{k-1}\xi (l \eps ) \bF_i ( \btheta^l ; \bz_{l+1} ) + \int_0^{k\eps } \sqrt{ 2 \xi ([s]) \tau D^{-1}} \de \bW_i (s) ,
    \]
    where we defined $\bF_i ( \btheta^k ; \bz_{k+1} ) =  - \lambda \btheta_i^k + (y_{k+1} - \hat{y}_{k+1} ) \na_{\btheta_i } \sigma_\star ( \bx_{k+1} ; \btheta^k_i ) $, and $\btheta^0_i = \bbtheta^0_i$.
\end{itemize}

By Proposition \ref{prop:PDE_ND_D}, \ref{prop:ND_PD_D}, \ref{prop:PD_GD_D}, \ref{prop:GD_SGD_D}, there exists constants $K$, such that with probability at least $1 - e^{-z^2}$, we have
\[
\begin{aligned}
\sup_{t \in [0, T]} \vert R_N(\bbtheta^t) - R(\rho_t) \vert \le&  Ke^{KT} [ \log^{3/2} (NT) + z^3 ] / \sqrt{N} , \\
\sup_{t \in [0, T]} \vert R_N(\ubtheta^t) - R_N(\bbtheta^t)\vert \le& K e^{e^{KT} [ \sqrt{\log N } + z^2 ]} [ \sqrt{D\log N} + \log^{3/2} (NT) + z^5 ]/\sqrt{N}, \\
\sup_{k \in [0, T/\eps] \cap \N} \vert R_N(\ubtheta^{k\eps}) - R_N(\tbtheta^k) \vert \le& Ke^{e^{KT}[\sqrt{\log N} +z^2]} [ \log ( N (T/\eps \vee 1))  + z^6 ] \sqrt{\eps}, \\
\sup_{k \in [0, T/\eps] \cap \N} \vert R_N(\tbtheta^{k}) - R_N(\btheta^k) \vert \le& Ke^{e^{KT}[\sqrt{\log N} +z^2]} [ \sqrt{D}\log N + \log^{3/2} N + z^5 ] \sqrt{\eps }  . \\
\end{aligned}
\]
Combining these inequalities gives the conclusion of Theorem \ref{thm:bound_approximating_noisy} (B). In the following subsections, we prove all the above interpolation bounds, under the setting of Theorem \ref{thm:bound_approximating_noisy} (B).

\subsection{Technical lemmas}

The bounds on the potentials $U, V$, and their derivatives scales with the coefficients $a$, which can be arbitrarily large with non-zero probability due to the Brownian noise. In our analysis we will need to keep track of the maximum and the first moment of $|a|$ for each of the different dynamics. In this section we will show that there exists high probability bounds along the trajectories.

We recall the following notations introduced in Appendix Section \ref{sec:technical_theorem_2_A},
\[
\begin{aligned}
\Theta_{\infty} \equiv \max_{i \leq N} \| \btheta^0_i \|_2, \qquad \Theta_1 \equiv \frac{1}{N} \sum_{i=1}^N  \| \btheta^0_i \|_2. 
\end{aligned}
\]
and on the Brownian motion,
\begin{align*}
 \obW_i (t) & \equiv \sqrt{\frac{\tau}{D}}\bW_i (t)  =\int_0^t \sqrt{\frac{\tau}{D}} \de \bW_i (s),  & W_{\infty} & \equiv \max_{i \leq N} \sup_{t \leq T} \| \obW_{i} (t) \|_2,  & W_1 & \equiv \sup_{t \leq T} \frac{1}{N} \sum_{i=1}^N   \| \obW_{i} (t) \|_2.
 \end{align*}
For convenience, we recall here the bounds derived in Lemma \ref{lem:bound_noise_initialization_D}:
\[
\begin{aligned}
& \P \Big( \max ( \Theta_{\infty} , W_{\infty}) \leq K(1 + T) \big[ \sqrt{\log N} + z \big]  \Big) \geq 1 - e^{-z^2}, \\
& \P \Big( \max ( \Theta_1 , W_1) \leq K(1 + T) \big[ 1 + z \big] \Big) \geq 1 - e^{-z^2}.
\end{aligned}
\]

In the following lemma, and throughout the proof, we will denote $\bbara^t \equiv (\bara_1^t , \ldots , \bara_N^t) \in \R^N$ the vector of the $\bara_i^t $ variables of the nonlinear dynamics. Similarly we will denote $\bundera^t$, $\btilda^k$ and $\ba^k $ the vectors of variable $a$ associated to the particle dynamics, gradient descent and stochastic gradient descent. We will furthermore use $\|\ba\|_1$,
$\|\ba\|_{\infty}$ to denote the $\ell_1$ and $\ell_{\infty}$ norms of the coefficients vector. 

\begin{lemma}\label{lem:bound_asquared_noisy_D}
There exists a constant $K$, such that denoting $M_2(t) = K e^{Kt}$, we have
\[
\sup_{s \in [0,t]} \int a^2 \rho_s (\de a) \leq M_2(t).
\]
Furthermore, letting $(\bara^t, \bbw^t) \sim \rho_t$, then $\bara^t$ is $M_2(t)$-sub-Gaussian.
\end{lemma}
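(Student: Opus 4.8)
\textbf{Proof strategy for Lemma \ref{lem:bound_asquared_noisy_D}.}
The plan is to derive a closed differential inequality for the second moment $m_2(t) \equiv \int a^2 \rho_t(\de a) = \int a^2 \rho_t(\de\btheta)$ along the noisy PDE \eqref{eq:PDEnoisy}, and then apply Gronwall's lemma (version (i) from the Notations appendix). First I would track $\bara_i^t$ along the nonlinear dynamics \eqref{eq:traj_PDE_D}, since $\bara_i^t \sim \rho_t$ (first coordinate). Writing $\btheta = (a,\bw)$, the first-coordinate drift is
\[
\frac{\de}{\de t}\bara_i^t = -2\xi(t)\Big[\lambda\,\bara_i^t + v(\bbw_i^t) + \int u(\bbw_i^t,\bw')\,a'\,\rho_t(\de\btheta')\Big] + \sqrt{2\xi(t)\tau/D}\,\de W_i^{(1)}(s),
\]
where I used $\na_a V(\btheta) = v(\bw)$ and $\na_a U(\btheta,\btheta') = a' u(\bw,\bw')$. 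By Itô's formula applied to $(\bara_i^t)^2$, taking expectations (the martingale part vanishes), and using $|v|, |u| \le K$ from assumption A3 plus $\xi \le K_1$ and $\lambda \le K_6$, the cross terms are bounded by Cauchy--Schwarz: $\E|\bara_i^t\cdot v(\bbw_i^t)| \le K\,(\E[(\bara_i^t)^2])^{1/2} \le K(1+m_2(t))$, and similarly $\E|\bara_i^t \int u\, a'\rho_t(\de\btheta')| \le K\,(\E[(\bara_i^t)^2])^{1/2}\,(\int (a')^2\rho_t)^{1/2} \le K(1+m_2(t))$. The Itô correction term contributes $2\xi(t)\tau/D \le K$. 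The regularization term $-4\xi(t)\lambda(\bara_i^t)^2 \le 0$ only helps. Hence $m_2'(t) \le K(1+m_2(t))$, so by Gronwall $m_2(t) \le (m_2(0)+1)e^{Kt} \le Ke^{Kt}$, using $m_2(0) \le K_4^2$ from A4 (support of $\rho_0$ in $a$ is bounded by $K_4$). Taking the supremum over $s\in[0,t]$ and enlarging $K$ absorbs the monotonicity, giving $\sup_{s\le t}m_2(s) \le M_2(t) = Ke^{Kt}$.

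For the sub-Gaussian claim, I would run the same Itô argument on the exponential moment $\phi(t) \equiv \E[\exp(\mu (\bara_i^t)^2)]$ for small $\mu$, or more cleanly, bound the moment generating function $\E[\exp(s\,\bara_i^t)]$ directly. From the SDE, $\bara_i^t = \bara_i^0 - 2\int_0^t \xi(r)[\lambda \bara_i^r + v(\bbw_i^r) + \int u\,a'\rho_r]\de r + \sqrt{2\tau/D}\int_0^t\sqrt{\xi(r)}\de W_i^{(1)}(r)$. The initial term $\bara_i^0$ is bounded (A4), hence sub-Gaussian with parameter $O(K_4^2)$; the Brownian term is Gaussian with variance $\int_0^t 2\xi(r)\tau/D\,\de r \le Kt/D \le Kt$; and the drift term has magnitude controlled by $\int_0^t K(1 + |\bara_i^r| + (m_2(r))^{1/2})\de r$. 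Using the just-established bound $\sup_{r\le t}m_2(r) \le Ke^{Kt}$ and Gronwall-type control on the random quantity $\sup_{r\le t}|\bara_i^r|$ (analogous to Lemma \ref{lem:continuity_noisy_PDE_C}, which already shows $\sup_{s\le t}\|\bbtheta_i^s\|_2$ has sub-Gaussian tails $\le Ke^{Kt}[\sqrt{\log N}+z]$ for the joint variable, but here without the union bound over $i$ we get a pure sub-Gaussian bound), one concludes $\bara_i^t$ is $M_2(t)$-sub-Gaussian, possibly after enlarging the constant $K$ inside $M_2$.

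\textbf{Main obstacle.} The delicate point is the sub-Gaussian claim rather than the $L^2$ bound: the drift term $-2\int_0^t\xi(r)[\lambda\bara_i^r + \dots]\de r$ couples the tail of $\bara_i^t$ at time $t$ to its entire past trajectory, so one cannot directly close an MGF inequality without first controlling $\sup_{r\le t}|\bara_i^r|$. The cleanest route is a two-step bootstrap: first establish the deterministic $L^2$ bound $m_2(t) \le Ke^{Kt}$ as above (which is self-contained via Gronwall on a deterministic scalar inequality), then feed it into a pathwise Gronwall estimate $\sup_{r\le t}|\bara_i^r| \le (|\bara_i^0| + \sup_{r\le t}|\sqrt{2\tau/D}\int_0^r\sqrt{\xi}\de W_i^{(1)}| + K\int_0^t (m_2(r))^{1/2}\de r)\,e^{Kt}$, and observe the right-hand side is a constant plus a sub-Gaussian (supremum of a scaled Brownian motion, controlled by Doob as in Lemma \ref{lem:bound_noise_initialization_D}) times $e^{Kt}$, hence $\bara_i^t$ is sub-Gaussian with the claimed parameter. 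One must be careful that the sub-Gaussian parameter is stated per-coordinate $\bara_i^t$ (one neuron), so no $\log N$ factor appears — this matches the statement, which only asserts $M_2(t)$-sub-Gaussianity of the marginal law.
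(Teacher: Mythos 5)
Your treatment of the $L^2$ bound is essentially the paper's: differentiate $\int a^2\rho_t(\de a)$, bound the cross terms via Cauchy--Schwarz and the boundedness of $u,v$, drop the nonpositive $-\lambda a^2$ term, and close with Gronwall. That part matches.

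For the sub-Gaussian claim, you take a genuinely different route from the paper, and in fact one that works harder than necessary. You flag as the ``main obstacle'' that the drift of $\bara_i^t$ depends on the whole past of $\bara_i$, and you resolve it with a pathwise Gronwall estimate on $\sup_{r\le t}|\bara_i^r|$ together with a Doob bound on the running supremum of the Brownian part. The paper instead exploits a structural fact you did not use: the only $a$-dependence in the drift of $\bara^t$ is through the \emph{linear} $-\lambda\bara^t$ term; all remaining drift terms, namely $-v(\bbw^t)-\int a'u(\bbw^t,\bw')\rho_t(\de\btheta')$, depend on $\bbw^t$ and $\rho_t$ but not on $\bara^t$ itself, and are bounded by $Ke^{Kt}$ once the $L^2$ bound on $\rho_t$ is in hand. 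Consequently one can apply the integrating factor $e^{\lambda t}$ and get the explicit Duhamel decomposition
\[
\bara^t = e^{-\lambda t}\bara^0 + \int_0^t e^{-\lambda(t-s)}K(\bbw^s,\rho_s)\,\de s + \int_0^t e^{-\lambda(t-s)}\sqrt{\tau/D}\,\de W^a(s),
\]
in which the first piece is bounded (hence sub-Gaussian with fixed parameter $K^2$), the second is a deterministic-size $Ke^{Kt}$-bounded random variable (hence sub-Gaussian), and the third is an explicit Gaussian with variance $\le Kt$. No Gronwall is needed, and no supremum over $r\le t$: the sub-Gaussianity of the marginal $\bara^t$ drops out directly. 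Your approach inflates the Brownian contribution by the Gronwall factor $e^{Kt}$ and requires a Doob maximal inequality, both of which the integrating-factor route avoids, though both end up inside the same $M_2(t)=Ke^{Kt}$ envelope so the conclusion is unaffected. The integrating-factor observation is worth internalizing, since the paper reuses the same decomposition when controlling $\|\bbara^t\|_\infty$ and $\|\bbara^t\|_1$ in Lemma \ref{lem:bound_a_trajectories_noisy_D}.
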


\begin{proof}[Proof of Lemma \ref{lem:bound_asquared_noisy_D}]
Denote $A(t) = \int a^2 \rho_t (\de a)/2$. For simplicity, we will directly take the derivative of this function. This computation can be made rigorous by considering smooth approximation of a truncated squared function, with bounded second derivative, and using the definition of weak solution. We get:
\[
\frac{\de}{\de t} A(t) =  \tau /D - \int \Big[ \lambda a^2 + a \cdot  v ( \bw )  + a \cdot \int a'  u ( \bw , \bw' ) \rho_t ( \de \btheta' )  \Big] \rho_t ( \de \btheta ) \leq K + K A(t), 
\]
which implies by applying Gronwall's lemma we have
\[
\sup_{s \in [0,t]} \int a^2 \rho_s (\de a) \leq Ke^{Kt}.
\]

Let us consider the nonlinear dynamics for the variable $\bara^t \sim \rho_t$:
\[
\de \bara^t = - \lambda \bara^t \de t + \Big[ -  v ( \bbw^t )  - \int a' u ( \bbw^t , \bw') \rho_t ( \de \btheta' ) \Big] \de t  + \sqrt{\frac{\tau}{D}} \de W^a (t).
\]
Denote $u_{\lambda}(t) = \bara^t e^{\lambda t}$ and
\[
K(\bbw^t,\rho_t) = -  v ( \bbw^t )  - \int a' u ( \bbw^t , \bw') \rho_t ( \de \btheta' ),
\]
we get
\[
\de u_{\lambda}(t) = e^{\lambda t} K(\bbw^s ,\rho_t) \de t  + e^{\lambda t} \sqrt{\frac{\tau}{D}} \de W^a (t),
\]
and in integration form we have
\[
u_{\lambda}(t) = u_{\lambda}(0) + \int_0^t e^{\lambda s} K(\bbw^s ,\rho_s) \de s + \int_0^t   e^{\lambda s} \sqrt{\frac{\tau}{D}} \de W^a (s).
\]
We deduce that we can rewrite $\bara^t \sim \rho_t$ as the sum of three random variables:
\[
\bara^t = \underbrace{e^{-\lambda t} a^0}_{\Gamma_1} + \underbrace{\int_0^t e^{- \lambda (t-s)} K(\bbw^s ,\rho_s) \de s}_{\Gamma_2} + \underbrace{\int_0^t   e^{- \lambda (t-s) } \sqrt{\frac{\tau}{D}} \de W^a (s)}_{\Gamma_3}.
\]
By assumption $a_0$ is $K$-bounded, and thus $\Gamma_1$ is $K^2$-sub-Gaussian. By the boundedness of $u$ and $v$, Cauchy Schwartz inequality, and by $A(t) \le M_2(t)$, then for $s \leq t$, we have $\vert K(\bbw^s,\rho_s) \vert \leq Ke^{Kt}$, hence the random variable $\Gamma_2$ is $K e^{Kt}$-bounded and thus $K e^{Kt}$-sub-Gaussian. The random variable $\Gamma_3$ is a Gaussian random variable with variance
\[
\text{Var} (\Gamma_3) = \int_0^t  e^{- 2 \lambda (t-s) }\frac{\tau}{D } \de s \leq Kt.
\]
We deduce that $\bara^t$ is the sum of three (dependent) sub-Gaussian random variables with parameters $K^2, K e^{Kt}, Kt$ respectively, and therefore the sum $\bara^t$ is $Ke^{Kt}$-sub-Gaussian. 
\end{proof}

\begin{lemma}\label{lem:bound_a_trajectories_noisy_D}
There exists a constant $K$ such that with probability at least $1 - e^{-z^2}$, we have
\[
\begin{aligned}
 &\max \Big( \sup_{t \in [0,T]} \| \bbara^t \|_1 , \sup_{t \in [0,T]} \| \bundera^t\|_1 , \sup_{k \in [0,T/\eps]\cap\N} \| \btilda^k \|_1 ,  \sup_{k \in [0,T/\eps]\cap\N} \| \ba^k \|_1 \Big) \leq N \cdot Ke^{KT} [ 1 + z ] \equiv N \cdot M_1, \\
&\max \Big( \sup_{t \in [0,T]} \| \bbara^t \|_{\infty} , \sup_{t \in [0,T]} \| \bundera^t \|_{\infty}, \sup_{k \in [0,T/\eps]\cap\N} \| \btilda^k \|_{\infty} ,  \sup_{k \in [0,T/\eps]\cap\N} \| \ba^k \|_{\infty}   \Big) \leq Ke^{KT} [ \sqrt{\log N} + z ] \equiv M_{\infty}.
\end{aligned}
\]

\end{lemma}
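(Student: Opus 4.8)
The plan is to bound the coefficient vectors $\bbara^t,\bundera^t,\btilda^k,\ba^k$ one dynamics at a time, in each case isolating the scalar equation obeyed by the $a$-coordinate. Using $\na_\btheta V(\btheta)=(v(\bw),a\na_\bw v(\bw))$ and $\na_\btheta U(\btheta,\btheta')=(a'u(\bw,\bw'),aa'\na_\bw u(\bw,\bw'))$, the first component of the drift $\bG(\btheta;\mu)=-\na\Psi_\lambda(\btheta;\mu)$ equals $-\lambda a-v(\bw)-\int a'u(\bw,\bw')\,\mu(\de\btheta')$, and since $\|v\|_\infty,\|u\|_\infty\le K$ by Lemma~\ref{lem:bound_lip_traject_B} its modulus is at most $\lambda|a|+K+K\int|a'|\,\mu(\de\btheta')$. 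Thus the only term not of order $O(1+|a|)$ is the mean-field interaction, which I would control as follows: for the nonlinear dynamics ($\mu=\rho_s$) one has $\int|a'|\rho_s(\de\btheta')\le(\int a'^2\rho_s(\de\btheta'))^{1/2}\le (M_2(s))^{1/2}\le Ke^{Ks}$ by Lemma~\ref{lem:bound_asquared_noisy_D}; for the particle, GD and SGD dynamics the interaction is instead $\frac{1}{N}\sum_j|a_j|\,|u(\cdot,\cdot)|\le K\|\ba\|_1/N$, i.e.\ it couples back to the $\ell_1$ norm of the same vector, so the estimate must be bootstrapped through the $\ell_1$ bound.

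For the nonlinear dynamics I would use the integrating factor to write $\bara_i^t=e^{-\lambda t}\bara_i^0+\int_0^te^{-\lambda(t-s)}\big[-v(\bbw_i^s)-\int a'u(\bbw_i^s,\bw')\rho_s(\de\btheta')\big]\de s+\int_0^te^{-\lambda(t-s)}\sqrt{\tau/D}\,\de W^a_i(s)$ (with $W^a_i$ the $a$-component of $\bW_i$), and integration by parts bounds the stochastic term by $(1+\lambda T)\sup_{s\le T}\|\obW_i(s)\|_2\le K(1+T)W_\infty$ in $\ell_\infty$, and by $K(1+T)W_1$ after averaging over $i$. Combined with $\|\bbara^0\|_\infty\le\Theta_\infty$ (resp.\ $\frac1N\|\bbara^0\|_1\le\Theta_1$) and the interaction bound $Ke^{Ks}$, this gives $\sup_{t\le T}\|\bbara^t\|_\infty\le Ke^{KT}(\Theta_\infty+W_\infty)$ and $\sup_{t\le T}\frac1N\|\bbara^t\|_1\le Ke^{KT}(1+\Theta_1+W_1)$, after which Lemma~\ref{lem:bound_noise_initialization_D} delivers the advertised bounds. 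For the particle dynamics one first closes a scalar inequality for $\phi(t)=\sup_{s\le t}\frac1N\|\bundera^s\|_1$, namely $\phi(t)\le\Theta_1+K\int_0^t(1+\phi(s))\de s+Ke^{KT}W_1$, so Gronwall yields $\phi(T)\le Ke^{KT}[1+z]=:M_1$; then substituting $M_1$ into $|\undera_i^t|\le\Theta_\infty+K\int_0^t(1+|\undera_i^s|+M_1)\de s+Ke^{KT}W_\infty$ and applying Gronwall again gives the $\ell_\infty$ bound. The GD case is identical with the continuous Gronwall replaced by the discrete (summation) version recalled in the Notations appendix, taking $\eps$ small enough that $K\eps<1$.

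The SGD case is the main obstacle, because the one-sample drift $\bF_i^a(\btheta^l;\bz_{l+1})$ has a martingale fluctuation $Z_i^l\equiv\bF_i^a(\btheta^l;\bz_{l+1})-\E[\bF_i^a(\btheta^l;\bz_{l+1})\mid\cF_l]$ of size $\le K(1+\|\ba^l\|_1/N)\le K(1+\|\ba^l\|_\infty)$, which is not a priori bounded, so a direct Azuma–Hoeffding bound is circular. I would resolve this exactly as in the proof of Proposition~\ref{prop:GD_SGD_B}: fix $M_\infty=Ke^{KT}[\sqrt{\log N}+z]$ with a sufficiently large constant, set the stopping time $T_a=\min\{l:\|\ba^l\|_\infty\ge M_\infty\}$, note that the stopped martingale $\eps\sum_{l<k\wedge T_a}Z_i^l$ has increments bounded by $K\eps M_\infty$, and apply the Azuma–Hoeffding inequality (Lemma~\ref{lem:Azuma}) plus a union bound over $i\in[N]$ to get $\max_i\sup_k\|\eps\sum_{l<k\wedge T_a}Z_i^l\|_2\le KM_\infty\sqrt{T\eps}\,[\sqrt{\log N}+z]$ with probability $\ge1-e^{-z^2}$. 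On this event, intersected with the events of Lemma~\ref{lem:bound_noise_initialization_D}, one runs the $\ell_1$-then-$\ell_\infty$ discrete Gronwall argument for $k\le T_a\wedge(T/\eps)$ to obtain $\frac1N\|\ba^k\|_1\le M_1$ and $\|\ba^k\|_\infty\le Ke^{KT}(\Theta_\infty+M_1+W_\infty)+Ke^{KT}M_\infty\sqrt{T\eps}[\sqrt{\log N}+z]$; by the choice of $M_\infty$ and the assumed smallness of $\eps$ the right-hand side is strictly less than $M_\infty$, forcing $T_a>T/\eps$, so the bounds propagate to all $k\le T/\eps$. Taking the intersection of the $O(1)$ good events and adjusting constants completes the proof; the only genuinely delicate step is making this SGD bootstrap close without losing more than the stated $e^{KT}$ and $\sqrt{\log N}$ factors, every other step being a routine application of (discrete or continuous) Gronwall together with Lemmas~\ref{lem:bound_asquared_noisy_D} and~\ref{lem:bound_noise_initialization_D}.
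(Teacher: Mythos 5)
Your argument for the nonlinear dynamics, particle dynamics, and GD cases is essentially the paper's: bound $\int|a'|\rho_s(\de\btheta')$ by $\sqrt{M_2(s)}$ via Lemma~\ref{lem:bound_asquared_noisy_D} for the nonlinear dynamics, bootstrap the $\ell_1$ average before the $\ell_\infty$ norm for the empirical dynamics, and close via Gronwall together with Lemma~\ref{lem:bound_noise_initialization_D}. The integrating factor you use for the nonlinear dynamics gives the same estimate as the paper's direct Gronwall on $|\bba_i^t|$, since $e^{-\lambda(t-s)}\le 1$; this is a cosmetic difference.

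The SGD case is where you deviate, and you are over-complicating a step that is actually deterministic. The $a$-component of the one-sample drift is
\[
\bF_i^a(\btheta^l;\bz_{l+1}) = -\lambda a_i^l + (y_{l+1} - \hat y_{l+1})\,\sigma(\bx_{l+1};\bw_i^l),
\]
and since $|y_{l+1}|\le K_2$, $|\sigma|\le K_2$, and $|\hat y_{l+1}| = |(1/N)\sum_j a_j^l\sigma(\bx_{l+1};\bw_j^l)|\le K_2\|\ba^l\|_1/N$, one has the \emph{pathwise} bound
\[
|\bF_i^a(\btheta^l;\bz_{l+1})| \le \lambda|a_i^l| + K\big(1 + \|\ba^l\|_1/N\big)
\]
for every realization of $\bz_{l+1}$, exactly parallel to the GD case. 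The paper feeds this directly into a summation inequality $|a_i^k|\le|a_i^0|+KT+K\eps\sum_l|a_i^l|+K\eps\sum_l(\|\ba^l\|_1/N)+|\obW_i^a(k\eps)|$ and closes with the discrete Gronwall plus Lemma~\ref{lem:bound_noise_initialization_D}, with no martingale concentration whatsoever. Your stopping-time/Azuma--Hoeffding argument is therefore unnecessary machinery; it is not wrong, but it comes at a real cost: the self-consistency step needs $Ke^{KT}M_\infty\sqrt{T\eps}[\sqrt{\log N}+z]$ to be strictly smaller than $M_\infty$, i.e.\ it only closes under a smallness condition on $\eps$ (of the form $\eps\lesssim 1/(Te^{KT}[\sqrt{\log N}+z]^2)$), whereas the stated lemma and the paper's proof hold for every $\eps$. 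So the place you identify as ``the main obstacle'' is in fact the easiest part once you notice that the bounded activation makes the update deterministic, and your version proves a strictly weaker statement than the lemma asserts.
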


\begin{proof}[Proof of Lemma \ref{lem:bound_a_trajectories_noisy_D}]
Let us start with the non-linear dynamics trajectories. We have in integral form:
\begin{equation}
\begin{aligned}
\abs{\bba^t_i} & = \Big \vert \bba^0_i + \int_0^t  \Big[ - \lambda \bba^s_i - v ( \bbw^s_i ) - \int a u ( \bbw^s_i , \bw) \rho_s ( \de  \btheta ) \Big]  \de s + \int_0^t \sqrt{\frac{\tau }{D} } \de W^a_i (s) \Big\vert \\
& \leq \vert \bba^0_i\vert + K \int_0^t \vert \bba^s_i \vert ds + KT \sqrt{M_2} + \vert \oW^a_i (t) \vert \label{eq:traj_a_PDE_noisy_D} \\
& \leq K \int_0^t \abs{\bba^s_i} ds + \Theta_{\infty} + KT \sqrt{M_2} + W_{\infty},
\end{aligned}
\end{equation}
where we recall that $\oW^a_i (t) = \sqrt{\tau/D }  W^a_i (t) $. Applying Gronwall's lemma to $\Delta (t) = \sup_{s \in [0,t]} \vert \bba^s_i \vert$ with Lemma \ref{lem:bound_noise_initialization_D} gives:
\[
\Delta (T) \leq K e^{KT} [\sqrt{\log N}+z],
\]
while summing (\ref{eq:traj_a_PDE_noisy_D}) over $i$ yields:
\[
(\| \bbara^t \|_1 /N ) \leq \Theta_1 + K \int_0^t (\| \bbara^s \|_1/N) \de s  + Ke^{KT} + W_1,
\]
and by Gronwall's lemma: $\sup_{t \in [0,T]} \| \bbara^s \|_1 /N \leq Ke^{KT} [ 1 + z ]$. The same proof applies to the other trajectories and we will only write down the corresponding inequality on the integral or summation form:
\[
\begin{aligned}
\vert \ua^t_i \vert & \leq \vert a^0_i \vert + KT + K \int_0^t \vert \undera^s_i\vert \de s +K \int_0^t (\| \bundera^s \|_1 /N) \de s + \vert \oW^a_i (t) \vert, \\
\vert \ta^k_i \vert & \leq \vert a^0_i\vert + KT + K \eps  \sum_{l=1}^{k-1} \vert \ta^l_i \vert +K \eps  \sum_{l=1}^{k-1} (\| \btilda^l \|_1 /N) + \vert  \oW^a_i (t) \vert, \\
\vert a^k_i \vert & \leq \vert a^0_i\vert + KT + K \eps  \sum_{l=1}^{k-1} \vert a^l_i\vert +K \eps  \sum_{l=1}^{k-1} (\| \ba^l \|_1/N ) + \vert \oW^a_i (t) \vert.
\end{aligned}
\]
\end{proof}

\begin{lemma}\label{lem:bound_increment_noisy_D}
There exists a constant $K$ such that:
\[
\begin{aligned}
& \P \Big( \sup_{i \leq N} \sup_{k\in [0,T/\eps] \cap \N} \sup_{u \in [0, \eps]} \|\bbtheta^{k\eps +u}_i - \bbtheta^{k\eps}_i  \|_2 \leq Ke^{KT} \big[ \sqrt{\log ( N (T/\eps \vee 1))} + z \big] \sqrt{\eps} \Big) \leq 1 - e^{-z^2}, \\
& \P \Big( \sup_{i \leq N} \sup_{k\in [0,T/\eps] \cap \N} \sup_{u \in [0, \eps]} \| \ubtheta^{k\eps +u}_i - \ubtheta^{k\eps}_i  \|_2 \leq Ke^{KT} \big[ \sqrt{\log ( N (T/\eps \vee 1))} + z \big] \sqrt{\eps} \Big) \leq 1 - e^{-z^2}. 
\end{aligned}
\]
Furthermore, we have for $t,h \geq 0 , t+h \leq T$,
\[
W_2 (\rho_t , \rho_{t+h} ) \leq  \Big( \E [ \| \bbtheta^{t} - \bbtheta^{t+h}\|_2^2 ] \Big)^{1/2} \leq K e^{KT} \sqrt{h}.
\]
\end{lemma}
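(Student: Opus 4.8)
The plan is to adapt the proof of Lemma~\ref{lem:continuity_noisy_PDE_C}, the only new difficulty being that the drift $\bG$ is now unbounded (it grows linearly in the coefficients $a_i$), so the argument must be run on a good event $E_\good$ of probability at least $1-e^{-z^2}$ on which, simultaneously, the bounds of Lemma~\ref{lem:bound_noise_initialization_D} on $\Theta_\infty$ and $W_\infty$, the bound $\sup_{s\le T}\int a^2\rho_s(\de a)\le M_2(T)=Ke^{KT}$ of Lemma~\ref{lem:bound_asquared_noisy_D}, and the bounds $M_\infty=Ke^{KT}[\sqrt{\log N}+z]$ and $M_1=Ke^{KT}[1+z]$ of Lemma~\ref{lem:bound_a_trajectories_noisy_D} on the $\ell_\infty$- and (normalized) $\ell_1$-norms of the coefficient vectors of all four dynamics hold; intersecting finitely many events of this type only changes the constant $K$ inside the definition of $z$.

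\emph{Step 1 (uniform norm bounds on trajectories).} From the integral equation~\eqref{eq:traj_PDE_D}, the estimates $\|\na V(\btheta)\|_2\le K(1+|a|)$ and $\|\na_\btheta U(\btheta,\btheta')\|_2\le K|a'|(1+|a|)$ of Lemma~\ref{lem:bound_lip_traject_B}, the Cauchy--Schwarz bound $\int|a'|\rho_s(\de\btheta')\le\sqrt{M_2(T)}$, and $\lambda\le K_6$, one obtains for each fixed $i$
\[
\|\bbtheta^t_i\|_2\le\|\bbtheta^0_i\|_2+W_\infty+K\int_0^t\Big(\lambda\|\bbtheta^s_i\|_2+(1+|\bara^s_i|)\big(1+\sqrt{M_2(T)}\big)\Big)\de s .
\]
Using $|\bara^s_i|\le M_\infty$ on $E_\good$ and Gronwall's lemma yields $\sup_{s\le T}\max_{i\le N}\|\bbtheta^s_i\|_2\le Ke^{KT}[\sqrt{\log N}+z]$, and the same computation with $\int|a'|\urhoN_s(\de\btheta')=\|\bundera^s\|_1/N\le M_1$ in place of $\sqrt{M_2(T)}$ in the interaction term gives the identical bound for the particle trajectories $\ubtheta^s_i$. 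Consequently, on $E_\good$ the drifts $\|\bG(\bbtheta^s_i;\rho_s)\|_2$ and $\|\bG(\ubtheta^s_i;\urhoN_s)\|_2$ are bounded, uniformly in $s\le T$ and $i\le N$, by $Ke^{KT}$ times a polynomial in $\sqrt{\log N}$ and $z$.

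\emph{Step 2 (increment bounds).} For $t=k\eps+u$ with $0\le u\le\eps$, \eqref{eq:traj_PDE_D} gives
\[
\|\bbtheta^{k\eps+u}_i-\bbtheta^{k\eps}_i\|_2\le 2\int_{k\eps}^{k\eps+u}|\xi(s)|\,\|\bG(\bbtheta^s_i;\rho_s)\|_2\,\de s+\|\obW_{i,k}(u)\|_2 ,
\]
where $\obW_{i,k}(u)=\obW_i(k\eps+u)-\obW_i(k\eps)$ has each coordinate of variance at most $\tau\eps/D\le K\eps/D$. By Step~1 the drift contribution is at most $\eps$ times $Ke^{KT}$ times a polynomial in $\sqrt{\log N}$ and $z$, hence (using $\eps\le1$ and the smallness of $\eps$) at most $Ke^{KT}[\sqrt{\log(N(T/\eps\vee1))}+z]\sqrt{\eps}$. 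For the noise contribution, $u\mapsto\exp\{\mu\|\obW_{i,k}(u)\|_2^2\}$ is a submartingale, so Doob's inequality together with the sub-Gaussian-norm computation from the proof of Lemma~\ref{lem:bound_noise_initialization_D} and a union bound over $i\le N$ and $k\in[0,T/\eps]\cap\N$ give
\[
\P\Big(\max_{i\le N}\sup_{k\in[0,T/\eps]\cap\N}\sup_{0\le u\le\eps}\|\obW_{i,k}(u)\|_2\ge K\sqrt{\eps}\,[\sqrt{\log(N(T/\eps\vee1))}+z]\Big)\le e^{-z^2}.
\]
Summing the two contributions and intersecting with $E_\good$ proves the first displayed inequality of the lemma; the particle-dynamics inequality follows in exactly the same way, since its integral equation has the same Brownian term and, by Step~1, satisfies the same drift estimate.

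\emph{Step 3 (Wasserstein bound).} Starting from $W_2(\rho_t,\rho_{t+h})^2\le\E[\|\bbtheta^t_i-\bbtheta^{t+h}_i\|_2^2]$ and the decomposition $\bbtheta^{t+h}_i-\bbtheta^t_i=2\int_t^{t+h}\xi(s)\bG(\bbtheta^s_i;\rho_s)\de s+(\obW_i(t+h)-\obW_i(t))$, it suffices to bound $\E[\sup_{s\le T}\|\bG(\bbtheta^s_i;\rho_s)\|_2^2]$ and to note $\E[\|\obW_i(t+h)-\obW_i(t)\|_2^2]=\tau h\le Kh$. Running the Gronwall estimate of Step~1 for a single particle \emph{without} the maximum over $i$, and using $\E[(\bara^0_i)^2]\le K^2$ (assumption A4), $\E[\|\bbw^0_i\|_2^2]\le K_5^2$ (assumption A5), $M_2(T)=Ke^{KT}$, and the $L^2$ maximal inequality $\E[\sup_{s\le T}\|\obW_i(s)\|_2^2]\le 4\tau T\le KT$, yields $\E[\sup_{s\le T}\|\bbtheta^s_i\|_2^2]\le Ke^{KT}$ and hence $\E[\sup_{s\le T}\|\bG(\bbtheta^s_i;\rho_s)\|_2^2]\le Ke^{KT}$. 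Plugging in and using $h\le T$ (so that $h^2e^{KT}\le h\,e^{K'T}$) gives $W_2(\rho_t,\rho_{t+h})^2\le Ke^{KT}h$, which is the claim. The main obstacle throughout is the one absent from the bounded-coefficient Lemma~\ref{lem:continuity_noisy_PDE_C}: because the $a_i$ are Brownian-forced they admit no deterministic bound, so everything must live on the good event where the a priori estimates of Lemmas~\ref{lem:bound_asquared_noisy_D}--\ref{lem:bound_a_trajectories_noisy_D} hold, and the Brownian increments must be controlled uniformly over all $O(NT/\eps)$ grid pieces (this union bound is the source of the $\log(N(T/\eps\vee1))$ factor); given those inputs the remainder is a routine adaptation.
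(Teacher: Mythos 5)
Your Steps 1 and 2 match the paper's proof: a Gronwall a priori bound on the trajectories, conditioned on the good event supplied by Lemmas \ref{lem:bound_noise_initialization_D}, \ref{lem:bound_asquared_noisy_D} and \ref{lem:bound_a_trajectories_noisy_D}, followed by an increment bound that splits drift from Brownian motion, with a Doob-type submartingale estimate and a union bound over the $O(NT/\eps)$ grid pieces. Step 3 is where you genuinely depart. The paper obtains the $W_2$ bound by re-running the Step 2 increment estimate for a single particle on a single window --- dropping the union bound --- to get $\P(\|\bbtheta^{t+h}_i-\bbtheta^t_i\|_2\ge Ke^{KT}[1+z]\sqrt h)\le e^{-z^2}$, and then integrates the tail of $\|\bbtheta^{t+h}_i-\bbtheta^t_i\|_2^2$. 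You instead bound the second moment directly via an $L^2$-Gronwall for $\E[\sup_{s\le T}\|\bbtheta^s_i\|_2^2]$, feeding in $M_2(T)$ from Lemma \ref{lem:bound_asquared_noisy_D} and Doob's $L^2$ maximal inequality. Both land at $Ke^{KT}\sqrt h$; the paper's route recycles work, yours stays entirely in $L^2$ and sidesteps the tail change-of-variable. (A point to be careful about in your $L^2$-Gronwall: the interaction term a priori calls for $\E[\sup_s|\bara^s_i|^2]$, not merely $\sup_s\E[|\bara^s_i|^2]$, but this closes because $|\bara^s_i|\le\|\bbtheta^s_i\|_2$.)

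One imprecision you share with the paper, but flag more candidly: replacing $\sqrt{M_2}$ by $M_1=Ke^{KT}[1+z]$ in the Gronwall estimate for the particle trajectories gives a drift bound of order $Ke^{KT}[1+z][\sqrt{\log N}+z]$, which is quadratic in $z$, so the drift contribution $\eps\cdot Ke^{KT}[1+z][\sqrt{\log N}+z]$ does not sit inside the stated linear bracket $Ke^{KT}[\sqrt{\log(N(T/\eps\vee 1))}+z]\sqrt\eps$ uniformly in $z$ for fixed $\eps$. Your phrasing ("a polynomial in $\sqrt{\log N}$ and $z$\dots using the smallness of $\eps$") is more honest than the paper's "the proof for the particle dynamic will be exactly the same", and the paper itself implicitly pays the slack downstream by invoking this lemma with $z\mapsto z^2$ in the proof of Proposition \ref{prop:PD_GD_D}.
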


\begin{proof}[Proof of Lemma \ref{lem:bound_increment_noisy_D}]
We will only show the result for the non-linear dynamic. The proof for the particle dynamic will be exactly the same, upon replacing $\sqrt{M_2}$ by $M_1$.

\noindent
{\bf Step 1.}
Let us consider $\Delta_i (t) \equiv \sup_{s \leq t}  \| \bbtheta^t_i \|_2$ and $\Delta_0 (t) \equiv \sup_{s \leq t} \frac{1}{N}\sum_{i \leq N} \| \bbtheta^t_i \|_2$ :
\begin{align*}
\| \bbtheta^t_i \|_2 & \leq \| \btheta^0_i\|_2 +2K \int_0^t \Big( \lambda \| \bbtheta^s_i \|_2 + K(1 + \vert \bba^s_i \vert ) + K\sqrt{M_2} (1+\vert \bba^s_i \vert ) \Big) \de s + \| \obW_i \|_2 \\
& \leq K   \int_0^t  \| \bbtheta^s_i \|_2  \de s + Ke^{KT} T \sup_{s \in [0,t] } \vert \bba^s_i\vert + \Theta_{\infty} +  W_{\infty},
\end{align*}
which gives, after applying Gronwall's inequality with the bounds of Lemma \ref{lem:bound_noise_initialization_D} and \ref{lem:bound_a_trajectories_noisy_D}:
\begin{align*}
\P \Big( \Delta_i (t)  \leq K e^{KT}   \big[ \sqrt{\log N} + z   \big] \Big) \ge 1 - e^{-z^2}.
\end{align*}
Similarly:
\[
\Delta_0 (t)  \leq K \int_0^t \Delta_0 (s) \de s + K  e^{KT} \sup_{s \in [0,t]} (\| \bbara^s \|_{1}/N)+ \Theta_1 +  W_1,
\]
and thus:
\begin{equation}
\P \Big( \Delta_0 (t) \leq K e^{KT} [ 1 + z  ]  \Big) \ge 1 - e^{-z^2}.
\label{eq:delta_0_D}
\end{equation}

\noindent
{\bf Step 2.} Let us bound $\sup_{0 \leq u \leq \eps} \| \bbtheta_i^{k\eps + u} - \bbtheta_i^{k\eps} \|_2$:
\begin{align*}
\| \bbtheta_i^{k\eps + u} - \bbtheta_i^{k\eps} \|_2 & \leq \Big\| \int_{k\eps}^{k\eps +u} \xi (s) \bG(\bbtheta^s_i ; \rho_s)  \de s  \Big\|_2 + \| \obW_{i,k} (u) \|_2 \\
& \leq Kh \sup_{s \leq T} \left[ \lambda \| \bbtheta^s_i \|_2 + (1 + \sqrt{M_2}) (1 + \abs{\bba^s_i }) \right]  + \| \obW_{i,k} (u) \|_2 ,
\end{align*}
where we defined $\obW_{i,k} (u)  \equiv \int_{k\eps}^{k\eps +u} \sqrt{ \tau /D } \de \bW_i (s)$. By a similar computation as in Lemma \ref{lem:bound_noise_initialization_D}, we have
\[
\P \Big( \max_{i\leq N} \sup_{k \in [0, T/\eps] \cap \N} \sup_{0 \leq u \leq \eps} \| \obW_{i,k} (u) \|_2 \geq 4\sqrt{K \eps} \lb \sqrt{\log ( N (T/\eps \vee 1))} + z \rb \Big) \leq e^{-z^2} .
\]
Injecting this bound in the above inequality yields:
\begin{align*}
\P \Big( \max_{i\leq N} \sup_{k \in [0, T/\eps] \cap \N} \sup_{0 \leq u \leq \eps} \| \bbtheta_i^{k\eps + u} - \bbtheta_i^{k\eps} \|_2 \leq Ke^{KT} \big[ \sqrt{\log ( N (T/\eps \vee 1))} + z \big] \sqrt{\eps} \Big) \geq 1 - e^{-z^2}.
\end{align*}
Another useful bound can be obtained by taking the average over $i \in [N]$:
\begin{align*}
\frac{1}{N} \sum_{i=1}^N \| \bbtheta_i^{k\eps + u} - \bbtheta_i^{k\eps} \|_2  \leq & K \Delta_0(t) + Ke^{KT} \sup_{s \leq T}  \| \bbara^s \|_1   +  \frac{1}{N} \sum_{i=1}^N \| \obW_{i,k} (u) \|_2.
\end{align*}
We get by a similar computation as in Lemma \ref{lem:bound_noise_initialization_D}, we have
\[
\P \Big( \sup_{k \in [0, T/\eps] \cap \N} \sup_{0 \leq u \leq \eps} \frac{1}{N} \sum_{i=1}^N  \| \obW_{i,k} (u) \|_2 \geq 4\sqrt{K \eps} [ \sqrt{\log (T/\eps \vee 1)} + z ] \Big) \leq e^{-z^2} .
\]
We get the following bound:
\begin{equation}
\P \Big(  \sup_{k \in [0, T/\eps] \cap \N} \sup_{0 \leq u \leq \eps} \frac{1}{N} \sum_{i=1}^N \| \bbtheta_i^{k\eps + u} - \bbtheta_i^{k\eps} \|_2 \leq Ke^{KT} [ \sqrt{\log ( T/\eps \vee 1)} + z ] \sqrt{\eps} \Big) \geq 1 - e^{-z^2}.
\label{eq:average_increment_noisy_D}
\end{equation}

\noindent
{\bf Step 3.} We now bound $W_2 (\rho_t , \rho_{t+h} )$:
\[
W_2 (\rho_t , \rho_{t+h} )^2 \leq \E [ \| \bbtheta^t - \bbtheta^{t+h} \|_2^2 ] = \int_0^\infty  \P ( \| \bbtheta^t - \bbtheta^{t+h} \|_2^2 \geq u ) \de u .
\]
Using step 2, we have (where we removed the union bound over $i \in [N]$ and $k \in [0,T/\eps]\cap \N$):
\[
\P \Big( \|\bbtheta_i^{t + h} - \bbtheta_i^{t} \|_2 \geq  Ke^{KT} [ 1 + z ] \sqrt{h} \Big) \leq e^{-z^2}.
\]
Integrating this upper bound on the probability yields the desired inequality.
\end{proof}

\begin{lemma}\label{lem:RN_bound}
There exists a constant K, such that for $\btheta,\btheta' \in \R^{ND}$
\[
\vert R_N(\btheta) - R_N( \btheta' ) \vert \le K (1 +\|\ba \|_{1}/N  + \|\ba'\|_{1} /N + \| \ba' \|_1^2/N^2 ) \max_{i \in [N]} \| \btheta_i - \btheta_i '\|_2.
\]
\end{lemma}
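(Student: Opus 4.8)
The idea is to split $R_N(\btheta)-R_N(\btheta')$ into its $V$-contribution $\frac2N\sum_i\big(V(\btheta_i)-V(\btheta_i')\big)$ and its $U$-contribution $\frac1{N^2}\sum_{ij}\big(U(\btheta_i,\btheta_j)-U(\btheta_i',\btheta_j')\big)$, bound each by a telescoping that is careful about the \emph{order} in which the coefficients $a$ are replaced by their primed versions, and then recognize the resulting empirical averages as $\|\ba\|_1/N$, $\|\ba'\|_1/N$, $\|\ba'\|_1^2/N^2$. Throughout I use the elementary facts (already established inside the proof of Lemma \ref{lem:bound_lip_traject_B} from A1--A3): the scalar functions satisfy $|v(\bw)|,|u(\bw_1,\bw_2)|\le K$ and are $K$-Lipschitz in each slot, e.g. $|v(\bw)-v(\bw')|\le K\|\bw-\bw'\|_2$ and $|u(\bw_1,\bw)-u(\bw_2,\bw)|\le K\|\bw_1-\bw_2\|_2$. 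Write $M\equiv\max_{i\in[N]}\|\btheta_i-\btheta_i'\|_2$.

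For the $V$-term, with $V(\btheta)=a\,v(\bw)$ I would use the two-step expansion $a_iv(\bw_i)-a_i'v(\bw_i')=(a_i-a_i')v(\bw_i)+a_i'\big(v(\bw_i)-v(\bw_i')\big)$; boundedness and Lipschitzness of $v$, together with $|a_i-a_i'|\le\|\btheta_i-\btheta_i'\|_2$, give $|V(\btheta_i)-V(\btheta_i')|\le K(1+|a_i'|)\|\btheta_i-\btheta_i'\|_2$ (keeping the \emph{primed} coefficient as the multiplier of the weight increment). Averaging over $i$ yields $\frac2N\sum_i|V(\btheta_i)-V(\btheta_i')|\le 2K\,M\,(1+\|\ba'\|_1/N)$. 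For the $U$-term, with $U(\btheta,\btheta')=a\,a'\,u(\bw,\bw')$ I would telescope in the order ``replace $a_i\to a_i'$, then $a_j\to a_j'$, then $\bw_i\to\bw_i'$, then $\bw_j\to\bw_j'$'':
\[
\begin{aligned}
a_ia_ju(\bw_i,\bw_j)-a_i'a_j'u(\bw_i',\bw_j')
=&\ (a_i-a_i')\,a_j\,u(\bw_i,\bw_j)+a_i'\,(a_j-a_j')\,u(\bw_i,\bw_j)\\
&+a_i'a_j'\big(u(\bw_i,\bw_j)-u(\bw_i',\bw_j)\big)+a_i'a_j'\big(u(\bw_i',\bw_j)-u(\bw_i',\bw_j')\big).
\end{aligned}
\]
Using $|u|\le K$ and the Lipschitz property of $u$, the four terms are at most $K|a_j|\|\btheta_i-\btheta_i'\|_2$, $K|a_i'|\|\btheta_j-\btheta_j'\|_2$, $K|a_i'||a_j'|\|\btheta_i-\btheta_i'\|_2$, $K|a_i'||a_j'|\|\btheta_j-\btheta_j'\|_2$, respectively. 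Summing over $i,j$, dividing by $N^2$ and bounding each $\|\btheta_i-\btheta_i'\|_2\le M$, these become $KM\|\ba\|_1/N$, $KM\|\ba'\|_1/N$, $KM\|\ba'\|_1^2/N^2$, $KM\|\ba'\|_1^2/N^2$. Adding the $V$-contribution and absorbing numerical constants into $K$ gives the stated bound.

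The only delicate point — and the one I would flag as the ``obstacle'' — is precisely this ordering. If one instead changed a weight $\bw_i\to\bw_i'$ while an unprimed $a_j$ was still present, the corresponding Lipschitz term would be $\sim |a_j|\,|a_i'|\,\|\btheta_i-\btheta_i'\|_2$, and after averaging this produces the cross term $(\|\ba\|_1/N)(\|\ba'\|_1/N)$, which is \emph{not} dominated by the right-hand side of the claim (only $\|\ba'\|_1^2/N^2$ is available, not $\|\ba\|_1\|\ba'\|_1/N^2$ nor $\|\ba\|_1^2/N^2$). Replacing \emph{all} outer coefficients $a_i,a_j$ by $a_i',a_j'$ before touching the weights forces every weight-Lipschitz increment to carry the factor $|a_i'||a_j'|$, so the quadratic term stays purely in $\ba'$; the coefficient-replacement steps then contribute only linearly, and only the very first of them ($a_i\to a_i'$, with $a_j$ still unprimed) produces an unprimed factor, which is exactly the $\|\ba\|_1/N$ term in the statement. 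Everything else is a routine application of the boundedness and Lipschitz estimates from Lemma \ref{lem:bound_lip_traject_B}.
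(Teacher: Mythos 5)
Your proof is correct and follows essentially the same route as the paper: split into $V$- and $U$-contributions, then telescope the product $a_i a_j u(\bw_i,\bw_j) - a_i'a_j'u(\bw_i',\bw_j')$ by replacing both coefficients before touching the weights, so that the weight-Lipschitz increments carry the factor $|a_i'||a_j'|$ (the paper uses the same three-term telescope, merely swapping the roles of the $i$ and $j$ replacements, which is symmetric under the double sum). Your remark about why the replacement order matters — that touching a weight while an unprimed $a_j$ remains would produce the cross term $\|\ba\|_1\|\ba'\|_1/N^2$, which is not dominated by the stated right-hand side — is accurate and explains the asymmetry in the bound, though the paper does not spell it out.
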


\begin{proof}[Proof of Lemma \ref{lem:RN_bound}] We have
\begin{equation*}
\begin{aligned}
& \vert R_N(\btheta) - R_N( \btheta ) \vert \\ \leq &\frac{2}{N} \sum_{i=1}^N \abs{a_i  v ( \bw_i ) - a'_i v ( \bw'_i) } +\frac{1}{N^2} \sum_{i,j =1}^N \vert a_i a_j u(\bw_i, \bw_j) - a'_i a'_j u (\bw'_i, \bw'_j) \vert \\
  \leq & \frac{2}{N}  \sum_{i=1}^N K (\vert a'_i - a_i \vert +  \vert a'_i \vert \| \bw_i - \bw'_i \|_2 ) \\
  & + \frac{1}{N^2} \sum_{i,j =1}^N K \lb \vert a_i \vert \vert a_j - a'_j \vert 
 +  \vert a'_j \vert  \vert a_i - a'_i \vert + \vert a'_i a'_j \vert ( \| \bw_i - \bw'_i \|_2 + \| \bw_j - \bw'_j \|_2 )\rb \\
 \leq & K (1 +\|\ba \|_{1}/N  + \|\ba'\|_{1} /N + \| \ba' \|_1^2/N^2 ) \max_{i \in [N]} \| \btheta_i - \btheta_i '\|_2.
\end{aligned}
\end{equation*}

\end{proof}

\subsection{Bound between PDE and nonlinear dynamics}

\begin{proposition}[PDE-ND]\label{prop:PDE_ND_D}
There exists a constant $K$ such that
\[
\P \Big( \sup_{t \in [0, T]} \vert R_N(\bbtheta^t) - R(\rho_t) \vert \le Ke^{KT} \lb \log^{3/2} (NT) + z^3 \rb / \sqrt{N}  \Big) \geq 1 - e^{- z^2}.
\]
\end{proposition}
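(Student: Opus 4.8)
The plan is to reproduce the two–term decomposition of Propositions~\ref{prop:PDE_ND_A}--\ref{prop:PDE_ND_C}, writing
\[
\vert R_N(\bbtheta^t) - R(\rho_t)\vert \le \underbrace{\vert R_N(\bbtheta^t) - \E R_N(\bbtheta^t)\vert}_{\rm I} + \underbrace{\vert \E R_N(\bbtheta^t) - R(\rho_t)\vert}_{\rm II},
\]
where the expectation is over the i.i.d.\ data $(\bbtheta_i^0,\bW_i)_{i\le N}$ driving the nonlinear dynamics. Term~${\rm II}$ is a pure bias term and is dealt with exactly as before: since the $\bbtheta_i^t$ are i.i.d.\ with law $\rho_t$, $\vert{\rm II}\vert = \frac1N\bigl\vert \int a^2 u(\bw,\bw)\,\rho_t(\de\btheta) - \int a_1 a_2 u(\bw_1,\bw_2)\,\rho_t(\de\btheta_1)\rho_t(\de\btheta_2)\bigr\vert$, and bounding $\vert u\vert\le K$ and using the second–moment estimate $\int a^2\rho_t(\de a)\le M_2(t)=Ke^{Kt}$ from Lemma~\ref{lem:bound_asquared_noisy_D} gives $\vert{\rm II}\vert \le Ke^{KT}/N$, negligible against the claimed bound.

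First I would attack the fluctuation Term~${\rm I}$, which is the crux. In the noiseless case one invokes McDiarmid's bounded–difference inequality because $\vert\bara_i^t\vert\le K(1+t)$ deterministically; here the Brownian noise destroys this, so $\btheta\mapsto R_N(\btheta)$ is not globally a bounded–difference function. My plan is to condition on the good event $E_{\good}$ of Lemma~\ref{lem:bound_a_trajectories_noisy_D}, on which $\|\bbara^t\|_\infty\le M_\infty := Ke^{KT}[\sqrt{\log N}+z]$ and $\|\bbara^t\|_1/N\le M_1 := Ke^{KT}[1+z]$ hold simultaneously for all $t\in[0,T]$, an event of probability at least $1-e^{-z^2}$. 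On $E_{\good}$, perturbing one coordinate $\bbtheta_i^t$ while keeping its coefficient inside $[-M_\infty,M_\infty]$ changes $R_N(\bbtheta^t)$ by at most $c := K M_\infty(1+M_1)/N \le K M_\infty^2/N$; this is the analogue of the per–coordinate estimate in the proof of Lemma~\ref{lem:error_I_bound_B} with the $(1+t)$ factors replaced by $M_\infty,M_1$, using $\vert v\vert,\vert u\vert\le K$ and $\|\ba\|_1\le NM_1$. To invoke McDiarmid legitimately I would replace each $a_i$ by $\mathrm{clip}(a_i,\pm M_\infty)$, obtaining a truncated risk $\widehat R_N$ that is a genuine bounded–difference function (with deterministic increments $c$) of the i.i.d.\ inputs, whence $\P(\vert\widehat R_N(\bbtheta^t)-\E\widehat R_N(\bbtheta^t)\vert\ge\delta)\le\exp(-N\delta^2/(KM_\infty^4))$; on $E_{\good}$ one has $\widehat R_N(\bbtheta^t)=R_N(\bbtheta^t)$, while $\vert\E\widehat R_N(\bbtheta^t)-\E R_N(\bbtheta^t)\vert$ is controlled by the sub–Gaussian tail of $\bara^t$ from Lemma~\ref{lem:bound_asquared_noisy_D} and lies far below the target. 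Taking $\delta$ of order $M_\infty^2[\sqrt{\log(NT)}+z]/\sqrt N$ makes the deviation probability summable, and since $M_\infty^2$ is of order $e^{2KT}(\log N+z^2)$ this yields $\delta$ of order $e^{KT}(\log^{3/2}(NT)+z^3)/\sqrt N$ — the cubic powers being exactly the price of a truncation level that itself grows like $\sqrt{\log N}+z$.

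It then remains to upgrade the fixed–$t$ estimate to the supremum over $[0,T]$. I would use a time grid $\eta\{0,1,\dots,\lfloor T/\eta\rfloor\}$, transferring the modulus–of–continuity bound of Lemma~\ref{lem:bound_increment_noisy_D}, $\sup_{i,k,\,u\le\eta}\|\bbtheta_i^{k\eta+u}-\bbtheta_i^{k\eta}\|_2\le Ke^{KT}[\sqrt{\log(N(T/\eta\vee1))}+z]\sqrt\eta$, into a bound on $\bigl\vert\,\vert R_N(\bbtheta^t)-\E R_N(\bbtheta^t)\vert - \vert R_N(\bbtheta^s)-\E R_N(\bbtheta^s)\vert\,\bigr\vert$ on adjacent grid intervals via the Lipschitz–in–parameters estimate of Lemma~\ref{lem:RN_bound} (whose prefactor $1+\|\ba\|_1/N+\cdots$ is $\le KM_\infty^2$ on $E_{\good}$), and controlling $\vert\E R_N(\bbtheta^t)-\E R_N(\bbtheta^s)\vert$ through $W_2(\rho_t,\rho_s)\le Ke^{KT}\sqrt{\vert t-s\vert}$. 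A union bound over the $O(T/\eta)$ grid points with $\eta$ of order $1/N$ keeps every contribution at order $Ke^{KT}(\log^{3/2}(NT)+z^3)/\sqrt N$; combining Terms~${\rm I}$ and ${\rm II}$ and rescaling $z$ by a constant to absorb the finitely many bad events completes the proof. The main obstacle, I expect, is precisely the lack of any deterministic a priori bound on the coefficients: McDiarmid cannot be applied directly, and the truncation workaround forces the fluctuation scale to inherit the $\sqrt{\log N}+z$ size of the truncation level, degrading the rate from the $\sqrt{\log(NT)}+z$ of the other three cases to $\log^{3/2}(NT)+z^3$; a secondary subtlety is that $M_\infty$ is itself $z$–dependent, which is handled cleanly by fixing $z$ at the outset and keeping every auxiliary bad event at probability level $e^{-z^2}$.
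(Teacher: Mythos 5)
Your decomposition into a bias term~${\rm II}$ and a fluctuation term~${\rm I}$ matches the paper's, and the treatment of~${\rm II}$ (via the second-moment bound $\int a^2\rho_t(\de a)\le Ke^{Kt}$ from Lemma~\ref{lem:bound_asquared_noisy_D}) is identical. For the fluctuation term, however, you and the paper diverge, and both routes are viable. You truncate the coefficients at the high-probability level $M_\infty$ of Lemma~\ref{lem:bound_a_trajectories_noisy_D}, apply McDiarmid to the truncated risk $\widehat R_N$ (a genuine bounded-difference function of the i.i.d.\ trajectories $\bbtheta_i^t=\Phi(\bbtheta_i^0,\bW_i)$), and separately control the truncation bias $\E\widehat R_N-\E R_N$ by the sub-Gaussian tail of $\bara^t$. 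The paper instead splits $R_N(\bbtheta^t)-\E R_N(\bbtheta^t)$ into four martingale-difference sums ($Q_1$, $E_2$, $Q_2^i$, $Q_3^i$ in Lemma~\ref{lem:error_I_bound_a_noisy_D}) and applies Azuma--Hoeffding (Lemma~\ref{lem:Azuma}) to each, exploiting directly that $\bara_i^t v(\bbw_i^t)$ etc.\ are $Ke^{KT}$-sub-Gaussian; truncation at $M_\infty$ is introduced only locally, in the conditional bound for $Q_2^i$. The paper's decomposition is more surgical and avoids the explicit bias estimate, while your global truncate-then-McDiarmid argument is conceptually simpler but does incur the extra step of checking that $\E|\widehat R_N-R_N|$ is negligible, which you correctly identify as controllable via the sub-Gaussian tail and which you should carry out (a Cauchy--Schwarz against $\P(\max_i|\bara_i^t|>M_\infty)$ together with a fourth-moment bound on the $\bara_i^t$ suffices). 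One small correction: in your per-coordinate increment estimate the appearance of $M_1$ is spurious — once every coefficient is clipped at $M_\infty$, the bounded-difference constant is deterministically $KM_\infty^2/N$ regardless of $\|\ba\|_1$, which is what you then use in the exponent, so the final rate $Ke^{KT}[\log^{3/2}(NT)+z^3]/\sqrt N$ comes out correctly. Both approaches trace the cubic powers to the same source: the truncation/sub-Gaussian level $M_\infty\sim e^{KT}(\sqrt{\log N}+z)$ entering the fluctuation scale quadratically.
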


The proof will use the same decomposition in two terms as in the proof of proposition \ref{prop:PDE_ND_A}.

\begin{lemma}[Term ${\rm II}$ bound]\label{lem:error_II_bound_noisy_D}
We have 
\[
\vert \E R_N( \bar \btheta^t) - R(\rho_t) \vert \le Ke^{KT}/N.
\]
\end{lemma}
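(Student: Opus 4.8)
The plan is to repeat, essentially verbatim, the short computation used for Lemma~\ref{lem:error_II_bound_A} and Lemma~\ref{lem:error_II_bound_B}, the only new ingredient being the \emph{a priori} second-moment bound on the coefficients furnished by Lemma~\ref{lem:bound_asquared_noisy_D}. First I would use that, along the nonlinear dynamics \eqref{eq:traj_PDE_D}, the particles $\bbtheta^t_1,\dots,\bbtheta^t_N$ are i.i.d.\ with common law $\rho_t$. Taking expectations term by term in $R_N(\bbtheta^t) = \E\{y^2\} + \tfrac{2}{N}\sum_i V(\bbtheta^t_i) + \tfrac{1}{N^2}\sum_{i,j} U(\bbtheta^t_i,\bbtheta^t_j)$: the linear term reproduces $2\int V\,\rho_t(\de\btheta)$ exactly; the $N(N-1)$ off-diagonal pairs $i\ne j$, being independent, reproduce $\int U(\btheta_1,\btheta_2)\,\rho_t(\de\btheta_1)\rho_t(\de\btheta_2)$; and only the $N$ diagonal pairs create a discrepancy. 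This yields the identity
\[
\E R_N(\bbtheta^t) - R(\rho_t) = \frac{1}{N}\Big( \int U(\btheta,\btheta)\,\rho_t(\de\btheta) - \int U(\btheta_1,\btheta_2)\,\rho_t(\de\btheta_1)\rho_t(\de\btheta_2)\Big).
\]

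Next I would bound the right-hand side. Writing $U(\btheta_1,\btheta_2) = a_1 a_2\, u(\bw_1,\bw_2)$ with $\|u\|_\infty \le K$ (a consequence of {\rm A2}), the diagonal term is at most $K\int a^2\,\rho_t(\de\btheta)$, while by Cauchy--Schwarz the product term is at most $K\big(\int |a|\,\rho_t(\de\btheta)\big)^2 \le K\int a^2\,\rho_t(\de\btheta)$. Hence $\lvert\E R_N(\bbtheta^t) - R(\rho_t)\rvert \le (2K/N)\int a^2\,\rho_t(\de\btheta)$. Finally I would invoke Lemma~\ref{lem:bound_asquared_noisy_D}, which gives $\sup_{s\le t}\int a^2\,\rho_s(\de a) \le M_2(t) = Ke^{Kt}$, so the bound becomes $2KM_2(t)/N \le Ke^{KT}/N$ uniformly over $t\in[0,T]$ after absorbing constants.

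The argument is otherwise identical to the noiseless cases, so the only place where any real care is needed is the second-moment control of $\rho_t$; in the bounded/noiseless settings $U$ is bounded and this step is trivial, whereas here $a$ is unbounded and one must genuinely estimate $\int a^2\,\rho_t(\de\btheta)$. That, however, is exactly the content of Lemma~\ref{lem:bound_asquared_noisy_D}, whose proof differentiates $t\mapsto\int a^2\,\rho_t(\de a)$, picks up the $+\tau/D$ contribution from the diffusion term $\Delta_\btheta\rho_t$, and closes a Gronwall inequality. I therefore expect no additional obstacle: once that lemma is quoted, the present bound is immediate.
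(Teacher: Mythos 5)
Your proof is correct and follows essentially the same route as the paper's: reduce to the $1/N$ diagonal discrepancy between $\int U(\btheta,\btheta)\,\rho_t(\de\btheta)$ and $\int U\,\rho_t\otimes\rho_t$, bound both by $K\int a^2\,\rho_t(\de\btheta)$ using $\|u\|_\infty\le K$ (and Cauchy--Schwarz for the product term), and then invoke the second-moment bound $\int a^2\,\rho_t(\de a)\le Ke^{Kt}$ from Lemma~\ref{lem:bound_asquared_noisy_D}. Your account of which step is nontrivial (controlling $\int a^2\,\rho_t$ in the unbounded, noisy setting) is also accurate.
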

\begin{proof}[Proof of Lemma \ref{lem:error_II_bound_noisy_D}]
The bound hold simply by observing that
\[
\begin{aligned}
\vert \E R_N( \bar \btheta^t) - R(\rho_t) \vert & = \frac{1}{N} \Big \vert \int a^2 u (\bw , \bw ) \rho_t (\de \btheta) - \int a_1 a_2 u(\bw_1 , \bw_2 ) \rho_t(\de \btheta_1) \rho_t(\de \btheta_2) \Big\vert \\
& \le K/N \int a^2 \rho_t (\de a) \leq Ke^{KT} / N
\end{aligned}
\]
where we used the upper bound on the second moment of variable $a$ in Lemma \ref{lem:bound_asquared_noisy_D}.
\end{proof}

\begin{lemma}[Term ${\rm I}$ bound]\label{lem:error_I_bound_a_noisy_D}
There exists $K$, such that 
\[
\P \Big( \sup_{t \in [0, T]} \vert R_N( \bar \btheta^t ) - \E R_N( \bar \btheta^t ) \vert \le Ke^{KT} \lb \log (NT) + z^3 \rb / \sqrt{N} \Big) \ge 1 - e^{- z^2}. 
\]
\end{lemma}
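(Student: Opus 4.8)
\emph{Proof strategy.} The plan is to reproduce the argument of Lemma~\ref{lem:error_I_bound_B} (the noiseless, general-coefficient case), but with the deterministic bound $\vert\bara_i^t\vert\le K(1+t)$ replaced by the high-probability bounds of Lemmas~\ref{lem:bound_asquared_noisy_D} and~\ref{lem:bound_a_trajectories_noisy_D}. The starting point is that, for the nonlinear dynamics, the curve $s\mapsto\rho_s$ is deterministic, so the variables $(\bbtheta_i^t)_{i\le N}$ are i.i.d.\ with common law $\rho_t$; hence for each fixed $t$, $R_N(\bbtheta^t)$ is a fixed measurable function of $N$ independent blocks and is, in principle, amenable to a bounded-differences argument. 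The difficulty is that $V$, $U$, and the per-coordinate sensitivity of $R_N$ all grow with $\vert a\vert$, which is only sub-Gaussian (with parameter $M_2(t)=Ke^{Kt}$, by Lemma~\ref{lem:bound_asquared_noisy_D}), not bounded.

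To get around this, I would introduce a truncation at the deterministic level $M_{\infty}=Ke^{KT}[\sqrt{\log N}+z]$ furnished by Lemma~\ref{lem:bound_a_trajectories_noisy_D} for $\sup_{t\le T}\norm{\bbara^t}_{\infty}$. Let $\widetilde R_N$ denote the empirical risk computed with each coefficient $\bara_i^t$ replaced by its truncation to $[-M_{\infty},M_{\infty}]$. On the event $E_{\good}=\{\sup_{t\le T}\norm{\bbara^t}_{\infty}\le M_{\infty}\}$, which has probability at least $1-e^{-z^2}$, one has $\widetilde R_N(\bbtheta^t)=R_N(\bbtheta^t)$ for every $t\le T$, so it suffices to control $\widetilde R_N(\bbtheta^t)-\E R_N(\bbtheta^t)$ uniformly in $t$, which I split into the fluctuation $\widetilde R_N(\bbtheta^t)-\E\widetilde R_N(\bbtheta^t)$ and the deterministic bias $\E\widetilde R_N(\bbtheta^t)-\E R_N(\bbtheta^t)$.

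For the fluctuation: since $\vert v\vert,\vert u\vert\le K$ and all truncated coefficients are bounded by $M_{\infty}$, changing a single coordinate $\bbtheta_i^t$ changes $\widetilde R_N$ by at most $c_i\le KM_{\infty}^2/N$, exactly as in Eq.~\eqref{eqn:RN_bounded_Lip_difference_B} with $(1+t)^2$ replaced by $M_{\infty}^2$; McDiarmid's inequality then yields $\P(\vert\widetilde R_N(\bbtheta^t)-\E\widetilde R_N(\bbtheta^t)\vert\ge\delta)\le\exp\{-N\delta^2/(KM_{\infty}^4)\}$ for fixed $t$. For the bias: $\vert\E\widetilde R_N(\bbtheta^t)-\E R_N(\bbtheta^t)\vert$ is controlled by $\E[\vert\bara_i^t\vert\ones\{\vert\bara_i^t\vert>M_{\infty}\}]$ and $\E[(\bara_i^t)^2\ones\{\vert\bara_i^t\vert>M_{\infty}\}]$, which by the $M_2(t)$-sub-Gaussianity of $\bara_i^t$ are super-polynomially small in $N$ (since $M_{\infty}^2\gg M_2(T)\log N$), hence negligible against $\delta$. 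To pass to the supremum over $t\in[0,T]$, I would union bound over a grid $\eta\{0,1,\ldots,\lfloor T/\eta\rfloor\}$ with $\eta\asymp 1/N$ and control the oscillation of both $t\mapsto R_N(\bbtheta^t)$ and $t\mapsto\E R_N(\bbtheta^t)$ using Lemma~\ref{lem:RN_bound} together with the increment bounds of Lemma~\ref{lem:bound_increment_noisy_D}; this costs an additional $e^{-z^2}$ in the failure probability and only a lower-order correction to the bound. Choosing $\delta\asymp KM_{\infty}^2[\sqrt{\log(NT)}+z]/\sqrt N$ and simplifying the resulting polynomial in $z$ and $\sqrt{\log N}$ (using Young's inequality to absorb cross terms such as $z\sqrt{\log N}$ into $z^3+\log^{3/2}(NT)$) gives the stated estimate.

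The main obstacle is calibrating the truncation level: $M_{\infty}$ must be large enough that $E_{\good}$ holds with probability $1-e^{-z^2}$, yet small enough that $M_{\infty}^4e^{KT}/N=o(1)$ — this tension is exactly what forces the $T=o(\log\log N)$ restriction noted after Theorem~\ref{thm:bound_approximating_noisy}. A secondary technical point is that the time-continuity estimate of Lemma~\ref{lem:bound_increment_noisy_D} is itself only a high-probability statement, so the union bound over the time grid must be taken on the intersection of $E_{\good}$, the continuity event, and the McDiarmid events simultaneously, with the constants adjusted accordingly. (An alternative to the truncation would be to decompose $R_N$ into its linear-in-$V$ part — an average of i.i.d.\ sub-Gaussians, which concentrates with no $\log N$ factor — and its quadratic-in-$U$ part, the latter handled as a degree-$2$ $U$-statistic with sub-exponential kernel; this can shave a $\log$-power but is more delicate.)
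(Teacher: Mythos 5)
Your main route (uniformly truncate every coefficient at $M_\infty=Ke^{KT}[\sqrt{\log N}+z]$, apply McDiarmid to $\widetilde R_N$, then control the truncation bias via the sub-Gaussian tails) is genuinely different from the paper's, and it is sound as far as it goes, but it does not recover the bound as stated. Carrying your own calibration through precisely: the coordinate-wise sensitivity of $\widetilde R_N$ is $\asymp M_\infty^2/N$ because both $\tilde a_i$ and $\tilde a_j$ are truncated, so McDiarmid plus the union bound over a grid of size $T/\eta$ forces $\delta\gtrsim M_\infty^2[\sqrt{\log(T/\eta)}+z]/\sqrt{N}\asymp e^{KT}(\sqrt{\log N}+z)^2(\sqrt{\log(NT)}+z)/\sqrt{N}$, which after Young's inequality is $\asymp e^{KT}[\log^{3/2}(NT)+z^3]/\sqrt{N}$ — a factor $\sqrt{\log(NT)}$ worse than the lemma's $\log(NT)+z^3$. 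Your last sentence, asserting that this cubic expression simplifies down to $\log(NT)+z^3$, is where the argument actually falls short: there is no cancellation that removes the extra half-power of $\log$.

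The paper avoids this loss by precisely the ``alternative'' you mention parenthetically and then set aside. It decomposes $R_N(\bbtheta^t)-\E R_N(\bbtheta^t)$ into the linear-in-$V$ term $Q_1$, the diagonal $E_2$, and the off-diagonal $U$-terms split as $Q_2^i+Q_3^i$ (conditional expectation given $\bbtheta_i^t$ and its complement), and treats each by Azuma--Hoeffding. Crucially, truncation at level $M_\infty$ is applied only to the single conditioned coordinate $\bara_i^t$ in $Q_2^i$; the remaining $\bara_j^t$, $j\neq i$, are handled through their $M_2(t)=Ke^{Kt}$-sub-Gaussian tails without truncation. This makes the Azuma parameter for $Q_2^i$ scale like $e^{KT}M_\infty$ — \emph{linear} in $M_\infty$ — and after the union over $i\in[N]$ yields $\max_i Q_2^i\lesssim e^{KT}M_\infty(\sqrt{\log N}+z)/\sqrt N\asymp e^{KT}(\log N+z^2)/\sqrt N$, which after the time-grid union bound and the substitution $z\mapsto\sqrt{\log(NT\log N)}+z'$ becomes $e^{KT}[\log(NT)+z^3]/\sqrt N$. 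In short: treating the off-diagonal $U$-sum as a degenerate two-sample object (your parenthetical) is not merely a cosmetic alternative; it is what buys the missing $\sqrt{\log}$. That said, the extra $\sqrt{\log}$ in your version would be harmless downstream, since Proposition~\ref{prop:PDE_ND_D} and Theorem~\ref{thm:bound_approximating_noisy}(B) already carry $\log^{3/2}(NT)$ factors.

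Two smaller remarks. First, your bias estimate is correct: with $M_\infty^2\gg M_2(T)\log N$ the tail $\P(|\bara_i^t|>M_\infty)$ is super-polynomially small, so $|\E\widetilde R_N-\E R_N|$ is indeed negligible. Second, your plan for passing to the supremum over $t$ (grid at scale $\eta\asymp 1/N$, increment control via Lemmas~\ref{lem:bound_increment_noisy_D} and~\ref{lem:RN_bound}, intersecting $E_{\good}$ with the continuity event) matches the paper's handling via Lemma~\ref{lem:continuity_ND_RN_D} and is fine.
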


\begin{proof}[Proof of Lemma \ref{lem:error_I_bound_a_noisy_D}] 
We have:
\[
\begin{aligned}
\abs{ R_N( \bar \btheta^t ) - \E R_N( \bar \btheta^t )  } \leq & 2 \Big\vert\frac{1}{N} \sum_{i = 1}^N \lb V (\bbtheta^t_i ) - \E V (\bbtheta^t_i) \rb \Big\vert + \frac{1}{N^2} \sum_{i = 1}^N \Big\vert U (\bbtheta_i^t , \bbtheta_i^t ) - \E_{\bbtheta_i^t} U(\bbtheta_i^t, \bbtheta_i^t) \Big\vert \\
& + \frac{1}{N} \sum_{i = 1}^N \Big\vert \frac{1}{N} \sum_{j = 1, j \neq i}^N \lb U (\bbtheta_i^t , \bbtheta_j^t ) - \E_{\bbtheta_j^t} U(\bbtheta_i^t, \bbtheta_j^t) \rb \Big\vert \\
& + \frac{1}{N} \sum_{i = 1}^N \Big\vert \frac{1}{N} \sum_{j = 1, j \neq i}^N \lb \E_{\bbtheta_j^t}  U (\bbtheta_i^t , \bbtheta_j^t ) - \E_{\bbtheta_i^t,\bbtheta_j^t} U(\bbtheta_i^t, \bbtheta_j^t) \rb \Big\vert .
\end{aligned}
\]
We will bound each of these terms separately. For any fixed $t$, we have $(\bbtheta^t_i)_{i \in [N]} \sim \rho_t$ independently. Define:
\[
Q_1 (t) = \Big\vert \frac{1}{N} \sum_{i = 1}^N \lb V (\bbtheta^t_i ) - \E V (\bbtheta^t_i) \rb \Big\vert,
\]
which is the absolute value of the sum of martingale differences. Furthermore, we can rewrite $V (\bbtheta^t_i) = \bara^t_i v(\bbw^t_i)$ which is $Ke^{KT}$-sub-Gaussian (product of a sub-Gaussian random variable, by Lemma \ref{lem:bound_asquared_noisy_D}, and a bounded random variable). We can therefore apply Azuma-Hoeffding's inequality (Lemma \ref{lem:Azuma}),
\[
\P \Big( Q_1 (t) \leq Ke^{KT}\lb 1 +z\rb/\sqrt{N} \Big) \geq 1 - e^{-z^2}.
\]
The second term is bounded as follow:
\[
\begin{aligned}
E_2 (t) \equiv \frac{1}{N^2} \sum_{i = 1}^N \Big\vert U (\bbtheta_i^t , \bbtheta_i^t ) - \E_{\bbtheta_i^t} U(\bbtheta_i^t, \bbtheta_i^t) \Big\vert & \le
\frac{1}{N^2} \sum_{i = 1}^N \vert (\bara^t_i)^2 u (\bbw_i^t , \bbw_i^t )\vert  + \frac{1}{N^2} \sum_{i = 1}^N \Big\vert \E_{\bbtheta_i^t} \lb ( \bara^t_i)^2 u (\bbw_i^t , \bbw_i^t ) \rb \Big\vert  \\
& \leq \frac{K}{N^2} \cdot \| \bbara^t \|_\infty \cdot \| \bbara^t\|_1 + \frac{Ke^{KT}}{N},
\end{aligned}
\]
where we used that $\int a^2 \rho_t (\de a) \le Ke^{KT}$. Using Lemma \ref{lem:bound_a_trajectories_noisy_D}, we get:
\[
\P \Big( E_2 (t) \leq Ke^{KT} \lb \sqrt{\log N} + z^2 \rb/N \Big) \geq 1 - e^{-z^2}.
\]
Define:
\[
Q_2^i (t) = \Big\vert \frac{1}{N} \sum_{j = 1, j \neq i}^N \lb U (\bbtheta_i^t , \bbtheta_j^t ) - \E_{\bbtheta_j^t} U(\bbtheta_i^t, \bbtheta_j^t) \rb \Big\vert.
\]
Because $\bbtheta_i^t$ is independent of the $(\bbtheta^t_j)_{j \in [N], j\neq i}$, we can condition on $\bbtheta_i^t$, and restrict ourselves to the event where $\bbtheta_i^t \le M_{\infty}$. $Q_2^i (t)$ is the absolute value of a sum of martingale difference, with $U (\bbtheta_i^t , \bbtheta_j^t )= \bara^t_i \bara^t_j u ( \bbw^t_i , \bbw^t_j)$ which is $K e^{KT} \vert \bara^t_i \vert^2 $-sub-Gaussian (product of a sub-Gaussian random variable and a bounded random variable). We apply Azuma-Hoeffding's inequality (Lemma \ref{lem:Azuma}),
\[
\begin{aligned}
&\P \Big(  Q_2^i (t) \geq Ke^{KT} M_{\infty} \lb 1 + z \rb / \sqrt{N} \Big) \\
\le &  \E_{\bbtheta_i^t} \lb \P \Big( Q_2^i (t) \geq Ke^{KT} M_{\infty}\lb 1 + z \rb/\sqrt{N} \Big\vert \bbtheta_i^t \Big)  \ones (\vert \bba_i^t \vert  \le M_{\infty}) \rb + \P ( \vert \bba_i^t \vert > M_\infty) \\
\le &  2 e^{-z^2}
\end{aligned}
\]
We take the union bound over $i \in [N]$ and get:
\[
\P \Big( \max_{i \in [N]} Q_2^i (t) \geq Ke^{KT} \lb \log N + z^2 \rb /\sqrt{N} \Big) \le e^{-z^2}.
\]
Define: 
\[
Q_3^i (t) = \Big\vert \frac{1}{N} \sum_{j = 1, j \neq i}^N \lb \E_{\bbtheta_j^t} U (\bbtheta_i^t , \bbtheta_j^t ) - \E_{\bbtheta_i^t, \bbtheta_j^t} U(\bbtheta_i^t, \bbtheta_j^t) \rb \Big\vert.
\]
We have:
\[
\E_{\bbtheta_j^t}  U (\bbtheta_i^t , \bbtheta_j^t ) = \bara^t_i \cdot \int a u ( \bbw^t_i , \bw ) \rho (\de \btheta) ,
\]
with $\Big\vert \int a u ( \bbw^t_i , \bw ) \rho (\de \btheta) \Big\vert \le K \Big( \int a^2 \rho_t (\de a ) \Big)^{1/2} \le Ke^{KT}$. Thus, $\E_{\bbtheta_j^t}  U (\bbtheta_i^t , \bbtheta_j^t ) $ is $Ke^{KT}$-sub-Gaussian (product of a sub-Gaussian random variables and of a bounded random variable). Applying Azuma-Hoeffding's inequality Lemma \ref{lem:Azuma}, followed by an union bound over $i \in [N]$, we get
\[
\P \Big( \max_{i \in [N]} Q_3^i (t) \geq Ke^{KT} \lb \sqrt{\log N} + z \rb /\sqrt{N} \Big) \le e^{-z^2}.
\]
Combining the above bounds with the bound on $\sup_{s \in [0,T]}\{ \| \bbara^s \|_1, \| \bbara^s \|_\infty\}$ of Lemma \ref{lem:bound_a_trajectories_noisy_D} yields:
\begin{equation}
\P \Big( \abs{ R_N( \bar \btheta^t ) - \E R_N( \bar \btheta^t )  }  \geq Ke^{KT} \lb \log N + z^2 \rb/\sqrt{N} \Big)  \le e^{-z^2}. \label{eq:proba_RN_PDE_ND_D}
\end{equation}
In order to extend this concentration uniformly on the interval $[0,T]$, we use the following result:

\begin{lemma}\label{lem:continuity_ND_RN_D}
There exists K, such that 
\[
\begin{aligned}
\sup_{k\in [0,T/\eta] \cap \N} \sup_{u \in [0,\eta]} & \Big \vert \vert R_N( \bar \btheta^{k\eta +u} ) - \E R_N( \bar \btheta^{k\eta +u} ) \vert  - \vert R_N( \bar \btheta^{k\eta} ) - \E R_N( \bar \btheta^{k\eta} ) \vert \Big \vert \\
& \le K e^{KT} \lb \sqrt{\log{ (N (T/\eta \vee 1))}}  +z^3 \rb \sqrt{\eta},
\end{aligned}
\]
with probability at least $1 - e^{-z^2}$.
\end{lemma}

\begin{proof}[Proof of Lemma \ref{lem:continuity_ND_RN_D}]
Consider $t,h \ge 0, t+h \le T$. From Lemma \ref{lem:RN_bound},
\[
\begin{aligned}
& \vert R_N(\bbtheta^{t+h}) - R_N( \bbtheta^t ) \vert \le K (1 +\|\bbara^{t+h}\|_{1} /N  + \|\bbara^t\|_{1} /N+ \| \bbara^{t+h} \|_1^2 /N^2) \max_{i \in [N]} \| \bbtheta^{t+h}_i - \bbtheta^t_i \|_2.
 \label{eq:bound_term_I_noisy_D}
\end{aligned}
\]
Using Lemma \ref{lem:bound_increment_noisy_D} without the union bound over $s \in \eta \{ 0, 1, \ldots, \lfloor T / \eta \rfloor \}$ and the bounds on $\sup_{t \in [0,T]} \{\| \bbara^t \|_1\}$ of Lemma \ref{lem:bound_a_trajectories_noisy_D}, we get
\[
\P \Big( \vert R_N( \bar \btheta^{t +h} ) - R_N( \bar \btheta^{t} ) \vert \ge Ke^{KT} \lb \sqrt{\log N} + z^3\rb \sqrt{h} \Big) \le e^{-z^2}.
\]
The difference in expectation, where the expectation is taken over $(\bbtheta_i)_{i \in [N]}$, is therefore bounded by
\[
\begin{aligned}
\vert \E R_N( \bar \btheta^{t +h}) - \E R_N( \bar \btheta^{t} ) \vert \le \E \vert R_N( \bar \btheta^{t +h}) -R_N( \bar \btheta^{t}) \vert \le \int_0^\infty \P \Big( \vert R_N( \bar \btheta^{t +h}) -R_N( \bar \btheta^{t}) \vert \ge u \Big) \de u.
\end{aligned}
\]
Doing a change of variable, we get:
\[
\begin{aligned}
\vert \E R_N( \bar \btheta^{t +h}) - \E R_N( \bar \btheta^{t} ) \vert & \le Ke^{KT}\sqrt{h\log N}+\int_{0}^\infty e^{-z^2} Ke^{KT} \sqrt{h} z^2 \de z \\
& \le Ke^{KT} (\sqrt{\log N}+1)\sqrt{h}.
\end{aligned}
\]
Hence using that
\[
\begin{aligned}
  \Big\vert \vert R_N( \bar \btheta^{t +h} ) - \E R_N( \bar \btheta^{t +h} ) \vert - \vert R_N( \bar \btheta^{t} ) - \E R_N( \bar \btheta^{t} ) \vert \Big\vert  \leq  \vert R_N( \bar \btheta^{t +h} ) - R_N( \bar \btheta^{t} ) \vert  + \vert \E R_N( \bar \btheta^{t +h}) - \E R_N( \bar \btheta^{t} ) \vert, 
\end{aligned}
\]
with Lemma \ref{lem:bound_increment_noisy_D}, we get
\[
\begin{aligned}
\sup_{k\in [0,T/\eta] \cap \N} \sup_{u \in [0,\eta]} & \Big \vert \vert R_N( \bar \btheta^{k\eta +u} ) - \E R_N( \bar \btheta^{k\eta +u} ) \vert  - \vert R_N( \bar \btheta^{k\eta} ) - \E R_N( \bar \btheta^{k\eta} ) \vert \Big \vert \\
& \le K e^{KT} \lb \sqrt{\log{ (N (T/\eta \vee 1))}}  +z^3 \rb \sqrt{\eta},
\end{aligned}
\]
with probability at least $1 - e^{-z^2}$.
\end{proof}

Taking an union bound over $s \in \eta \{ 0, \ldots, \lfloor T / \eta \rfloor \}$ in Eq. \eqref{eq:proba_RN_PDE_ND_D} and bounding the variation inside the grid intervals, we get
\[
\begin{aligned}
\P\Big( \sup_{t \in [0, T]} \vert R_N( \bar \btheta^t) - \E R_N( \bar \btheta^t) \vert \ge  &K e^{KT} \lb \log N + z^2 \rb/\sqrt{N} + K e^{KT} \lb \sqrt{\log{ (N (T/\eta \vee 1))}}  +z^3 \rb \sqrt{\eta} \Big)\\
&\le (T / \eta) \exp\{ - z^2 \}.
\end{aligned}
\]
Taking $\eta = 1/(N\log N)$ and $z = [\sqrt{\log(NT\log N)} + z']$ concludes the proof. 
\end{proof}

\subsection{Bound between nonlinear dynamics and particle dynamics}

\begin{proposition}[ND-PD]\label{prop:ND_PD_D}
There exists a constant $K$, such that with probability at least $1 - e^{-z^2}$, we have 
\begin{align*}
 \sup_{t \in [0,T]} \max_{i \in [N]} \| \bbtheta_i^t - \ubtheta_i^t \|_2  \le& Ke^{e^{KT}[\sqrt{\log N} +z^2]} \lb \sqrt{D\log N} + \log^{3/2} (NT) + z^3 \rb/\sqrt{N}, \\
\sup_{t \in [0, T]} \vert R_N(\ubtheta^t) - R_N( \bar \btheta^t ) \vert \le& K e^{e^{KT}\lb \sqrt{\log N } + z^2 \rb} \lb \sqrt{D\log N} + \log^{3/2} (NT) + z^5 \rb/\sqrt{N} .
\end{align*}
\end{proposition}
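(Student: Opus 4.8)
The plan is to follow the coupling scheme of Propositions~\ref{prop:ND_PD_B} and~\ref{prop:ND_PD_C}, combining the unbounded-coefficient bookkeeping of the former with the Brownian coupling of the latter; the genuinely new feature is that the coefficients $a_i$ are now controlled only on a high-probability event rather than deterministically, so the sub-Gaussian estimates of Lemmas~\ref{lem:bound_asquared_noisy_D} and~\ref{lem:bound_a_trajectories_noisy_D} must be threaded through the argument.

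\textbf{Step 1 (difference ODE).} I would drive the nonlinear and the particle dynamics with the \emph{same} Brownian motions $\{\bW_i\}_{i\le N}$, so that the stochastic integrals cancel in $\bbtheta_i^t-\ubtheta_i^t$ and the difference solves an ODE. Expanding $\frac{1}{2}\frac{\de}{\de t}\|\bbtheta_i^t-\ubtheta_i^t\|_2^2$ exactly as in~\eqref{eqn:difference_dynamics_B} and applying the Lipschitz estimates of Lemma~\ref{lem:bound_lip_traject_B}, one gets, for $\Delta(t)\equiv\sup_{s\le t}\max_{i\le N}\|\bbtheta_i^s-\ubtheta_i^s\|_2$, an inequality of the form $\frac{\de}{\de t}\Delta(t)\le\beta_t\,\Delta(t)+K(1+M_\infty)^2/N+\max_{i\le N}I_i^t$, where $M_\infty$ bounds the coefficient sizes, $\beta_t$ is an effective Lipschitz constant built from the $\ell_\infty$ and $\ell_1$ norms of the coefficient vectors and from $\int a^2\rho_t(\de a)$, and $I_i^t=\big\|\frac1N\sum_{j\ne i}\big[\na_1U(\bbtheta_i^t,\bbtheta_j^t)-\int\na_1U(\bbtheta_i^t,\btheta)\rho_t(\de\btheta)\big]\big\|_2$ is the mean-zero fluctuation term.

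\textbf{Step 2 (a priori bounds and the Gronwall coefficient).} On a good event of probability $\ge1-e^{-z^2}$, Lemma~\ref{lem:bound_a_trajectories_noisy_D} gives $\|\bbara^t\|_\infty,\|\bundera^t\|_\infty\le M_\infty=Ke^{KT}[\sqrt{\log N}+z]$ and $\|\bbara^t\|_1/N,\|\bundera^t\|_1/N\le M_1=Ke^{KT}[1+z]$, while Lemma~\ref{lem:bound_asquared_noisy_D} gives $\sup_{s\le T}\int a^2\rho_s(\de a)\le M_2=Ke^{KT}$. The crucial bookkeeping is to verify, term by term, that whenever two coefficient factors multiply, one of them is always either an $\ell_1$-average or is replaced via Cauchy--Schwarz by $\sqrt{M_2}$; hence the only $N$-dependent factor $\sqrt{\log N}$ enters $\beta_t$ only \emph{linearly}, which gives $\sup_{t\le T}\beta_t\le Ke^{KT}[\sqrt{\log N}+z^2]$.

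\textbf{Step 3 (fluctuation term and conclusion).} To bound $\max_i\sup_{t\le T}I_i^t$ I would condition on $\bbtheta_i^t$ restricted to $\{|\bbara_i^t|\le M_\infty\}$, as in Lemma~\ref{lem:concentration_of_I_B}: the summands are then independent mean-zero $D$-vectors whose norm is sub-Gaussian with parameter $\le KM_2(1+M_\infty)^2$, using that $\bbara_j^t$ is $M_2$-sub-Gaussian and that the $u$-derivatives are bounded; a vector bounded-difference martingale bound (Lemma~\ref{lem:bounded_difference_martingale}) controls $I_i^t$ at a fixed time, and the modulus-of-continuity estimate of Lemma~\ref{lem:bound_increment_noisy_D} together with a union bound over a time grid of spacing $\sim1/N$ and over $i\in[N]$ upgrades it to $\max_i\sup_{t\le T}I_i^t\le Ke^{KT}[\sqrt{D\log N}+\log^{3/2}(NT)+z^3]/\sqrt N$. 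Then Gronwall's lemma (integral form) on the intersection of the two good events gives $\Delta(T)\le e^{\beta T}\cdot(\text{forcing})$; since $\beta T\le Ke^{KT}[\sqrt{\log N}+z^2]$ and $Te^{KT}\le e^{K'T}$, the first displayed bound follows, and the risk bound then follows from Lemma~\ref{lem:RN_bound}, whose prefactor $1+\|\ba\|_1/N+\|\ba'\|_1/N+\|\ba'\|_1^2/N^2\le K(1+M_1+M_1^2)$ contributes an extra $e^{2KT}[1+z^2]$ factor that turns $z^3$ into $z^5$. I expect the main obstacle to be Step~2: one must make sure the $\ell_\infty$ coefficient bound $M_\infty\sim e^{KT}\sqrt{\log N}$ never appears squared inside $\beta_t$, since otherwise the exponent would become $e^{KT}\log N$ and the bound would be vacuous rather than valid in the claimed regime $T=o(\log\log N)$; the closely related difficulty is running the concentration of $I_i^t$ with summands that are merely sub-Gaussian rather than bounded, which forces the conditioning on $\bbtheta_i^t$ and the careful propagation of the sub-Gaussian coefficient estimate uniformly in $t$.
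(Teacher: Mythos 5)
Your proposal follows essentially the same route as the paper: couple the Brownian motions so the noise cancels, split the drift difference into Lipschitz-controlled terms (using the $\ell_\infty$- and $\ell_1$-norm bounds on the coefficient vectors from Lemma~\ref{lem:bound_a_trajectories_noisy_D}), a small self-interaction term, and a mean-zero fluctuation; control the fluctuation by conditioning on $\bbtheta_i^t$ and using sub-Gaussianity of $\bara_j^t$; and close with Gronwall followed by Lemma~\ref{lem:RN_bound}. Your Step~2 observation — that $M_\infty\sim e^{KT}\sqrt{\log N}$ must never appear squared inside the Gronwall coefficient, which you achieve by always pairing it with $M_1$ or $\sqrt{M_2}$ — is exactly the mechanism the paper exploits to keep the exponent at $e^{KT}\sqrt{\log N}$ rather than $e^{KT}\log N$, and is what makes the result nonvacuous for $T=o(\log\log N)$.

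One slip worth flagging: in Step~3 you cite Lemma~\ref{lem:bounded_difference_martingale} (an i.i.d.\ concentration bound requiring $\|\bX_i\|_2\le K$ almost surely) to control the fluctuation term $I_i^t$, but, as you yourself note, the summands $\na_1 U(\bbtheta_i^t,\bbtheta_j^t)-\int\na_1 U(\bbtheta_i^t,\btheta)\,\rho_t(\de\btheta)$ are only sub-Gaussian under the conditional law: $\bara_j^t$ is $M_2$-sub-Gaussian by Lemma~\ref{lem:bound_asquared_noisy_D} but not bounded, so the bounded-difference lemma does not apply. The correct tool is the Azuma--Hoeffding bound for sub-Gaussian martingale increments (Lemma~\ref{lem:Azuma}), which is what the paper invokes in the analogous Lemma~\ref{lem:bound_Q_D}. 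You already compute the correct sub-Gaussian parameter $K M_2(1+M_\infty)^2$ and explicitly flag the boundedness issue in your final paragraph, so this is a misattributed citation rather than a missing idea; swap the lemma and the step goes through.
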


\begin{proof}[Proof of Proposition \ref{prop:ND_PD_D}]
Define $\Delta(t) \equiv \sup_{s\le t} \max_{i \in [N]} \| \bbtheta_i^s - \ubtheta_i^s \|_2$. We have
\begin{align}
 \| \ubtheta^t_i - \bbtheta^t_i \|_2 \leq  & \int_0^t  \| \bG ( \bbtheta^s_i ; \rho_s ) - \bG ( \ubtheta^s_i ;  \urho^{(N)}_s )\|_2 \de s\nonumber\\
\le&  \int_0^t \lambda  \| \bbtheta_i^s -  \ubtheta_i^s \|_2 \de s + \int_0^t \| \nabla V (\bbtheta_i^s) - \nabla V(\ubtheta_i^s) \|_2 \de s \nonumber\\
& + \int_0^t \Big\| \frac{1}{N}\sum_{j=1}^N \nabla_1 U( \bbtheta_i^s, \bbtheta_j^s) - \nabla_1 U(\ubtheta_i^s, \ubtheta_j^s) \Big\|_2 \de s\nonumber\\
& + \int_0^t \Big\| \frac{1}{N}\sum_{j = 1}^N  \nabla_1 U( \bbtheta_i^s, \bbtheta_j^s) - \int \nabla_1 U(\bbtheta_i^s, \btheta) \rho_s(\de \btheta) \Big\|_2 \de s. \label{eq:ThetaDiff}
\end{align}
Let us bound each term separately. We have
\[
\begin{aligned}
 \| \nabla V (\bbtheta_i^s) - \nabla V(\ubtheta_i^s) \|_2
\le& \vert v(\bbw_i^s) - v(\ubw_i^s) \vert + \| \bara_i^s \nabla v(\bbw_i^s) - \undera_i^s \nabla v(\ubw_i^s) \|_2 \\
\le& K ( \| \bbw_i^s - \ubw_i^s \|_2 + \vert \bara_i^s - \undera_i^s \vert + \vert \bara_i^s \vert \|\bbw_i^s - \ubw_i^s \|_2) \\
\le& K (1 + \| \bbara^s \|_\infty ) \| \bbtheta_i^s - \ubtheta_i^s \|_2.
\end{aligned}
\]
We decompose the second term into two terms
\[
\begin{aligned}
\Big\| \frac{1}{N}\sum_{j=1}^N \nabla_1 U( \bbtheta_i^s, \bbtheta_j^s) - \nabla_1 U(\ubtheta_i^s, \ubtheta_j^s)  \Big\|_2 \le & \Big\| \frac{1}{N}\sum_{j=1}^N \nabla_1 U( \bbtheta_i^s, \bbtheta_j^s) - \nabla_1 U(\bbtheta_i^s, \ubtheta_j^s)  \Big\|_2 \\
&+ \Big\| \frac{1}{N}\sum_{j=1}^N \nabla_1 U( \bbtheta_i^s, \ubtheta_j^s) - \nabla_1 U(\ubtheta_i^s, \ubtheta_j^s)  \Big\|_2 ,
\end{aligned}
\]
where
\[
\begin{aligned}
& \Big\| \frac{1}{N}\sum_{j=1}^N \nabla_1 U( \bbtheta_i^s, \bbtheta_j^s) - \nabla_1 U(\bbtheta_i^s, \ubtheta_j^s)  \Big\|_2  \\
\le& \Big\vert   \frac{1}{N}\sum_{j=1}^N \bara_j^s  u( \bbw_i^s, \bbw_j^s) - \undera_j^s u(\bbw_i^s, \ubw_j^s) \Big\vert + \Big\|  \frac{1}{N}\sum_{j=1}^N  \bara_i^s \bara_j^s  \nabla_1 u( \bbw_i^s, \bbw_j^s) - \bara_i^s \undera_j^s \nabla_1 u(\bbw_i^s, \ubw_j^s) \Big\|_2\\
\le&  K(1 + \vert \bara_i^s \vert) \Big[ \max_{j \in [N]} \vert \bara_j^s - \undera_j^s \vert + \Big[\frac{1}{N}\sum_{j=1}^N \vert \bara_j^s \vert \Big] \max_{j \in [N]} \| \bbw_j^s - \ubw_j^s \|_2 \Big]\\
\le&  K(1 + \| \bbara^s \|_{\infty} ) \cdot (1 + \| \bbara^s \|_1/N ) \cdot  \max_{j \in [N]} \| \bbtheta_j^s - \ubtheta_j^s \|_2 , \\
\end{aligned}
\]
and
\[
\begin{aligned}
&\Big \| \frac{1}{N}\sum_{j=1}^N \nabla_1 U( \bbtheta_i^s, \ubtheta_j^s) - \nabla_1 U(\ubtheta_i^s, \ubtheta_j^s)  \Big\|_2\\
\le& \Big\vert   \frac{1}{N}\sum_{j=1}^N \undera_j^s  u( \bbw_i^s, \ubw_j^s) - \undera_j^s u(\ubw_i^s, \ubw_j^s) \Big\vert + \Big\|  \frac{1}{N}\sum_{j=1}^N  \bara_i^s \undera_j^s  \nabla_1 u( \bbw_i^s, \ubw_j^s) - \undera_i^s \undera_j^s \nabla_1 u(\ubw_i^s, \ubw_j^s) \Big\|_2\\
\le&  \Big[\frac{K}{N}\sum_{j=1}^N \vert \undera_j^s \vert \Big] \sup_{j \in [N]} \| \bbw_j^s - \ubw_j^s \|_2 + K\vert \bara_i^s - \undera_i^s \vert \Big[\frac{1}{N}\sum_{j=1}^N \vert \undera_j^s \vert \Big] + K\vert \undera_i^s \vert \Big[\frac{1}{N}\sum_{j=1}^N \vert \undera_j^s \vert \Big] \| \bbw_i^s - \ubw_i^s \|_2 \\
\le& K(1 + \| \bundera^s \|_{\infty} ) \cdot (1 + \| \bundera^s \|_1 /N ) \cdot  \max_{j \in [N]} \| \bbtheta_j^s - \ubtheta_j^s \|_2.
\end{aligned}
\]
The last term in Eq.~\eqref{eq:ThetaDiff} can be decomposed into two terms. Consider $j=i$:
\[
\begin{aligned}
&\frac{1}{N} \| \nabla_1 U( \bbtheta_i^s, \bbtheta_i^s) - \int \nabla_1 U(\bbtheta_i^s, \btheta) \rho_s(\de \btheta) \|_2\\
\le & \frac{1}{N } \| \nabla_1 U( \bbtheta_i^s, \bbtheta_i^s) \|_2 + \frac{1}{N} \int \| \nabla_1 U(\bbtheta_i^s, \btheta) \|_2 \rho_s(\de \btheta) \\
\le& \frac{1}{N} \lb \vert \bara_i^s u(\bbw_i^s, \bbw_i^s) \vert + \| (\bara_i^s)^2 \nabla_1 u(\bbw_i^s, \bbw_i^s)  \|_2 \rb + \int \lb \vert a u(\bbw_i^s, \bw)\vert + \| \bara_i^s  a \nabla_1 u(\bbw_i^s, \bw) \|_2 \rb  \rho_s (\de \btheta) \\
\le & \frac{1}{N} K \| \bbara^s \|_\infty \cdot (1+ \| \bbara^s \|_\infty ) + K e^{KT} (1+ \| \bbara^s \|_\infty ),
\end{aligned}
\]
where we used that $\int \vert a \vert \rho_s ( \de \btheta ) \le \Big( \int a^2 \rho_s ( \de \btheta ) \Big)^{1/2}$ and Lemma \ref{lem:bound_asquared_noisy_D}. We consider $j \neq i$ and denote:
\[
Q^i (s ) = \Big\| \frac{1}{N} \sum_{j=1, j \neq i}^N \lb \na_1 U (\bbtheta^s_i , \bbtheta^s_j)  - \int \na_1 U (\bbtheta^s_i , \bbtheta) \rho_s ( \de \btheta) \rb \Big\|_2,
\]
which is bounded in the following lemma:

\begin{lemma}\label{lem:bound_Q_D}
There exists a constant K, such that:
\[
\P \Big( \sup_{s \in [0,T]  } \max_{i\leq N}  Q^i (s) \ge Ke^{KT} \lb \sqrt{D\log N} + \log^{3/2} (NT) + z^3 \rb/\sqrt{N} \Big) \le e^{-z^2}.
\]
\end{lemma}
\begin{proof}[Proof of Lemma \ref{lem:bound_Q_D}]
The concentration of $Q^i (s)$ follows from a similar method as in the proof of Lemma \ref{lem:error_I_bound_a_noisy_D}. For any fixed $s$, we have $(\bbtheta^s_i)_{i \in [N]} \sim \rho_s$ independently. In particular, we have
\[
\int \na_1 U (\bbtheta^s_i , \bbtheta) \rho_s ( \de \btheta)  = \E \lb \na_1 U (\bbtheta^s_i , \bbtheta^s_j) \Big\vert \bbtheta^s_i \rb ,
\]
and $ Q^i (s) $ conditioned on $\bbtheta^s_i$ is the norm of a martingale difference sum. We furthermore restrict ourselves to the event where $\bba_i^s \le M_{\infty}$. We have $\na_1 U (\bbtheta_i^s , \bbtheta_j^s ) = \bara^t_j  \cdot ( u ( \bbw^t_i , \bbw^t_j), \bara^s_i \na_1 u ( \bbw^t_i , \bbw^t_j) )$ which is $Ke^{KT}M_{\infty}^2$-sub-Gaussian (the product of a sub-Gaussian random variable and a bounded random variable is sub-Gaussian). We can therefore apply Azuma-Hoeffding 's inequality (Lemma \ref{lem:Azuma}),
\[
\begin{aligned}
& \P \Big(  Q^i ( s) \geq Ke^{KT} M_{\infty} \lb \sqrt{D} + z \rb/\sqrt{N} \Big) \\
\le&  \E_{\bbtheta_i^t} \lb \P \Big( Q^i (s) \geq Ke^{KT} M_{\infty}\lb \sqrt{D} + z \rb/\sqrt{N} \Big\vert \bbtheta_i^s \Big)  \ones (\vert \bba_i^s \vert  \le M_{\infty}) \rb + \P (\| \bbara^s \|_{\infty}  \ge M_{\infty}) \\
 \le & 2 e^{-z^2}.
\end{aligned}
\]
Taking the union bound over the $i \in [N]$
\[
\P \lp \max_{i\leq N} Q^i ( s) \geq Ke^{KT} \lb \sqrt{D\log N} + \log (N) + z^2 \rb/\sqrt{N} \rp \leq e^{-z^2}.
\]
Furthermore, let us consider $t,h \ge 0, t+h \le T$:
\[
\begin{aligned}
& \frac{1}{N} \sum_{j=1, j \neq i}^N \| \na_1 U (\bbtheta^{t+h}_i , \bbtheta^{t+h}_j) - \na_1 U (\bbtheta^t_i , \bbtheta^t_j) \|_2 \\
\le&  \frac{1}{N} \sum_{j=1, j \neq i}^N  \lb \vert \bba^{t+h}_j u (\bbw^{t+h}_i ,\bbw^{t+h}_j ) -   \bba^{t}_j u (\bbw^{t}_i ,\bbw^{t}_j ) \vert + \|  \bba^{t+h}_i \bba^{t+h}_j \na_1 u (\bbw^{t+h}_i ,\bbw^{t+h}_j ) -   \bba^{t}_i \bba^{t}_j \na_1 u (\bbw^{t}_i ,\bbw^{t}_j ) \|_2  \rb \\
\le & K (1+\norm{\bbara^t}_{\infty}) \cdot (1+\norm{\bbara^t}_{1} /N ) \cdot \sup_{i \leq N} \|\bbtheta^{t+h}_i - \bbtheta^{t}_i  \|_2. 
\end{aligned}
\]
Considering Lemma \ref{lem:bound_increment_noisy_D} without the union bound over $s \in \eta \{ 0, 1, \ldots, \lfloor T / \eta \rfloor \}$ and the high probability bounds on $\sup_{t \in [0,T]} \{ \| \bbara^t \|_{\infty}, \| \bbara^t \|_{1} \}$ of Lemma \ref{lem:bound_a_trajectories_noisy_D}, we get:
\[
\P \Big( \frac{1}{N} \sum_{j=1, j \neq i}^N \| \na_1 U (\bbtheta^{t+h}_i , \bbtheta^{t+h}_j) - \na_1 U (\bbtheta^t_i , \bbtheta^t_j) \|_2  \ge Ke^{KT} (1+z) \lb \sqrt{\log N} + z \rb^2 \sqrt{h} \Big) \le e^{-z^2}.
\]
 The difference in expectation, where the expectation is taken over $\bbtheta_j$, is bounded by
\[
\begin{aligned}
\| \E \na_1 U (\bbtheta^{t+h}_i , \bbtheta^{t+h}_j) - \E \na_1 U (\bbtheta^t_i , \bbtheta^t_j) \|_2 & \le \E \Big[ \frac{1}{N} \sum_{j=1, j \neq i}^N \| \na_1 U (\bbtheta^{t+h}_i , \bbtheta^{t+h}_j) - \na_1 U (\bbtheta^t_i , \bbtheta^t_j) \|_2 \Big]  \\
&\le \int_0^\infty \P \Big( \frac{1}{N} \sum_{j=1, j \neq i}^N \| \na_1 U (\bbtheta^{t+h}_i , \bbtheta^{t+h}_j) - \na_1 U (\bbtheta^t_i , \bbtheta^t_j) \|_2  \ge u \Big) \de u .
\end{aligned}
\]
Noticing that $(1+z) \lb \sqrt{\log N} + z \rb^2 \le ( \sqrt{\log N} + z )^3$ and doing a change of variable, we get:
\[
\begin{aligned}
\| \E \na_1 U (\bbtheta^{t+h}_i , \bbtheta^{t+h}_j) - \E \na_1 U (\bbtheta^t_i , \bbtheta^t_j) \|_2 
& \le Ke^{KT}\log N\, \sqrt{h}+\int_{-\sqrt{\log N}}^\infty e^{-z^2} Ke^{KT}  z^2 \sqrt{h} \de z  \\
& \le Ke^{KT} (\log N + 1) \sqrt{h}.
\end{aligned}
\]
Hence using that
\[
\begin{aligned}
  \vert Q^i(t+h) - Q^i (t) \vert  \leq  & \frac{1}{N} \sum_{j=1, j \neq i}^N \| \na_1 U (\bbtheta^{t+h}_i , \bbtheta^{t+h}_j) - \na_1 U (\bbtheta^t_i , \bbtheta^t_j) \|_2   \\
  & + \| \E \na_1 U (\bbtheta^{t+h}_i , \bbtheta^{t+h}_j) - \E \na_1 U (\bbtheta^t_i , \bbtheta^t_j) \|_2, 
\end{aligned}
\]
and the bounds derived above, with an union bound over $t \in \eta \{ 0, 1, \ldots, \lfloor T / \eta \rfloor \}$, we get
\[
\P \Big( \sup_{k\in [0,T/\eta] \cap \N} \sup_{u \in [0,\eta]} \max_{i\in [N]}   \vert Q^i (k \eta + u) - Q^i (k\eta) \vert \le K e^{KT} \lb \log{ (N (T/\eta \vee 1))}  +z^3 \rb \sqrt{\eta}\Big) \ge 1 - e^{-z^2}.
\]
We can therefore take the supremum over the interval $[0,T]$ :
\[
\begin{aligned}
& \P \Big( \max_{i\leq N} \sup_{s \in [0,T]  } Q^i (s) \ge Ke^{KT} \lb \sqrt{D\log N} + \log (N) + z^2 \rb/\sqrt{N}  +  K e^{KT} \lb \log{ (N (T/\eta \vee 1))}  +z^3 \rb \sqrt{\eta} \Big) \\
&\le (T / \eta) \exp\{ - z^2 \}.
\end{aligned}
\]
Taking $\eta = 1/N$ and $z = [\sqrt{\log(NT)} + z']$:
\[
\P \Big( \max_{i\leq N} \sup_{s \in [0,T]  } Q^i (s) \ge Ke^{KT} \lb \sqrt{D\log N} + \log^{3/2} (NT) + z^3 \rb/\sqrt{N} \Big) \le e^{-z^2}.
\]
\end{proof}

Using the high probability bound on $\sup_{s \in [0,T]}\{ \| \bbara^s \|_1 / N, \| \bbara^s \|_\infty\}$ of Lemma \ref{lem:bound_a_trajectories_noisy_D}, we get with probability at least $1 - e^{-z^2}$ that for all $t \in [0,T]$
\[
\begin{aligned}
\Delta ( t ) \leq & Ke^{KT} (1+z) \lb \sqrt{\log N} + z \rb \int_0^t \Delta ( s ) \de s  + TKe^{KT} \lb \sqrt{\log N} +z \rb^2 /N \\
& +   TKe^{KT} \lb \sqrt{D\log N} + \log^{3/2} (NT) + z^3 \rb/\sqrt{N}  .
\end{aligned}
\]
Applying Gronwall's inequality, we get:
\[
\P \Big( \Delta ( T )  \leq Ke^{e^{KT}[\sqrt{\log N} +z^2]} \lb \sqrt{D\log N} + \log^{3/2} (NT) + z^3 \rb/\sqrt{N} \Big) \ge 1 - e^{-z^2}.
\]
Using Lemma \ref{lem:RN_bound} and the high probability bounds on $\sup_{t \in [0,T]} \{ \| \bbara^t \|_1 /N , \| \bbara^t \|_\infty , \| \btilda^t \|_1 /N , \|\btilda^t \|_\infty\}$ of Lemma \ref{lem:bound_a_trajectories_noisy_D} concludes the proof.
\end{proof}

\subsection{Bound between particle dynamics and GD}

\begin{proposition}[PD-GD]\label{prop:PD_GD_D}
There exists constant $K$, such that with probability at least $1 - e^{-z^2}$, we have 
\begin{align*}
\sup_{k \in [0,T/\eps] \cap \N} \max_{i \in [N]} \| \ubtheta_i^{k\eps} - \tbtheta_i^k \|_2  \leq& Ke^{e^{KT}[\sqrt{\log N} +z^2]} \lb \log ( N (T/\eps \vee 1))  + z^4 \rb \sqrt{\eps}, \\
\sup_{k \in [0,T/\eps] \cap \N}  \vert R_N (\ubtheta^{k\eps}) - R_N (\tbtheta^{k}) \vert \leq& Ke^{e^{KT}[\sqrt{\log N} +z^2]} \lb \log ( N (T/\eps \vee 1))  + z^6 \rb \sqrt{\eps}.
\end{align*}
\end{proposition}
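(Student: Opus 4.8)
The plan is to mirror the interpolation arguments used for Proposition~\ref{prop:PD_GD_C} (noisy, fixed coefficients) and Proposition~\ref{prop:PD_GD_B} (noiseless, general coefficients), the only genuinely new feature being that the Brownian noise and the unbounded coefficients are present at the same time. First I would couple PD and GD by driving them with the same Brownian motions $\{\bW_i\}_{i\le N}$; since $\xi\equiv 1/2$ under restriction R1, the stochastic integrals in the PD and GD integral/summation forms are identical and cancel on subtraction, so for $t=k\eps$
\[
\ubtheta_i^{k\eps}-\tbtheta_i^{k}=\int_0^{k\eps}\big[\bG(\ubtheta_i^{s};\urho^{(N)}_s)-\bG(\tbtheta_i^{[s]/\eps};\trho^{(N)}_{[s]/\eps})\big]\,\de s .
\]
I would then split this into a discretization error $\int_0^{k\eps}\|\bG(\ubtheta_i^s;\urho^{(N)}_s)-\bG(\ubtheta_i^{[s]};\urho^{(N)}_{[s]})\|_2\,\de s$ and an accumulated error $\int_0^{k\eps}\|\bG(\ubtheta_i^{[s]};\urho^{(N)}_{[s]})-\bG(\tbtheta_i^{[s]/\eps};\trho^{(N)}_{[s]/\eps})\|_2\,\de s$, exactly as in those earlier propositions.

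The whole argument runs on the intersection of the good events of Lemma~\ref{lem:bound_a_trajectories_noisy_D} (the a~priori bounds $\|\bundera^t\|_\infty,\|\btilda^k\|_\infty\le M_\infty$ and $\|\bundera^t\|_1/N,\|\btilda^k\|_1/N\le M_1$, uniformly over $t\le T$ and $k\le T/\eps$) and of Lemma~\ref{lem:bound_increment_noisy_D} (the small-increment bound $\max_i\sup_{u\le\eps}\|\ubtheta_i^{k\eps+u}-\ubtheta_i^{k\eps}\|_2\le\iota$ with $\iota\equiv Ke^{KT}[\sqrt{\log(N(T/\eps\vee 1))}+z]\sqrt\eps$); this intersection has probability at least $1-e^{-z^2}$ after rescaling constants, and on it every estimate below is deterministic. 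Using the boundedness/Lipschitz estimates for $\nabla V$, $\nabla U$ and the regularization term $\lambda\btheta$ from Lemma~\ref{lem:bound_lip_traject_B} — and \emph{crucially} bounding any coefficient that appears under a sum $\tfrac1N\sum_j$ by the $\ell_1$ quantity $1+M_1$ (of order $1+z$) rather than by $M_\infty$ — one gets that the integrand of the discretization error is $\le K(1+M_\infty)(1+M_1)\,\iota$, hence the discretization error is $\le K(1+M_\infty)(1+M_1)T\iota$, while the integrand of the accumulated error is $\le K(1+M_\infty)(1+M_1)\,\Delta([s])$ with $\Delta(t)\equiv\sup_{k\le t/\eps}\max_{i\le N}\|\ubtheta_i^{k\eps}-\tbtheta_i^{k}\|_2$. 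Since we have a~priori $\ell_\infty$ bounds for \emph{both} $\bundera$ and $\btilda$, no quadratic-in-$\Delta$ term appears here, in contrast with the noiseless Proposition~\ref{prop:PD_GD_B}.

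Next, $\Delta$ obeys $\Delta(t)\le K(1+M_\infty)(1+M_1)\int_0^t\Delta(s)\,\de s+K(1+M_\infty)(1+M_1)T\iota$, and Gronwall's inequality gives $\Delta(T)\le K(1+M_\infty)(1+M_1)T\iota\,e^{K(1+M_\infty)(1+M_1)T}$. Plugging in $M_\infty=Ke^{KT}[\sqrt{\log N}+z]$, $M_1=Ke^{KT}[1+z]$ and $\iota$, and absorbing lower-order factors into the constant $K$ and into the exponent via the leading-order-in-$T$ conventions of the Notations section, the exponent collapses to $e^{KT}[\sqrt{\log N}+z^2]$ and the polynomial prefactor to $\log(N(T/\eps\vee 1))+z^4$, which is the first asserted bound. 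The risk bound then follows from Lemma~\ref{lem:RN_bound}: on the same event $|R_N(\ubtheta^{k\eps})-R_N(\tbtheta^{k})|\le K(1+M_1+M_1^2)\,\Delta(T)\le Ke^{KT}(1+z^2)\,\Delta(T)$, and this extra factor (after absorbing $e^{KT}$ into the exponent) turns $z^4$ into $z^6$.

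The main obstacle is the control of the unbounded coefficients: because they are driven by Brownian noise there is no deterministic bound (unlike the noiseless case, where a stopping-time bootstrap sufficed), so one must condition on the event of Lemma~\ref{lem:bound_a_trajectories_noisy_D} throughout, and — the delicate part — one must systematically invoke the $\ell_1$ average bound $M_1=O(1+z)$ rather than the $\ell_\infty$ bound $M_\infty=\widetilde O(\sqrt{\log N})$ whenever a coefficient sits inside $\tfrac1N\sum_j$. If one instead used $M_\infty$ twice, the Gronwall constant would be of order $M_\infty^2=\widetilde O(\log N)$ and the prefactor would become polynomial in $N$, overwhelming the $1/\sqrt N$ and $\sqrt\eps$ gains; the $\ell_1/\ell_\infty$ splitting keeps it subpolynomial ($e^{e^{KT}\sqrt{\log N}}=N^{o(1)}$ for fixed $T$). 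A minor point is that the two high-probability events above can be intersected at the cost of only constants, which is immediate.
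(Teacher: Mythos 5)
Your proposal is correct and follows essentially the same route as the paper's proof: the same coupling of Brownian motions, the same discretization/accumulated-error split, conditioning on the good events of Lemma~\ref{lem:bound_a_trajectories_noisy_D} and Lemma~\ref{lem:bound_increment_noisy_D}, and Gronwall with the constant $K(1+M_\infty)(1+M_1)$. You also correctly identify the two structural points that make this case work the way it does: the $\ell_1$/$\ell_\infty$ bookkeeping for coefficients under $\tfrac1N\sum_j$ (which keeps the Gronwall constant at $\widetilde O(\sqrt{\log N})$ rather than $\widetilde O(\log N)$, exactly as in the paper's derivation where $K e^{KT}(1+z)[\sqrt{\log N}+z]$ appears), and the absence of a $\Delta^2$ term because the a priori bound on $\|\btilda^k\|_\infty,\|\btilda^k\|_1$ supplied by Lemma~\ref{lem:bound_a_trajectories_noisy_D} replaces the stopping-time bootstrap needed in the noiseless Proposition~\ref{prop:PD_GD_B}.
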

\begin{proof}[Proof of Proposition \ref{prop:PD_GD_D}]
Denote $\Delta (t) \equiv \sup_{k \in [0,t/\eps]\cap \N} \max_{i \in [N]} \| \ubtheta_i^{k\eps} - \tbtheta_i^k \|_2 $. For $k \in \N$ and $t = k \eps$,
\[
\begin{aligned}
\| \ubtheta_i^{t} - \tbtheta_i^{k} \|_2 \leq &   \int_0^{t}  \|  \bG ( \ubtheta^s_i ; \urhoN_s )  -  \bG ( \tbtheta^{[s]/\eps}_i ; \trhoN_{[s]/\eps} ) \|_2 \de s \\
 \leq & \int_0^{t} \| \bG ( \ubtheta^s_i ; \urhoN_s )  - \bG ( \ubtheta^{[s]}_i ; \urhoN_{[s]} )\|_2 \de s \\
& + \int_0^t \|\bG ( \ubtheta^{[s]}_i ; \urhoN_{[s]} )  - \bG ( \tbtheta^{[s]/\eps}_i ; \Tilde \rho^{(N)}_{[s]/\eps} ) \|_2 \de s .
\end{aligned}
\]
Let us consider each terms separately:
\[
\begin{aligned}
& \|\bG ( \ubtheta^s_i , \urhoN_s ) -\bG ( \ubtheta^{[s]}_i ; \urhoN_{[s]} )\|_2 \\ \leq & \lambda \|\ubtheta^s_i  -  \ubtheta^{[s]}_i  \|_2 + \| \na V (\ubtheta^s_i ) - \na V (\ubtheta^{[s]}_i ) \|_2  + \Big\| \frac{1}{N } \sum_{j=1}^N \na_1 U ( \ubtheta^s_i , \ubtheta^s_j ) - \na_1 U ( \ubtheta^{[s]}_i , \ubtheta^{[s]}_j ) \Big\|_2 \\
\le & K (1 + \| \bundera^s \|_\infty ) \cdot \| \ubtheta^s_i - \ubtheta^{[s]}_i \|_2 + K (1 + \| \bundera^s \|_\infty ) \cdot (1 + \| \bundera^s \|_1 /N ) \cdot \max_{j \in [N]} \| \ubtheta^s_i - \ubtheta^{[s]}_i \|_2  \\
&+ K (1 + \| \bundera^{[s]} \|_\infty ) \cdot (1 + \| \bundera^{[s]}  \|_1 /N ) \cdot \max_{j \in [N]} \| \ubtheta^s_j - \ubtheta^{[s]}_j\|_2.
\end{aligned}
\]
From Lemma \ref{lem:bound_increment_noisy_D}, we know that 
\[
\P \Big( \sup_{i \leq N} \sup_{k\in [0,T/\eps] \cap \N} \sup_{u \in [0,\eps]} \| \ubtheta^{k\eps +u}_i - \ubtheta^{k\eps}_i  \|_2 \leq Ke^{KT} \lb \sqrt{\log ( N (T/\eps \vee 1))} + z^2 \rb \sqrt{\eps} \Big) \leq 1 - e^{-z^2},
\]
which combined with the upper bound on $\sup_{s \in [0,T]} \{ \| \bundera^s \|_1 / N, \| \bundera^s \|_\infty \}$ of Lemma \ref{lem:bound_a_trajectories_noisy_D}, shows that with probability at least $1 - e^{-z^2}$, we have
\[
\int_0^{k\eps} \| \bG ( \ubtheta^s_i ; \urhoN_s )  - \bG ( \ubtheta^{[s]}_i ; \urhoN_{[s]} )\|_2 \de s \le KT e^{KT} \lb \log ( N (T/\eps \vee 1)) + z^4 \rb \sqrt{\eps} .
\]
Consider the second term:
\[
\begin{aligned}
& \|\bG ( \tbtheta^{[s]/\eps}_i , \trhoN_{[s]/\eps} ) -\bG ( \ubtheta^{[s]}_i ; \urhoN_{[s]} )\|_2 \\ \leq & \lambda \|\tbtheta^{[s]/\eps}_i  -  \ubtheta^{[s]}_i  \|_2 + \| \na V (\tbtheta^{[s]/\eps}_i ) - \na V (\ubtheta^{[s]}_i ) \|_2  + \Big\| \frac{1}{N } \sum_{j=1}^N \na_1 U ( \tbtheta^{[s]/\eps}_i ,\tbtheta^{[s]/\eps}_j) - \na_1 U ( \ubtheta^{[s]}_i , \ubtheta^{[s]}_j ) \Big\|_2 \\
\le & K (1 + \| \btilda^{[s]} \|_\infty ) \cdot \| \tbtheta^{[s]/\eps}_i - \ubtheta^{[s]}_i \|_2 + K (1 + \| \btilda^{[s]/\eps} \|_\infty ) \cdot (1 + \| \btilda^{[s]/\eps} \|_1 /N ) \cdot \max_{j \in [N]} \| \tbtheta^{[s]/\eps}_i - \ubtheta^{[s]}_i \|_2  \\
&+ K (1 + \| \bundera^{[s]} \|_\infty ) \cdot (1 + \| \bundera^{[s]}  \|_1 /N ) \cdot \max_{j \in [N]} \| \tbtheta^{[s]/\eps}_j - \ubtheta^{[s]}_j\|_2.
\end{aligned}
\]
Using the high probability bound on $\sup_{k \in [0,T/\eps] \cap \N}\{ \| \bundera^{k\eps} \|_1 /N , \| \bundera^{k\eps} \|_\infty, \| \btilda^{k} \|_1 /N, \| \btilda^{k} \|_\infty\}$ of Lemma \ref{lem:bound_a_trajectories_noisy_D}, we get with probability at least $1 - e^{-z^2}$ that for all $t \in [0,T]$
\[
\begin{aligned}
\Delta ( t ) \leq & Ke^{KT} (1+z) \lb \sqrt{\log N} + z \rb \int_0^t \Delta ( s ) \de s  + K e^{KT} \lb \log ( N (T/\eps \vee 1)) + z^4 \rb \sqrt{\eps}.
\end{aligned}
\]
Applying Gronwall's inequality, we get with probability at least $1 - e^{-z^2}$,
\[
\P \Big( \Delta ( T )  \leq Ke^{e^{KT}[\sqrt{\log N} +z^2]} \lb \log ( N (T/\eps \vee 1)) + z^4 \rb \sqrt{\eps} \Big) \ge 1 - e^{-z^2}.
\]
This bound combined with Lemma \ref{lem:RN_bound} concludes the proof.
\end{proof}

\subsection{Bound between GD and SGD}

\begin{proposition}[GD-SGD]\label{prop:GD_SGD_D}
There exists $K$, such that with probability at least $1 - e^{-z^2}$, we have 
\begin{align*}
\sup_{k \in [0,T/\eps] \cap \N} \max_{i \in [N]} \| \tbtheta_i^k - \btheta_i^k \|_2  \leq& Ke^{e^{KT}[\sqrt{\log N} +z^2]} \lb \sqrt{D}\log N + \log^{3/2} N + z^3 \rb \sqrt{\eps }  , \\
\sup_{k \in [0,T/\eps] \cap \N}  \vert R_N (\tbtheta^{k}) - R_N (\btheta^{k}) \vert \leq& Ke^{e^{KT}[\sqrt{\log N} +z^2]} \lb \sqrt{D}\log N + \log^{3/2} N + z^5 \rb \sqrt{\eps }  .
\end{align*}
\end{proposition}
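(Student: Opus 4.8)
The plan is to follow the scheme of Proposition~\ref{prop:GD_SGD_C} (coupling GD and SGD through the same initialization and the same Brownian path, so that the noise terms cancel and, since $-\lambda\btheta_i^k$ is $\cF_k$-measurable, the regularization drops out of the martingale difference), but to replace the deterministic control of the coefficients available there by the high-probability bounds of Lemma~\ref{lem:bound_a_trajectories_noisy_D}, combined with the stopped-martingale device already used in Proposition~\ref{prop:GD_SGD_B}. Concretely, writing $\bZ_i^l = \bF_i(\btheta^l;\bz_{l+1}) - \E[\bF_i(\btheta^l;\bz_{l+1})\mid\cF_l]$ with $\E[\bF_i(\btheta^l;\bz_{l+1})\mid\cF_l] = \bG(\btheta_i^l;\rho^{(N)}_l)$, one gets
\[
\|\btheta_i^k - \tbtheta_i^k\|_2 \;\le\; \Big\|\eps\sum_{l=0}^{k-1}\xi(l\eps)\bZ_i^l\Big\|_2 \;+\; K\eps\sum_{l=0}^{k-1}\big\|\bG(\btheta_i^l;\rho^{(N)}_l) - \bG(\tbtheta_i^l;\trho^{(N)}_l)\big\|_2 \;\equiv\; A_i^k + B_i^k .
\]

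For the martingale term $A_i^k$: using $|y|,\|\sigma\|_\infty\le K$, the $K$-sub-Gaussianity of $\na_\bw\sigma$, and $|\hat y_{k+1}|\le K\|\ba^k\|_1/N$, the vector $\bF_i(\btheta^k;\bz_{k+1})$ is conditionally $K(1+\|\ba^k\|_\infty)^2$-sub-Gaussian given $\cF_k$. Since $\|\ba^k\|_\infty$ is not bounded deterministically, I would introduce $T_a = \min\{l:\|\ba^l\|_\infty\ge M_\infty\}$ with $M_\infty = Ke^{KT}[\sqrt{\log N}+z]$ as in Lemma~\ref{lem:bound_a_trajectories_noisy_D}, form the stopped martingale $\bbA_i^k = \eps\sum_{l<k\wedge T_a}\xi(l\eps)\bZ_i^l$ (whose increments are conditionally $\eps^2K^2M_\infty^4$-sub-Gaussian, exactly as in Proposition~\ref{prop:GD_SGD_B}), and apply Azuma--Hoeffding (Lemma~\ref{lem:Azuma}) with a union bound over $i\in[N]$ to obtain $\max_i\max_{k\le T/\eps}\|\bbA_i^k\|_2 \le K M_\infty^2\sqrt{T\eps}(\sqrt{D+\log N}+z)$ with probability at least $1-e^{-z^2}$. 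On the intersection with the event of Lemma~\ref{lem:bound_a_trajectories_noisy_D} (so that $T_a > T/\eps$) one has $A_i^k=\|\bbA_i^k\|_2$ throughout.

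For the drift term $B_i^k$: on the same good event, the Lipschitz estimates of Lemma~\ref{lem:bound_lip_traject_B} together with $\|\btilda^k\|_\infty,\|\ba^k\|_\infty\le M_\infty$ and $\|\btilda^k\|_1/N,\|\ba^k\|_1/N\le M_1:=Ke^{KT}[1+z]$ give $\|\bG(\btheta_i^k;\rho^{(N)}_k)-\bG(\tbtheta_i^k;\trho^{(N)}_k)\|_2 \le K M_\infty M_1\,\max_j\|\btheta_j^k-\tbtheta_j^k\|_2$, where one splits the $U$-term into a first-slot and a second-slot difference and pulls out one $\ell_\infty$ and one $\ell_1/N$ factor of the coefficients, exactly as in the proof of Proposition~\ref{prop:ND_PD_D}. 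With $\Delta(t)=\sup_{k\le t/\eps}\max_i\|\btheta_i^k-\tbtheta_i^k\|_2$, summing and applying the discrete Grönwall inequality from the Notations appendix yields
\[
\Delta(T) \;\le\; K M_\infty^2\sqrt{T\eps}\,(\sqrt{D+\log N}+z)\;e^{K M_\infty M_1 T}.
\]
Substituting $M_\infty$ and $M_1$, absorbing powers of $T$ into the exponential as usual ($T^ae^{KT}\le e^{K'T}$), and using $\sqrt{D+\log N}\le\sqrt D+\sqrt{\log N}$ gives the stated bound $Ke^{e^{KT}[\sqrt{\log N}+z^2]}[\sqrt D\log N+\log^{3/2}N+z^3]\sqrt\eps$; the $R_N$ bound then follows from Lemma~\ref{lem:RN_bound}, which contributes an extra factor $K(1+M_1+M_1^2)\le KM_1^2=Ke^{2KT}[1+z^2]$, upgrading $z^3$ to $z^5$.

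The hard part is the entanglement of the two sources of randomness in~(\ref{eq:NoisySGD}): the coefficients $a_i^k$ are controlled only with high probability, at the scale $M_\infty\sim\sqrt{\log N}$, and that bound enters both the conditional sub-Gaussian parameter of the SGD increments (forcing the stopped-martingale argument rather than a plain Azuma bound) and the Lipschitz constant of the drift (forcing a Grönwall factor $\asymp M_\infty M_1$, hence the double exponential and the logarithmic-in-$N$ loss that prevents a dimension-free statement). The only genuinely delicate bookkeeping is to keep the $\ell_1$ and $\ell_\infty$ coefficient norms separated throughout---rather than bounding everything crudely by $M_\infty^2$---so as to recover the exponents above; the remainder is a routine merge of Propositions~\ref{prop:GD_SGD_B} and~\ref{prop:GD_SGD_C}.
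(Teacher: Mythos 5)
Your proposal follows the paper's own proof essentially step for step: same coupling of GD and SGD through the shared initialization and Brownian path, same observation that the regularization term cancels in $\bZ_i^l$, same stopping-time/stopped-martingale device to handle the random sub-Gaussian scale of the increments, Azuma--Hoeffding plus union bound for the martingale part, the $M_\infty M_1$ Lipschitz estimate and discrete Gr\"onwall for the drift, and Lemma~\ref{lem:RN_bound} to transfer to $R_N$. The only cosmetic difference is that you bound the conditional variance proxy of $\eps\bZ_i^l$ by $\eps^2 K^2 M_\infty^4$ where the paper keeps the sharper $\eps^2 K^2 M_1^2 M_\infty^2$ (and correspondingly uses a stopping time involving both $\|\ba\|_\infty$ and $\|\ba\|_1/N$), but since the paper itself relaxes to $M_\infty^2$ in the final martingale bound, this does not change the stated estimate.
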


\begin{proof}[Proof of Proposition \ref{prop:GD_SGD_D}]
Define $\Delta (t) \equiv \sup_{k \in [0,t/\eps]\cap \N} \max_{i \in [N]} \| \tbtheta^k_i - \btheta^k_i \|_2 $. Denote the generated $\sigma$-algebra:
\[
\cF_k = \sigma ( ( \btheta^0_i )_{i \in [N]}, \{ \bW_i (s) \}_{i \in [N],s\le k \eps}, \bz_1 , \ldots , \bz_{k} ).
\]
We get:
\[
\E [ \bF_i ( \btheta^k ; \bz_{k+1} ) \vert \cF_k ] = - \lambda \btheta^k_i - \na V (\btheta^k_i ) - \frac{1}{N} \sum_{j = 1}^N \na_1 U ( \btheta_i^k , \btheta_j^k ) = \bG ( \btheta^k_i , \rho^{(N)}_k ),
\]
where we denoted $\rho^{(N)}_k \equiv (1/N) \sum_{i \in [N]} \delta_{\btheta^k_i}$ the particle distribution of SGD. Hence we get
\[
\begin{aligned}
\| \btheta^k_i - \tbtheta^k_i \|_2 =& \Big\|  \eps \sum_{l = 0}^{k-1} \bF_i ( \btheta_i^l ; \bz_{l+1} ) - \eps \sum_{l = 0}^{k-1} \bG ( \tbtheta^l_i ; {\Tilde \rho}^{(N)}_l ) \Big \|_2 \\
\leq & \Big \| \eps \sum_{l = 0}^{k-1} \bZ_i^l \Big\|_2 +  \eps \sum_{l = 0}^{k-1} \Big\| \bG ( \btheta^l_i; \rho^{(N)}_l ) -\bG ( \tbtheta^l_i; {\Tilde \rho}^{(N)}_l ) \Big\|_2 \\
\leq & A_i^k + B_i^k,
\end{aligned}
\]
where we denoted $\bZ_i^l \equiv \bF_i ( \btheta^l ; \bz_{l+1} ) - \E [ \bF_i ( \btheta^l ; \bz_{l+1}) \vert \cF_{l}]$ and $A_i^k = \| \eps \sum_{l=0}^{k-1} \bZ^l_i \|_2 $.

Denote $\bA_i^k = \sum_{l=0}^{k-1} \eps \bZ_i^l$. Hence $\{ \bA_i^k \}_{k \in \N}$ is a martingale adapted to $\{ \cF_k\}_{k \in \N}$. Note the regularization term cancels out. We have component-wise
\[
\begin{aligned}
\bZ_i^k  = &\Big( (y^{k+1} - \hat y(\bx^{k+1}; \btheta^k)) \sigma(\bx^{k+1}; \bw_i^k) -  \E \lb (y^{k+1} - \hat y(\bx^{k+1}; \btheta^k)) \sigma(\bx^{k+1} ; \bw_i^k) \vert \cF_k \rb , \\
&(y^{k+1} - \hat y(\bx^{k+1}; \btheta^k)) a_i^k \nabla_\bw \sigma(\bx^{k+1}; \bw_i^k) ) - \E \lb (y^{k+1} - \hat y(\bx^{k+1}; \btheta^k)) a_i^k \nabla_\bw \sigma(\bx^{k+1}; \bw_i^k) \vert \cF_k \rb \Big).
\end{aligned}
\]
The following discussion is under the conditional law $\cL(\cdot \vert \cF_k)$. Note $\vert \sigma(\bx^{k+1}; \bw_i^k) \vert \le K$, and $\vert y^{k+1} -  \hat y(\bx^{k+1}; \btheta^k) \vert \le K (1 + \| \ba^k \|_1 /N )$, hence $(y^{k+1} - \hat y(\bx^{k+1}; \btheta^k)) \sigma(\bx^{k+1}; \bw_i^k) $ is $K (1 + \| \ba^k \|_1 /N  )^2$-sub-Gaussian. Note that by assumption, $\nabla_\bw \sigma(\bx^{k+1}; \bw_i^k)$ is $K$-sub-Gaussian (random vector), and $\vert(y^{k+1} - \hat y(\bx^{k+1}; \btheta^k)) a_i^k \vert \le K (1 +  \| \ba^k \|_1 /N ) \| \ba^k \|_\infty$, hence $(y^{k+1} - \hat y(\bx^{k+1}; \btheta^k)) a_i^k \nabla_\bw \sigma(\bx^{k+1}; \bw_i^k)$ is a $K (1 +  \| \ba^k \|_1 /N )^2 \| \ba^k \|_\infty^2$-sub-Gaussian random vector. As a result, we have $\bF_k ( \btheta^k ; \bz_{k+1})$ under the conditional law $\cL(\cdot \vert \cF_k)$ is a $K (1 +  \| \ba^k \|_1 / N )^2 \| \ba^k \|_\infty^2$-sub-Gaussian random vector.. 

Let $\tau \equiv \inf \lbrace k \vert \Vert \ba^{k} \Vert_{\infty} \geq M_{\infty} \text{ or }  \Vert \ba^{k} \Vert_{1}  \geq N \cdot M_{1} \rbrace$. Notice that $
\bA^{t\wedge \tau}_i - \bA_i^{t \wedge \tau -1} = \bZ_i^{k \wedge \tau -1}$. Following the same argument as in the proof of Proposition \ref{prop:GD_SGD_B}, we deduce that for $\overline{\bA}^k_i \equiv \bA^{k \wedge \tau}_i$, the martingale difference $\overline{\bA}^k_i - \overline{\bA}^{k-1}_i$ is $\eps^2 K^2 M_1^2 M_{\infty}^2$-sub-Gaussian under the conditional law $\cL (\cdot \vert \cF_k)$. We apply Azuma-Hoeffding's inequality (Lemma \ref{lem:Azuma})
\[
\P \Big( \max_{k \in [0,T/\eps ] \cap \N} \| \overline{\bA}_i^k\|_2 \geq K M_1 M_{\infty} \sqrt{\eps } \lb \sqrt{D} + z \rb  \Big) \leq e^{- z^2}.
\]
We get:
\[
\begin{aligned}
&\P \Big( \max_{k \in [0,T/\eps ] \cap \N} \| \bA_i^k\|_2 \geq K M_1 M_{\infty} \sqrt{\eps } \lb \sqrt{D} + z \rb  \Big) \\
 \leq & \P \Big( \max_{k \in [0,T/\eps ] \cap \N} \| \overline{\bA}_i^k\|_2 \geq K M_1 M_{\infty} \sqrt{\eps } \lb \sqrt{D} + z \rb  \Big) + \P ( \tau \le T/\eps) \\
 \le & 2e^{- z^2},
\end{aligned}
\]
where we used the high probability bound of $\sup_{k \in [0,T/\eps] \cap \N} \{ \| \ba^k \|_1 , \| \ba^k \|_1 \} $ in Lemma \ref{lem:bound_a_trajectories_noisy_D}.
Taking the union bound over $i \in [N]$ yields
\[
\P \Big( \max_{i\leq N} \max_{k \in [0,T/\eps ] \cap \N} A_i^k \geq Ke^{KT} \lb \sqrt{D}\log N + \log^{3/2} N + z^3 \rb \sqrt{\eps }   \Big) \leq e^{- z^2}.
\]
For the second term, we get:
\[
\begin{aligned}
& \|\bG ( \btheta^l_i , \rho^{(N)}_l ) -\bG ( \tbtheta^l_i ; {\Tilde \rho}^{(N)}_l )\|_2 \\ \leq & \lambda \|\btheta^l_i  -  \tbtheta^l_i  \|_2 + \| \na V (\btheta^l_i ) - \na V (\tbtheta^l_i ) \|_2  + \Big\| \frac{1}{N } \sum_{j=1}^N \na_1 U ( \btheta^l_i , \btheta^l_j ) - \na_1 U ( \tbtheta^l_i , \tbtheta^l_j ) \Big\|_2 \\
\le & K (1 + \| \ba^l \|_\infty ) \cdot \| \btheta^l_i - \tbtheta^l_i \|_2 + K (1 + \| \bbara^l \|_\infty ) \cdot (1 + \| \bbara^l \|_1 /N ) \cdot \max_{j \in [N]} \| \btheta^l_j - \tbtheta^l_j \|_2  \\
&+ K (1 + \| \btilda^l \|_\infty ) \cdot (1 + \| \btilda^l \|_1 /N ) \cdot \max_{j \in [N]} \| \btheta^l_j - \tbtheta^l_j \|_2.
\end{aligned}
\]
Using the high probability bound on $\sup_{k \in [0,T/\eps] \cap \N} \{ \| \ba^k \|_1 /N , \| \ba^k \|_\infty,\| \btilda^k \|_1 /N, \| \btilda^k \|_\infty \}$ of Lemma \ref{lem:bound_a_trajectories_noisy_D}, we get with probability at least $1 - e^{-z^2}$ that for all $t \in [0,T]$
\[
\begin{aligned}
\Delta ( t ) \leq & Ke^{KT} (1+z) \lb \sqrt{\log N} + z \rb \int_0^t \Delta ( s ) \de s  + Ke^{KT} \lb \sqrt{D}\log N + \log^{3/2} N + z^3 \rb \sqrt{\eps } .
\end{aligned}
\]
Applying Gronwall's inequality, we get:
\[
\P \Big( \Delta ( T )  \leq Ke^{e^{KT}[\sqrt{\log N} +z^2]} \lb \sqrt{D}\log N + \log^{3/2} N + z^3 \rb \sqrt{\eps }  \Big) \ge 1 - e^{-z^2}.
\]
This bound combined with Lemma \ref{lem:RN_bound} concludes the proof.
\end{proof}

\section{Existence and uniqueness of PDEs solutions \label{sec:existence_uniqueness}}

\subsection{Equation \eqref{eq:PDE}  (noiseless SGD)}

For the readers convenience, we reproduce here the form of the limiting PDE
\begin{align}
\partial_t\rho_t &= 2\xi(t)\nabla \cdot \big(\rho_t \nabla\Psi(\btheta;\rho_t)\big), \label{eq:PDE_noiseless_existence} \\
\Psi(\btheta;\rho_t)& = V(\btheta)+\int U(\btheta,\tbtheta) \, \rho_t(\de\tbtheta). 
\end{align}
This PDE describes an evolution in the space of probability distribution on $\R^D$ and has to be interpreted in the weak sense. Namely $\rho_t$ is a solution of Eq. \eqref{eq:PDE_noiseless_existence}, if for any bounded function $h : \R^D \mapsto \R$ differentiable with bounded gradient:
\begin{equation}
\frac{\de }{\de t} \int h (\btheta)  \rho_t ( \de \btheta )  = - 2 \xi (t) \int \< \na h (\btheta) , \na \Psi (\btheta ; \rho_t \> \rho_t ( \de \btheta ).
\label{eq:weak_sense_noiseless}
\end{equation}
For fixed coefficient, under assumptions  {\rm A1},  {\rm A2},  {\rm A3}, {\rm A4}, we have $\na V(\btheta)$ and $\na_1 U(\btheta , \btheta')$ bounded Lipschitz. By \cite[Theorem 1.1]{sznitman1991topics}, these assumptions are sufficient to guarantee the existence and uniqueness of solution of PDE \eqref{eq:PDE_noiseless_existence}.

For general coefficients, the potentials are not bounded and Lipschitz anymore. The existence and uniqueness under assumptions  {\rm A1},  {\rm A2},  {\rm A3}, {\rm A4}, can be derived by a similar argument as in \cite[Section 4]{sirignano2018mean}, which uses an adaptation of the argument of \cite[Theorem 1.1]{sznitman1991topics}.

\subsection{Equation \eqref{eq:PDEnoisy} (noisy SGD)}

For the readers convenience, we reproduce here the form of the limiting PDE
\begin{align}
\partial_t \rho_t & = 2\xi(t) \nabla \cdot \big( \rho_t \nabla\Psi_\lambda (\btheta ; \rho_t) \big) + 2 \xi (t)/ \beta \Delta_\btheta \rho_t, \label{eq:PDE_noisy_existence} \\
\Psi_\lambda (\btheta;\rho_t) & = V(\btheta)+\int U(\btheta,\tbtheta) \, \rho_t(\de\tbtheta) + \frac{\lambda}{2} \| \btheta \|_2^2.
\end{align}
We say that $\rho_t$ is a weak solution of Eq. \eqref{eq:PDE_noisy_existence} if for any $\zeta \in C_0^{\infty} (\R \times \R^D)$ (the space of smooth functions decaying to 0 at infinity), we have for any $T > 0$
\begin{align}
& \int_{\R^D}  \zeta_0 (\btheta ) \rho_0 (\de \btheta) - \int_{\R^D}  \zeta_0 (\btheta ) \rho_T (\de \btheta) \nonumber  \\
= & - \int_{(0,T) \times \R^D} [ \partial_t \zeta_t (\btheta ) - 2 \xi (t)  \< \na_\btheta \Psi_\lambda (\btheta ; \rho_t ) , \na_\btheta \zeta_t( \btheta ) \> + 2\xi (t) \Delta_\btheta \zeta_t (\btheta)] \rho_t (\de \btheta ) \de t . \label{eq:weak_solution_noisy}
\end{align}
Note that this notion of weak solution is equivalent to the one introduced earlier in Eq. \eqref{eq:weak_sense_noiseless}, see for instance \cite[Proposition 4.2]{santambrogio2015optimal}.

For fixed coefficients, the existence and uniqueness of solution of Eq. \eqref{eq:PDE_noisy_existence} was proven in \cite[Section 10.2]{mei2018mean}, under the assumptions  {\rm A1},  {\rm A2},  {\rm A3}, {\rm A6}. The proof follows from an adaptation of the proof of \cite[Theorem 5.1]{jordan1998variational}.

For general coefficients, we can follow a similar contraction argument as in \cite[Section 4]{sirignano2018mean} and \cite[Theorem 1.1]{sznitman1991topics}, by bounding more carefully each term.

\begin{proposition}\label{prop:existence_PDE_noisy_contraction}
Assume conditions {\rm A1}-{\rm A5}. Then PDE \eqref{eq:PDE_noisy_existence} admits a weak solution $(\rho_t)_{t\ge0}$ which is unique.
\end{proposition}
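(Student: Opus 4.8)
The plan is to prove Proposition \ref{prop:existence_PDE_noisy_contraction} by the classical McKean--Vlasov fixed-point argument of \cite[Theorem 1.1]{sznitman1991topics} (the same scheme used for fixed coefficients, and adapted in \cite[Section 4]{sirignano2018mean} to the noiseless case), with the modification forced by the fact that the potentials $V,U$ and their gradients grow linearly in the coefficient $a$ rather than being bounded. Associate to \eqref{eq:PDE_noisy_existence} the nonlinear SDE
\[
\de \bbtheta^t = 2\xi(t)\,\bG(\bbtheta^t;\rho_t)\,\de t + \sqrt{2\xi(t)\tau/D}\;\de\bW_t,\qquad \bbtheta^0\sim\rho_0,\qquad \rho_t = {\rm Law}(\bbtheta^t),
\]
where $\bG(\btheta;\rho) = -\lambda\btheta - \na V(\btheta) - \int \na_1 U(\btheta,\btheta')\,\rho(\de\btheta')$, in agreement with the ND dynamics \eqref{eq:traj_PDE_D}. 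By Ito's formula the law of any solution of this SDE is a weak solution of \eqref{eq:PDE_noisy_existence} in the sense of \eqref{eq:weak_solution_noisy}; conversely a superposition argument identifies any weak solution of bounded second moment with such a law, so it suffices to establish existence and uniqueness for the SDE within the class of flows with bounded second $a$-moment.

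First I would record an a priori bound. For a fixed flow $(\mu_t)_{t\le T}$, freeze the drift at $\mu_s$ and consider the resulting linear SDE. By Lemma \ref{lem:bound_lip_traject_B}, $\|\na V(\btheta)\|_2\le K(1+|a|)$ and $\|\int\na_1 U(\btheta,\btheta')\mu_s(\de\btheta')\|_2\le K(1+|a|)(1+\int|a'|\mu_s(\de\btheta'))$, so the drift has at most linear growth in $\btheta$ with coefficients controlled by $\sup_{s\le T}\int (a')^2\mu_s(\de\btheta')$; linear growth together with local Lipschitzness yields a unique strong solution, call its time-$t$ law $\rho^\mu_t$, and repeating the Gronwall computation of Lemma \ref{lem:bound_asquared_noisy_D} for $t\mapsto\int a^2\rho^\mu_t(\de a)$ shows $\sup_{t\le T}\int a^2\rho^\mu_t(\de a)\le M_2(T):=Ke^{KT}$ whenever $(\mu_t)$ satisfies the same bound. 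Hence the map $\Phi:(\mu_t)\mapsto(\rho^\mu_t)$ preserves the set $\cA_T$ of flows in $C([0,T];\cuP_2(\reals^D))$ with $\mu_0=\rho_0$ and $\sup_{t\le T}\int a^2\mu_t(\de a)\le M_2(T)$; since the second moment is lower semicontinuous under $W_2$-convergence, $\cA_T$ is closed, hence complete for the metric $d_T((\mu),(\nu)):=\sup_{t\le T}e^{-Lt}W_2(\mu_t,\nu_t)$.

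Next I would prove that $\Phi$ is a $d_T$-contraction for $L=L(T)$ large. Couple the two frozen-drift SDEs driven by $(\mu_t)$ and $(\nu_t)$ through the same Brownian motion and the same initial point, and split $\bG(\btheta;\mu_s)-\bG(\btheta';\nu_s)$ into a term controlled by the Lipschitz estimates of Lemma \ref{lem:bound_lip_traject_B} — whose constants are $\le K(1+\sqrt{M_2(T)})(1+|a|\wedge|a'|)$, hence bounded in expectation on $\cA_T$ after the second-moment control — and a term $\|\int\na_1 U(\btheta,\btheta')(\mu_s-\nu_s)(\de\btheta')\|_2\le K(1+|a|)\,W_2(\mu_s,\nu_s)$, obtained by pairing $\mu_s,\nu_s$ optimally for $W_2$ and using Lipschitzness of $\btheta'\mapsto\na_1 U(\btheta,\btheta')$ together with the uniform bound on $\int(a')^2$. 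A Gronwall argument then gives $\E\|\bbtheta^t_\mu-\bbtheta^t_\nu\|_2^2\le C(T)\int_0^t W_2(\mu_s,\nu_s)^2\,\de s$, so for $L$ large $d_T(\Phi(\mu),\Phi(\nu))\le d_T((\mu),(\nu))/2$. Banach's theorem gives a unique fixed point in $\cA_T$, i.e.\ a unique solution of the nonlinear SDE in $\cA_T$; since the a priori bound forces every solution (and every finite-second-moment weak solution of the PDE) into $\cA_T$, uniqueness is unconditional, and letting $T\to\infty$ with the restrictions consistent by uniqueness produces a global weak solution of \eqref{eq:PDE_noisy_existence}.

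The main obstacle, and the only real departure from \cite{sznitman1991topics}, is this lack of global Lipschitzness: both the growth rate and the Lipschitz modulus of the drift degenerate as $|a|\to\infty$ and as the first moment of the measure argument grows. The resolution is to confine the iteration to the self-improving class $\cA_T$, whose defining second-$a$-moment bound is preserved by the dynamics thanks to Lemma \ref{lem:bound_asquared_noisy_D}, and on which all relevant constants are finite and depend only on $K_1,\dots,K_6$ and $T$. The remaining care goes into matching the $L^2$-coupling estimate to the $W_2$-metric used for the contraction (handled by the weighted $e^{-Lt}$ norm and Gronwall) and into verifying closedness of $\cA_T$, which I would record explicitly.
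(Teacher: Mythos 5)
Your overall route coincides with the paper's: associate the PDE to a McKean--Vlasov SDE, restrict the fixed-point iteration to a class of flows with a self-improving bound $\sup_{t\le T}\int a^2\,\mu_t(\de a)\le Ke^{KT}$ (exactly Lemma~\ref{lem:bound_asquared_noisy_D}), and close by Banach's theorem plus a bootstrap in time. The one genuine departure is the choice of metric: you use the weighted sup-of-$W_2$ metric $d_T(\mu,\nu)=\sup_{t\le T}e^{-Lt}W_2(\mu_t,\nu_t)$, whereas the paper works with the finer pathwise coupling metric $\cuD_T(m^1,m^2)=\inf_\gamma\bigl(\int\sup_{t\le T}\|\btheta_1^t-\btheta_2^t\|_2^2\,\gamma(\de\btheta_1,\de\btheta_2)\bigr)^{1/2}$ and contracts only on a short interval $T_1<1/K$, iterating afterwards. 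That choice is not cosmetic; it is what makes the Gronwall step go through, and this is where your argument has a gap.

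Concretely, the difficulty you identify — linear growth and trajectory-dependent Lipschitz modulus of the drift in $a$ — is not resolved by the sentence ``whose constants are $\le K(1+\sqrt{M_2(T)})(1+|a|\wedge|a'|)$, hence bounded in expectation on $\cA_T$ after the second-moment control.'' When you couple the two frozen-drift SDEs pathwise, the integral inequality for $\|\bbtheta^t_\mu-\bbtheta^t_\nu\|_2$ has a coefficient that is a \emph{random process}, of size $K(1+\sup_{s\le t}|\bba^s|)$, correlated with the very trajectory difference you are bounding. Applying Gronwall pathwise therefore produces a factor $\exp\bigl\{c\,T\,(1+\sup_{s\le T}|\bba^s|)^2\bigr\}$; taking expectations then demands an \emph{exponential} moment of $\sup_{s\le T}|\bba^s|^2$, not a second moment, and such an exponential moment is finite only for $T$ small (because $\sup_s|\bba^s|$ is sub-Gaussian with a $T$-dependent parameter, so the Gaussian tail must beat the exponential weight). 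Your weighted $e^{-Lt}W_2$ trick is designed for a deterministic Lipschitz constant and does not help here: you cannot pull the random modulus out of $\E\bigl[L(\omega,s)\|\bbtheta^s_\mu-\bbtheta^s_\nu\|_2^2\bigr]$, nor does the claimed inequality $\E\|\bbtheta^t_\mu-\bbtheta^t_\nu\|_2^2\le C(T)\int_0^t W_2(\mu_s,\nu_s)^2\,\de s$ follow without first closing the self-referential Gronwall loop. The paper's Lemma~\ref{lem:contraction} addresses exactly this by (i) working with $\cuD_T$ so that the $\sup_t$ inside the coupling integral matches the pathwise Gronwall output, (ii) keeping the random $M_{T_0}$ explicit, and (iii) choosing $T_0$ small enough that $\E\{\exp(KT_0^2e^{KT_0}M_{T_0})\}<\infty$. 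You would need to import this step — shrink the contraction window and verify the exponential integrability of the random Gronwall factor — for the argument to close.
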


\begin{proof}[Proof of Lemma \ref{prop:existence_PDE_noisy_contraction}]
Without loss of generality, we assume $\xi(t) = 1/2$, which corresponds to a reparametrization of variable time $t$. 
Denote by $\cuP (\R^D)$ the set of probability measures on $\R^D$, endowed with the topology of weak convergence.
Note that Eq.~\eqref{eq:weak_solution_noisy} immediately implies that $t\mapsto \rho_t$ is continuous in $\cuP(\R^D)$.

Denote by  $D( [0,T] ; \cuP (\R^D))$ the set of maps from $[0,T]$ into $\cuP ( \R^D )$ and by
$C( [0,T] ; \cuP (\R^D))$ the set of continuous maps in this class. We introduce the map $\Phi_T : C( [0,T] ; \cuP( \R^D)) \rightarrow D( [0,T] ; \cuP(\R^D))$, which associates $m \in D( [0,T] ; \cuP(\R^D))$ to the law of the solution 
\[
\bbtheta^t = \bbtheta^0 + \int_0^t \bG ( \bbtheta^s ; m_s ) \de s + \overline{\bW} (t), \qquad \text{for } t\leq T, \, \bbtheta_0 \sim \rho_0.
\]
Observe that if $m$ is a weak solution of PDE \eqref{eq:PDE_noisy_existence} defined on interval $[0,T]$, then $m$ is a fixed point of $\Phi_T$. Further, for any such fixed point $m$, Lemma \ref{lem:bound_asquared_noisy_D} and Lemma \ref{lem:bound_increment_noisy_D} both apply. In particular, $t \mapsto m_t$ is continuous in $\cuP(\R^D)$ and therefore $\Phi_T$ maps $C( [0,T] ; \cuP( \R^D))$ to $C( [0,T] ; \cuP(\R^D))$.
Further, again by the same derivation, there exists a constant $C$, such that
\[
\int a^2 m_t ( \de a ) \leq C e^{Ct}, \qquad \text{for all }t \in [0,T].
\]
Let us define $\cuP_{C_0 , T_0} ( \R^D )$ the space of probability measures such that $\int a^2 \mu (\de a) \le C_0 e^{C_0 T_0}$. We consider $m \in C([0,T_0];\cuP_{C_0 , T_0} ( \R^D ))$, the set of continuous mapping from $[0,T_0]$ on $\cuP_{C_0 , T_0} ( \R^D )$. Using the same computation as in the proof of Lemma \ref{lem:bound_asquared_noisy_D}, we have:
\[
\bba^t = e^{-\lambda t} \bba^0 + \int_0^t e^{-\lambda (t - s)} K (\bbw^s, m_s ) \de s + \int_0^t e^{-\lambda (t -s )} \sqrt{\tau/D} \de W^a (s),
\]
where $\vert K (\bbw^s, m_s ) \vert = \Big\vert -v (\bbw^s) - \int a u (\bbw^s , \bw) m_s (\de a , \de \bw) \Big\vert \leq K + K\sqrt{C_0}e^{C_0 s/2}$. We get:
\[
(\bba^t)^2 \leq 9 K^2 + 18K^2 t + 18 K^2 t C_0 e^{C_0 t} + B_t^2,
\]
where $B_t$ is a normal random variable with variance bounded by $9 t\tau/D$. Taking the expectation with respect to $\Phi_T(m)$, we get:
\[
\int a^2 \Phi_T (m)_t (\de a ) \leq (9K^2 + 18K^2 t + 18 K^2 t C_0 + 9 t\tau/D)e^{C_0 t}.
\]
Hence we deduce that for $C_0$ sufficiently big and $T_0$ sufficiently small, we have for every $T \in [0, T_0]$, $\Phi_T (m) \in C( [0,T] ; \cuP_{C_0, T}( \R^D))$. %Furthermore, the same proof as in Lemma %\ref{lem:bound_increment_noisy_D} shows that 
%$\Phi (m)$ is continuous. 
We can therefore restrict our mapping $\Phi$ to the subsets $C( [0,T] ; \cuP_{C_0, T}( \R^D))$ for $T \le T_0$, which must contains all the fixed points by the above discussion. 

We introduce the following metric on $C( [0,T] ; \cuP_{C_0, T}( \R^D))$:
\[
\cuD_T (m^1 , m^2 ) = \Big( \inf \Big\{ \int \sup_{t \le T } \Vert \btheta_1^t - \btheta_2^t \Vert_2^2  \gamma (\de \btheta_1 ,\de \btheta_2): \text{$\gamma$ is a coupling of $m^1, m^2$} \Big\} \Big)^{1/2}.
\]
We show that for $T_1 \le T_0$ sufficiently small, the mapping $\Phi_{T_1}$ is a contraction with respect to this distance.
\begin{lemma}\label{lem:contraction}
There exists a constant $K$ such that, for all $T \le T_0$, and for all $m^1,m^2 \in C( [0,T] ; \cuP_{C_0, T}( \R^D))$, we have
\[
\cuD_T ( \Phi_T (m^1) , \Phi_T (m^2) ) \le T K \cuD_T (m^1 , m^2).
\]
\end{lemma}
\begin{proof}[Proof of Lemma \ref{lem:contraction}]
Fix $T \le T_0$, and consider a coupling $\gamma$ between $m^1,m^2 \in C( [0,T] ; \cuP_{C_0, T}( \R^D))$. We consider the following coupling between $\Phi_T (m^1)$ and $\Phi_T (m^2)$:
\[
\begin{aligned}
\bbtheta^t_1 & = \bbtheta^0 + \int \bG_1 (\bbtheta^s_1 ; \gamma_s ) \de s + \overline{\bW} (t), \\
\bbtheta^t_2 & = \bbtheta^0 + \int \bG_2 (\bbtheta^s_2 ; \gamma_s ) \de s + \overline{\bW} (t), 
\end{aligned}
\]
where $\bG_1 (\bbtheta^s_1 ; \gamma_s) = -\lambda \bbtheta^s_1 - \na V(\bbtheta^s_1) - \int_{\R^D \times \R^D} \na_1 U(\bbtheta^s_1, \btheta_1) \gamma (\de \btheta_1 ,\de \btheta_2 )$ (and similarly for $\bG_2$). We have:
\[
\begin{aligned}
\| \bG_1 (\bbtheta^s_1 ; \gamma_s ) -\bG_2 (\bbtheta^s_2 ; \gamma_s ) \|_2 \le & K (1 + \vert \bba^s_1 \vert ) \| \bbtheta^s_1 - \bbtheta^s_2 \|_2 + K \int | a_1 | (1 + |\bba^s_1|) \| \bbtheta^s_1 - \bbtheta^s_2 \|_2 \gamma_s ( \de \btheta_1 ,\de \btheta_2 ) \\
& + \int (1 + |a_1|) (1 + | \bba^s_2 |) \| \btheta_1 - \btheta_2 \|_2 \gamma_s ( \de \btheta_1, \de \btheta_2 ). 
\end{aligned}
\]
Hence, we get (using that $m^1_s \in  \mathcal{P}_{C_0, T}( \R^D)$)
\[
\begin{aligned}
\| \bbtheta^t_1 - \bbtheta^t_2 \|_2 & \le K e^{K T_0} \int_0^t (1 + \vert \bba^s_1 \vert ) \| \bbtheta^s_1 - \bbtheta^s_2 \|_2 \de s + K \int_0^t (1 + | \bba^s_2 |) \int (1 + |a_1|) \| \btheta_1 - \btheta_2 \|_2 \gamma_s ( \de \btheta_1 ,\de \btheta_2 ) \de s, 
\end{aligned}
\]
where $K$ is a constant depending on the constants of the assumptions and $C_0$. Taking the square and using Cauchy-Schwartz inequality
\[
\begin{aligned}
\| \bbtheta^t_1 - \bbtheta^t_2 \|_2^2  \le & K e^{K T_0} \int_0^t (1 + \vert \bba^s_1 \vert )^2 \de s \int_0^t  \| \bbtheta^s_1 - \bbtheta^s_2 \|_2^2 \de s \\
& + K \int_0^t  (1 + | \bba^s_2 |)^2 \de s \int_0^t \Big( \int (1 + |a_1|)^2 \gamma_s ( \de \btheta_1 ,\de \btheta_2 ) \int \| \btheta_1 - \btheta_2 \|_2^2 \gamma_s ( \de \btheta_1 , \de \btheta_2 ) \Big) \de s\\
\le & Ke^{KT_0} T_0 M_{T_0} \int_0^t  \| \bbtheta^s_1 - \bbtheta^s_2 \|_2^2 \de s + Ke^{KT_0} M_{T_0} t^2 \int \sup_{t \le T} \| \btheta_1^t - \btheta_2^t \|_2^2 \gamma ( \de \btheta_1 , \de \btheta_2 ), 
\end{aligned}
\]
where $M_{T_0} = (1 + \sup_{t \le T_0} (| \bba^t_1 | \vee | \bba^t_2 | ) )^2$. Applying Gronwall's lemma, we get, for any $T<T_0$,
\[
\sup_{t \leq T} \| \bbtheta^t_1 - \bbtheta^t_2 \|_2^2 \le KT^2e^{KT_0^2e^{KT_0}M_{T_0}} \int \sup_{t \le T} \| \btheta_1^t - \btheta_2^t \|_2^2 \gamma ( \de \btheta_1 , \de \btheta_2 ).
\]
Taking the expectation:
\[
\E [ \sup_{t \leq T} \| \bbtheta^t_1 - \bbtheta^t_2 \|_2^2 ] \le K T^2  \E\lbrace \exp (KT_0^2 e^{KT_0}M_{T_0} )\rbrace \int \sup_{t \le T} \| \btheta_1^t - \btheta_2^t \|_2^2 \gamma ( \de \btheta_1 , \de \btheta_2 ).
\]
By a similar argument as in Lemma \ref{lem:bound_a_trajectories_noisy_D}, we have $\P ( M_{T_0} \ge Ke^{KT_0} (1 + z^2) ) \le e^{-z^2}$, i.e.
\[
\P ( \exp \{ KT_0^2 e^{KT_0} M_{T_0} \} \ge \exp \{ KT_0^2 e^{KT_0} (1 + z^2) \} ) \le e^{-z^2}.
\]
Doing a change of variable, we get:
\[
\begin{aligned}
 \E\lbrace \exp (KT_0^2 e^{KT_0}M_{T_0} )\rbrace = & \int \P (\exp (KT_0^2 e^{KT_0}M_{T_0} ) \ge u ) \de u \\
 \le & KT_0^2 e^{KT_0}  + KT_0^2 e^{KT_0^2 e^{KT_0}} \int_0^\infty z \exp \{ - (1 - KT_0^2 e^{KT_0} ) z^2 \} \de z < \infty, 
\end{aligned}
\]
for $T_0$ small enough. We conclude that there exists a constant $K < \infty$ such that
\[
\cuD_T ( \Phi_T (m^1) , \Phi_T (m^2) ) \le (\inf_\gamma \E [ \sup_{t \leq T} \| \bbtheta^t_1 - \bbtheta^t_2 \|_2^2 ])^{1/2} \le T K \cuD_T (m^1 , m^2),
\]
where we used that the coupling $\gamma$ was chosen arbitrarily.
\end{proof}

We can therefore consider $T_1 < 1/K$. The mapping $\Phi_{T_1} $ is a contraction on the space $C( [0,T_1] ; \cuP_{C_0, T_1}( \R^D))$. By the Banach fixed-point theorem,  there exists a fixed point for $\Phi_{T_1}$ on the interval $[0,T_1]$, which is unique. 
We can further iterate the same argument. Assume that the fixed point of $\Phi_T$ is unique, for some $T>0$. 
Then $\Phi_{[0,T+T_1]}$ has a unique fixed point, which is a map $m:[0,T+T_1]\to \cuP(\R^D)$. This suffices to conclude that PDE \eqref{eq:PDE_noisy_existence} admits a weak solution on $[0,\infty)$, and this solution is unique.
\end{proof}

Further, Duhamel's principle for PDE \eqref{eq:PDE_noisy_existence} holds. Denote $\cuG ( \btheta, \btheta' ;t)$ the heat kernel:
\[
\cuG ( \btheta, \btheta' ;t) \equiv \frac{1}{(2\pi t)^{d/2}} \exp \{ - \| \btheta - \btheta '\|^2_2 /(2t) \}.
\]

\begin{lemma} \label{lem:duhamel}
Assume conditions {\rm A1}-{\rm A5}. Let $\rho$ be a weak solution of PDE \eqref{eq:PDE_noisy_existence}. Then, for any $t >0$, $\rho_t(\de \btheta) $ has a density, denoted $\rho (t,\cdot)$, which satisfies
\[
\rho (t,\btheta ) = \int \cuG (\btheta, \btheta_1 ;\tau t/D) \rho_0 (\de \btheta_1 ) - \int_0^t \int \< \na_{\btheta_1} \cuG (\btheta, \btheta_1 ;\tau (t-s)/D) , \na_{\btheta_1} \Psi (\btheta_1 ; \rho_s ) \> \rho(s ,\btheta_1 ) \de \btheta_1 \de s.
\]
\end{lemma}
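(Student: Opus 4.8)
The plan is to obtain the integral representation by the usual Duhamel (variation of constants) argument: I will test the weak formulation \eqref{eq:weak_solution_noisy} against a time-dependent test function obtained by evolving a fixed smooth function under the heat semigroup attached to the Laplacian term of \eqref{eq:PDE_noisy_existence}, so that the time-derivative and Laplacian contributions in \eqref{eq:weak_solution_noisy} cancel and only the drift term survives; the density of $\rho_t$ is then read off by duality. Throughout I use the normalization $\xi\equiv1/2$ fixed in this section, and by Proposition \ref{prop:existence_PDE_noisy_contraction} I may take $\rho$ to be a weak solution enjoying the a priori bounds of Lemmas \ref{lem:bound_asquared_noisy_D} and \ref{lem:bound_increment_noisy_D}. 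Concretely, fix $t>0$ and $\phi\in C_0^\infty(\R^D)$, let $(P_r)_{r\ge0}$ be the semigroup with kernel $\cuG(\cdot,\cdot;\tau r/D)$, and insert $\zeta_s:=P_{t-s}\phi$ into \eqref{eq:weak_solution_noisy}. Since $\zeta_s=P_{t-s}\phi$ solves, in reverse time, the heat equation associated with the diffusion term of \eqref{eq:PDE_noisy_existence}, the time-derivative and Laplacian contributions in \eqref{eq:weak_solution_noisy} cancel and one is left with
\[
\int\phi\,\de\rho_t-\int(P_t\phi)\,\de\rho_0=-\int_0^t\!\!\int\big\langle\na_\btheta\Psi_\lambda(\btheta;\rho_s),\na_\btheta(P_{t-s}\phi)(\btheta)\big\rangle\rho_s(\de\btheta)\,\de s.
\]

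The next step is to rewrite both sides in kernel form and apply Fubini. Relabeling the $\rho_s$-integration variable as $\btheta_1$, using the symmetry $\cuG(\btheta,\btheta_1;r)=\cuG(\btheta_1,\btheta;r)$ together with the identity $\na_{\btheta_1}(P_{t-s}\phi)(\btheta_1)=\int\na_{\btheta_1}\cuG(\btheta,\btheta_1;\tau(t-s)/D)\,\phi(\btheta)\,\de\btheta$, and the analogous $\int(P_t\phi)\,\de\rho_0=\int\phi(\btheta)\big[\int\cuG(\btheta,\btheta_1;\tau t/D)\rho_0(\de\btheta_1)\big]\de\btheta$, and exchanging the order of integration, one arrives at
\[
\int\phi(\btheta)\rho_t(\de\btheta)=\int\phi(\btheta)\Big[\int\cuG(\btheta,\btheta_1;\tau t/D)\rho_0(\de\btheta_1)-\int_0^t\!\!\int\big\langle\na_{\btheta_1}\cuG(\btheta,\btheta_1;\tau(t-s)/D),\na_{\btheta_1}\Psi_\lambda(\btheta_1;\rho_s)\big\rangle\rho_s(\de\btheta_1)\,\de s\Big]\de\btheta.
\]
The Fubini exchange is legitimate because $\|\na_{\btheta_1}\cuG(\btheta,\btheta_1;\tau(t-s)/D)\|_2\le K(t-s)^{-1/2}$ times a Gaussian factor, which is integrable in $s$ on $[0,t]$, while $\|\na_{\btheta_1}\Psi_\lambda(\btheta_1;\rho_s)\|_2\le K(1+\|\btheta_1\|_2)$ by A3 and the $\frac{\lambda}{2}\|\btheta\|_2^2$ term, and $\sup_{s\le t}\int\|\btheta_1\|_2\,\rho_s(\de\btheta_1)<\infty$ by Lemmas \ref{lem:bound_asquared_noisy_D} and \ref{lem:bound_increment_noisy_D}. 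Since $\phi\in C_0^\infty(\R^D)$ is arbitrary, this identity shows at once that $\rho_t$ is absolutely continuous with respect to Lebesgue measure and that its density is the bracketed expression, which is the asserted formula (the $\Psi$ there being understood as the drift potential $\Psi_\lambda$).

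The step I expect to be the main obstacle is the justification that $\zeta_s=P_{t-s}\phi$ is an admissible test function in \eqref{eq:weak_solution_noisy}: it is not compactly supported, and $\na\cuG(\cdot,\cdot;\tau(t-s)/D)$ is singular as $s\uparrow t$. I would resolve this by a double approximation: first test \eqref{eq:weak_solution_noisy} against $\zeta^{(\delta)}_s(\btheta)=\chi(\btheta/L)\,P_{t+\delta-s}\phi(\btheta)$, with $\chi$ a smooth spatial cutoff and $\delta>0$ (now a bona fide admissible test function); then let $L\to\infty$, using the a priori moment bounds on $\rho_s$ to show that the extra terms created by $\na\chi(\cdot/L)$ vanish; and finally let $\delta\to0$, using dominated convergence with the integrable singularity $(t-s)^{-1/2}$ and the continuity of $s\mapsto\rho_s$ in $\cuP(\R^D)$ established in Proposition \ref{prop:existence_PDE_noisy_contraction}. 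The remaining ingredients---the semigroup property, Fubini's theorem, and elementary Gaussian bounds on $\cuG$ and $\na\cuG$---are routine.
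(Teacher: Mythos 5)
Your proof is correct and follows essentially the same route as the paper's: both insert $\zeta_s(\btheta)=\cuG_\eta(\btheta;t-s)=(P_{t-s}\eta)(\btheta)$ into the weak formulation so that the time-derivative and Laplacian terms cancel, and then apply Fubini and duality to read off the density. The only difference is that you spell out a cutoff-and-time-offset approximation to justify admissibility of this test function, a step the paper's proof leaves implicit.
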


\begin{proof}[Proof of Lemma \ref{lem:duhamel}]
For ease of notation, let us set $\tau/D =1$ and $\xi(t) = 1/2$, which amounts to rescaling time. Consider $\eta \in C^\infty (\R^D)$ (space of smooth real-valued functions) with bounded support, and define:
\[
\cuG_{\eta} (\btheta ; t ) = \int \cuG (\btheta, \btheta_1 ;t ) \eta(\btheta_1) \, \de \btheta_1.
\]
By property of the heat kernel, we have
\[
(\partial_t -\Delta )\cuG_{\eta} (\btheta ; t ) = 0, \qquad \forall t > 0, \forall \btheta \in \R^D.
\]
Take $\zeta(\btheta , s ) = \cuG_{\eta} (\btheta ; t-s )$ (which indeed decays to $0$ at infinity) as a test function in Eq. \eqref{eq:weak_solution_noisy} for $T=t$. We get:
\[
\int \eta(\btheta_1) \rho_t (\de \btheta_1) = \int \cuG_{\eta} (\btheta ; t ) \rho_0 (d\btheta) -\int_{(0,t)\times\R^D} \< \cuG_{\eta} (\btheta ; t - s) , \na \Psi (\btheta ; \rho_s) \> \rho_s (\de \btheta) \de s.
\]
By applying Fubini's theorem, we get
\[
\begin{aligned}
& \int \eta(\btheta_1) \rho_t (\de \btheta_1) \\
= & \int \cuG (\btheta, \btheta_1 ; t ) \rho_0 (d\btheta) \eta(\btheta_1) \de \btheta_1  -\int_{(0,t)\times\R^D\times\R^D} \< \cuG (\btheta, \btheta_1 ; t - s) , \na \Psi (\btheta, \btheta_1 ; \rho_s) \> \rho_s (\de \btheta) \de s \; \eta (\btheta_1) \de \btheta_1,
\end{aligned}
\]
where $\eta$ is an arbitrary function with bounded support, which concludes the proof.
\end{proof}

\begin{lemma}\label{lem:regularity_PDE_noisy}
Assume conditions {\rm A1}- {\rm A6}. Assume further that $\rho_0$ has a density. Denote $(\rho_t)_{t\ge0}$ the solution of PDE \eqref{eq:PDE_noisy_existence}, with density $(\rho ( t ,\cdot ) )_{t\ge0}$. Then $( t, \btheta) \mapsto \rho ( t, \btheta)$ is in $C^{1,2}((0,\infty)\times \R^D )$, where $C^{1,2}((0,\infty)\times \R^D )$ is the function space of continuous function with continuous derivative in time, and second order continuous derivative in space.
\end{lemma}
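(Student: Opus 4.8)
The plan is a parabolic bootstrap starting from the Duhamel representation of Lemma~\ref{lem:duhamel}. Since $\rho_0$ has a density and, by Lemma~\ref{lem:duhamel}, $\rho_t$ has a density $\rho(t,\cdot)$ for every $t>0$, write $\rho(t,\btheta)=I_0(t,\btheta)+I_1(t,\btheta)$ with
\[
I_0(t,\btheta)=\int\cuG(\btheta,\btheta_1;\tau t/D)\,\rho_0(\de\btheta_1),\qquad
I_1(t,\btheta)=-\int_0^t\!\!\int\big\langle\na_{\btheta_1}\cuG(\btheta,\btheta_1;\tau(t-s)/D),\na_{\btheta_1}\Psi(\btheta_1;\rho_s)\big\rangle\,\rho(s,\btheta_1)\,\de\btheta_1\,\de s.
\]
The term $I_0$ is the convolution of the probability measure $\rho_0$ with the Gaussian kernel $\cuG(\,\cdot\,;\tau t/D)$, hence is $C^\infty$ on $(0,\infty)\times\R^D$ with all derivatives controlled by those of the heat kernel, so all the work concerns $I_1$. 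Using the translation invariance $\na_{\btheta_1}\cuG(\btheta,\btheta_1;r)=-\na_\btheta\cuG(\btheta,\btheta_1;r)$, rewrite $I_1(t,\btheta)=\na_\btheta\cdot\int_0^t\!\int\cuG(\btheta,\btheta_1;\tau(t-s)/D)\,\bb(s,\btheta_1)\,\de\btheta_1\,\de s$, where $\bb(s,\btheta_1)\equiv\rho(s,\btheta_1)\,\na_{\btheta_1}\Psi(\btheta_1;\rho_s)$ is the probability flux of \eqref{eq:PDE_noisy_existence}.

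First I would establish, as an \emph{a priori} bound, that for each $t>0$ the function $\btheta\mapsto\rho(t,\btheta)$ is bounded and locally H\"older continuous, with moduli that are locally uniform in $t\in(0,\infty)$ (and allowed to grow in $\btheta$). The inputs are: (i) Lemma~\ref{lem:bound_lip_traject_B} and the second-moment bound $\int a^2\rho_s(\de a)\le Ke^{Ks}$ of Lemma~\ref{lem:bound_asquared_noisy_D}, which give $\|\na_{\btheta_1}\Psi(\btheta_1;\rho_s)\|_2\le K'(1+|a_1|)$ uniformly on $[0,T]$; (ii) the sub-Gaussianity of $\bara^s\sim\rho_s$ (again Lemma~\ref{lem:bound_asquared_noisy_D}), giving that $\bb(s,\cdot)$ decays like a Gaussian in the $a$-direction with locally uniform parameters, so $\|\bb(s,\cdot)\|_{L^1(\R^D)}$ and suitable weighted $L^p$ norms are bounded on $[0,T]$; and (iii) the standard smoothing estimates $\|\na_\btheta^k\cuG(\,\cdot\,;r)\|_{L^p(\de\btheta_1)}\le C_{p,k}\,r^{-k/2-\frac{D}{2}(1-1/p)}$ together with the cancellation $\int\na_\btheta^k\cuG(\btheta,\btheta_1;r)\,\de\btheta_1=0$ for $k\ge1$. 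Plugging these into the Duhamel formula (and into its finite differences in $t$ and $\btheta$) gives $\|\rho(t,\cdot)\|_{L^\infty}\le C(t)$ with $C\in L^1_{\rm loc}$ near $0$ and a local H\"older modulus in $(t,\btheta)$; this is the argument of \cite[Section~10.2]{mei2018mean} and \cite[Theorem~5.1]{jordan1998variational}, carried out in an $a$-weighted space using Lemmas~\ref{lem:bound_asquared_noisy_D} and \ref{lem:bound_increment_noisy_D}.

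With $\rho$ locally H\"older, I would bootstrap using interior Schauder estimates for $\partial_t\rho-(\tau/D)\Delta_\btheta\rho=\na_\btheta\cdot\bb$. Assumption~A6 ($V\in C^4$, $U\in C^4$, and $\na_1^k u$ bounded for $k\le4$), together with dominated convergence in $s$ via Lemma~\ref{lem:bound_increment_noisy_D}, shows that $(s,\btheta_1)\mapsto\na_{\btheta_1}\Psi(\btheta_1;\rho_s)$ is locally $C^3$ in $\btheta_1$ with derivatives growing at most linearly in $|a_1|$, and H\"older in $s$. Hence $\bb=\rho\,\na\Psi\in C^{0,\alpha}_{\rm loc}$, and parabolic Schauder theory gives $\na_\btheta\rho\in C^{0,\alpha}_{\rm loc}$, i.e.\ $\rho(t,\cdot)\in C^{1,\alpha}_{\rm loc}$. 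Feeding this back yields $\bb\in C^{1,\alpha}_{\rm loc}$, hence $\na_\btheta\cdot\bb\in C^{0,\alpha}_{\rm loc}$, and a second application of the Schauder estimates gives continuous $\na_\btheta^2\rho$ and $\partial_t\rho$ on $(0,\infty)\times\R^D$, that is $\rho\in C^{1,2}((0,\infty)\times\R^D)$ (indeed $C^{1,2+\alpha}_{\rm loc}$). By Proposition~\ref{prop:existence_PDE_noisy_contraction} the weak solution is unique, so this is the solution of \eqref{eq:PDE_noisy_existence}.

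The main obstacle is the absence of compact support combined with the linear-in-$a$ growth of $\na\Psi$: unlike the fixed-coefficient case of \cite[Section~10.2]{mei2018mean}, the flux $\bb$ is not a globally bounded H\"older drift, so every heat-kernel and Schauder estimate above must be run in a space that tracks the Gaussian-in-$a$ tails of $\rho_t$ furnished by Lemma~\ref{lem:bound_asquared_noisy_D} (or, equivalently, localized to balls with constants allowed to blow up at infinity). Making this weighted/localized scheme self-consistent along the chain boundedness $\Rightarrow$ local H\"older $\Rightarrow$ $C^{1,2}$ is the delicate point; the individual parabolic-regularity steps are then routine adaptations of \cite[Section~10.2]{mei2018mean}, \cite[Theorem~5.1]{jordan1998variational}, and \cite[Theorem~1.1]{sznitman1991topics}.
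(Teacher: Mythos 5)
Your proposal follows the same route the paper invokes: the paper's proof is a one-line citation to \cite[Lemma 10.7]{mei2018mean}, which is precisely the Duhamel-plus-parabolic-bootstrap (Schauder) argument you lay out. You are right to flag that in the general-coefficient setting the flux $\bb=\rho\,\nabla_\btheta\Psi$ is not globally bounded H\"older (unlike the fixed-coefficient setting of \cite{mei2018mean}), so the bootstrap must be run in a weighted or localized space using the sub-Gaussian-in-$a$ tails from Lemma~\ref{lem:bound_asquared_noisy_D} --- a nontrivial adaptation that the paper's terse citation does not spell out but which your proposal correctly identifies as the delicate point.
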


\begin{proof}[Proof of Lemma \ref{lem:regularity_PDE_noisy}]
The proof follows exactly from the proof of Lemma \cite[Lemma 10.7]{mei2018mean}.
\end{proof}

\subsection{The noisy PDE as a gradient flow in the space of probability distributions}

We include a second independent proof of the existence of a weak solution, which is interesting in itself. It relies on a deep connection pioneered by \cite{jordan1998variational}, between Fokker-Planck PDEs and gradient flow in probability space. The proof follows closely the steps detailed in \cite{jordan1998variational}. The arguments are similar to \cite[Section 10.2]{mei2018mean}, and we will only detail the differences.

We will consider the set $\cK$ of admissible probability densities,
\[
\cK = \Big\{ \rho : \R^D \mapsto [0,+\infty ) \text{ measurable  }: \int_{\R^D } \rho (\btheta) \de \btheta = 1, M(\rho) < \infty \Big\},
\]
where 
\[
M ( \rho) \equiv \int_{\R^D} \| \btheta \|_2^2 \rho (\btheta) \de \btheta .
\]
Recall
\[
R ( \rho) = \E (y^2) + 2 \int_{\R^D } V( \btheta ) \rho (\btheta) \de \btheta + \int_{\R^D\times \R^D } U( \btheta , \btheta') \rho (\btheta) \rho (\btheta')  \de \btheta \de \btheta' .
\]
We will define
\[
\begin{aligned}
\Ent(\rho ) & = - \int_{\R^D} \rho (\btheta) \log \rho ( \btheta ) \de \btheta , \\
F (\rho) & = 1/2 \cdot [ \lambda M (\rho ) + R (\rho) ] - 1/\beta \cdot \Ent (\rho). 
\end{aligned}
\]
The PDE \eqref{eq:PDE_noisy_existence} can be interpreted as a gradient flow on the free energy functional $F(\rho )$ in the space of probability measures on $\R^D$ endowed with the $W_2(\cdot , \cdot)$ Wassertein distance \cite[Section 10.2]{mei2018mean}. Recall that for $\mu, \nu$ probability distributions over $\R^D$, we have:
\[
W_2^2 (\mu , \nu ) = \inf \Big\{ \int_{\R^D \times \R^D} \| \btheta_1 - \btheta_2 \|_2^2 \gamma (\de \btheta_1 , \de \btheta_2 ): \text{ $\gamma$ is a coupling of $\mu,\nu$} \Big\}
\]

\begin{proposition}\label{prop:existence_PDE_noisy}
Assume conditions {\rm A1},  {\rm A2},  {\rm A3}, {\rm A6}. Let initialization $\rho_0 \in \cK$ so that $F(\rho_0) < \infty$. Then the PDE \eqref{eq:PDE_noisy_existence} admits a weak solution $(\rho_t)_{t\ge0}$ which is unique. Moreover, for any fixed $t$, $\rho_t \in \cK$ is absolutely continuous with respect to the Lebesgue measure, and $M(\rho_t)$ and $\Ent ( \rho_t ) $ are uniformly bounded in $t$.
\end{proposition}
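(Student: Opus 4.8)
The plan is to prove Proposition~\ref{prop:existence_PDE_noisy} by the minimizing-movement (JKO) scheme of \cite{jordan1998variational}, following \cite[Section 10.2]{mei2018mean} and indicating only the modifications needed because, for general coefficients, $V(\btheta)=a\,v(\bw)$ and $U(\btheta,\btheta')=aa'\,u(\bw,\bw')$ are unbounded in the variable $a$. Fix a step size $h>0$, set $\rho^0_h=\rho_0$, and define $\rho^k_h$ recursively as a minimizer over $\cK$ of the proximal functional
\[
\rho\ \longmapsto\ \frac{1}{2h}\,W_2^2(\rho,\rho^{k-1}_h)+F(\rho).
\]
The first step is to show each proximal problem is well posed. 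The only delicate term is $R(\rho)$: one uses that the interaction energy is nonnegative, $\int U(\btheta_1,\btheta_2)\,\rho(\de\btheta_1)\rho(\de\btheta_2)=\E_\bx[(\int a\,\sigma(\bx;\bw)\,\rho(\de\btheta))^2]\ge0$, while $|\int V\,\rho|\le K\,M(\rho)^{1/2}$ by A2--A3; combined with the Gaussian bound $\Ent(\rho)\le C_D+\tfrac{D}{2}\log(1+M(\rho))$ and the confining term $\tfrac\lambda2 M(\rho)$, this makes $F$ bounded below and coercive in $M(\rho)$ on $\cK$, and lower semicontinuity of $W_2^2$, $R$, $M$ and $-\Ent$ along $W_2$-convergent sequences is standard. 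Hence a minimizer $\rho^k_h$ exists, and from $F(\rho^k_h)\le F(\rho^{k-1}_h)\le\dots\le F(\rho_0)<\infty$ one extracts the a priori bounds $\sup_{h,k}M(\rho^k_h)\le C$ and $\sup_{h,k}|\Ent(\rho^k_h)|\le C$, uniform in $h$ and $k$.

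Next I would derive the discrete weak identity for each proximal step by perturbing $\rho^k_h$ along the flow of a smooth compactly supported vector field and using optimality, which identifies $(\rho^k_h-\rho^{k-1}_h)/h$ with $\nabla\cdot(\rho^k_h\nabla\Psi_\lambda(\cdot;\rho^k_h))+\beta^{-1}\Delta\rho^k_h$ tested against $\zeta\in C_0^\infty$, up to an error of order $W_2^2(\rho^k_h,\rho^{k-1}_h)/h$. Telescoping the one-step energy inequality gives $\sum_k W_2^2(\rho^k_h,\rho^{k-1}_h)\le 2h\,(F(\rho_0)-\inf F)$, so the piecewise-constant interpolant $\rho_h(t)\equiv\rho^{\lfloor t/h\rfloor}_h$ is $\tfrac12$-Hölder in $W_2$ on bounded time intervals; together with the $M$-bound (which makes $\{\rho_h(t)\}$ tight) and Ascoli--Arzelà, one extracts a subsequence $\rho_h\to\rho$ in $C([0,T];\cuP(\R^D))$ for every $T$. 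Passing to the limit in the discrete weak identity then yields \eqref{eq:weak_solution_noisy}: the entropy/Laplacian term is treated exactly as in \cite{jordan1998variational} via lower semicontinuity of $\Ent$, and the unbounded drift terms $\nabla V$ and $\int\nabla_1 U(\cdot,\btheta')\rho(\de\btheta')$ are handled by the uniform second-moment bound, which supplies the uniform integrability of $|a|$ needed to commute limit and integration in the quadratic interaction drift. Absolute continuity of $\rho_t$ for $t>0$, and the uniform bounds on $M(\rho_t)$ and $\Ent(\rho_t)$, are inherited in the limit (the former alternatively follows from the Duhamel representation of Lemma~\ref{lem:duhamel} together with strict positivity of the heat kernel, once one knows $\rho_t$ satisfies the Duhamel identity).

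For uniqueness I would invoke Proposition~\ref{prop:existence_PDE_noisy_contraction}: any weak solution of \eqref{eq:PDE_noisy_existence} satisfies the second-moment bound of Lemma~\ref{lem:bound_asquared_noisy_D}, hence is a fixed point of the map $\Phi_T$ appearing there, and the contraction estimate forces the fixed point to be unique on a short interval $[0,T_1]$ and therefore, by iteration, on $[0,\infty)$. (Alternatively, staying within the JKO framework, uniqueness follows from a direct Gronwall estimate on $\sup_{s\le t}W_2^2(\rho^1_s,\rho^2_s)$ for two solutions, using the Lipschitz bounds on $\nabla V,\nabla_1 U$ from Lemma~\ref{lem:bound_lip_traject_B} and the uniform moment bound.) The main obstacle — the only point requiring work beyond \cite{mei2018mean,jordan1998variational} — is precisely the interaction between the unboundedness of $V,U$ in $a$ and the limit passage: one must verify that the second-moment bound is genuinely uniform in both $h$ and $t$ (which rests on nonnegativity of the interaction energy plus the confining term $\tfrac\lambda2 M(\rho)$), and then use it to upgrade weak convergence of $\rho_h$ to convergence of the nonlinear drift $\int\nabla_1 U(\btheta,\btheta')\rho_{h,s}(\de\btheta')$.
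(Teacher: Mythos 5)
Your overall strategy — the JKO minimizing-movement scheme following \cite{jordan1998variational} and \cite[Section 10.2]{mei2018mean}, with the modifications driven by the unboundedness of $V,U$ in the variable $a$ — is exactly the route the paper takes for the existence part of Proposition~\ref{prop:existence_PDE_noisy}. Two remarks on the details, one cosmetic and one substantive.

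The cosmetic one: you dismiss lower semicontinuity of $R$ along the minimizing sequence as ``standard,'' but in the unbounded-coefficient setting this is precisely the point that needs an argument. Since $V(\btheta)=a\,v(\bw)$ and $U(\btheta,\btheta')=aa'\,u(\bw,\bw')$ grow linearly and quadratically in $a$, weak convergence of measures does not by itself give convergence of $\int V\,\rho_\nu$ or $\int U\,\rho_\nu\rho_\nu$. One needs a truncation argument (split $V,U$ at level $m$, control the tails by the uniform second-moment bound $M(\rho_\nu)\le C$, let $m\to\infty$) — the same uniform-integrability reasoning you do invoke later for the limit passage in the drift. The paper spells this truncation out explicitly in the proof of Lemma~\ref{lem:existence_disc_flow}; you should apply it at this step as well, since without it the proximal problem is not shown to attain its minimum.

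The substantive gap is your uniqueness argument. Proposition~\ref{prop:existence_PDE_noisy} assumes only A1, A2, A3, A6 together with $\rho_0\in\cK$ and $F(\rho_0)<\infty$; it deliberately does \emph{not} assume A4 (bounded support of $\rho_0$ in $a$) or A5 (sub-Gaussian $\bw$). Your first route invokes Proposition~\ref{prop:existence_PDE_noisy_contraction} and Lemma~\ref{lem:bound_asquared_noisy_D}, but both of those require A4 (and the contraction step additionally leans on Lemma~\ref{lem:bound_a_trajectories_noisy_D}, which uses A4 and A5) to produce the a.s. sub-Gaussian control on $|a^t|$ that makes the contraction constant $\E\{\exp(KT_0^2e^{KT_0}M_{T_0})\}$ finite. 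Under the hypotheses of Proposition~\ref{prop:existence_PDE_noisy}, you only have $\int a^2\,\rho_0(\de a)<\infty$, which is strictly weaker, so the fixed-point map is not shown to be a contraction and you cannot simply cite that proposition. Your alternative — a direct Gronwall estimate on $\sup_{s\le t}W_2^2(\rho^1_s,\rho^2_s)$ using the Lipschitz bounds of Lemma~\ref{lem:bound_lip_traject_B} — runs into the same difficulty: those Lipschitz constants carry a factor $(1+|a|)$, so the Gronwall inequality produces a cross term of the form $\E[(1+|a^t|)\|\btheta_1^t-\btheta_2^t\|_2^2]$, which cannot be closed without a.s. (or exponential-moment) control on $|a^t|$, again not available here. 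The paper instead obtains uniqueness through the regularity statement of Lemma~\ref{lem:regularity_PDE_noisy} (showing the limiting density is $C^{1,2}$) and then a standard elliptic-parabolic uniqueness argument as in \cite[Theorem 5.1]{jordan1998variational}; your proposal should either supply that regularity-based argument or explicitly strengthen the hypotheses to make the contraction route valid.
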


\begin{proof}[Proof of Proposition \ref{prop:existence_PDE_noisy}]
Without loss of generality, we assume $\xi(t) = 1/2$, which corresponds to a reparametrization of variable time $t$. To prove the existence of the solution, we consider the limit of the following discretized scheme when the step-size $h$ goes to zero: we define recursively a sequence of distributions $\{ \orho^h_k\}_{k \in \N}$, with $\orho^h_0 = \rho_0$ and
\begin{equation}
\orho^h_{k+1} \in \arg\min_{\rho \in \cK} \Big\{ hF(\rho) +\frac{1}{2} W_2^2 (\rho , \overline{\rho}^h_{k} ) \Big\}. \label{eq:discrete_gradient_flow}
\end{equation}
\begin{lemma}\label{lem:existence_disc_flow}
Given an initialization $\rho_0 \in \cK$, there exists a unique solution of the scheme \eqref{eq:discrete_gradient_flow}.
\end{lemma}
\begin{proof}[Proof of Lemma \ref{lem:existence_disc_flow}] Clearly it is sufficient to analyze a single step of the scheme  \eqref{eq:discrete_gradient_flow}.
The proof follows from the same arguments as in \cite[Proposition 4.1]{jordan1998variational}, which shows that there exists a sequence of measures $\{\rho_\nu\}_{\nu \in \N} \in \cK$ that converges weakly to $\rho^* \in \cK$ such that
\[
 \lim_{\nu \rightarrow \infty } \Big\{ F(\rho_\nu ) + \frac{1}{2} W_2^2 (\rho_{\nu} , \rho_0 ) \Big\} = \inf_{\rho \in \cK}  \Big\{ F (\rho )  + \frac{1}{2} W_2^2 (\rho_{\nu} , \rho_0 ) \Big\} > -\infty . 
 \]
Moreover, there exists a constant $C$ such that $M (\rho_\nu ) \leq C$ and $M (\rho^* ) \leq C$ by lower semi-continuity of $M(\rho)$. We only need to check lower semi-continuity of $R(\rho)$ to conclude that $\rho^*$ is indeed a minimizer. Uniqueness comes from convexity of the functional and strict convexity of $-\Ent (\rho)$. 

Denote for $x\in \R$, the functions $\overline{\phi}_m (x)  = \text{sign} ( x ) \cdot \max 
\{ \vert x \vert - m, 0 \}$ and $\underline{\phi}_m (x) = x - \overline{\phi}_m (x)$, and $\mathsf{B} (r) = \mathsf{B} (0, r) \subset \R^D$:
\begin{align*}
    \vert R(\rho_\nu) - R (\rho^*)\vert & \leq \Big\vert  \int \underline{\phi}_m (V (\btheta)) [ \rho_\nu (\btheta) - \rho^* (\btheta) ] \de \btheta \Big\vert + \Big\vert \int \underline{\phi}_m (U (\btheta, \btheta')) [\rho_\nu (\btheta) \rho_\nu (\btheta')  - \rho^* (\btheta ) \rho^* (\btheta') ] \de \btheta  \de \btheta' \Big\vert  \\
    & + \Big\vert  \int \overline{\phi}_m (V (\btheta)) [ \rho_\nu (\btheta) - \rho^* (\btheta) ] \de \btheta \Big\vert + \Big\vert \int \overline{\phi}_m (U (\btheta, \btheta')) [\rho_\nu (\btheta) \rho_\nu (\btheta')  - \rho^* (\btheta ) \rho^* (\btheta') ] \de \btheta  \de \btheta' \Big\vert .
\end{align*}
By weak convergence in $L^1 (\R^D)$, the first two terms converge to zero. Recalling that $V ( \btheta)  = a v (\bw )$ and $U ( \btheta , \btheta') = a a' u (\bw , \bw')$, with $\vert v ( \bw ) \vert \le K$ and $\vert u (\bw , \bw') \vert \le K$, we deduce  
\begin{align*}
 & \Big\vert  \int \overline{\phi}_m (V (\btheta)) [ \rho_\nu (\btheta) - \rho^* (\btheta) ] \de \btheta \Big\vert \le \Big\vert  \int_{\mathsf{B} (m/K) } \overline{\phi}_m (V (\btheta)) [ \rho_\nu (\btheta) - \rho^* (\btheta) ] \de \btheta \Big\vert \le 2 KC/m, 
 \end{align*}
 and
 \begin{align*}
 & \Big\vert \int \overline{\phi}_m (U (\btheta, \btheta')) [\rho_\nu (\btheta) \rho_\nu (\btheta')  - \rho^* (\btheta ) \rho^* (\btheta') ] \de \btheta  \de \btheta' \Big\vert \\
 \le & \Big\vert \int_{\mathsf{B} (\sqrt{m}/K) \times \mathsf{B} (\sqrt{m}/K) } \overline{\phi}_m (U (\btheta, \btheta')) [\rho_\nu (\btheta) \rho_\nu (\btheta')  - \rho^* (\btheta ) \rho^* (\btheta') ] \de \btheta  \de \btheta' \Big\vert \le 2KC^2/m,
\end{align*}
where we used that $ \int_{\mathsf{B}(r)} \vert a \vert \rho_\nu (\de a) \le \int a^2/r \rho_\nu (\de a) \le C/r$. Because $m$ is arbitrarily large, we conclude that
\[
\lim_{\nu \rightarrow \infty } \abs{R(\rho_\nu) - R (\rho^*)} = 0.
\]
\end{proof}

The rest of the proof follows the proof of \cite[Theorem 5.1]{jordan1998variational}, which shows that for a given $T<\infty $, there exists $C$ such that for any $h$ and $k$ with $hk \leq T$, we have $M(\overline{\rho}^h_{k})  \leq C $. If we denote $\rho^h (t,.)$ the piece wise constant distribution trajectory, we deduce that it converges weakly to $\rho$ in $L^1 ((0,T) \times \R^D )$. Furthermore, the weak convergence applies for each given time $t \in [0,+ \infty)$, i.e. $\rho^h (t) \mapsto \rho (t)$ weakly.

We still need to show that this limiting distribution is a weak solution \eqref{eq:weak_sense_noiseless} of PDE \eqref{eq:PDE_noisy_existence}. Let $\bxi \in C_0^\infty ( \R^D, \R^D )$ be a smooth vector field with bounded support, and define $\{ \Phi_\tau\}_{\tau \in \R}$ the corresponding flux:
\begin{equation}
\partial_\tau \Phi_\tau = \bxi \circ \Phi_\tau \text{ for all $\tau \in \R$ and $\Phi_0 =$id.} \label{eq:flux_forward}
\end{equation}
Further, for $\tau \in \R$, define $\nu_\tau$ to be the push forward measure of $\overline{\rho}^h_k$ under $\Phi_\tau$. Namely,
\[
\int_{\R^D} \nu_\tau (\btheta ) \zeta (\btheta) \de \btheta = \int_{\R^D} \overline{\rho}^h_k (\btheta ) \zeta (\Phi_\tau (\btheta)) \de \btheta, \qquad \forall \zeta \in C(\R^D),
\]
or equivalently $\nu_\tau = \frac{1}{\det \na \Phi_\tau} \overline{\rho}^h_{k}  \circ \Phi_\tau^{-1}$. We only need to consider the term $R(\rho )$. See the proof of \cite[Lemma 10.6]{mei2018mean} for more details.

 From the assumption of bounded support, we must have $\sup_{\btheta \in \R^D } \| \bxi (\btheta) \|_2 \leq K$. From Eq. \eqref{eq:flux_forward}, we have
 \begin{equation}
 \Phi_\tau (\btheta) =  \btheta + \int_0^\tau \Phi_s (\bxi ( \btheta) ) \de s.
 \label{eq:evol_flux_theta}
 \end{equation}
 Hence applying Gronwall's inequality to $u(\tau) = \sup_{\btheta \in \mathsf{B}(r)} \| \Phi_\tau (\btheta )\|_2$, and considering $\tau \le 1$, we get $u(\tau) \le K$. Therefore, for $\tau \le 1$, we get 
 $\vert (\partial^2/\partial \tau^2) \Phi_\tau (\btheta)  \vert = \vert  \Phi_\tau ( \bxi ( \bxi (\btheta))) \vert \le K$. We deduce that
 \begin{equation}
 \| \Phi_\tau (\btheta ) - \btheta - \tau \bxi(\btheta) \|_2 \le K \tau^2. 
 \label{eq:evol_flux_bound}
 \end{equation}
Let us consider the derivative of $R(v_\tau)$ with respect to $\tau$. Recall that $U$ is symmetric.
\[
\begin{aligned}
& \int [ U ( \Phi_\tau (\btheta_1) , \Phi_\tau (\btheta_2) ) - U (\btheta_1 , \btheta_2 ) - 2 \tau \< \na_1 U (\btheta_1 , \btheta_2 ) , \bxi (\btheta_1) \> ] \overline{\rho}^h_{k} (\btheta_1 ) \overline{\rho}^h_{k} (\btheta_2 ) \de \btheta_1 \de \btheta_2 \\
= & \int [ U ( \Phi_\tau (\btheta_1) , \Phi_\tau (\btheta_2) ) - U (\Phi_\tau (\btheta_1) , \btheta_2 ) -  \tau \< \na_2 U (\Phi_\tau ( \btheta_1) , \btheta_2 ) , \bxi (\btheta_2) \> ] \overline{\rho}^h_{k} (\btheta_1 ) \overline{\rho}^h_{k} (\btheta_2 ) \de \btheta_1 \de \btheta_2 \\
& + \int [ U ( \Phi_\tau (\btheta_1) ,  \btheta_2 ) - U (\btheta_1 , \btheta_2 ) -  \tau \< \na_1 U (\btheta_1 , \btheta_2 ) , \bxi (\btheta_1) \> ] \overline{\rho}^h_{k} (\btheta_1 ) \overline{\rho}^h_{k} (\btheta_2 ) \de \btheta_1 \de \btheta_2 \\
& + \int [  \tau \< \na_2 U (\Phi_\tau (\btheta_1) , \btheta_2 ) , \bxi (\btheta_2) \> -  \tau \< \na_2 U (\btheta_1 , \btheta_2 ) , \bxi (\btheta_2) \> ] \overline{\rho}^h_{k} (\btheta_1 ) \overline{\rho}^h_{k} (\btheta_2 ) \de \btheta_1 \de \btheta_2.
\end{aligned}
\]
Denote $(a_1^\tau , \bw_1^\tau ) = \Phi_\tau (\btheta_1)$ and $(a_2^\tau , \bw_2^\tau ) = \Phi_\tau (\btheta_2)$, and $\bxi (\btheta) = (\xi_a (\btheta) , \bxi_w (\btheta))$. Consider the first term
\[
\begin{aligned}
& U ( \Phi_\tau (\btheta_1) , \Phi_\tau (\btheta_2) ) - U (\Phi_\tau (\btheta_1) , \btheta_2 ) -  \tau \< \na_2 U (\Phi_\tau ( \btheta_1) , \btheta_2 ) , \bxi (\btheta_2) \> \\
= & a_1^\tau \{ [a_2^\tau - a_2] u( \bw_1^\tau , \bw_2^\tau ) +a_2 [u( \bw_1^\tau , \bw_2 ) - u( \bw_1^\tau , \bw_2^\tau ) ]  - \tau \xi_a (\btheta_2) u (\bw_1^\tau , \bw_2) - \tau a_2 \< \na_{\bw_2} u (\bw_1^\tau , \bw_2), \bxi_w (\btheta_2 ) \> \} \\
= & a_1^\tau \{ [a_2^\tau - a_2 - \tau \xi_a (\btheta_2) ] u( \bw_1^\tau , \bw_2^\tau ) + a_2 [u( \bw_1^\tau , \bw_2 ) - u( \bw_1^\tau , \bw_2^\tau ) - \tau  \< \na_{\bw_2} u (\bw_1^\tau , \bw_2), \bxi_w (\btheta_2 ) \> ]\}\\
& + \tau a_1^\tau \xi_a [u( \bw_1^\tau , \bw_2^\tau ) - u( \bw_1^\tau , \bw_2 )]  .
\end{aligned}
\]
Using that $\| \na u \|_{\text{op}} , \| \na^2 u \|_{\text{op}} \leq K$, and Eq. \eqref{eq:evol_flux_theta} and Eq. \eqref{eq:evol_flux_bound}, we get for $\tau \le 1$
\[
\begin{aligned}
& \Big\vert \int [ U ( \Phi_\tau (\btheta_1) , \Phi_\tau (\btheta_2) ) - U (\Phi_\tau (\btheta_1) , \btheta_2 ) -  \tau \< \na_2 U (\Phi_\tau ( \btheta_1) , \btheta_2 ) , \bxi (\btheta_2) \> ] \overline{\rho}^h_{k} (\btheta_1 ) \overline{\rho}^h_{k} (\btheta_2 ) \de \btheta_1 \de \btheta_2 \Big\vert \\
\le & K \tau^2 \int \vert a^\tau_1 (K + a_2) \vert \overline{\rho}^h_{k} (\btheta_1 ) \overline{\rho}^h_{k} (\btheta_2 ) \de \btheta_1 \de \btheta_2  \leq K \tau^2 C (K + C),
\end{aligned}
\]
where we used that $\vert a^\tau \vert \le \vert a \vert +K\tau$ from Eq. \eqref{eq:evol_flux_theta}, and $M(\overline{\rho}^h_{k}) \le C$. The same computation shows that the second and third terms, as well as the term depending on $V(\btheta)$ are $O(\tau^2)$.

Taking $\tau \rightarrow 0$, we conclude that:
\[
\frac{\de}{\de\tau} [ R (\nu_\tau )]_{\tau = 0} = \int_{\R^d} \< \na \Psi (\btheta, \overline{\rho}^h_{k} ) , \bxi (\btheta ) \> \overline{\rho}^h_{k} (\btheta) \de \btheta .
\]
This equality combined with the analysis of \cite[Theorem 5.1]{jordan1998variational} shows that $\rho(t)$ is indeed a weak solution of PDE \eqref{eq:PDE_noisy_existence}. The proof of uniqueness follows from the regularity Lemma \ref{lem:regularity_PDE_noisy} and a standard method from elliptic-parabolic equations (see \cite[Theorem 5.1]{jordan1998variational} for details).
\end{proof}

\section{Proof of Theorem \ref{thm:Kernel}}

\begin{proof}[Proof of Theorem \ref{thm:Kernel}]
Let $\Ltwo(\reals^d, \prob)$ be the space of functions on $\R^d$ that is square integrable with respect to the measure $\prob$. For any functions $u, v \in \Ltwo(\reals^d, \prob)$, we denote by $\<u,v\>_{\Ltwo} = \int_{\R^d} u(\bx) v(\bx) \prob(\de \bx)$ the scalar product of $u, v$ and  $\|u\|_{\Ltwo} = (\< u, u \>_{\Ltwo})^{1/2}$ the norm of $u$ in $\Ltwo(\reals^d,\prob)$. 

We prove the case for general coefficients. The proof of fixed coefficient is the same but simpler. 

\noindent
{\bf Step 1.} Bound the support of $\bara^{t, \alpha}$. 

Let $\bbtheta^{t, \alpha} = (\bara^{t, \alpha}, \bbw^{t, \alpha})$ satisfying the non-linear dynamics
\[
\frac{\de}{\de t}\bbtheta^{t, \alpha} = - \frac{1}{\alpha} \nabla_\btheta \Psi_\alpha(\bbtheta^{t, \alpha}; \rho_t^\alpha)
\]
with initialization $\bbtheta^{0, \alpha} \sim \rho_0$, and $\rho_{t}^\alpha$ given by Eq. \eqref{eq:RescaledPDE}. Then we have 
\[
\begin{aligned}
\Big \vert \frac{\de}{\de t} \bara^{t, \alpha} \Big \vert =& \Big \vert (1/ \alpha) \E[(f(\bx) - \hat f(\bx; \rho_t^\alpha)) \sigma(\bx; \bbw^{t, \alpha})] \Big \vert \\
\le& (1/\alpha) \E[(f(\bx) - \hat f(\bx; \rho_t^\alpha))^2]^{1/2} \E[\sigma(\bx; \bbw^{t, \alpha})^2]^{1/2}\\
\le& (1/\alpha) K R_\alpha(\rho_t^\alpha)^{1/2}.  
\end{aligned}
\]
The last inequality follows from the assumption that $\| \sigma \|_\infty \le K$. Note $R_\alpha(\rho_t^\alpha)$ will always decrease along the trajectory, i.e., we have $R_\alpha(\rho_t^\alpha) \le R_\alpha(\rho_0) \le B$. As a result, we have $\vert \de \bara^{t, \alpha} / \de t \vert \le K B^{1/2}/ \alpha$, so that 
\[
\vert \bara^{t, \alpha} \vert \le K(1 + B^{1/2} t / \alpha) \equiv M_{t, \alpha}. 
\]
Denoting $A(\rho) = \sup_{(a, \bw) \in \supp(\rho) }\vert a \vert$. Since $(\bara^{t, \alpha}, \bbw^{t, \alpha}) \sim \rho_t^\alpha$, we have 
\[
A(\rho_t^\alpha) \le M_{t, \alpha} = K(1 + B^{1/2} t / \alpha).
\]

\noindent
{\bf Step 2. } Bound $W_2(\rho_t^\alpha, \rho_0)$.

For $\btheta = (a, \bw)$, we have
\begin{align}
\|\nabla_\btheta \Psi_\alpha(\btheta,\rho^{\alpha}_t)\| &= \|\E\{\nabla_\btheta \sigma_{\star}(\bx;\btheta) [f(\bx)- \hf_\alpha(\bx;\rho^{\alpha}_t)] \} \|\\
& \le \E\{ \|\nabla_\btheta \sigma_{\star}(\bx;\btheta)\|^2 \}^{1/2}\E\{[f(\bx)-\hf_\alpha(\bx;\rho^{\alpha}_t)]^2\}^{1/2}\\
& = \{\E\{ \sigma(\bx;\bw)^2\} + a^2 \E\{\|\nabla_{\bw}\sigma(\bx;\bw)\|_2^2\} \}^{1/2}R_\alpha(\rho^{\alpha}_t)^{1/2}\\
& \le K(1+ |a|\sqrt{D})  B^{1/2}\,.
\end{align}
The last inequality follows from $\| \sigma \|_\infty \le K$ and 
\[
\E\{\|\nabla_{\bw}\sigma(\bx;\bw)\|_2^2\} = \tr(\nabla_1 \nabla_2 u(\bw, \bw)) \le D \| \nabla_1 \nabla_2 u(\bw, \bw) \|_\op \le KD. 
\]
Hence, for $s \le t$, 
\[
\| \bbtheta^{t, \alpha} - \bbtheta^{s, \alpha} \|_2 = \frac{1}{\alpha} \Big \| \int_s^t \nabla_\btheta \Psi_\alpha(\bbtheta^{u, \alpha}; \rho_u^{\alpha}) \de u \Big \|_2\le \frac{K}{\alpha} \vert t - s \vert M_{t, \alpha} B^{1/2} \sqrt{D}.
\]
Note that, by the coupling in terms of nonlinear dynamics, for any $s \le t$, we have
\begin{align}
W_2(\rho^{\alpha}_s, \rho^{\alpha}_t)&\le \E \{ \| \bbtheta^{s, \alpha} - \bbtheta^{t, \alpha} \|^2 \}^{1/2} \le \frac{K}{\alpha} \vert t - s \vert M_{t, \alpha} B^{1/2} \sqrt{D}. 
\end{align}

\noindent
{\bf Step 2. } Bound $\|\cH_{\rho_0}-\cH_{\rho_t^\alpha}\|_{\sop}$.

Note that, for $v\in \Ltwo(\reals^d,\prob)$, 
\begin{align}
\<v,\cH_{\rho}v\>_{\Ltwo} = \int \left\|\E_{\bx}\{\nabla_\btheta \sigma_\star (\bx;\btheta)v(\bx)\} \right\|_2^2\rho(\de\btheta)\, .
\end{align}
Letting $\gamma$ denote the coupling that achieves the $W_2$ distance between $\rho_1$ and $\rho_2$, we have
\[
\begin{aligned}
\<v,[\cH_{\rho_1}-\cH_{\rho_2}]v\>_{\Ltwo} =& \int\left\{ \left\|\E_{\bx}\{\nabla_\btheta \sigma_\star(\bx;\btheta_1)v(\bx)\}\right\|_2^2 -
\left\|\E_{\bx}\{\nabla_\btheta \sigma_\star(\bx;\btheta_2)v(\bx)\}\right\|_2^2\right\}\gamma(\de\btheta_1,\de\btheta_2)\\
\le& \Big[ \int A_-(\btheta_1, \btheta_2) \gamma(\de \btheta_1, \de \btheta_2 ) \cdot \int A_+(\btheta_1, \btheta_2) \gamma(\de \btheta_1, \de \btheta_2)\Big]^{1/2}.
\end{aligned}
\]
where
\[
\begin{aligned}
A_- (\btheta_1, \btheta_2)&\equiv   \left\|\E_{\bx}\{[\nabla_\btheta \sigma_\star(\bx;\btheta_1) - \nabla_\btheta \sigma_\star(\bx;\btheta_2)] v(\bx)\}\right\|_2^2 \\
&  \le \E_{\bx}\{\|\nabla_\btheta \sigma_\star(\bx;\btheta_1) - \nabla_\btheta \sigma_\star(\bx;\btheta_2)\|_2^2\} \| v \|^2_{\Ltwo} \, ,\\
A_+ (\btheta_1, \btheta_2)&\equiv  (\E_{\bx}\{[\| \nabla_\btheta \sigma_\star(\bx;\btheta_1)\|_2 + \| \nabla_\btheta \sigma_\star(\bx;\btheta_2)\|_2] \| v(\bx) \|_2 \})^2 \\& \le  \E_{\bx}\{( \|\nabla_\btheta \sigma_\star(\bx;\btheta_1) \|_2+ \|\nabla_\btheta \sigma_\star(\bx;\btheta_2)\|_2)^2\} \| v \|^2_{\Ltwo}\, . 
\end{aligned}
\]
Note we have 
\[
\begin{aligned}
& \E_{\bx}\{( \|\nabla_\btheta \sigma_\star(\bx;\btheta_1) \|_2+ \|\nabla_\btheta \sigma_\star(\bx;\btheta_2)\|_2)^2\} \\
 =& \tr[\nabla_1 \nabla_2 U(\btheta_1, \btheta_1)] +\tr [ \nabla_1 \nabla_2 U(\btheta_2, \btheta_2)] + 2 \{ \tr[\nabla_1 \nabla_2 U(\btheta_1, \btheta_1) ]\cdot \tr[\nabla_1 \nabla_2 U(\btheta_2, \btheta_2)] \}^{1/2}  \\
\le& D ( \| \nabla_1 \nabla_2 U(\btheta_1, \btheta_1) \|_\op  + \| \nabla_1 \nabla_2 U(\btheta_2, \btheta_2)\|_\op  + 2 \{ \| \nabla_1 \nabla_2 U(\btheta_1, \btheta_1) \|_\op   \| \nabla_1 \nabla_2 U(\btheta_2, \btheta_2)\|_\op\}^{1/2} ) \\
\le& K D (1 + \vert a_1 \vert \vee \vert a _2 \vert)^2,
\end{aligned}
\]
where the last inequality is by 
\[
\nabla_1 \nabla_2 U(\btheta, \btheta') = \begin{bmatrix}
u(\bw, \bw') & a' \nabla_1 u(\bw, \bw')\\
a \nabla_2 u(\bw, \bw') & a a' \nabla_1 \nabla_2 u(\bw, \bw')
\end{bmatrix},
\]
and the assumption that $\vert u \vert, \| \nabla u \|_2, \| \nabla^2 u \|_\op \le K$. This gives
\[
A_+(\btheta_1, \btheta_2) \le K D (1 + \vert a_1 \vert \vee \vert a _2 \vert)^2  \| v \|_{\Ltwo}^2. 
\] 
Moreover, we have
\[
\begin{aligned}
&\E_\bx[\| \nabla_\btheta \sigma_\star(\bx; \btheta_1) - \nabla_\btheta \sigma_\star(\bx; \btheta_2) \|_2^2] = \tr[\nabla_1 \nabla_2 U(\btheta_1, \btheta_1) + \nabla_1 \nabla_2 U(\btheta_2, \btheta_2) - 2 \nabla_1 \nabla_2 U(\btheta_1, \btheta_2)] \\
\le& D \| \nabla_1 \nabla_2 U(\btheta_1, \btheta_1) + \nabla_1 \nabla_2 U(\btheta_2, \btheta_2) - 2\nabla_1 \nabla_2 U(\btheta_1, \btheta_2) \|_\op \le K \kappa D (1 + \vert a_1 \vert \vee \vert a _2 \vert)^2 \| \btheta_1 - \btheta_2 \|_2^2, 
\end{aligned}
\]
where the last inequality follows from 
\[
\| \nabla_1 \nabla_2 U(\btheta_1, \btheta_1) + \nabla_1 \nabla_2 U(\btheta_2, \btheta_2) - 2\nabla_1 \nabla_2 U(\btheta_1, \btheta_2)\|_\op \le \| \nabla_1^2 \nabla_2^2 U(\tilde \btheta_1, \tilde \btheta_2) \|_\op \| \btheta_1 - \btheta_2 \|_2^2,
\]
and $\| \nabla^3 u \|_{\op}, \| \nabla^4 u \|_{\op} \le \kappa$. This gives 
\[
\begin{aligned}
A_-(\btheta_1, \btheta_2) \le& K \kappa D (1 + \vert a_1 \vert \vee \vert a_2 \vert)^2  \| \btheta_1 - \btheta_2 \|_2^2 \| v \|_{\Ltwo}^2.
\end{aligned}
\]
Remember the notation $A(\rho) = \sup_{(a, \bw) \in \supp(\rho) }\vert a \vert$ and we have shown $A(\rho_t^\alpha) \le M_{t, \alpha} = K(1 + B^{1/2} t / \alpha)$ in step 1, we have 
\[
\begin{aligned}
&\<v,[\cH_{\rho_1}-\cH_{\rho_2}]v\>_{\Ltwo} \\
=& \Big[K D [1 + A(\rho_1) \vee A(\rho_2)]^2 \cdot \| v \|_{\Ltwo}^2 \cdot K \kappa D [1 +A(\rho_1) \vee A(\rho_2)]^2  \cdot \int \| \btheta_1 - \btheta_2 \|_2^2 \gamma(\de \btheta_1, \de \btheta_2) \| v \|_{\Ltwo}^2 \Big]^{1/2}\\
\le& K \kappa^{1/2} D [1 + A(\rho_1) \vee A(\rho_2)]^2 W_2(\rho_1, \rho_2) \cdot \| v \|_{\Ltwo}^2. 
\end{aligned}
\]
%Proceeding as above, we obtain \am{To be fixed. I did not really do the calculation below}
%%
%\begin{align}
%\left\|\E_{\bx}\{[\nabla_\btheta \sigma_\star(\bx;\btheta_1)-\nabla_\btheta \sigma_\star(\bx;\btheta_2)] v(\bx)\}\right\|_2^2\le& K\, (|a_1|^2\vee |a_2|^2)\|\btheta_1-\btheta_2\|_2^2 \|v\|_{\Ltwo}^2\, ,\\
%\left\|\E_{\bx}\{[\nabla_\btheta \sigma_\star(\bx;\btheta_1)+\nabla_\btheta \sigma_\star(\bx;\btheta_2)] v(\bx)\}\right\|_2^2\le& K\, (\|\btheta_1\|_2^2+\|\btheta_2\|_2^2)\,  \|v\|_{\Ltwo}^2\, .
%\end{align}
%%
Substituting above, we get
\begin{align}
\|\cH_{\rho_0}-\cH_{\rho_t^\alpha}\|_{\sop}\le K \kappa^{1/2} D W_2(\rho_0,\rho_t^\alpha) (1+ M_{t, \alpha})^2 \le K \kappa^{1/2} D^{3/2} (1+ B^{1/2} t / \alpha)^3    B^{1/2} t /\alpha .\label{eq:OpNorm}
\end{align}

\noindent
{\bf Step 3.} Bound the difference of mean field and linearized residue dynamics $v_t = u^{\alpha}_t-u^*_t$. 
 
We now consider the mean field residual dynamics \eqref{eq:Residual} and the linearized residual dynamics \eqref{eq:Linearized}. Defining $v_t = u^{\alpha}_t-u^*_t$, we have
\begin{align}
\partial_t v_t = -\cH_{\rho_t^\alpha}v_t +(\cH_{\rho_0}-\cH_{\rho_t^\alpha})u^*_t\, .
\end{align}
Since $\cH_{\rho_t^\alpha}\succeq \bfzero$, this implies
\begin{align}
\frac{\de\phantom{t}}{\de t} \|v_t\|_{\Ltwo}^2 & \le 2\<v_t,(\cH_{\rho_0}-\cH_{\rho_t^\alpha})u^*_t\>_{\Ltwo} \le 2\|v_t\|_{\Ltwo}\|\cH_{\rho_0}-\cH_{\rho_t^\alpha}\|_{\sop}\|u^*_t\|_{\Ltwo}.
\end{align}
Using the bound \eqref{eq:OpNorm}, and $\|u^*_t\|_{\Ltwo}^2 \le \|u^*_0\|_{\Ltwo}^2=R_\alpha(\rho_0) \le B_\alpha$, we obtain 
\begin{align}
\frac{\de\phantom{t}}{\de t} \|v_t\|_{\Ltwo} \le & \|\cH_{\rho_0}-\cH_{\rho_t^\alpha}\|_{\sop}\|u^*_t\|_{\Ltwo} \le K \kappa^{1/2} D^{3/2} (1+ B^{1/2} t / \alpha)^3    B t /\alpha \, ,
\end{align}
Integrating this inequality yields Eq.~\eqref{eq:residual_difference_general}. Eq. (\ref{eq:risk_bound_general}) follows by triangle inequality.

\noindent
{\bf Step 4. } Proving Eq. (\ref{eqn:risk_goes_to_0}). 

For $\rho_0 = \rho_0^a \times \rho_0^w$ with $\vert \E (a) \vert \le K/\alpha$, we have
\[
\| \hf(\bx; \rho_0) \| = \alpha \Big\| \int a \rho_0^a (\de a ) \cdot \int \sigma (\bx ; \bw ) \rho_0^w (\de \bw) \Big\| \leq K
\]
Then we have 
\[
R_\alpha(\rho_0) = 2 \E[f(\bx)^2] + 2 \E [\hf(\bx; \rho_0)^2 ] \le K,
\]
which is independent of $\alpha$. Hence we have in both cases
\[
\lim_{\alpha \to \infty} R_\alpha(\rho_t^\alpha) \le \| u^*_t \|^2_{\Ltwo}. 
\]
Equation (\ref{eqn:risk_goes_to_0}) holds by Lemma \ref{lem:convergence_linear_residual_dynamics}. 
\end{proof}

\section{The mean field limit and kernel limit}\label{sec:mean_field_kernel_appendix}

This section is a self-contained note comparing the \emph{mean field limit} and \emph{kernel limit}. We introduce the \emph{distributional dynamics} and \emph{residual dynamics}, which we consider in the pre-limit and in the limit of infinite number of neurons. 

Let us emphasize that the material presented here is not new and appears in the literature, possibly in a slightly different formulations.

\subsection{Two layers neural networks with a scale parameter $\speed$}

Let $f: \R^d \to \R$. We use a two layer's neural network to fit this function $f$ over data $\bx \sim \P_\bx$. We denote $\hf_{\speed, N}(\bx; \btheta)$ the $N$-neurons prediction function  at point $\bx \in \R^d$
with weights $\btheta \in \R^{D \times N}$,
\[
\hf_{\speed, N}(\bx; \btheta) = \frac{\speed}{N}\sum_{j = 1}^N \sigma_\star(\bx; \btheta_j)\, .
\]
Here $\speed$ serves as a scale parameter, which can be used to explore different regimes of the learning dynamics. We minimize the population risk over $\btheta = (\btheta_1, \ldots, \btheta_N)$:
\[
R_{\speed, N}(\btheta) =  \E_{\bx} \Big[\Big( f(\bx) - \hf_{\speed, N}(\bx; \btheta) \Big)^2\Big]. 
\]

In the rest of this appendix, we will first consider the gradient flow dynamics of the finite neuron risk function. This can be described via a \emph{distributional dynamics}, which is a flow in the space of probability measures. The distributional dynamics induces an evolution of the residuals at the data points, which we call \emph{residual dynamics}. We then consider the limit  
$N \to \infty$, which we refer to as the 
\emph{mean field limit}. 

Finally, we consider the limit of both $\speed \to \infty$ \emph{after} $N\to\infty$, that we call the \emph{kernel limit}. 
Of course, it is also possible
(and interesting) to study joint limits $\alpha, N\to \infty$ \cite{jacot2018neural}. 
Our rationale for the focusing on $\alpha\to\infty$ \emph{after} $N\to\infty$ (following \cite{chizat2018note}) is that it allows to explore the crossover between mean field and kernel behaviors.

\subsection{The residual dynamics in the pre-limit}

Calculating the gradient $\nabla_{\btheta_j} R_{\speed, N}(\btheta)$ using chain rule, we get
\[
\begin{aligned}
\nabla_{\btheta_j} R_{\speed, N}(\btheta) = - \frac{\speed}{N} \hat \E_\bx [ (f(\bx) - \hf_{\speed, N}(\bx; \btheta)) \nabla_\btheta \sigma_\star(\bx; \btheta_j)]. 
\end{aligned}
\]
We consider the gradient flow ODE with time reparameterization given by $N/(2\speed^2)$, 
\[
\frac{\de \btheta_j^t}{\de t} = - \frac{N}{2\speed^2}  \nabla_{\btheta_j} R_{\speed, N}(\btheta^t) = \frac{1}{\speed} \E_\bx [ (f(\bx) - \hf_{\speed, N}(\bx; \btheta^t)) \nabla_\btheta \sigma_\star(\bx; \btheta_j^t)].
\]
The time derivative of $\hf_{\speed, N}(\bz; \btheta^t)$ can be calculated using the chain rule. We have
\[
\begin{aligned}
\partial_t \hf_{\speed, N}(\bz; \btheta^t) =& \frac{\speed}{N}\sum_{j = 1}^N \< \nabla_\btheta \sigma_\star(\bz; \btheta_j^t), \frac{\de \btheta_j^t}{\de t} \>\\
=&   \E_\bx \Big[ \Big( \frac{1}{N}\sum_{j=1}^N \< \nabla_\btheta \sigma_\star( \bx; \btheta_j^t), \nabla_\btheta \sigma_\star(\bz; \btheta_j^t)\>  \Big) \Big(f(\bx) - \hf_{\speed, N}(\bx; \btheta^t)\Big) \Big].
\end{aligned}
\]

Define the kernel function $\cH(\bx, \bz; \btheta)$ with weights $\btheta \in \R^{D \times N}$ to be 
\[
\cH(\bx, \bz; \btheta) = \frac{1}{N}\sum_{j=1}^N \<\nabla_\btheta \sigma_\star(\bx; \btheta_j), \nabla_\btheta \sigma_\star(\bz; \btheta_j) \> , 
\]
then we have
\[
\begin{aligned}
\partial_t \hf_{\speed, N}(\bz; \btheta^t) =&  \E_\bx \Big[\Big(f(\bx) - \hf_{\speed, N}(\bx; \btheta^t) \Big) \cH(\bx, \bz; \btheta^t)  \Big]. 
\end{aligned}
\]
Taking the residue function to be $u_t^{\speed, N}(\bz) = f(\bz) - \hf_{\speed, N}(\bz; \btheta^t)$, we have
\begin{equation}\label{eq:appendix_RD}
\begin{aligned}
\partial_t u_t^{\speed, N}(\bz) =& - \E_\bx[ \cH(\bx, \bz; \btheta^t) u_t^{\speed, N}(\bx) ],\\
\end{aligned}
\end{equation}
with initialization $u_0^{\speed, N}(\bz) = f(\bz) - f_{\speed, N}(\bz; \btheta^0)$ and $\btheta_i^0 \sim \rho_0$ independently. We call Eq. (\ref{eq:appendix_RD}) the \emph{residual dynamics}. The residual dynamics is not a self-contained equation and depends on $\btheta^t$. 

\subsection{The distributional dynamics in the pre-limit}

Define 
\[
\begin{aligned}
\rho_t^{\speed, N}(\de \btheta) =& \frac{1}{N} \sum_{j=1}^N \delta_{\btheta_j^t}.
\end{aligned}
\]
Define the prediction function with distribution $\rho$ and scaled parameter $\speed$ to be
\[
\hf_\speed(\bx; \rho) = \speed \int \sigma_\star(\bx; \btheta) \rho(\de \btheta). 
\]

Consider again the gradient flow dynamics
\[
\frac{\de \btheta_j^t}{\de t} = \frac{1}{\speed} \E_\bx [ (f(\bx) - \hf_{\speed, N}(\bx; \btheta^t)) \nabla_\btheta \sigma_\star(\bx; \btheta_j)] = - \frac{1}{\speed} \nabla_\btheta \Psi_\speed(\btheta_j^t; \rho_t^{\speed, N}).
\]
where we defined
\[
\Psi_\speed(\btheta; \rho) = - \E_\bx [ ( f(\bx) - \hf_\speed(\bx; \rho) ) \sigma_\star(\bx; \btheta) ]. 
\]
Then we have 
\begin{equation}\label{eq:appendix_DD}
\begin{aligned}
\partial_t \rho_t^{\speed, N} =&(1/\speed) \nabla_\btheta \cdot ( \rho_t^{\speed, N} [\nabla_\btheta \Psi(\btheta; \rho_t^{\speed, N})] ),\\
\rho_0^{\speed, N} =& \frac{1}{N} \sum_{j=1}^N \delta_{\btheta_i^0},\\
\end{aligned}
\end{equation}
with $\btheta_i^0 \sim \rho_0$ independently. We call dynamics (\ref{eq:appendix_DD}) the \emph{distributional dynamics}. The distributional dynamics is equivalent to the gradient flow. 

\subsection{The coupled dynamics}

Writing the \emph{distributional dynamics} and \emph{residual dynamics} together (in the pre-limit), we have
\[
\begin{aligned}
\partial_t \rho_t^{\speed, N} =& (1/\speed) \nabla_\btheta \cdot ( \rho_t^{\speed, N} [\nabla_\btheta \Psi_\speed(\btheta; \rho_t^{\speed, N})] ),\\
\partial_t u_t^{\speed, N}(\bz) =& - \E_\bx[u_t^{\speed, N}(\bx) \cH_{\rho_t^{\speed, N}}(\bx, \bz)],\\
\end{aligned}
\]
where
\[
\begin{aligned}
\cH_\rho(\bx, \bz) \equiv &  \int  \< \nabla_\btheta \sigma_\star(\bx; \btheta), \nabla_\btheta \sigma_\star(\bz; \btheta)\> \rho(\de \btheta) , \\
\Psi_\speed(\btheta; \rho^{\speed, N}) = & - \E_\bx [ ( f(\bx) - \hf_\speed(\bx; \rho^{\speed, N}) ) \sigma_\star(\bx; \btheta) ] = - \E_\bx[u_t^{\speed, N}(\bx) \sigma_\star(\bx; \btheta)],
\end{aligned}
\]
with initialization conditions $\rho_0^N = (1/N) \sum_{i=1}^N \delta_{\btheta_i^0}$, $u_N(0, \bx) = f(\bx) - \hf_{\speed, N}(\bx; \btheta^0)$, and $(\btheta_i^0)_{i\le N} \sim_{i.i.d.} \rho_0$. 

Note these coupled dynamics are random, where the randomness comes from the random initialization
$(\btheta_i^0)_{i\le N} \sim_{i.i.d.} \rho_0$.

\subsection{The mean field limit}

In the mean field limit, we fix $\speed$ and take $N \to \infty$. Under some conditions, it can be shown that there exists $(\rho_t)_{t \ge 0}$ satisfying the \emph{mean field distributional dynamics}
\begin{equation}\label{eq:appendix_MF_DD}
\begin{aligned}
\partial_t \rho_t^\speed =&(1/\speed) \nabla \cdot ( \rho_t^\speed [\nabla_\btheta \Psi_\speed(\btheta; \rho_t^\speed)] ),
\end{aligned}
\end{equation}
with initialization condition $\rho_0^\speed = \rho_0$. Moreover, we have almost surely (over $\btheta_i^0 \sim \rho_0$ independently)
\[
\lim_{N\to \infty} W_2(\rho_t^{\speed, N}, \rho_t^\speed) \to 0. 
\]
The mean field distributional dynamics was proposed and studied in \cite{mei2018mean,sirignano2018mean,rotskoff2018neural,chizat2018global} under various conditions.

Now define the \emph{mean field residual function} $u_t^\speed(\bz)$ to be
\[
u_t^\speed(\bz) \equiv f(\bz) - \hf_{\speed}(\bz; \rho_t^\speed). 
\]
For any fixed $\bz$, we have almost surely
\[
\lim_{N \to \infty} u_t^{\speed, N}(\bz) = u_t^\speed(\bz). 
\]
Under some regularity conditions, it is not hard to show that this mean field residual function satisfies \emph{mean field residual dynamics}
\[
\partial_t u_t^\speed(\bz) = - \E_\bx[u_t^\speed(\bx) \cH_{ \rho_t^\speed}(\bx, \bz)]. 
\]
The \emph{mean field residual dynamics} is not a self-contained equation. It depends on the distribution through the kernel $\cH_{\rho_t^{\speed}}$. The mean field residual dynamics was first explicitly given in \cite[Proposition 2.5]{rotskoff2018neural}. 

\subsection{The kernel limit}

Theorem \ref{thm:Kernel} shows that, as $\speed$ becomes large, for any fixed $t$, we have
\[
\lim_{\speed \to \infty} W_2(\rho_t^\speed, \rho_0) = 0,
\]
and hence 
\[
\lim_{\speed \to \infty} \| \cH_{\rho_t^\speed} - \cH_{\rho_0} \|_{\op} = 0.
\]
In this limit, the mean field residual dynamics converges to the \emph{linearized residual dynamics}, 
\begin{equation}\label{eq:appendix_LRD}
\partial_t u_t^*(\bz) = - \E_\bx[u_t^*(\bx) \cH_{\rho_0}(\bx, \bz)]. 
\end{equation}
The linearized residual dynamics is exactly the same as the continuous time kernel boosting dynamics with kernel $\cH_{\rho_0}$, whose solution can be written down explicitly
\begin{equation}\label{eq:appendix_RF_solution}
u_t^* = e^{- \cH_{\rho_0} t} u_0^*. 
\end{equation}
When the kernel is strictly positive definite, one can show that the $\Ltwo$-norm of the residual function converges to $0$ as time goes to infinity. 

The kernel limit is studied in \cite{jacot2018neural, geiger2019scaling} in the joint limit $\alpha = N^{1/2} \to \infty$, and in a multi-layer neural network settings. The specific limit considered here ($N \to \infty$ followed by $\alpha \to \infty$) is discussed in \cite{chizat2018note}. 

An interesting line of research \cite{li2018learning, du2018gradient, du2018gradient2, allen2018convergence} also studies the kernel limit, but focusing on dynamics on empirical risk. Note that all the equations discussed above also holds for $\P_\bx = (1/n) \sum_{k=1}^n \delta_{\bx_k}$. The benefit of working with the empirical risk  is that, under mild assumptions, the kernel matrix $\{ \cH_{\rho_0}(\bx_i, \bx_j)\}_{i, j \in [n]}$ is strictly positive definite with least eigenvalue $\lambda_{\min}>0$. As a result, it is possible to upper bound the convergence time of the empirical risk to $0$ using Eq. (\ref{eq:appendix_LRD}). Hence, it is possible to choose the number of neurons large enough
that the residual dynamics (\ref{eq:appendix_RD}) is well approximated by the linearized residual dynamics (\ref{eq:appendix_LRD}) along the whole trajectory.

\subsection{Kernel limit as kernel ridge regression}
\label{sec:KernelRidgApp}

Consider the case when $\P_\bx = (1/n) \sum_{k=1}^n \delta_{\bx_k}$ is the empirical data distribution. We make an additional assumption on the initialization weight distribution $\rho_0$:
\begin{itemize}
\item[(I)] The initialization distribution $(a,\bw) \sim \rho_0$ verifies: $a$ is independent of $\bw$ and $\E(a) = 0$. In other words, $\rho_0 = \rho_0^a \times \rho_0^{\bw}$ with $\int a \rho_0^a(\de a) = 0$. 
\end{itemize}  
Under this assumption, we have $\hf_\alpha(\bz; \rho_t^\alpha) \equiv 0$ for any $\bz \in \R^d$, so that $u_0^\alpha(\bx_k) = f(\bx_k)$ for $k \in [n]$. 

Denote 
\[
\begin{aligned}
\bu_t^\alpha =& [u_t^\alpha(\bx_1), \ldots, u_t^\alpha(\bx_n)]^\sT,\\
\bu_t^* =& [u_t^*(\bx_1), \ldots, u_t^*(\bx_n)]^\sT,\\
\by =& [f(\bx_1), \ldots, f(\bx_n)]^\sT.
\end{aligned}
\]
Further we denote the data kernel matrix $\bH \in \R^{n \times n}$ with $\bH_{ij} =  \cH_{\rho_0}(\bx_i, \bx_j)$. Then Eq. (\ref{eq:Linearized}) can be rewritten as
\[
\bu_t^* = e^{- \bH t / n} \bu_0^* = e^{- \bH t / n} \by.
\]
Note Theorem \ref{thm:Kernel} holds also in the case when $\P_\bx$ is an empirical data distribution. Hence we have 
\[
\lim_{\alpha \to \infty} \sup_{t \in [0, T]} \frac{1}{\sqrt{n}}\| \bu_t^\alpha - \bu_t^* \|_2 = \lim_{\alpha \to \infty} \sup_{t \in [0, T]} \| u_t^\alpha - u_t^* \|_{\Ltwo} = 0.
\]

The following proposition considers the scaling limit (kernel limit) of the prediction function at time $t$, 
\[
\hf_\alpha(\bz; \rho_t^\alpha) = \alpha \int \sigma_\star(\bx; \btheta) \rho_t^\alpha(\de \btheta),
\]
where $\rho_t^\alpha$ is the solution of the rescaled distributional dynamics (\ref{eq:RescaledPDE}). 

This fact already appears (implicitly or explicitly) in several of the papers mentioned above. We state and prove it here for the sake of completeness. 
\begin{proposition}\label{thm:kernel_ridge_regression}
Assume conditions A1 - A4 hold, and $\P_\bx = (1/n) \sum_{k=1}^n \delta_{\bx_k}$ to be the empirical data distribution. Additionally assume the finite data kernel matrix $\bH \in \R^{n \times n}$ is invertible, and $\rho_0$ verifies property {\rm (I)}. Then for any fixed $\bz \in \R^d$, we have 
\[
\lim_{t \to \infty} \lim_{\alpha \to \infty} \hf_\alpha(\bz; \rho_t^\alpha) = \bh(\bz)^\sT \bH^{-1} \by,
\]
where 
\[
\begin{aligned}
\bh(\bz) =& [\cH_{\rho_0}(\bz, \bx_1), \ldots, \cH_{\rho_0}(\bz, \bx_n)]^\sT.
\end{aligned}
\] 
\end{proposition}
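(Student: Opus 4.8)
\textbf{Proof proposal for Proposition \ref{thm:kernel_ridge_regression}.}

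The plan is to interchange the two limits carefully: first send $\alpha\to\infty$ with $t$ fixed using Theorem \ref{thm:Kernel}, and then send $t\to\infty$ using Lemma \ref{lem:convergence_linear_residual_dynamics}. The key observation is that $\hf_\alpha(\bz;\rho_t^\alpha) = f(\bz) - u_t^\alpha(\bz)$ for \emph{every} $\bz\in\reals^d$, not just for the design points, so it suffices to track the residual function. Under assumption {\rm (I)} we have $\hf_\alpha(\bz;\rho_0^\alpha)\equiv 0$ (since $\E(a)=0$ makes the initial prediction vanish identically), hence $u_0^\alpha \equiv f$ as a function on $\reals^d$, and in particular $\bu_0^\alpha = \by$.

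First I would fix $t$ and take $\alpha\to\infty$. Theorem \ref{thm:Kernel}(C) applies: since $a$ is independent of $\bw$ with $\E(a)=0$ (so trivially $|\E(a)|\le K_5/\alpha$), the bound $B$ on $R_\alpha(\rho_0)$ is independent of $\alpha$, and part (C) gives $\sup_{t\in[0,T]}\|u_t^\alpha - u_t^*\|_{\Ltwo}\to 0$. For the empirical distribution $\prob_\bx = n^{-1}\sum_k\delta_{\bx_k}$, convergence in $\Ltwo(\reals^d,\prob)$ is exactly convergence of the vector $n^{-1/2}\|\bu_t^\alpha - \bu_t^*\|_2\to 0$. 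This handles the values at design points. To get convergence of $\hf_\alpha(\bz;\rho_t^\alpha)$ at an arbitrary fixed $\bz$ (which need not be a design point, so $\Ltwo$-convergence says nothing directly), I would go back to the residual ODE representation: $u_t^\alpha(\bz) = f(\bz) - \int_0^t \E_\bx[\cH_{\rho_s^\alpha}(\bx,\bz)u_s^\alpha(\bx)]\de s$ and $u_t^*(\bz) = f(\bz) - \int_0^t \E_\bx[\cH_{\rho_0}(\bx,\bz)u_s^*(\bx)]\de s$. Subtracting and splitting $\cH_{\rho_s^\alpha} - \cH_{\rho_0}$ via Step 2 of the proof of Theorem \ref{thm:Kernel} (the operator-norm bound \eqref{eq:OpNorm}, which also controls the pointwise kernel difference after a Cauchy–Schwarz in $\bx$ using $\|\nabla_\btheta\sigma_\star(\bz;\cdot)\|$ bounds), together with the already-established $\Ltwo$-closeness of $u_s^\alpha$ to $u_s^*$, a Grönwall argument on $|u_t^\alpha(\bz)-u_t^*(\bz)|$ gives $\lim_{\alpha\to\infty}\hf_\alpha(\bz;\rho_t^\alpha) = f(\bz) - u_t^*(\bz)$ for each fixed $\bz$ and $t$.

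Next I would take $t\to\infty$. The linearized residual dynamics \eqref{eq:Linearized} restricted to the $n$ design points reads $\bu_t^* = e^{-\bH t/n}\by$, and on an arbitrary point $\bz$ it integrates to $u_t^*(\bz) = f(\bz) - \tfrac1n\int_0^t \bh(\bz)^\sT \bu_s^*\,\de s = f(\bz) - \tfrac1n\bh(\bz)^\sT \int_0^t e^{-\bH s/n}\by\,\de s$. Since $\bH$ is invertible (and positive semidefinite, hence positive definite), $\tfrac1n\int_0^\infty e^{-\bH s/n}\de s = \bH^{-1}$, so $\lim_{t\to\infty}u_t^*(\bz) = f(\bz) - \bh(\bz)^\sT\bH^{-1}\by$, i.e. $\lim_{t\to\infty}\bigl(f(\bz)-u_t^*(\bz)\bigr) = \bh(\bz)^\sT\bH^{-1}\by$. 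Combining with the previous paragraph yields $\lim_{t\to\infty}\lim_{\alpha\to\infty}\hf_\alpha(\bz;\rho_t^\alpha) = \bh(\bz)^\sT\bH^{-1}\by$, which is the claim. Invertibility of $\bH$ also means the null space of $\cH_{\rho_0}$ (restricted to the span of the design points) is trivial, consistent with Lemma \ref{lem:convergence_linear_residual_dynamics}.

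The main obstacle I expect is the passage from $\Ltwo(\reals^d,\prob)$-convergence (which Theorem \ref{thm:Kernel} delivers, and which for the empirical measure only controls the residual at the $n$ design points) to \emph{pointwise} convergence of $\hf_\alpha(\bz;\rho_t^\alpha)$ at a fixed but arbitrary $\bz\notin\{\bx_k\}$. This is why the argument must route through the integral form of the residual dynamics rather than quoting Theorem \ref{thm:Kernel} as a black box: one needs the pointwise-in-$\bz$ kernel perturbation bound $|\cH_{\rho_s^\alpha}(\bz,\bx)-\cH_{\rho_0}(\bz,\bx)|\le K\kappa^{1/2}D\,W_2(\rho_0,\rho_s^\alpha)(1+M_{s,\alpha})^2$ (obtained exactly as in Step 2 of the proof of Theorem \ref{thm:Kernel}, specialized to Dirac masses at $\bz$ and $\bx$), plus the a priori bound $|u_s^\alpha(\bx)|\le R_\alpha(\rho_s^\alpha)^{1/2}\le B^{1/2}$, and then a Grönwall step. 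Everything else is routine: the $\alpha\to\infty$ limit uses the quantitative rates already in Theorem \ref{thm:Kernel}(A)--(C), and the $t\to\infty$ limit is an explicit computation with the matrix exponential of the invertible kernel matrix $\bH$.
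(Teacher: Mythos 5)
Your proposal takes essentially the same route as the paper, and the two crucial ingredients you identify are exactly the ones the paper uses: (i) the pointwise kernel-difference bound $\|\bh(\bz)-\bh_t^\alpha(\bz)\|_2 = O(1/\alpha)$, obtained as in Step 2 of the proof of Theorem \ref{thm:Kernel} (you are right that $\Ltwo(\prob)$-convergence from Theorem \ref{thm:Kernel}(C) alone controls only the $n$ design points, and that this pointwise kernel bound is what resolves the gap for general $\bz$); and (ii) the explicit integration $\frac1n\int_0^\infty e^{-\bH s/n}\,\de s = \bH^{-1}$ of the matrix exponential for the $t\to\infty$ limit. The one presentational difference is that you route the argument through the residual $u_t^\alpha(\bz)$ and propose a Gr\"onwall step on $|u_t^\alpha(\bz)-u_t^*(\bz)|$, while the paper tracks $\hf_\alpha(\bz;\rho_t^\alpha)$ directly by computing $\partial_t\hf_\alpha(\bz;\rho_t^\alpha) = \bh_t^\alpha(\bz)^\sT\bu_t^\alpha/n$, comparing it to the linearized prediction dynamics $\partial_t\hf_t(\bz) = \bh(\bz)^\sT\bu_t^*/n$, and integrating (both start at $0$ under assumption (I)). Notice in fact that Gr\"onwall is superfluous: the right-hand side of $\partial_t u_t^\alpha(\bz) = -\bh_t^\alpha(\bz)^\sT\bu_t^\alpha/n$ involves $u_t^\alpha$ only at the design points, never at $\bz$ itself, so once the data-point residuals and the kernel vector are $O(1/\alpha)$-close to their linearized counterparts (uniformly on $[0,T]$), direct time integration already gives $|u_t^\alpha(\bz)-u_t^*(\bz)| = O(T/\alpha)$ with no self-referential feedback to control. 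Aside from this harmless extra machinery the argument is correct and matches the paper's.
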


\begin{remark}\label{rmk:kernel_ridge_regression}
Given a data set $\{ (\bx_i, y_i)\}_{i \in [n]}$, kernel ridge regression is a function estimator $\hf_\lambda$ that solves the following minimization problem
\[
\begin{aligned}
\min_{f}& ~~ \frac{1}{n}\sum_{i=1}^n ( y_i - f(\bx_i))^2 + \lambda \| f \|_{\cH_{\rho_0}}.\\
\end{aligned}
\]
The norm $\| f \|_{\cH_{\rho_0}}$ is the \emph{reproducible kernel Hilbert space} (RKHS) norm of function $f$, where the RKHS is associated to the kernel $\cH_{\rho_0}$. The solution of the minimization problem above gives
\[
\hf_\lambda(\bz) = \bh(\bz)^\sT (\bH + \lambda \id)^{-1} \by. 
\]
Proposition \ref{thm:kernel_ridge_regression} shows that, the mean field prediction function in the kernel limit is performing a kernel ridge regression with regularization parameter $\lambda = 0$. 
\end{remark}

\begin{proof}[Proof of Proposition \ref{thm:kernel_ridge_regression}]~Recall that
\[
\begin{aligned}
\bu_t^\alpha =& [u_t^\alpha(\bx_1), \ldots, u_t^\alpha(\bx_n)]^\sT,\\
\bu_t^* =& [u_t^*(\bx_1), \ldots, u_t^*(\bx_n)]^\sT,\\
\by =& [f(\bx_1), \ldots, f(\bx_n)]^\sT.
\end{aligned}
\]
The data kernel matrix $\bH \in \R^{n \times n}$ is given by $\bH_{in} = \cH_{\rho_0}(\bx_i, \bx_j) $. By Eq. (\ref{eq:Linearized}) and the assumption on $\rho_0$, we have
\[
\bu_t^* = e^{- \bH t / n} \bu_0^* = e^{- \bH t / n} \by.
\]

For any fixed $\bz \in \R^d$, denote 
\[
\begin{aligned}
\bh_t^\alpha(\bz) =& [\cH_{\rho_t^\alpha}(\bz, \bx_1), \ldots, \cH_{\rho_t^\alpha}(\bz, \bx_n)]^\sT,\\
\bh(\bz) =& [\cH_{\rho_0}(\bz, \bx_1), \ldots, \cH_{\rho_0}(\bz, \bx_n)]^\sT.
\end{aligned}
\] 
Using chain rule, the time derivative of the prediction function $\hf_\alpha(\bz; \rho_t^\alpha) = \alpha \int \sigma_\star(\bx; \btheta) \rho_t^\alpha(\de \btheta)$ gives
\begin{equation}\label{eqn:thm_kernel_ridge_proof_1}
\begin{aligned}
\partial_t \hf_\alpha(\bz; \rho_t^\alpha) =& \alpha \partial_t \int \sigma_\star(\bz; \btheta) \rho_t^\alpha(\de \btheta) = \int \< \nabla_\btheta \sigma_\star(\bz; \btheta), \nabla_\btheta \Psi_\alpha(\btheta; \rho_t^\alpha) \> \rho_t^\alpha(\de \btheta) \\
=& \E_\bx\Big[u_t^\alpha(\bx) \int \< \nabla_\btheta \sigma_\star(\bz; \btheta), \nabla_\btheta \sigma_\star(\bx; \btheta) \> \rho_t^\alpha(\de \btheta) \Big] = \bh_t^\alpha(\bz) \bu_t^\alpha / n. 
\end{aligned}
\end{equation}
By the same argument as Step 2 of Theorem \ref{thm:Kernel}, we have
\begin{equation}\label{eqn:thm_kernel_ridge_proof_2}
\sup_ {t \in [0, T]}\| \bh(\bz) - \bh_t^\alpha(\bz) \|_2 = O(1/\alpha).
\end{equation}
By Theorem \ref{thm:Kernel}, we have 
\begin{equation}\label{eqn:thm_kernel_ridge_proof_3}
\sup_{t \in [0, T]} \| \bu_t^\alpha - \bu_t^* \|_2 = \sup_{t \in [0, T]} \| u_t^\alpha - u_t^* \|_{\Ltwo} = O(1/ \alpha).
\end{equation}

Now, we denote $\hf_t(\bz)$ be the solution of the following \emph{linearized prediction dynamics}, 
\begin{equation}\label{eqn:linear_prediction_dynamics}
\begin{aligned}
\partial_t \hf_t (\bz) =& \bh(\bz)^\sT \bu_t^* / n, \\
\hf_0(\bz) =& 0.
\end{aligned}
\end{equation}
By Eq (\ref{eqn:thm_kernel_ridge_proof_1}), (\ref{eqn:thm_kernel_ridge_proof_2}), (\ref{eqn:thm_kernel_ridge_proof_3}) and (\ref{eqn:linear_prediction_dynamics}), we have
\[
\sup_{t \in [0, T]} \vert \partial_t \hf_t(\bz) - \partial_t \hf_\alpha(\bz; \rho_t^\alpha) \vert = O(1/ \alpha), 
\]
together with $\hf_0(\bz) = \hf_\alpha(\bz; \rho_0^\alpha) = 0$ we get 
\[
\hf_t(\bz) = \lim_{\alpha \to \infty} \hf_\alpha(\bz; \rho_t^\alpha).
\]
 
Note the solution of Eq. (\ref{eqn:linear_prediction_dynamics}) gives 
\[
\begin{aligned}
\hf_t(\bz) =&n^{-1} \int_0^t \bh(\bz)^\sT \bu_s^* \de s = n^{-1}\int_0^t \bh(\bz)^\sT e^{- \bH s / n} \by \de s = \bh(\bz)^\sT \bH^{-1} (\id - e^{- \bH t / n}) \by ,
\end{aligned}
\]
so that 
\[
\hf_\infty(\bz) = \lim_{t \to \infty} \hf_t (\bz) = \lim_{t \to \infty } \bh(\bz)^\sT \bH^{-1} (\id - e^{- \bH t / n}) \by = \bh(\bz)^\sT \bH^{-1} \by. 
\]
This proves the proposition. 
%
%\clearpage
%
%But note Eq. (\ref{eq:appendix_LRD}) holds for any $\bz \in \R^d$, which gives
%\[
%\begin{aligned}
%\hf_t(\bz) =& - u_t^*(\bz) = \int_0^t \E_\bx[u_s^*(\bx) \cH_{\rho_0}(\bx, \bz)]  \de s\\
%=&\int_0^t \bh(\bz)^\sT \bu_s \de s = \int_0^t \bh(\bz)^\sT e^{- \bH s} \by \de s \\
%=& \bh(\bz)^\sT \bH^{-1} (\id - e^{- \bH t}) \by,
%\end{aligned}
%\]
%where $\bh(\bz) = [\cH_{\rho_0}(\bz, \bx_1), \ldots, \cH_{\rho_0}(\bz, \bx_n)]^\sT$. As $t \to \infty$, we have 
%\[
%\hf_\infty(\bz) =  \lim_{t \to \infty } \bh(\bz)^\sT \bH^{-1} (\id - e^{- \bH t}) \by = \bh(\bz)^\sT \bH^{-1} \by. 
%\]
%
%Note
%\[
%\hf_\infty(\bz) = \lim_{t \to \infty} \lim_{\alpha \to \infty} \hf_\alpha(\bz; \rho_t^\alpha). 
%\]
\end{proof}

%
%\subsection{The kernel limit}
%
%In the kernel limit, we take $\speed = \sqrt N$ and $N \to \infty$. In this limit, the distributional dynamics becomes trivial: 
%\[
%\begin{aligned}
%\partial_t \rho_+^t =& 0,\\
%\partial_t \rho_-^t =& 0.
%\end{aligned}
%\]
%The residue dynamics becomes 
%\[
%\partial_t \vphi(t, \bz) = - \hat \E_\bx[\vphi(t, \bx) H(\bx, \bz; \rho_+^0, \rho_-^0)],
%\]
%which is self-contained. When the number of data points $\{ \bx_i \}_{i \in [n]}$ are finite, the kernel matrix $H$ is PSD, and it is easy to see that the residue converges to $0$ empirical risk minimizer.  
%
%Let $\bK$ to be a matrix with $K_{ij} = H(\bx_i, \bx_j; \rho_+^0, \rho_-^0)$ for $i, j \in [n]$, and $\bh(\bz) = [H(\bz, \bx_1), \ldots, H(\bz, \bx_n)]^\sT$, and $\by = [y(x_1), \ldots, y(x_n)]^\sT$. Suppose $\rho_+ = \rho_-$ to be spherically symmetric, we have $f(\bz; \rho_+^0; \rho_-^0) = 0$ for $\bz \in \R^d$. Then we can write the solution of residue dynamics to be 
%\[
%[\vphi(t, \bx_1), \ldots, \vphi(t, \bx_n)]^\sT = \exp\{ - \bK t \} \by.
%\]
%It is easy to see that as $t \to \infty$, we have 
%\[
%\lim_{t \to \infty} [\vphi(t, \bx_1), \ldots, \vphi(t, \bx_n)]^\sT = \lim_{t \to \infty}\exp\{ - \bK t \} \by = \bzero. 
%\]
%
%With some calculation, we can see that the resulting prediction function at $t = \infty$ at any $\bz \in \R^d$ is
%\[
%f_\infty(\bz) = \bh(\bz)^\sT \bK^{-1} \by.
%\]
%This is exactly the kernel regression solution with kernel $H(\cdot, \cdot; \rho_+^0, \rho_-^0)$. 
%
%
%

\section{Technical lemmas}

\begin{lemma}\label{lem:bounded_difference_martingale}
Let $\bX_i \in \R^D$ with $\{ \bX_i \}_{i \in [N]}$ to be i.i.d. random variables, with $\| \bX_i \|_2 \le K$ and $\E[\bX_i] = \bzero$. Then we have (the constant $K$ in the result is up to some universal constant)
\[
\P\Big(  \Big \| \frac{1}{N} \sum_{i=1}^N \bX_i \Big \|_2 \ge K (\sqrt {1/N} + \delta) \Big) \le e^{- N \delta^2}. 
\]
\end{lemma}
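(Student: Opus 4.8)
\textbf{Proof plan for Lemma \ref{lem:bounded_difference_martingale}.}

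The plan is to reduce the vector-valued concentration statement to a one-dimensional bounded-differences (McDiarmid / Azuma--Hoeffding) argument applied to the scalar function $\phi(\bX_1,\dots,\bX_N) = \|\frac1N\sum_{i=1}^N \bX_i\|_2$, and to pin down its mean by a variance computation. First I would observe that $\phi$ satisfies a bounded-differences inequality: replacing $\bX_i$ by an independent copy $\bX_i'$ changes $\|\frac1N\sum_j\bX_j\|_2$ by at most $\frac1N\|\bX_i-\bX_i'\|_2 \le \frac{2K}{N}$, by the triangle inequality for $\|\cdot\|_2$. Hence McDiarmid's inequality gives, for any $u>0$,
\[
\P\big(\phi \ge \E[\phi] + u\big) \le \exp\Big(-\frac{2u^2}{N\cdot(2K/N)^2}\Big) = \exp\Big(-\frac{u^2 N}{2K^2}\Big).
\]

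Next I would bound $\E[\phi]$. By Jensen's inequality,
\[
\E\Big[\Big\|\tfrac1N\textstyle\sum_{i=1}^N \bX_i\Big\|_2\Big] \le \Big(\E\Big\|\tfrac1N\textstyle\sum_{i=1}^N \bX_i\Big\|_2^2\Big)^{1/2} = \Big(\tfrac{1}{N^2}\textstyle\sum_{i=1}^N \E\|\bX_i\|_2^2\Big)^{1/2} \le \frac{K}{\sqrt N},
\]
where the middle equality uses that the $\bX_i$ are independent and centered, so all cross terms $\E\<\bX_i,\bX_j\>$ vanish for $i\neq j$, and the last inequality uses $\|\bX_i\|_2 \le K$. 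Combining this with the McDiarmid bound and choosing $u = \delta\sqrt{2}\,K/ \text{(appropriate constant)}$ — concretely, taking $u$ a constant multiple of $K\delta$ so that $\exp(-u^2 N/(2K^2)) \le e^{-N\delta^2}$ — yields
\[
\P\Big(\Big\|\tfrac1N\textstyle\sum_{i=1}^N \bX_i\Big\|_2 \ge \tfrac{K}{\sqrt N} + u\Big) \le e^{-N\delta^2},
\]
and absorbing $K$ and the constant from $u$ into the generic constant $K$ (which, as the statement allows, is understood up to a universal factor) gives exactly $\P(\|\frac1N\sum\bX_i\|_2 \ge K(\sqrt{1/N}+\delta)) \le e^{-N\delta^2}$.

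There is essentially no hard step here; the only mild subtlety is bookkeeping of constants so that the final bound has the clean form $K(\sqrt{1/N}+\delta)$ with a single $K$ absorbing both the mean bound $K/\sqrt N$ and the deviation scale. One could alternatively invoke a standard vector Azuma--Hoeffding inequality (e.g.\ for martingale differences in a Hilbert space) directly, but the McDiarmid-plus-Jensen route is self-contained and avoids needing a vector-valued martingale concentration black box. I would present the McDiarmid estimate, then the Jensen/variance estimate, then the constant-chasing conclusion, in that order.
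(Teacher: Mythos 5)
Your proposal is correct and follows essentially the same approach as the paper's proof: bound $\E\|\frac1N\sum_i\bX_i\|_2$ by $K/\sqrt N$ via Jensen and the vanishing of cross terms for independent centered $\bX_i$, establish the $2K/N$ bounded-differences property of $\phi$ via the triangle inequality, apply McDiarmid, and absorb constants. No meaningful difference from the paper's argument.
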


\begin{proof}

Denote $f(\bX_1, \ldots, \bX_N) = \| (1/N)\sum_{i=1}^N \bX_i \|_2$. Then we have 
\[
\begin{aligned}
\vert \E[f(\bX_1, \ldots, \bX_N)] \vert \le& \E[f(\bX_1, \ldots, \bX_N)^2]^{1/2} = \E \Big[\Big\< \frac{1}{N} \sum_{i=1}^N \bX_i, \frac{1}{N} \sum_{j=1}^N \bX_j \Big\> \Big]^{1/2}\\
=&\Big\{\frac{1}{N^2} \sum_{i=1}^N \E[\| \bX_i \|_2^2] \Big\}^{1/2} \le K \sqrt \frac{1}{N}. 
\end{aligned}
\]
Note by triangle inequality, we have 
\[
\vert f(\bX_1, \ldots, \bX_i, \ldots, \bX_N) - f(\bX_1, \ldots, \bX_i', \ldots, \bX_N) \vert \le \frac{1}{N} \| \bX_i - \bX_i' \|_2 \le \frac{2 K}{N}. 
\]
By McDiarmid's inequality, we have 
\[
\P\Big( \vert f(\bX_1, \ldots, \bX_N) - \E[f(\bX_1, \ldots, \bX_N)] \vert \ge \delta  \Big) \le \exp\{ - N \delta^2 / K \},
\]
which gives the desired result. 
\end{proof}

\begin{lemma}[Azuma-Hoeffding bound]\label{lem:Azuma}
Let $(\bX_k)_{k \geq 0} $ be a martingale taking values in $\R^D$ with respect to the filtration $(\cF_k)_{k\geq0}$, with $\bX_0 = 0$. Assume that the following holds almost surely for all $k \geq 1$:
\[
\E \lbrace e^{\< \lambda , \bX_k - \bX_{k-1} \> } \vert \cF_{k-1} \rbrace \leq e^{L^2 \norm{\lambda}^2_2/2}
\]
Then we have
\[
\P\Big( \max_{k\leq n} \norm{\bX_k }_2 \geq 2L\sqrt{n} \lb \sqrt{D} + \delta \rb \Big) \le e^{- \delta^2}. 
\]
\end{lemma}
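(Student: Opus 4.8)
The final statement to prove is the Azuma--Hoeffding bound (Lemma~\ref{lem:Azuma}): a vector-valued martingale with sub-Gaussian increments satisfies a maximal concentration inequality. The plan is to reduce the vector statement to a scalar one by controlling the martingale along a fixed direction, then use a standard supermartingale/Doob argument and a union bound over a net of directions.

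First I would fix a unit vector $\bv \in \mathbb{S}^{D-1}$ and consider the scalar process $Y_k = \langle \bv, \bX_k\rangle$, which is a real-valued martingale with $Y_0 = 0$. For any $\mu \in \R$, the hypothesis with $\lambda = \mu \bv$ gives $\E[e^{\mu(Y_k - Y_{k-1})} \mid \cF_{k-1}] \le e^{L^2 \mu^2/2}$. Hence $M_k^{(\mu)} \equiv \exp\{\mu Y_k - k L^2 \mu^2/2\}$ is a supermartingale with $\E M_0^{(\mu)} = 1$. By Doob's maximal inequality applied to the nonnegative submartingale $e^{\mu Y_k}$ (or directly to the supermartingale bound), $\P(\max_{k\le n} \mu Y_k \ge u) \le \exp\{n L^2\mu^2/2 - u\}$ for every $u > 0$; optimizing over $\mu > 0$ gives $\P(\max_{k\le n} Y_k \ge t) \le \exp\{-t^2/(2nL^2)\}$, and by symmetry (replace $\bv$ by $-\bv$) the same bound for $\max_{k\le n}|Y_k|$ up to a factor $2$ in the probability. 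So far this is entirely standard.

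Next I would upgrade from a single direction to the Euclidean norm. Let $\cN$ be a $1/2$-net of $\mathbb{S}^{D-1}$ with $|\cN| \le 5^D$ (standard covering number bound). A routine argument shows $\|\bx\|_2 \le 2 \max_{\bv \in \cN} \langle \bv, \bx\rangle$ for all $\bx$, so $\max_{k\le n}\|\bX_k\|_2 \le 2\max_{\bv\in\cN}\max_{k\le n}\langle \bv, \bX_k\rangle$. Taking a union bound over $\cN$ of the one-dimensional tail bound, for any $s>0$,
\[
\P\Big(\max_{k\le n}\|\bX_k\|_2 \ge 2s\Big) \le 5^D \exp\Big\{-\frac{s^2}{2nL^2}\Big\}.
\]
Setting $s = L\sqrt{n}(\sqrt{2\log 5}\,\sqrt{D} + \delta)$ and using $(\sqrt{2\log 5}\sqrt D + \delta)^2 \ge 2\log 5 \cdot D + \delta^2$, the right-hand side is at most $\exp\{-\delta^2/(2L^2)\}$... one then absorbs the harmless constants: replacing $2\sqrt{2\log 5}$ and the factor in the exponent by adjusting to the normalization $2L\sqrt n(\sqrt D + \delta)$ in the statement, which only costs a universal constant (this is why the lemma is stated with a generic constant; strictly the clean threshold is $CL\sqrt n(\sqrt D+\delta)$ for a universal $C$, and the displayed form follows after renaming).

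The only mildly delicate point is bookkeeping the universal constants so that the final threshold matches the form $2L\sqrt n(\sqrt D + \delta)$ exactly rather than $CL\sqrt n(\sqrt D + \delta)$; since the net cardinality contributes $5^D = e^{(\log 5)D}$, one should either state the lemma with a generic constant (as elsewhere in the paper) or use a finer net together with the optimization $\mu = u/(nL^2)$ to push the constant down. This is not a real obstacle, just care with constants. Everything else -- the exponential supermartingale, Doob's inequality, the net argument -- is textbook, so I do not expect any genuine difficulty; the proof is short.
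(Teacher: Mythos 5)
Your argument is sound and would yield the lemma with a larger universal constant in place of $2$, as you yourself note: one-dimensional Azuma--Hoeffding along a fixed direction via the exponential supermartingale, Doob's maximal inequality, and a union bound over a $1/2$-net of $\mathbb S^{D-1}$ is a perfectly standard and correct route, and for the purposes of the bounds in this paper (which absorb all universal constants into a generic $K$) it would be more than adequate. The issue you flag is real, though: the $5^D$ covering number produces a threshold of the form $C L \sqrt n (\sqrt D + \delta)$ with $C \approx 2\sqrt{2\log 5} > 2$, not the stated $2L\sqrt n(\sqrt D + \delta)$.

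The paper itself does not reprove this lemma (it cites Lemma~A.1 of \cite{mei2018mean}), but the route used there --- and used elsewhere in this very paper, e.g.\ in the proof of Lemma~\ref{lem:bound_noise_initialization_D} --- nails the constant $2$ without any net, via Gaussian linearization of the squared norm. Concretely: since $\|\bX_k\|_2$ is a nonnegative submartingale (Jensen) and $x \mapsto e^{\mu x^2/2}$ is convex increasing on $[0,\infty)$, the process $e^{\mu\|\bX_k\|_2^2/2}$ is a submartingale, so Doob gives
\[
\P\Big(\max_{k\le n}\|\bX_k\|_2 \ge t\Big) \le e^{-\mu t^2/2}\,\E\big[e^{\mu\|\bX_n\|_2^2/2}\big].
\]
Then, with $\bg\sim\cN(0,\id_D)$ independent of everything, $\E\big[e^{\mu\|\bX_n\|_2^2/2}\big] = \E_{\bg}\,\E_{\bX}\big[e^{\sqrt{\mu}\<\bg,\bX_n\>}\big]$; conditionally on $\bg$, iterating the hypothesis with $\lambda=\sqrt{\mu}\bg$ gives $\E_{\bX}\big[e^{\sqrt{\mu}\<\bg,\bX_n\>}\big]\le e^{nL^2\mu\|\bg\|_2^2/2}$, and integrating over $\bg$ yields $(1-nL^2\mu)^{-D/2}$. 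Choosing $\mu = 1/(2nL^2)$ gives $\P(\max_{k\le n}\|\bX_k\|_2\ge t)\le 2^{D/2}e^{-t^2/(4nL^2)}$, and plugging in $t = 2L\sqrt n(\sqrt D + \delta)$ with $(\sqrt D + \delta)^2\ge D+\delta^2$ and $2^{D/2}e^{-D}\le 1$ gives exactly $e^{-\delta^2}$. Compared with your net argument, this trades a union bound over exponentially many directions for a single clean Gaussian integral, which is what buys the sharp constant; both approaches are otherwise of comparable difficulty.
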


\begin{proof}
This lemma is proven in \cite[Section A, Lemma A.1]{mei2018mean}.
\end{proof}

\end{document}